\documentclass{article} 
\usepackage[preprint]{neurips_2025}
\usepackage[utf8]{inputenc} 
\usepackage[T1]{fontenc}    
\usepackage{hyperref}       
\usepackage{url}            
\usepackage{booktabs}       
\usepackage{amsfonts}       
\usepackage{nicefrac}       
\usepackage{microtype}      
\usepackage{xcolor}         

\usepackage{amsmath}
\usepackage{amssymb}
\usepackage{mathtools}
\usepackage{amsthm}
\usepackage{multirow}
\usepackage{soul}
\usepackage[capitalize,noabbrev]{cleveref}
\usepackage{enumitem}

\usepackage{caption}
\usepackage{subcaption}
\usepackage{stfloats}
\usepackage{listings}

\usepackage[capitalize,noabbrev]{cleveref}

\theoremstyle{plain}
\newtheorem{theorem}{Theorem}[section]

\theoremstyle{definition}

\theoremstyle{remark}
\newtheorem{remark}[theorem]{Remark}

\newenvironment{example}
  {\pushQED{\qed}\examplex}
  {\popQED\endexamplex}

\usepackage[textsize=tiny]{todonotes}

\DeclareMathOperator*{\R}{\mathbb{R}}

\newcommand{\p}{\partial}

\newcommand{\CC}{\mathbb{C}}

\newcommand{\ZZ}{\mathbb{Z}}

\newcommand{\RR}{\mathbb{R}}

\usepackage{hyperref}
\usepackage{url}

\title{Gaussian Process Priors for \\Boundary Value Problems of \\Linear Partial Differential Equations}


\author{Jianlei Huang \\
Department of Mathematics and Statistics\\
Georgetown University\\
Washington, D.C., USA \\
\And
Marc Härkönen \\
Liquid AI \\
Boston, MA, USA \\
\AND
Markus Lange-Hegermann \\
Institute Industrial IT \\
OWL University of Applied Sciences and Arts\\
Lemgo, Germany \\
\And
Bogdan Raiță\\
Department of Mathematics and Statistics\\
Georgetown University\\
Washington, D.C., USA\\
}

%

\begin{document}

\maketitle

\begin{abstract}
  Working with systems of partial differential equations (PDEs) is a fundamental task in computational science.
  Well-posed systems are addressed by numerical solvers or neural operators, whereas systems described by data are often addressed by PINNs or Gaussian processes.
  In this work, we propose Boundary Ehrenpreis--Palamodov Gaussian Processes (B-EPGPs), a novel probabilistic framework for constructing GP priors that satisfy both general systems of linear PDEs with constant coefficients and linear boundary conditions and can be conditioned on a finite data set.
  We explicitly construct GP priors for representative PDE systems with practical boundary conditions.
  Formal proofs of correctness are provided and empirical results demonstrating significant accuracy and computational resource improvements over state-of-the-art approaches.
\end{abstract}



\section{Introduction}

Classically, systems of partial differential equations (PDEs) have been \emph{solved} using numerical methods, which require the solution to be uniquely determined. This uniqueness is typically ensured by prescribing a sufficient number of initial or boundary conditions. Neural operators adopt a similar strategy, and learn the evolution of PDE solutions, mapping the system state at time $t$ to the state at time $t+1$, given appropriate initial conditions. While such methods significantly reduce computational cost, their accuracy falls short of that achieved by traditional numerical solvers.

A second line of research in machine learning abandons the requirement of uniqueness and instead seeks a ``good'' approximate solution that fits observed data, i.e.\ \emph{regression} inside of the solution set of PDEs. This paper falls in this line of work. A prominent example is physics-informed neural networks (PINNs) \citep{raissi2019physics}, which combine standard regression loss with an additional penalty for violating the PDE at sampled points in the domain. Many Gaussian process (GP) models for linear PDEs also follow this approach.

GPs \citep{RW} are a standard choice for functional priors, offering robust regression with few data points and calibrated uncertainty estimates. Their bilinear covariance structure can encode linear properties, such as derivatives \citep{swain2016inferring,harrington2016differential}, enabling the construction of GPs whose realizations lie in the solution set of linear PDEs with constant coefficients \citep{LH_AlgorithmicLinearlyConstrainedGaussianProcesses,LinearlyConstrainedGP,MacedoCastro2008,scheuerer2012covariance,Wahlstrom13modelingmagnetic,MagneticFieldGP,jidling2018probabilistic,sarkka2011linear}, provided the system is controllable, i.e., admits potentials. This construction, relying on Gröbner bases for multivariate polynomial rings, was extended beyond controllable systems \citep{harkonen2023gaussian}. These methods yield machine learning models whose predictions are not merely physics-\emph{informed}, but physics \emph{constrained}: all regression functions are exact solutions of the PDE system. As a result, they surpass PINNs in accuracy by several orders of magnitude.

\begin{figure*}[hbt!]
  \vskip 0.2in
  \centering
  \begin{subfigure}[b]{0.19\textwidth}
    \centering
    \includegraphics[width=\textwidth]{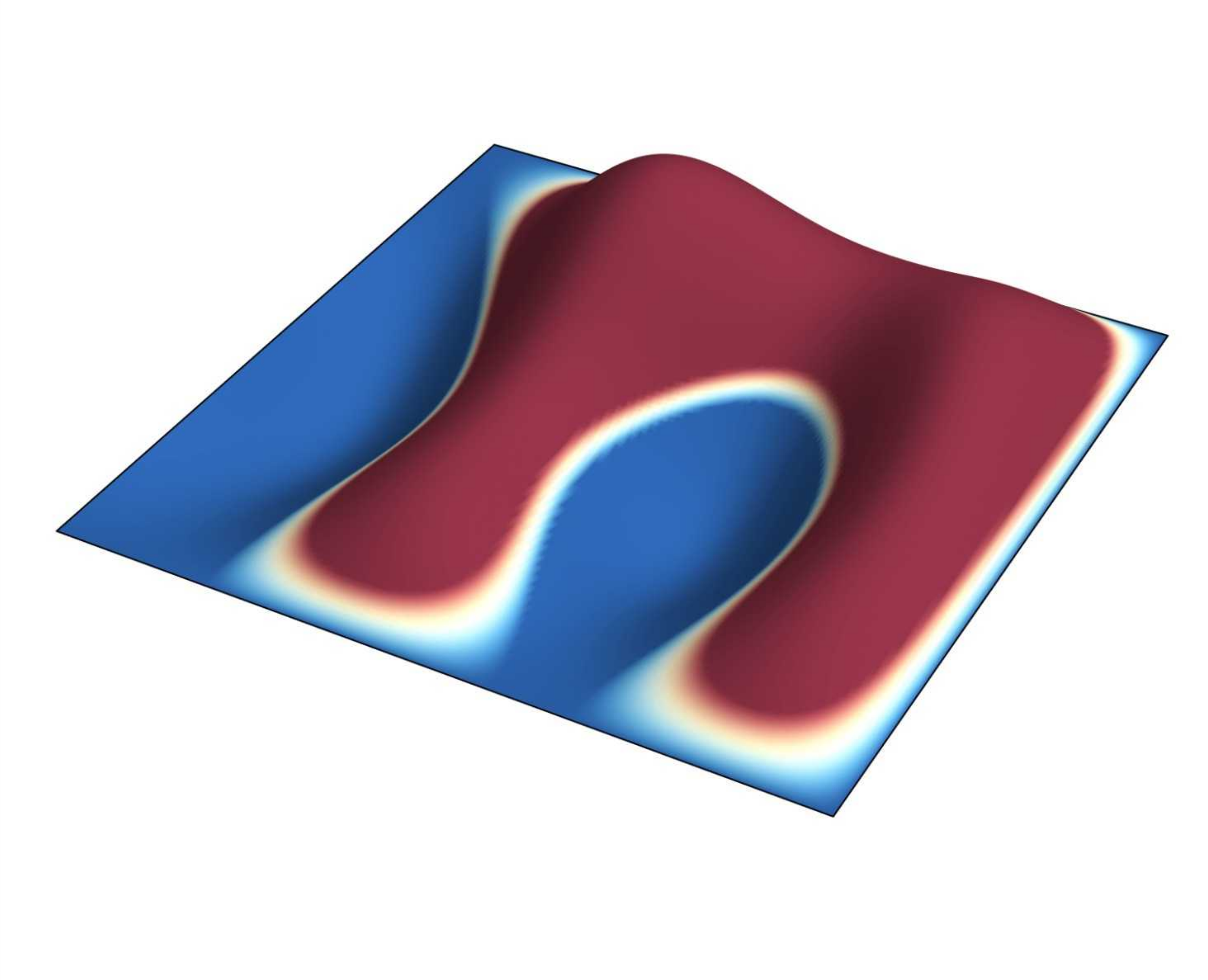}
    \caption{\tiny $t=0.0$}
  \end{subfigure}
  \hfill
  \begin{subfigure}[b]{0.19\textwidth}
    \centering
    \includegraphics[width=\textwidth]{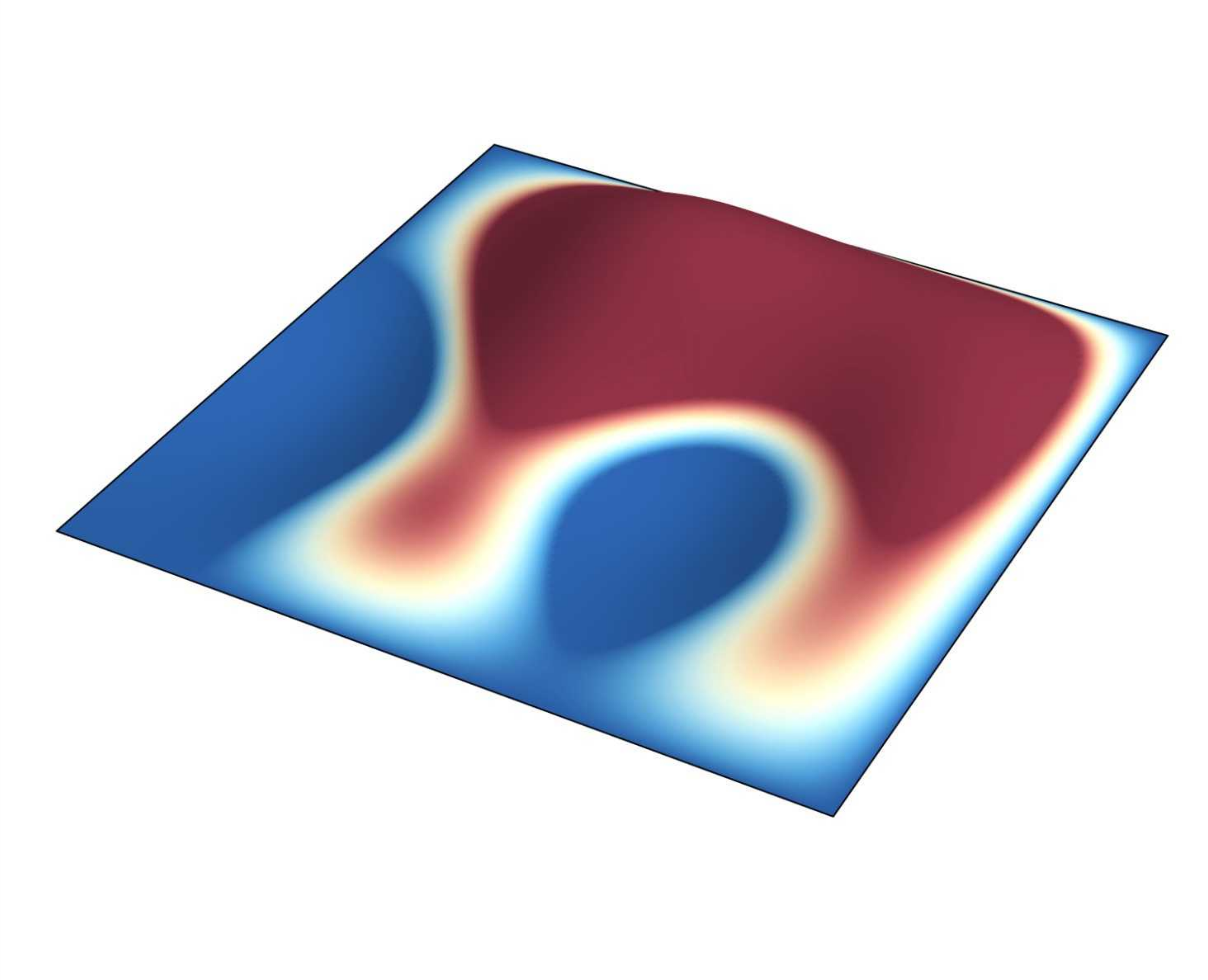}
    \caption{\tiny $t=1.0$}
  \end{subfigure}
  \hfill
  \begin{subfigure}[b]{0.19\textwidth}
    \centering
    \includegraphics[width=\textwidth]{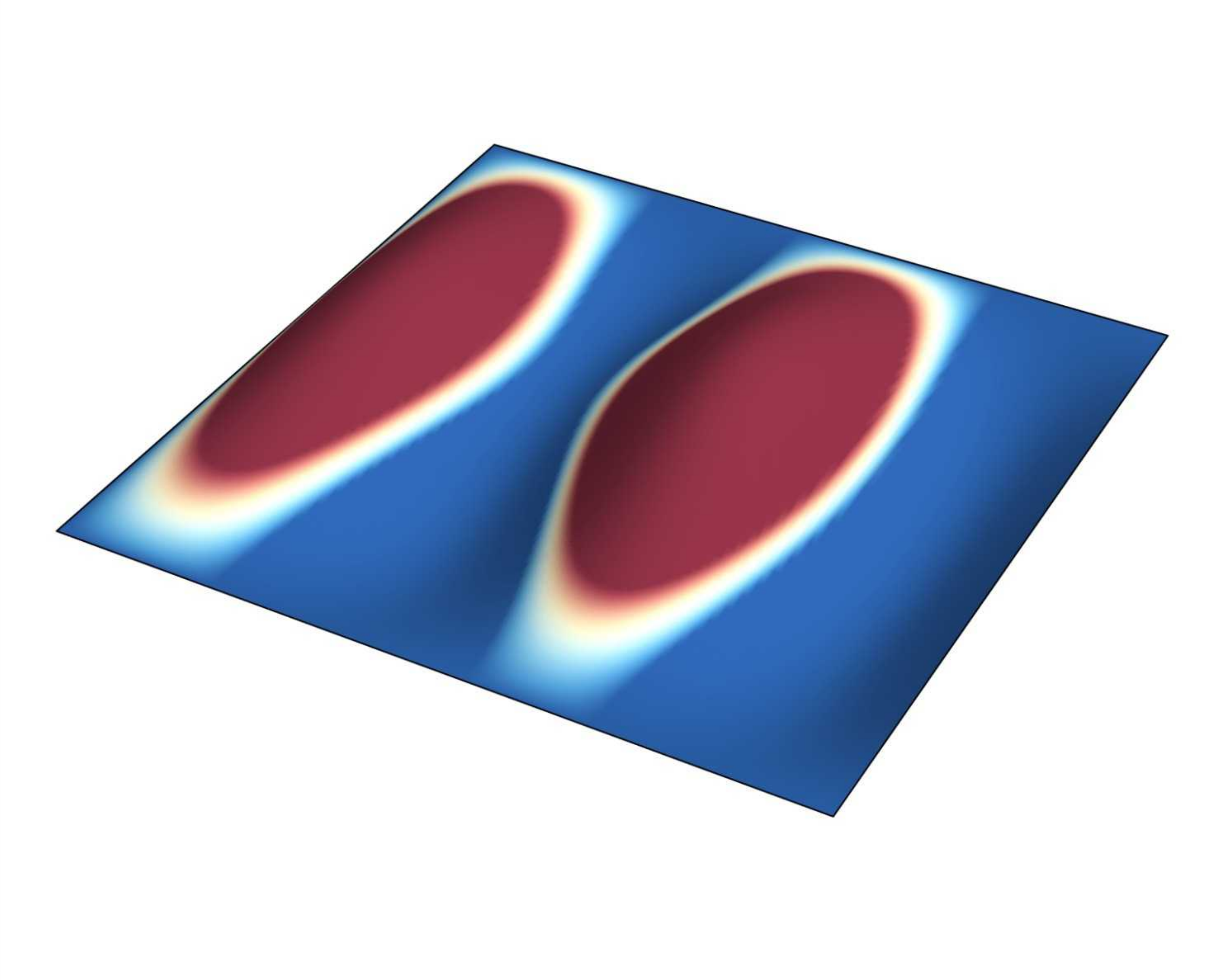}
    \caption{\tiny $t=2.0$}
  \end{subfigure}
  \hfill
  \begin{subfigure}[b]{0.19\textwidth}
    \centering
    \includegraphics[width=\textwidth]{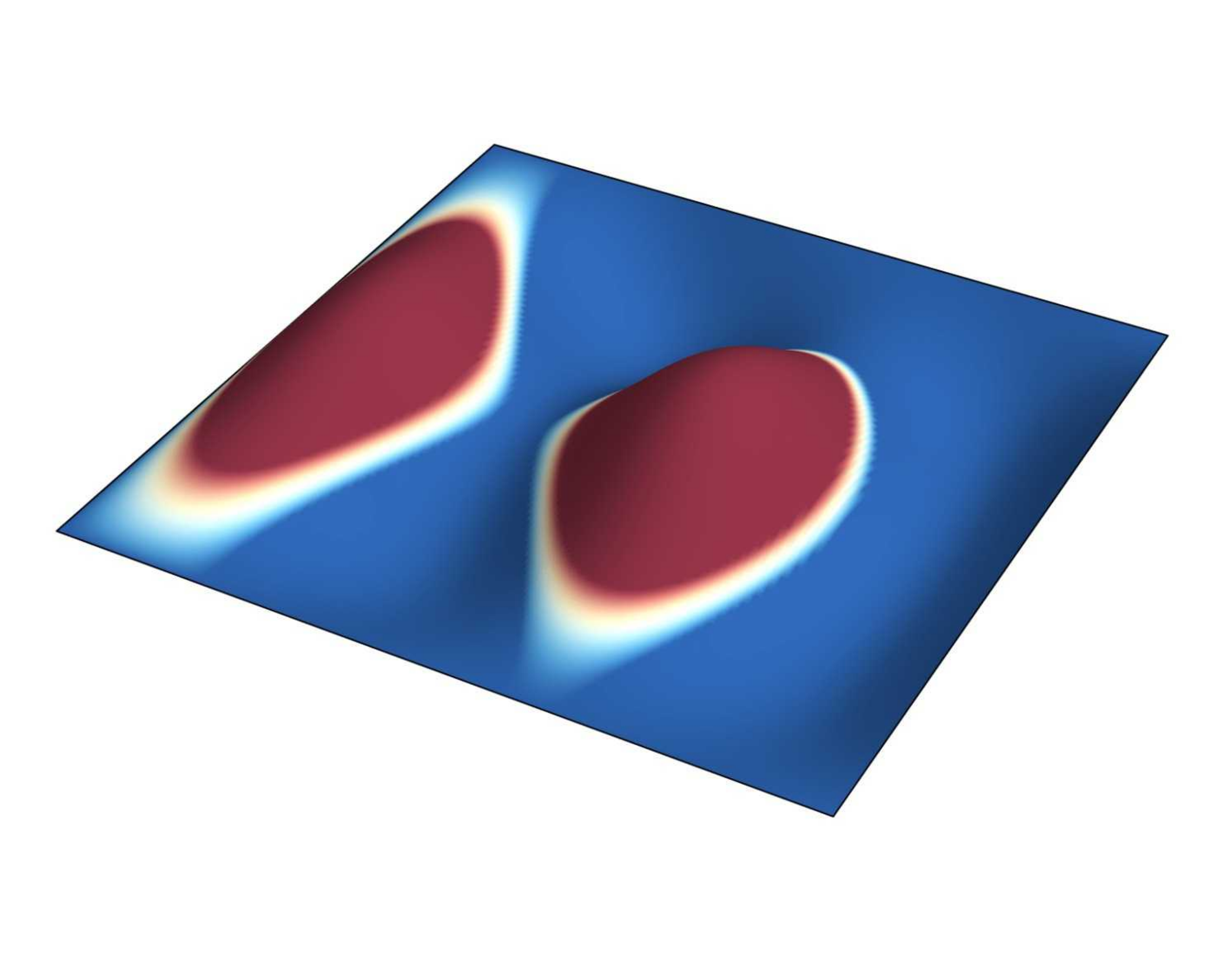}
    \caption{\tiny $t=3.0$}
  \end{subfigure}
  \hfill
  \begin{subfigure}[b]{0.19\textwidth}
    \centering
    \includegraphics[width=\textwidth]{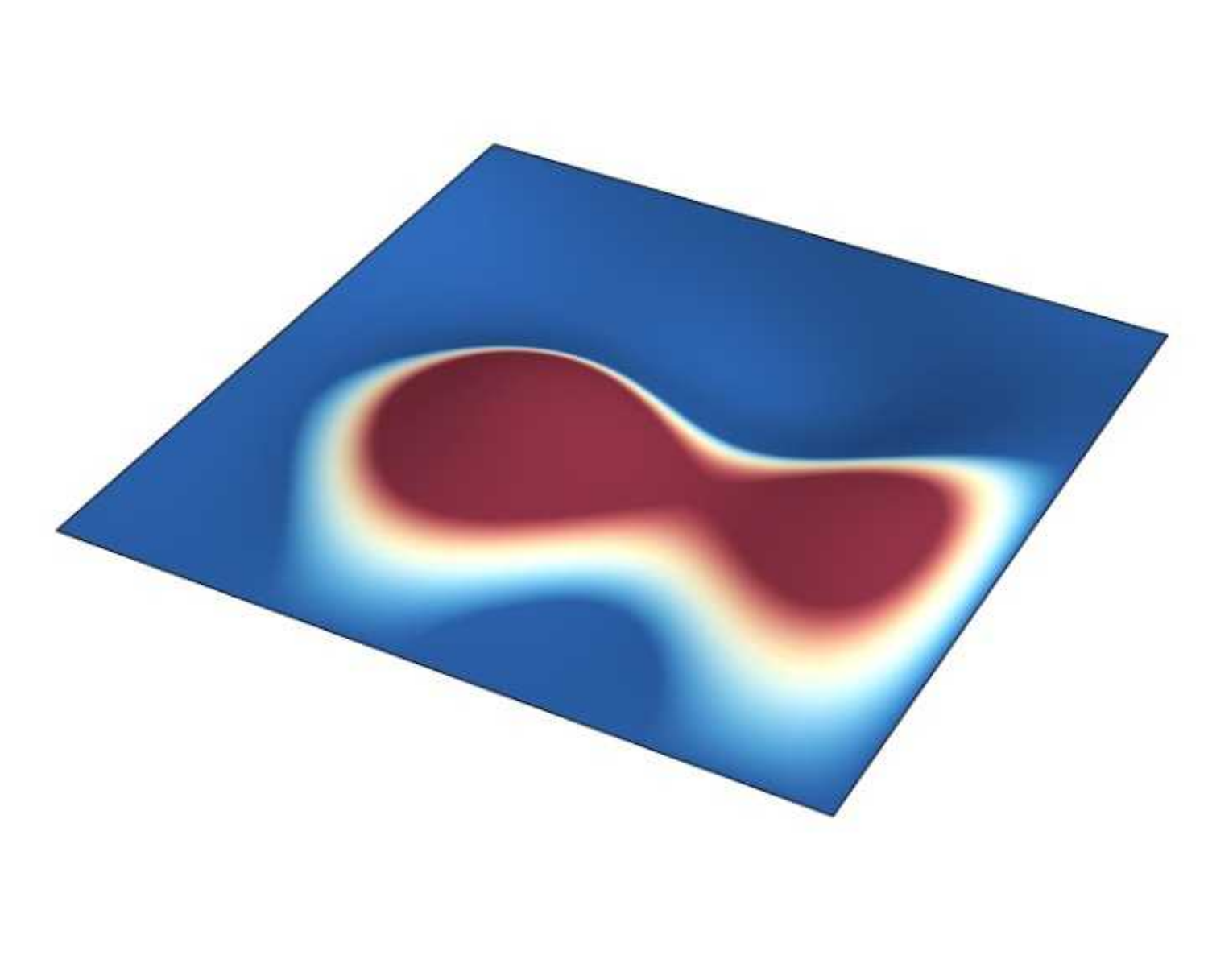}
    \caption{\tiny $t=4.0$}
  \end{subfigure}
  \caption{Consider the 2D wave equation $\partial_t^2 u - \partial_x^2 u - \partial_y^2 u = 0$ with zero boundary conditions on a rectangular spatial domain. This figure shows a sample drawn from our B-EPGP construction. Every sample analytically satisfies both the differential equation and the boundary conditions, and can be conditioned on any finite set of observations.}
  \label{figure_one}
\end{figure*}

In practice, PDE systems are typically accompanied by both data and (initial or) boundary conditions. For controllable systems, such boundary conditions can be incorporated into GP models using Gröbner bases over the Weyl algebra \citep{LH_AlgorithmicLinearlyConstrainedGaussianProcessesBoundaryConditions}. However, this approach does not extend to general linear PDEs with constant coefficients. In this paper, we introduce an algorithmic construction of GP priors—called Boundary Ehrenpreis--Palamodov Gaussian Processes (B-EPGPs)—that reside within the solution space of \emph{any} such system, including linear boundary conditions. B-EPGP builds on the infinite-dimensional basis provided by the Ehrenpreis--Palamodov theorem and transforms it into a basis of solutions that also satisfy the boundary conditions.
Our main contributions are:
\begin{enumerate}
    \item We present a general algorithmic construction of a novel model class: GP priors for solutions of linear PDE systems with constant coefficients and linear boundary conditions.
    \item We provide explicit constructions of such priors for several representative PDEs with practical boundary setups.
    \item We thoroughly prove correctness and convergence of our approach, in particular that these GP priors have realizations that are dense in the PDE solution.
\end{enumerate}

While our approach is mainly a regression model, we empirically demonstrate that it significantly outperforms state-of-the-art neural operator models by orders of magnitude in accuracy and computation time.
Our approach scales mildly with the dimension, in particular we can handle the 3D wave equation (with 4 inputs) with minimal boundary data, avoiding the exponential boundary discretization cost of traditional solvers.


\section{Gaussian Process Priors from the Ehrenpreis--Palamodov Theorem}


A key distinction between linear and nonlinear ODEs is that solutions to linear systems with constant coefficients are linear combinations of exponential-polynomial functions. For example, the ODE $y''' - 3y' + 2y = 0$ has linear combinations of $e^x$, $xe^x$, and $e^{-2x}$ as solutions, where the ``frequencies'' $1$ and $-2$ are the roots of the \emph{characteristic polynomial} $z^3 - 3z + 2 = (z - 1)^2(z + 2)$.

Remarkably, this structure extends to systems of linear PDEs with constant coefficients. Consider the 1D heat equation $\partial_1 u - \partial_2^2 u = 0$. Its exponential-polynomial solutions take the form $e^{x_1 z_1 + x_2 z_2}$, where the frequency vector $(z_1, z_2)$ satisfies the characteristic equation $z_1 - z_2^2 = 0$. This equation arises substituting each derivative $\partial_i$ with a complex variable $z_i$, akin to a Fourier transform. The set of such $z$ defines the \emph{characteristic variety}, here given by $V = \{(z_2^2, z_2)\colon z_2 \in \CC\}$.
We state the special case of the Ehrenpreis--Palamodov theorem for irreducible characteristic varieties without multiplicities. We refer to Appendix~\ref{sec:EPGP} for the general case, which involves linear combinations over the components of the variety and correction terms (as the factor $x$ in $xe^x$ above) for multiplicities.
\begin{theorem}\label{thm:EP}
    Let $\Omega \subset \RR^n$ be an open convex set. Let $A \in \CC[\partial_1, \ldots, \partial_n]$ and define the characteristic variety $V = \{z \in \CC^n \colon A(z) = 0\}$. Suppose $V$ is irreducible and has no multiplicities. Then, the linear span of $\{e^{x \cdot z}\}_{z \in V}$ is dense in the space of smooth solutions of $A(\partial)u(x) = 0$ on $\Omega$.
\end{theorem}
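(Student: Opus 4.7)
The plan is to combine the trivial verification that each member of $\{e^{x\cdot z}\}_{z\in V}$ is already a solution with a duality argument in the spirit of the Ehrenpreis--Palamodov fundamental principle. Concretely, $A(\partial)e^{x\cdot z}=A(z)e^{x\cdot z}=0$ for $z\in V$, so the inclusion $\Span\{e^{x\cdot z}:z\in V\}\subseteq\mathcal{S}$ (where $\mathcal{S}$ denotes the $C^\infty$ solutions of $A(\partial)u=0$ on $\Omega$) is immediate, and only density remains.

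I would equip $\mathcal{S}$ with the subspace topology inherited from $C^\infty(\Omega)$ and apply Hahn--Banach: density is equivalent to showing that every compactly supported distribution $\mu\in\mathcal{E}'(\Omega)$ which annihilates each $e^{x\cdot z}$, $z\in V$, also annihilates all of $\mathcal{S}$. To prove this dual statement, consider the Fourier--Laplace transform $\hat\mu(z):=\langle\mu_x,e^{x\cdot z}\rangle$. By the Paley--Wiener--Schwartz theorem, $\hat\mu$ extends to an entire function on $\CC^n$ whose exponential growth is controlled by the convex hull of $\supp\mu\subseteq\Omega$, and the vanishing hypothesis on $\mu$ translates into $\hat\mu|_V=0$.

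The crux is a division step. Because $V$ is irreducible and carries no multiplicities, $A$ is (up to a unit) the minimal polynomial defining $V$, so $(A)=I(V)\subset\CC[z_1,\ldots,z_n]$; at the polynomial level vanishing on $V$ is equivalent to divisibility by $A$. I would then invoke H\"ormander's division theorem to lift this divisibility to the analytic level: there exists an entire function $\hat\nu=\hat\mu/A$ still of Paley--Wiener type on the convex hull of $\supp\mu$, corresponding via inverse transform to some $\nu\in\mathcal{E}'(\Omega)$. Here convexity of $\Omega$ is essential, since it is what allows the support bound on $\mu$ to be inherited by $\nu$. Unwinding the transform gives $\mu=A^{\mathsf{T}}(\partial)\nu$ with $A^{\mathsf{T}}$ the formal transpose, and hence for any $u\in\mathcal{S}$,
\[
\langle\mu,u\rangle=\langle A^{\mathsf{T}}(\partial)\nu,u\rangle=\langle\nu,A(\partial)u\rangle=0,
\]
which is exactly the dual statement, closing the Hahn--Banach argument.

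The main obstacle I anticipate is the division step itself: bare polynomial divisibility on $V$ does not automatically produce an entire quotient with the correct exponential type, and obtaining Paley--Wiener bounds on $\hat\nu$ requires the convexity of $\Omega$ together with $L^2$-estimates using plurisubharmonic weights (H\"ormander's theorem). The hypotheses that $V$ is irreducible and multiplicity-free are precisely what reduce this to division by the single polynomial $A$, rather than by a list of Noetherian differential operators, and are thus what eliminate the polynomial correction factors that appear in the general Ehrenpreis--Palamodov representation.
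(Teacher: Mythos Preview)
The paper does not actually prove Theorem~\ref{thm:EP}; it is quoted as the classical Ehrenpreis--Palamodov fundamental principle (in this scalar, multiplicity-free setting essentially Malgrange's theorem), and the appendix only restates the general version without proof. So there is no ``paper's own proof'' to compare against.

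That said, your sketch is the standard route to this result and is correct in outline. The duality reduction via Hahn--Banach, the identification of $(C^\infty(\Omega))'$ with $\mathcal{E}'(\Omega)$, the Paley--Wiener--Schwartz characterization of $\hat\mu$, and the final transposition $\langle A^{\mathsf T}(\partial)\nu,u\rangle=\langle\nu,A(\partial)u\rangle$ are all sound. Your use of the hypotheses is also right: irreducibility plus no multiplicities forces $A$ to be an irreducible polynomial, so $(A)=I(V)$ and vanishing on $V$ becomes divisibility by the single polynomial $A$, which is exactly what collapses the general Noetherian-multiplier machinery to the bare exponentials $e^{x\cdot z}$.

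The only place to be careful is, as you note, the division step. Polynomial divisibility from the Nullstellensatz is not enough by itself; what you need is that if an entire function $\hat\mu$ of Paley--Wiener type vanishes on $\{A=0\}$ with $A$ square-free, then $\hat\mu/A$ is again entire \emph{and} of Paley--Wiener type with the same supporting convex set. This is the Malgrange--H\"ormander division lemma, and it is where convexity of $\Omega$ is genuinely consumed (the support bound for $\nu$ is the convex hull of $\operatorname{supp}\mu$, which stays inside $\Omega$). Once that lemma is invoked the argument closes exactly as you wrote.
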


The solutions we construct  are $\ker_{C^\infty(\Omega)}A=\{u\in C^\infty(\Omega)\colon A(\partial)u=0\}$ endowed with the classic Fr\'echet topology. Until Appendix~\ref{sec:EPGP}, we assume the simplifying conditions of Theorem~\ref{thm:EP} hold.

\citep{harkonen2023gaussian} defined a Gaussian Process (GP) prior with realizations of the form
$
f(x)=\sum_{j=1}^r w_je^{x\cdot z_j}
$
where all ``frequencies'' $z_j$ lie in the complex variety $V$ and $w_j\sim\mathcal{N}(0,\sigma_j^2)$. Local parametrization of $V$ enabled the use of SGD to optimize the $z_j$ and $\sigma_j^2$. The method was dubbed EPGP\footnote{\citep{harkonen2023gaussian} coined two terms, EPGP and S-EPGP; we will refer to either algorithm by EPGP.} and  successfully build probability distributions of PDE solutions from data at points, in some cases obtaining results of several orders of magnitude better than state-of-the-art PINN methods.

This paper follows this general idea of a probabilistic model for solutions of PDE systems and replaces the basis functions $e^{x\cdot z_j}$ from EPGP by new basis functions that satisfy boundary conditions.

\section{EPGP for Boundary Value Problems}\label{sec:B-EPGP}

We introduce and demonstrate B-EPGP on realistic problems with boundary conditions. Additional concrete examples  can be found in  Appendices~\ref{sec:rectangle},~\ref{sec:triangle},~\ref{sec:circle},~\ref{sec:slab},~\ref{sec:wedge_bad},~\ref{sec:heat},~\ref{sec:singularity},\ref{sec:2Dwave_inhom}.

\begin{remark}\label{rem:direct_method}
Boundary conditions can be incorporated into EPGP by conditioning the GP on data at the boundary. Consider, for example, the initial-boundary value problem for the 1D heat equation:
$$
\begin{cases}
    \partial_tu-\partial_x^2u=0&\text{in }[0,1]\times[-2,2]\\
    u(0,x)=f(x)&\text{for }x\in[-2,2]\\
    u(t,\pm2)=0&\text{for }t\in[0,1].
    \end{cases}
$$
A direct EPGP implementation sets
$
u(t,x) = \sum_{j=1}^r w_j e^{z_j^2 t + z_j x}
$
with $z_j \in \CC$, and fits the initial and boundary data at finite sets $X \subset [-2,2]$ and $T \subset [0,1]$. We apply this approach to the 2D wave equation in Section~\ref{sec:circle}.
This method performs reasonably well in low dimensions, where boundaries can be approximated with few data points. However, in higher dimensions, the curse of dimensionality makes this approach highly inefficient, as $g^{d-1}$ points are needed for a boundary in $d$-dimensional space with a grid containing $g$ points in each dimension. This corresponds to the typical curse of dimensionality in numerical approximations. B-EPGP encodes the boundary condition directly into the basis elements, \emph{avoiding the curse of dimensionality completely} when encoding the boundary. B-EPGP is not affected by the curse of dimensionality even for inhomogeneous boundary conditions, see Appendix~\ref{sec:inhom}.
\end{remark}

\subsection{B-EPGP: Bases satisfying boundary conditions in halfspaces}\label{sec:B-EPGP_halfspaces}

We begin our introduction of B-EPGP with the simplest class of boundaries: halfspaces. By a change of variables, we may assume the domain is $\Omega = \RR^n_+ := \RR^{n-1} \times [0, \infty)$. We will later generalize this approach to more complex boundaries and provide corresponding examples.

Our goal is to construct GP priors whose realizations satisfy
\begin{align}\label{eq:BVP}
   \begin{cases}
        A(\partial)u=0&\text{in }\RR^{n-1}\times[\,0,\infty)\\
    B(\partial)u=0&\text{on }\RR^{n-1}\times\{0\},
   \end{cases}
\end{align}
where $B \in \CC[\partial]^{h \times 1}$ is a linear PDE operator representing the boundary condition. Starting from Theorem~\ref{thm:EP}, we consider realizations of the form
$
f(x) = \sum_{j=1}^r w_j e^{x \cdot z_j},
$
which we aim to constrain to satisfy both the PDE and the boundary condition \emph{exactly}. In contrast, the baseline approach from Remark~\ref{rem:direct_method} enforces the boundary condition only \emph{approximately} and at a much higher cost. We stress that we are interested in \textit{underdetermined systems} \eqref{eq:BVP}, i.e., systems with many solutions. This means that our method constructs probability distributions on infinite dimensional spaces -- the kernels  \eqref{eq:BVP}.

For $x=(x',x_n)\in \R^{n-1}\times\R$, evaluating $B(\partial)f = 0$ at $x_n = 0$ yields 
$
\sum_{j=1}^r w_jB(z_j)e^{x'\cdot z'_j}=0\text{ for all }x'\in{\RR^{n-1}},
$
where $z_j'$ denotes the first $n-1$ components of $z_j$.
For this identity  to hold, we must have that $z_j'$ is constant with respect to $j$.
Hence, $\sum_{j=1}^r w_jB(z_j)=0$ for all $x'\in{\RR^{n-1}}$.
This enables us to write an algorithm to construct basis elements for solutions of \eqref{eq:BVP}. 

Let $z' \in \CC^{n-1}$ be such that there exists $z_n \in \CC$ with $(z', z_n) \in V$, and define $V'$ as the projection of the characteristic variety $V$ onto $\CC^{n-1}$. For each $z' \in V'$, let $S_{z'} = \{z \in V \colon z = (z', z_n) \text{ for some } z_n \in \CC\}$ denote the non-empty fiber over $z'$. These fibers can be computed using Gröbner basis with negligible complexity in practice, see Appendix~\ref{sec:appendix_groebner}. To ensure the boundary condition is satisfied, we seek weights $w_z \in \CC$ such that
\begin{align}\label{eq:bdry_restriction}
\sum\nolimits_{z \in S_{z'}} w_z B(z) = 0.
\end{align}
The admissible choices of $w_z$ form the left syzygy module of the vertically stacked matrices $B(z)$ for $z \in S_{z'}$, which is again computable via Gröbner basis algorithms with negligible complexity, see again Appendix~\ref{sec:appendix_groebner}.
For each $z'\in V'$, we obtain the basis
\begin{align}\label{eq:basis_new}
\left\{\sum\nolimits_{z\in S_{z'}}w_zB(z)e^{x\cdot z}\colon \sum\nolimits_{z\in S_{z'}}w_z B(z)=0\right\}_{z'\in V'}
\end{align}
of terms for the frequency $z'$.
We use a finite union of such basis vectors to perform linear regression as in EPGP.
The bases \eqref{eq:basis_new} can be computed \emph{explicitly} by hand for the heat and wave equations with either Dirichlet or Neumann boundary conditions. 
\begin{example}[Dirichlet boundary condition]
    If we set $B=1$, we  obtain $\sum w_z=0$ in \eqref{eq:bdry_restriction}. This is a broadly used condition with which we can solve explicitly.
\end{example}
\begin{example}\label{exa:1-D_heat}
    Consider the 1D heat equation $\partial_tu-\partial_x^2u=0$ with Dirichlet boundary condition, $B=1$ and $u(t,0)=0$. Start with $z=(\zeta^2,\zeta)\in V$ and note that $z'=\zeta^2$, so that $S_{z'}=\{\pm \zeta\}$. We conclude that in this case the basis elements are
    $
   \{  e^{t\zeta^2+x\zeta}- e^{t\zeta^2-x\zeta}\}_{\zeta\in\CC}.
    $
\end{example}
\begin{example}\label{exa:2-D_wave}
    Similarly, for the 2D wave equation $\partial_t^2u-\partial_x^2u-\partial_y^2u=0$ with Dirichlet boundary condition $u(t,0,y)=0$, we obtain the basis  $\{e^{\pm\sqrt{a^2+b^2}t+ax+by}-e^{\pm\sqrt{a^2+b^2}t-ax+by}\}_{a,b\in\CC}$.
\end{example}
\begin{example}[Neumann boundary condition]\label{exa:neumann}
    Another widely used boundary condition is Neumann, i.e., $B(z)=z_n$, which leads to $\sum w_zz_n=0$. Comparing to the previous examples, this leads to the basis  $
   \{  e^{t\zeta^2+x\zeta}+ e^{t\zeta^2-x\zeta}\}_{\zeta\in\CC}
    $
    for 1D heat equation (Example~\ref{exa:1-D_heat}) and $\{e^{\sqrt{a^2+b^2}t+ax\pm by}+e^{\sqrt{a^2+b^2}t+ax\mp by}\}_{a,b\in\CC}$ for 2D wave equation (Example~\ref{exa:2-D_wave}).
\end{example}
These calculations can be performed in arbitrary dimensions and for arbitrary hyperplanes, see Appendix~\ref{sec:B-EPGP_calc}. This basis replaces the basis of exponential functions in EPGP. Importantly, we can  prove that our method constructs approximations of all solutions to the boundary value problem:
\begin{theorem}\label{thm:heat_wave_halfspace}
    The linear span of \eqref{eq:basis_new} is dense in the space of all solutions of \eqref{eq:BVP} for the heat or wave equation and Dirichlet or Neumann boundary conditions.
\end{theorem}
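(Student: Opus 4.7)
The plan is to reduce the halfspace BVP \eqref{eq:BVP} to the unconstrained problem on $\RR^n$ via reflection, apply Theorem~\ref{thm:EP}, and symmetrize the resulting approximation. Let $R(x',x_n):=(x',-x_n)$ denote reflection across the boundary. Given a solution $u\in C^\infty(\RR^n_+)$ of \eqref{eq:BVP}, I extend it to $\tilde u$ on $\RR^n$ by odd reflection ($\tilde u = -u\circ R$ on $\{x_n<0\}$) in the Dirichlet case and even reflection in the Neumann case. The argument then rests on three claims: (i) $\tilde u \in C^\infty(\RR^n)$; (ii) $A(\partial)\tilde u = 0$ on all of $\RR^n$; (iii) EP approximations of $\tilde u$ can be symmetrized to lie in the linear span of \eqref{eq:basis_new}.

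For (i), I need every even (Dirichlet) or odd (Neumann) normal derivative of $u$ to vanish on $\{x_n=0\}$. The PDE lets one express $\partial_n^2$ as a purely tangential operator on solutions: for the heat equation $\partial_n^2 = \partial_t - \sum_{i<n}\partial_i^2$, and for the wave equation $\partial_n^2 = \partial_t^2 - \sum_{i<n}\partial_i^2$. Iterating, $\partial_n^{2k}u$ is a tangential operator applied to $u$ and $\partial_n^{2k+1}u$ is the same operator applied to $\partial_n u$, so the respective BC forces vanishing on $\{x_n=0\}$. Claim (ii) then follows because the symbols of the heat and wave operators are even in $z_n$, hence $A(\partial)$ commutes with $R$; thus $A(\partial)\tilde u = 0$ holds pointwise off $\{x_n=0\}$ by reflecting the equation and extends across by the smoothness from (i).

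For (iii), Theorem~\ref{thm:EP} on $\RR^n$ provides finite sums $f_N = \sum_j c_j^{(N)} e^{x\cdot z_j^{(N)}} \to \tilde u$ in $C^\infty(\RR^n)$ with $z_j^{(N)}\in V$. The varieties $V$ for heat and wave are invariant under $z\mapsto Rz$, and $\tilde u\circ R = \pm \tilde u$, so the symmetrized sequence $\hat f_N := \tfrac{1}{2}(f_N \pm f_N\circ R)$ also converges to $\tilde u$. Each summand becomes
$$\tfrac{1}{2}c_j^{(N)}\bigl(e^{x\cdot z_j^{(N)}} \pm e^{x\cdot R z_j^{(N)}}\bigr),$$
which, grouping by $z' = (z_j^{(N)})'$, is precisely of the form \eqref{eq:basis_new}: the fiber is $S_{z'}=\{z,Rz\}$ and the weights $(1,\pm 1)$ satisfy the Dirichlet syzygy $\sum_{z}w_z = 0$ or the Neumann syzygy $\sum_{z}w_z z_n = 0$, since the two $z_n$-components are negatives of each other. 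Restricting $\hat f_N$ from $\RR^n$ to $\RR^n_+$ converts $C^\infty$-convergence on $\RR^n$ to convergence to $u$ in $C^\infty(\RR^n_+)$, yielding the density.

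The main obstacle I foresee is the rigorous verification of (i): it requires vanishing of \emph{every} normal derivative at $\{x_n=0\}$, which demands an induction combining the PDE identity for $\partial_n^2$ with the boundary condition at each step. Once that is in hand, the commutation of $A(\partial)$ and $V$ with $R$, the symmetrization, and the identification with \eqref{eq:basis_new} are essentially formal.
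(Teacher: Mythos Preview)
Your proposal is correct and follows the same strategy as the paper: extend by reflection (odd for Dirichlet, even for Neumann), show the extension solves the PDE on all of $\RR^n$, apply Theorem~\ref{thm:EP}, and symmetrize the approximants into the form \eqref{eq:basis_new}. The only substantive difference is in how you justify that the extension solves the equation across the boundary: the paper verifies $\square v=0$ in the sense of distributions via an integration-by-parts computation (the boundary terms cancel thanks to the BC), whereas you argue directly that $\tilde u\in C^\infty(\RR^n)$ by iterating the identity $\partial_n^2 = (\text{tangential operator})$ on solutions and then invoke continuity. Your route is arguably tighter here, since applying Theorem~\ref{thm:EP} with convergence in $C^\infty$ tacitly requires the extension to be smooth, which your claim (i) supplies explicitly.
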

We  prove this in Appendix~\ref{sec:proof}, see  Appendix~\ref{sec:EPGP} for details on the topology with respect to which we have density. Relevant computational results are in Sections~\ref{sec:comparison} and Appendices~\ref{sec:wave_full},~\ref{sec:comparison_epgp}.

\subsection{B-EPGP with polygonal boundaries}

B-EPGP handles polygonal boundary conditions given by $k$ halfspaces $H_i$ for $1\le i\le k$ as follows.
We rephrase the computational steps from Section~\ref{sec:B-EPGP_halfspaces} in two parts, starting from a single EPGP basis function $e^{\widehat{z} \cdot x}$ with frequency $\widehat{z} \in V$:
\begin{itemize}[noitemsep,topsep=0pt,parsep=0pt,partopsep=0pt]
    \item[(i)] Use the geometric arguments from above to collect all relevant frequency vectors in the fiber $S_{\widehat{z}'} \ni \widehat{z}$ in the context of the single halfspace.
    \item[(ii)] Determine all linear combinations of the basis functions $e^{z \cdot x}$ for $z \in S_{\widehat{z}'}$ that satisfy the boundary condition, using the syzygy computation explained above.
\end{itemize}

To extend this to polygonal boundaries, we refine step (i) as follows, while keeping step (ii) unchanged:
\begin{itemize}
    \item[(i')] Initialize a singleton frequency set $S = \{\widehat{z}\}$. Iteratively apply the construction from (i) to $z \in S$ and a halfspace $H_i$, and add any resulting frequencies to $S$. Repeat until $S$ stabilizes, i.e., no new frequencies are introduced for any combination $z \in S$ and $H_i$, $1\le i\le k$.
\end{itemize}


These steps in (i') do not terminate in general.
We demonstrate how to still construct a basis with non-termination in Example~\ref{exa:slab}.
In all remaining examples below, the computation terminates.

\begin{example}[Wedge]\label{exa:wedge}
We consider the 2D wave equation with Dirichlet boundary condition at \(x=0\) and Neumann boundary condition at \(y=0\):
    \[
    \begin{cases}
        u_{tt}-u_{xx}-u_{yy}=0&(x,y)\in(0,\infty)^2,t>0\\
        u(0,y,t)=0& y\in(0,\infty),t>0\\
        u_y(x,0,t)=0 &x\in(0,\infty),t>0
    \end{cases}
    \]
The EPGP basis for     \(        u_{tt}-u_{xx}-u_{yy}=0\) is given  by
    $
    \{e^{\alpha x+\beta y+\tau t}\colon \alpha,\beta,\tau\in\CC,\,\alpha^2+\beta^2=\tau^2\}.
    $    
   Performing the algorithm from the beginning of the section leads to the basis: 
    \begin{align*}
       e^{\alpha x+\beta y+\tau t}-e^{-\alpha x+\beta y+\tau t}+e^{\alpha x-\beta y+\tau t}-e^{-\alpha x-\beta y+\tau t},
    \end{align*}
    which  satisfies both boundary conditions at once. The calculation of the basis and an implementation with a $45^\circ$ wedge are described in Appendices~\ref{sec:wedge_90deg},~\ref{sec:wedge_bad}, respectively.
\end{example}

\begin{example}[Infinite slab]\label{exa:slab}
    Consider the 1D wave equation on a line segment \((0,\pi)\) with Dirichlet boundary at both \(x=0\) and \(x=\pi\),
    \begin{align}\label{eq:wave_slab}
    \begin{cases}
        u_{tt}=u_{xx} &x\in(0,\pi), t\in\RR\\
        u(0,t)=u(\pi,t)=0 &t\in\RR.
    \end{cases}
    \end{align}
    An EPGP basis for this equation is 
    \(e^{\sqrt{-1}\xi( x\pm t)}\) for $\xi\in\R$. Section~\ref{sec:B-EPGP_halfspaces} yields the basis
    $
        e^{\pm \sqrt{-1}\xi t}\sin(\xi x)
    $
    that also adheres to the boundary condition at $x=0$.
    Choosing $\xi\in \mathbb Z$ gives a basis which satisfies the condition at $x=\pi$ as well. Both the calculation of the basis, along the lines of Fourier series, and an implementation are described in Appendix~\ref{sec:slab}.
    Even the following theorem holds.
\end{example}

\begin{theorem}\label{thm:slab}
    B-EPGP gives  $\{e^{\sqrt{-1}jx}\sin(jx)\}_{j\in\mathbb Z}$ as  basis with dense span in the solutions of \eqref{eq:wave_slab}.
\end{theorem}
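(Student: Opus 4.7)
The plan is to first verify that each claimed basis function $e^{\sqrt{-1}jt}\sin(jx)$, $j \in \mathbb{Z}$, satisfies both the wave equation and the Dirichlet conditions at $x = 0, \pi$, which is immediate: $\sin(j\cdot 0) = \sin(j\pi) = 0$ for $j \in \mathbb{Z}$, and direct differentiation shows $\partial_t^2$ and $\partial_x^2$ both act on the basis element as multiplication by $-j^2$. The nontrivial content is density, so the strategy is to prove that every smooth solution of \eqref{eq:wave_slab} admits an expansion in these basis functions converging in the Fréchet $C^\infty$ topology described in Appendix~\ref{sec:EPGP}.

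Given a solution $u \in C^\infty([0,\pi] \times \RR)$ of \eqref{eq:wave_slab}, I would first extend it to a $2\pi$-periodic smooth function $\widetilde{u}$ on $\RR \times \RR$ by odd reflection in $x$ across $0$. To verify $\widetilde{u} \in C^\infty$, I would show all even-order $x$-derivatives of $u$ vanish on the boundary: iterating the wave equation gives $\partial_x^{2k} u(0,t) = \partial_t^{2k} u(0,t) = 0$ using $u(0,t) = 0$, and analogously at $x = \pi$, which makes odd reflection smooth at both boundaries and consistent with $2\pi$-periodicity. The extension $\widetilde{u}$ still satisfies the wave equation on $\RR^2$ and admits a Fourier sine expansion $\widetilde{u}(x,t) = \sum_{j \geq 1} a_j(t) \sin(jx)$.

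Substituting this expansion into $\partial_t^2 \widetilde{u} = \partial_x^2 \widetilde{u}$ and matching coefficients in the orthogonal basis $\{\sin(jx)\}_{j \geq 1}$ yields the ODEs $a_j''(t) + j^2 a_j(t) = 0$, hence $a_j(t) = \alpha_j e^{\sqrt{-1}jt} + \beta_j e^{-\sqrt{-1}jt}$ for scalars $\alpha_j, \beta_j$. Using $\sin(-jx) = -\sin(jx)$, this expansion reindexes over $j \in \mathbb{Z}$ as a series in $\{e^{\sqrt{-1}jt} \sin(jx)\}_{j \in \mathbb{Z}}$, exhibiting $u$ as a limit of partial sums in the claimed basis.

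The main obstacle is promoting the classical $L^2$ convergence of the Fourier sine series to convergence in the Fréchet $C^\infty$ topology on compact subsets of $(0,\pi) \times \RR$. I would handle this via the standard rapid-decay argument for smooth periodic functions: repeated integration by parts combined with the smoothness and $2\pi$-periodicity of $\widetilde{u}$ yields $|\partial_t^{\ell} a_j(t)| = O(j^{-N})$ for every $N, \ell \in \NN$, uniformly for $t$ in a compact set. This gives uniform convergence of the partial sums together with all $x$- and $t$-derivatives on compact subsets, which is precisely the Fréchet topology in which density is asserted. The subtle part is tracking that the constants in the decay estimates remain locally uniform in $t$, which follows from smoothness of $\widetilde{u}$ up to and across the reflection lines.
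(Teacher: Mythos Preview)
Your proposal is correct and follows essentially the same route as the paper's proof (given for the general rectangle in Theorem~\ref{thm:heat_wave_rectangle}): extend the solution by odd reflection to a $2\pi$-periodic function, check that the extension still solves the wave equation globally, expand in a Fourier sine series, and solve the resulting ODEs $a_j''+j^2a_j=0$ for the coefficients. The only noteworthy differences are that you establish the smoothness of the odd extension directly via the vanishing of even-order $x$-derivatives at the boundary (the paper instead sketches a distributional integration-by-parts argument to show $\square v=0$ across the reflection lines), and you supply the rapid-decay argument for convergence in the Fr\'echet $C^\infty$ topology, which the paper omits; in this sense your write-up is somewhat more complete than the paper's.
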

\begin{example}[Rectangle]\label{exa:2-D_wave_rectangle}
    The wave equation in a square with Dirichlet boundary conditions reads:
    \begin{align}\label{eq:wave_rectangle}
    \begin{cases}
                u_{tt}-u_{xx}-u_{yy}=0&\text{for }x,y\in(0,\pi),\,t>0\\
        u=0&\text{if $x$ or $y=0$ or $\pi$.}
    \end{cases}
    \end{align}
    In a similar fashion to Example~\ref{exa:slab}, B-EPGP can be used to arrive at the basis
\begin{align}\label{eq:basis_rectangle}
 e^{\pm \sqrt{-1}\sqrt{j^2+k^2} t}\sin(k x)\sin(j y)\quad\text{for $j,k\in\mathbb Z$.}
\end{align}
In this case we retrieve the method of separation of variables (a classic Fourier series approach). The calculation of the basis and an implementation are described in Appendix~\ref{sec:rectangle}.
\end{example}
\begin{theorem}\label{thm:wave_rectangle}
    B-EPGP gives the basis \eqref{eq:basis_rectangle} for \eqref{eq:wave_rectangle}. The span of \eqref{eq:basis_rectangle} is dense in the solutions of \eqref{eq:wave_rectangle}.
\end{theorem}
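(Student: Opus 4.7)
The plan is to prove the two assertions of the theorem in order: first, that the B-EPGP algorithm of Section~\ref{sec:B-EPGP} yields the basis \eqref{eq:basis_rectangle}, and second, that the linear span of this basis is dense in the solution space of \eqref{eq:wave_rectangle} in the Fr\'echet topology described in Appendix~\ref{sec:EPGP}.

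For the derivation of the basis, I would combine the single-halfspace construction of Section~\ref{sec:B-EPGP_halfspaces} in the $x$- and $y$-directions with the slab-quantization argument from Example~\ref{exa:slab} (equivalently, Theorem~\ref{thm:slab}) applied independently to each coordinate. Starting from an EPGP basis element $e^{\alpha x+\beta y+\tau t}$ with $\alpha^{2}+\beta^{2}=\tau^{2}$, running step (i') against the two halfspaces through the origin, $\{x\ge 0\}$ and $\{y\ge 0\}$, populates the orbit $S$ with the four reflections $(\pm\alpha,\pm\beta,\tau)$ under the $\mathbb{Z}/2\times\mathbb{Z}/2$ action. Step (ii), the Dirichlet syzygy $\sum w_{z}=0$ imposed on each of the two boundary components $x=0$ and $y=0$, selects the unique combination (up to scale) $e^{\tau t}\sinh(\alpha x)\sinh(\beta y)$. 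Imposing the remaining Dirichlet conditions at $x=\pi$ and $y=\pi$ is a two-dimensional analogue of Example~\ref{exa:slab}: the orbit under the two reflections $x\mapsto -x$ and $x\mapsto 2\pi-x$ fails to close for generic $\alpha$, so closure is achieved by the quantization $\sinh(\alpha\pi)=\sinh(\beta\pi)=0$, i.e.\ $\alpha=\sqrt{-1}\,k$, $\beta=\sqrt{-1}\,j$ with $j,k\in\mathbb{Z}$. The dispersion relation $\tau^{2}=\alpha^{2}+\beta^{2}=-(j^{2}+k^{2})$ then forces $\tau=\pm\sqrt{-1}\sqrt{j^{2}+k^{2}}$, recovering \eqref{eq:basis_rectangle} after absorbing scalar constants.

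For the density claim, the strategy is to reduce to the classical Fourier expansion of smooth solutions of the wave equation on the square with Dirichlet data. Given a smooth solution $u$ of \eqref{eq:wave_rectangle}, expand it in the Sturm--Liouville basis $\{\sin(kx)\sin(jy)\}_{j,k\ge 1}$, which is complete in $L^{2}((0,\pi)^{2})$ among functions vanishing on the boundary. Writing $u(x,y,t)=\sum_{j,k\ge 1}c_{j,k}(t)\sin(kx)\sin(jy)$, the wave equation forces $\ddot c_{j,k}=-(j^{2}+k^{2})c_{j,k}$, so $c_{j,k}(t)=a_{j,k}^{+}e^{\sqrt{-1}\sqrt{j^{2}+k^{2}}\,t}+a_{j,k}^{-}e^{-\sqrt{-1}\sqrt{j^{2}+k^{2}}\,t}$. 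Smoothness of $u$ up to the boundary, together with the compatibility conditions $\partial_{x}^{2m}u=\partial_{y}^{2m}u=0$ at the boundary that follow inductively from differentiating the PDE and using the Dirichlet condition, yields super-polynomial decay of $a_{j,k}^{\pm}$. This guarantees that truncated partial sums converge to $u$ in every $C^{m}$-seminorm on every compact subset of $(0,\pi)^{2}\times\mathbb{R}$, which is precisely Fr\'echet convergence.

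The main obstacle is the density step, specifically the super-polynomial decay of the Fourier coefficients and the associated Fr\'echet convergence. The basis derivation is largely algorithmic once the one-dimensional slab case has been handled, but Fr\'echet density demands uniform control of all derivatives on compacta, not just $L^{2}$ convergence. The key point to verify is that smooth solutions of the Dirichlet problem satisfy \emph{all} even-order boundary compatibility conditions, enabling integration by parts to any order in the Fourier coefficient formula. An alternative, should the direct decay estimates prove cumbersome, is to extend $u$ by odd reflection across all four sides to obtain a smooth $2\pi$-periodic function on the torus and invoke the classical Fr\'echet convergence of Fourier series for smooth periodic functions, thereby bypassing the direct boundary-compatibility bookkeeping.
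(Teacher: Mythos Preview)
Your proposal is correct. Your alternative route (b)---odd reflection across all four sides to obtain a $2\pi$-periodic function, followed by classical Fr\'echet convergence of Fourier series for smooth periodic functions---is essentially the paper's strategy: the paper extends $u$ oddly in $x$ and $y$ to a $(2\pi)^{2}$-periodic function $v$ on $\mathbb{R}^{1+2}$, checks by a distributional integration-by-parts (as in the halfspace proof of Theorem~\ref{thm:heat_wave_halfspace}) that $\square v=0$ globally, observes that oddness eliminates the cosine modes, and then substitutes the sine-sine Fourier series into the equation to recover the temporal ODE $f_{j,k}''+(j^{2}+k^{2})f_{j,k}=0$.

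Your primary route (a), working directly on the square via Sturm--Liouville completeness and the inductive even-order compatibility conditions $\partial_{x}^{2m}u|_{x=0,\pi}=\partial_{y}^{2m}u|_{y=0,\pi}=0$, is a valid and somewhat more self-contained variant: it trades the distributional calculation on the extended domain for the explicit compatibility bookkeeping you correctly flag as the main technical point. Either way the conclusion rests on rapid decay of the Fourier coefficients of a smooth (periodic) function; the paper is in fact less explicit than you are about this final Fr\'echet-convergence step, simply identifying the basis with the separation-of-variables basis and leaving the density implicit. Your derivation of the basis itself matches the paper's shortcut in Appendix~\ref{sec:rectangle}: first the wedge combination $e^{\tau t}\sinh(\alpha x)\sinh(\beta y)$, then the quantization $\alpha,\beta\in\sqrt{-1}\,\mathbb{Z}$ to meet the conditions at $x=\pi$ and $y=\pi$.
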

In Theorem~\ref{thm:heat_wave_rectangle} we  give a general statement which incorporates both Theorems~\ref{thm:slab} and~\ref{thm:wave_rectangle}.

B-EPGP defines a valid GP and thus offers a fully probabilistic regression model. It not only handles noisy data but also allows sampling from underdetermined PDEs--that is, systems with non-unique solutions despite boundary conditions. In contrast, our previous examples were sufficiently constrained to yield nearly unique solutions. B-EPGP can also be conditioned on arbitrary (noisy or exact) observations. For example, consider the 2D wave equation with Dirichlet boundary conditions:
\begin{align*}
\begin{cases}
    u_{tt}-u_{xx}-u_{yy}=0&\text{for }x,y>0\\
    u=0&\text{on }x\in\{0,1\}\text{ and }y\in\{0,1\}
\end{cases}    
\end{align*}
Figure~\ref{figure_one} shows snapshots of a random B-EPGP samples at five timepoints, illustrating its ability to capture uncertainty from sparse data.

\subsection{Hybrid B-EPGP}\label{sec:hybrid}
For domains with both flat and curved pieces of the boundary, we use B-EPGP to fulfill the linear boundary conditions   and the direct method from Remark~\ref{rem:direct_method} using data for the curved pieces.
\begin{example}[Circular sector]\label{exa:sector}
    Consider the space-time domain $\Omega=\{(x,y,t)\colon x^2+y^2< 1,x>0,y>0,t>0\}$ which has the spatial boundary $\Gamma=\left([0,1]\times\{0\}\right)\cup \left(\{0\}\times [0,1]\right)\cup\{(\cos \theta,\sin\theta)\colon \theta\in[0,\pi/2]\}$. 
     We consider the 2D wave equation in this circular sector,  $u_{tt}-u_{xx}-u_{yy}=0\text{ in }\Omega$ with Dirichlet boundary condition $u=0$ on $\Gamma$.
    To account for the flat pieces of $\Gamma$, B-EPGP gives a basis similar to the one in Example~\ref{exa:wedge}, namely
    $$
     e^{\alpha x+\beta y+\tau t}-e^{-\alpha x+\beta y+\tau t}-e^{\alpha x-\beta y+\tau t}+e^{-\alpha x-\beta y+\tau t}\quad\text{for $\alpha,\beta,\tau\in\CC$ and $\alpha^2+\beta^2=\tau^2$.}
    $$
     To account for the circular piece, we sample many points $(\cos \theta,\sin\theta,t)$ and restrict our GP to satisfy $u=0$ at those points. For experiments, see Section~\ref{sec:sector}.
\end{example}




\section{Related ML Approaches to PDEs}

The intersection of machine learning and PDEs has grown into a vibrant research field with a variety of modeling paradigms. In this section, we survey classes of methods, including neural operators, GPs, and hybrid models, placing our B-EPGP framework in context.

A dominant class of approaches uses neural networks either to approximate the solution directly or as implicit function approximators. A foundational work in this domain is that of physics-informed neural networks (PINNs)~\citep{raissi2019physics}, which penalize deviation from the PDE operator in the loss function. Variants and improvements of this idea abound, such as variational PINNs~\citep{Kharazmi2021}, Fourier PINNs~\citep{wang2021eigenvector}, and gradient-enhanced PINNs~\citep{yu2022gradient}.
These approaches train neural networks to minimize the PDE and approximate data at the same time.
For a similar approach to GPs, see \citep{chen2022apik}.

Physics-informed neural networks with hard constraints—often called \emph{Ansatz PINNs}—enforce boundary and initial conditions exactly by constructing a trial solution $\hat u = g + h\,N_\theta$, where $g$ satisfies the prescribed conditions and $h$ vanishes on the constraint sets \citep{Lagaris1997}. Recent advances make this strategy practical on complex geometries via smooth signed/approximate distance functions, $R$-functions, and transfinite interpolation, enabling exact Dirichlet enforcement and robust handling of mixed boundary types \citep{Sukumar2022,EPINN2023}. Relative to penalty-based (“soft”) formulations, hard-constraint designs reduce loss-weight tuning, improve optimization conditioning, and often train faster and more stably, while remaining compatible with data terms and inverse problems \citep{Heliyon2023,ExactVsSoft2023}. Extensions cover Neumann/Robin conditions (via tailored derivative constraints or energetic forms), high-order PDEs (first-order reformulations), and time-dependent problems (time-factorized ansätze), establishing hard-constraint PINNs as a strong default when admissible trial spaces can be constructed \citep{FOPINN2022,ATPINNHC2025}.
In contrast to these approaches, B-EPGP exactly satisfies both boundary conditions \emph{and} the system of PDEs.

Neural operators \citep{kovachki2023neural}, such as DeepONets~\citep{Lu2021deeponet} and Fourier Neural Operators (FNOs)~\citep{Li2021}, represent mappings between function spaces and are trained across families of PDEs. Most neural operators face difficulties enforcing hard boundary constraints, especially in high-dimensional or complex geometries.
Recent contributions have tackled multi-scale problems~\citep{Fang2023solving} and high-frequency regimes~\citep{raonic2024convolutional}.
Neural operators are connected to GPs \citep{magnani2024linearization} via linearization.
The approaches \citep{hennig2015probabilistic,pfortner2022physics,schober2014probabilistic,zhang2025monte} improve upon numerical PDE-solvers by casting them in a pro\-babilis\-tic framework.

GP models have been applied to both forward and inverse problems involving PDEs. Classical works encode linear constraints through covariance functions~\citep{MacedoCastro2008,MagneticFieldGP,sarkka2011linear}. Specifically, WIGPR \citep{henderson2021stochastic} considers the 3D wave equation. Later methods use symbolic tools to construct priors consistent with linear PDEs~\citep{jidling2018probabilistic,LH_AlgorithmicLinearlyConstrainedGaussianProcesses,LinearlyConstrainedGP}.
Theoretical results underlying these approaches can be found in \citep{henderson2023characterization,sullivan2024hille} and a related approach to neural networks in \citep{hendriks2020linearly,ranftl2023physics}.
Boundary conditions in GPs with differential equations have been modeled via vertical scaling with polynomial \citep{LH_AlgorithmicLinearlyConstrainedGaussianProcessesBoundaryConditions} and analytic 
\citep{langehegermann2022boundary} functions.
A special case was later introduced as BCGP \citep{dalton2024boundary}.
Well-posed boundary values problems where only observations of the source term exist and no observations of the solution have been considered with GPs using spectral expansions of covariance functions \citep{gulian2022gaussian}, including Lie symmetries \citep{dalton2024physics}.
EPGP has recently been generalized to inverse problems \citep{li2025gaussian}.

There is an extensive treatment of Linear PDE coming from numerics and optimization, including finite element methods   \citep{larson2013finite,logan2011first,johnson2009numerical,brenner2008mathematical,braess2001finite}, finite difference methods \citep{ervedoza2012wave,zuazua2005propagation,langtangen2016finite,langtangen2017finite,thomas2013numerical}, and both \citep{cohen2017finite,mazumder2015numerical,ames2014numerical,esfandiari2017numerical}. We do not compare with them since they use deterministic models for well-posed problems, while we construct probabilistic models for under-determined problems.

Physics informed machine learning models play an important role in ODE theory and control.
\citep{pmlr-v120-geist20a} constructs GP models for control for rigid body dynamics, while \citep{besginow2022constraining} constructs such models for general linear ODE systems with constant coefficients, leading to control approaches \citep{besginow2025linear,tebbe2024physics}.

\section{Experimental Comparison}\label{sec:comparison}

B-EPGP is, as a GP, a probabilistic framework for describing the solution set of certain PDE systems.
Our Theorems and their proofs  show that all realizations of B-EPGP are solutions and moreover that realizations are dense in the set of solutions.
To show the usefulness of these theoretical guarantees, we repurposed the B-EPGP priors as solvers by conditioning on enough data.
This allows to compare B-EPGP to Neural Operators, the state-of-the-art in solving PDEs from initial values, and EPGP, the state-of-the-art in probabilistic modeling of linear PDEs with constant coefficients.
Results for further examples and details can be found in  Appendices~\ref{sec:rectangle},~\ref{sec:triangle},~\ref{sec:circle},~\ref{sec:comparison_epgp},~\ref{sec:slab},~\ref{sec:wedge_bad},~\ref{sec:heat},~\ref{sec:2Dwave_inhom},~\ref{sec:singularity}.
These include (i) details on the calculation of the B-EPGP basis and plots for solving the wave and heat equations in several domains in App.~\ref{sec:wave_full},~\ref{sec:wave_two_halfspaces},~\ref{sec:heat}, (ii) comparison of our method with EPGP  in App.~\ref{sec:comparison_epgp}, (iii) an extension of our method to include inhomogeneous systems $A(\partial) u=f$ and $B(\partial) u=g$ as opposed to our homogeneous case study of \eqref{eq:BVP}  in App.~\ref{sec:inhom}.
We measure the accuracy of solutions in two ways:
(a) verify the accuracy of our solvers when we know the \emph{unique} solution to the PDEs,
(b) check the conservation of energy, an important physical invariant, for the wave equation in bounded domains \citep{hansen2023learning}. See Appendix~\ref{sec:conservation_energy} for details.
We refrain from checking the error of a solution in the PDE or boundary condition as it unfairly favors B-EPGP, which satisfies them exactly.
These experiments also demonstrate that B-EPGP can be applied in practice under various circumstances.

\subsection{Comparison to Neural Operators}\label{sec:comparison_no}

\begin{table}
\caption{The median \(L_1\) error in \([0,4]\times[0,8]\) for the experiment from Section~\ref{sec:comparison_no} shows the superiority of B-EPGP for different numbers of data points $n$.}
\label{tab:NO}
\centering
\begin{tabular}{|c||c|c|c|c|}
\hline
\multirow{2}{*}{Algorithm} &\multicolumn{2}{c|}{Absolute L1 Error} &\multicolumn{2}{c|}{Relative L1 Error}\\
\cline{2-5}
&$n=121$ & $n=1201$ & $n=121$ & $n=1201$\\
\hline
\small{CNO} &7.24e-3 &1.05e-3 & 1.31\% &0.79\%\\
\small{FNO} &1.05e-2 &3.13e-3 & 1.95\% &1.17\%\\
\small{EPGP} &2.36e-4 &6.62e-5 &0.52\% &0.14\%\\
\small{B-EPGP (ours)} & \textbf{1.96e-4} & \textbf{3.41e-5} & \textbf{0.37\%} & \textbf{0.06\%}\\
\small{BCGP} &3.32e-4 & 6.34e-5 & 0.62\% & 0.11\% \\
\small{WIGPR} &5.12e-4 & 8.34e-5 & 0.86\% & 0.21\% \\
\hline
\end{tabular}
\end{table}

We compare to the Convolutional Neural Operators (CNO) \citep{raonic2024convolutional} and Fourier Neural Operators (FNO) \citep{Li2021}.
As experimental setting we use the
1D wave equation with Neumann boundary condition,
$$
\begin{cases}
u_{tt}=u_{xx} &\text{in \((0,\infty)\times(0,\infty)\)}\\
u(0,x) = h_1(x)\text{ and }u_t(0,x)=h_2(x) &\text{for \(x\in[0,\infty)\)}\\
 u_x(t,0)=0 &\text{for \(t\in(0,\infty)\)}
\end{cases}
$$
where $h_1(x)=f(x-3)+f(x+3) +g(x-1)+g(x+1)$ and $h_2(x)=f'(x-3)-f'(x+3)$ for \(f(x)=e^{-5x^2}\), \(g(x)=e^{-10x^2}\).
We compare to the unique exact solution given by
\begin{align*}
    f(x+t-3)+f(x-t+3) +\tfrac 1 2\left( g(x+t-1)+g(x-t-1)+ g(x+t+1)+g(x-t+1)\right)
\end{align*}
Appendix~\ref{sec:B-EPGP_calc} encodes the Neumann boundary into  B-EPGP, leading to
$
\{e^{\alpha (x\pm t)}+e^{\alpha (-x\pm t)}\}_{\alpha\in\CC}.
$
We consider both \(n=121\) and \(n=1201\) sample points from initial condition on the $t=0$-interval \([0,12]\), and for EPGP, CNO and FNO, we model the boundary at $x=0$ by adding data at \(t=0.2\ZZ_{\ge0}\)
Table~\ref{tab:NO} shows that EPGP is between one and two orders of magnitude superior to the neural operator methods, and B-EPGP improves upon EPGP, BCG, and WIGPR by factors of around 2.
Experimental details about the  neural operator computations and ours can be found in Appendix~\ref{sec:operator_details}.

\subsection{High-dimensional Example: 3D Wave Equation}\label{subsec:3dwave}
The 3D wave equation is computationally challenging even without boundary conditions. In Appendix~\ref{sec:free_wave}, we adapt EPGP to fit both initial displacements and velocities. 
Here, we use  B-EPGP  with Neumann boundary condition on the planes \(y=0\) and \(z=0\) and initial conditions as follows.
\[
\begin{cases}
    u_{tt}=u_{xx}+u_{yy}+u_{zz} 
    &\text{ for }x\in\R,y,z,t>0\\
        u = e^{-5r_1^2}+e^{-5r_2^2}+e^{-5r_3^2} \text{ and } u_t=0
    &\text{at }t=0\\
     u_y=0\text{ and }u_z=0 &\text{at respectively $y=0$ and $z=0$}\\
\end{cases}
\]
where \(r_1, r_2, r_3\) denote  distances between \((x,y,z)\) and \((1,1,1),(1,-1,1),(1,1,-1)\), respectively.

The B-EPGP basis can be computed using the ideas in Section~\ref{sec:wedge_90deg} as 
\begin{align*}
    &e^{a x+b y+c z+d t}+e^{a x-b y+c z+d t}+e^{a x+b y-c z+d t}
    +e^{a x-b y-c z+d t}\\
    &+e^{a x+b y+c z-d t}+e^{a x-b y+c z-d t}+e^{a x+b y-c z-d t}+e^{a x-b y-c z-d t}\quad\text{for $d^2=a^2+b^2+c^2$}.
\end{align*}
Computation of solutions for PDE in 4D incurs a curse of dimensionality, hence few papers tackle 3D wave equation. For instance,  EPGP runs out of memory on an Nvidia A100 80GB. Our B-EPGP finishes with a low L1-error of 0.00088. For a comparison of other computation times of machine learning models and finite element solvers, see Appendix~\ref{sec:wave_sota}.
These machine learning methods take training times in the range of (at least) hours and might even need training data that needs to be generated in the range of days or weeks. However, they have quick inference times (less than a second).
FEM solvers are very dependent on the precise circumstances of the setup, but for the above examples usually take  minutes (for limited accuracy) to hours.
Training B-EPGP (determining the frequencies)  takes 2371 seconds, giving a full probabilistic model instead of ``just'' a single solution. The training details are in Appendix~\ref{sec:3d_operator_details}.
\begin{figure*}[t]
  \vskip 0.2in
  \centering
  \begin{subfigure}[b]{0.18\textwidth}
    \centering
    \includegraphics[width=\textwidth]{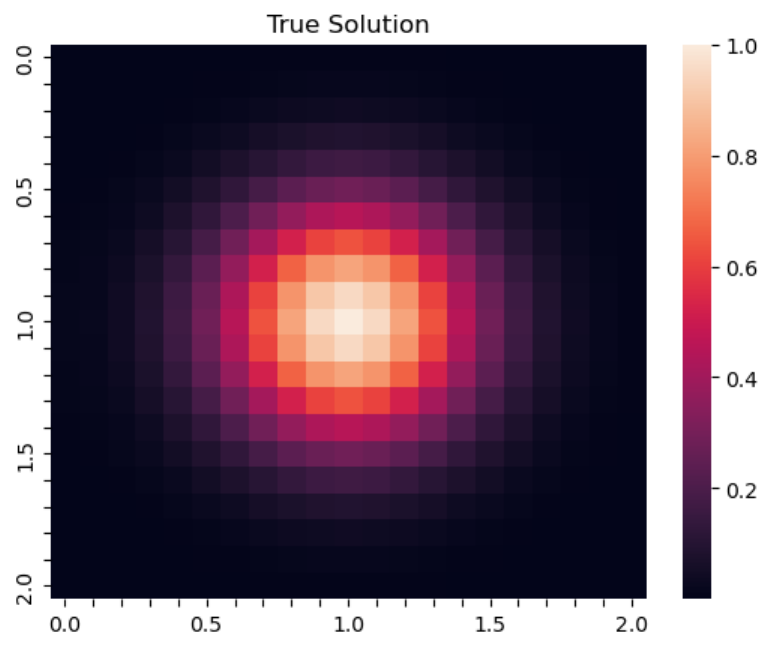}
    \caption{\tiny $t=0.0$,\\True solution}
  \end{subfigure}
  \hfill
  \begin{subfigure}[b]{0.18\textwidth}
    \centering
    \includegraphics[width=\textwidth]{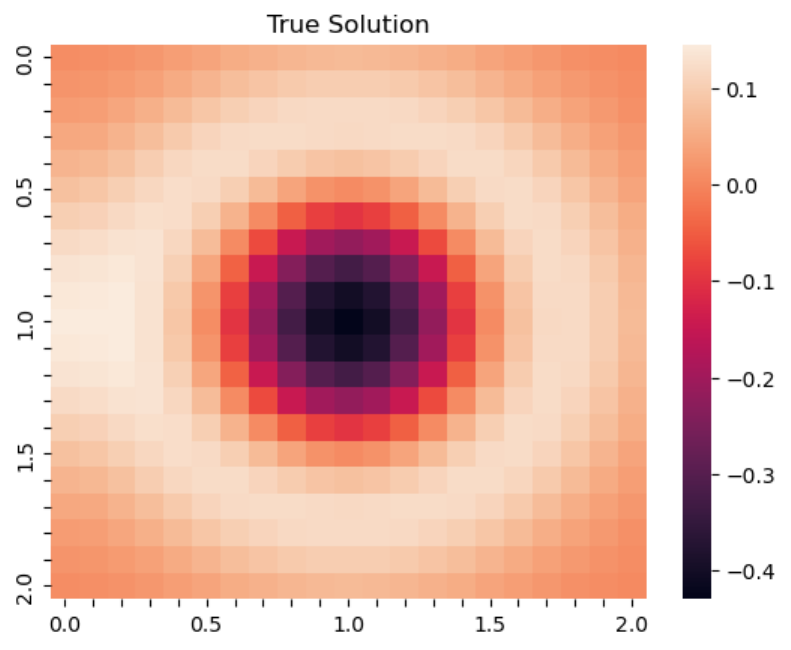}
    \caption{\tiny $t=0.5$,\\True solution}
  \end{subfigure}
  \hfill
  \begin{subfigure}[b]{0.18\textwidth}
    \centering
    \includegraphics[width=\textwidth]{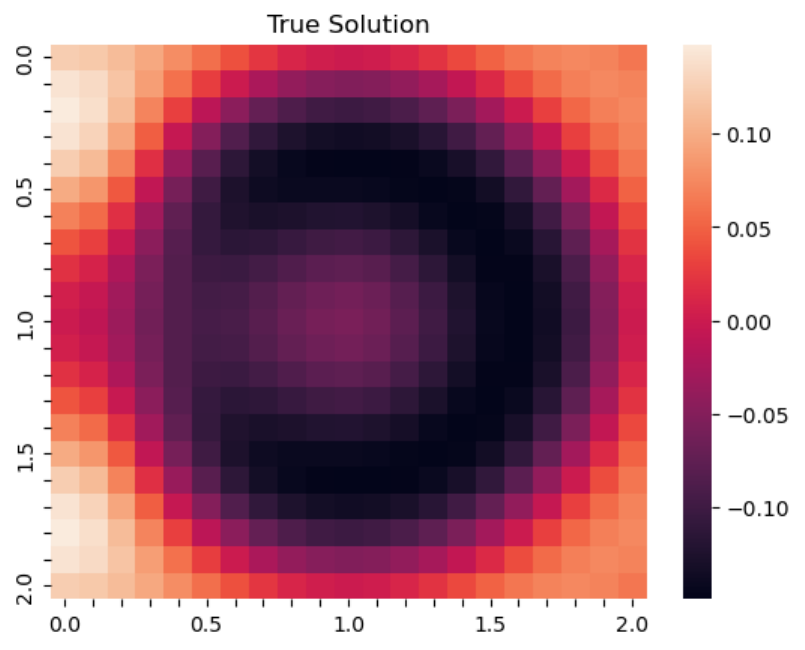}
    \caption{\tiny $t=1.0$,\\True solution}
  \end{subfigure}
  \hfill
  \begin{subfigure}[b]{0.18\textwidth}
    \centering
    \includegraphics[width=\textwidth]{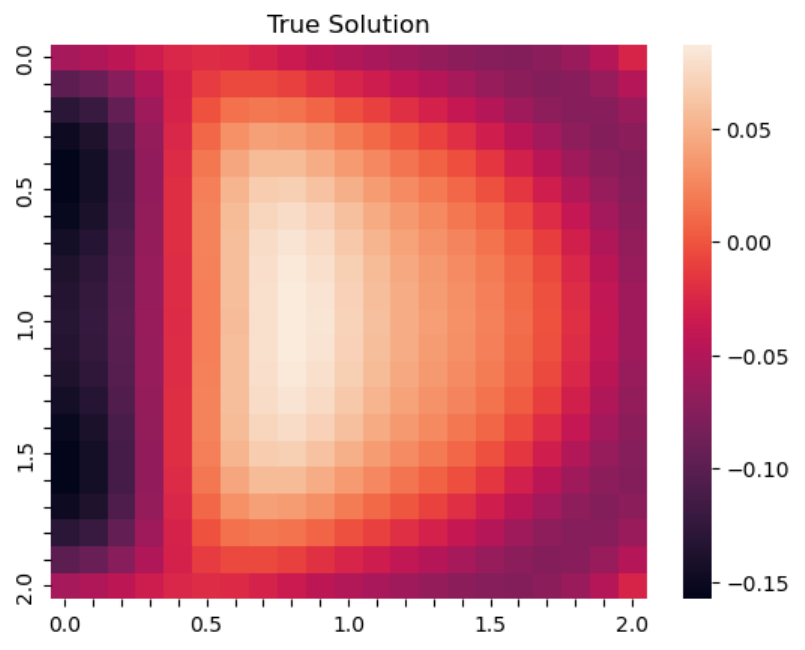}
    \caption{\tiny $t=1.5$,\\True solution}
  \end{subfigure}
  \hfill
  \begin{subfigure}[b]{0.18\textwidth}
    \centering
    \includegraphics[width=\textwidth]{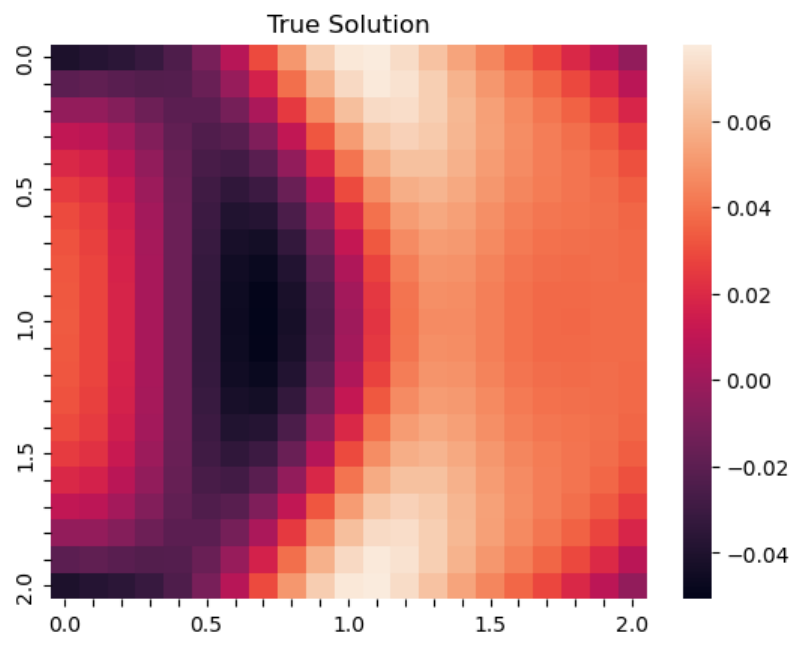}
    \caption{\tiny $t=2.0$,\\True solution}
  \end{subfigure}
  \\
  \begin{subfigure}[b]{0.18\textwidth}
    \centering
    \includegraphics[width=\textwidth]{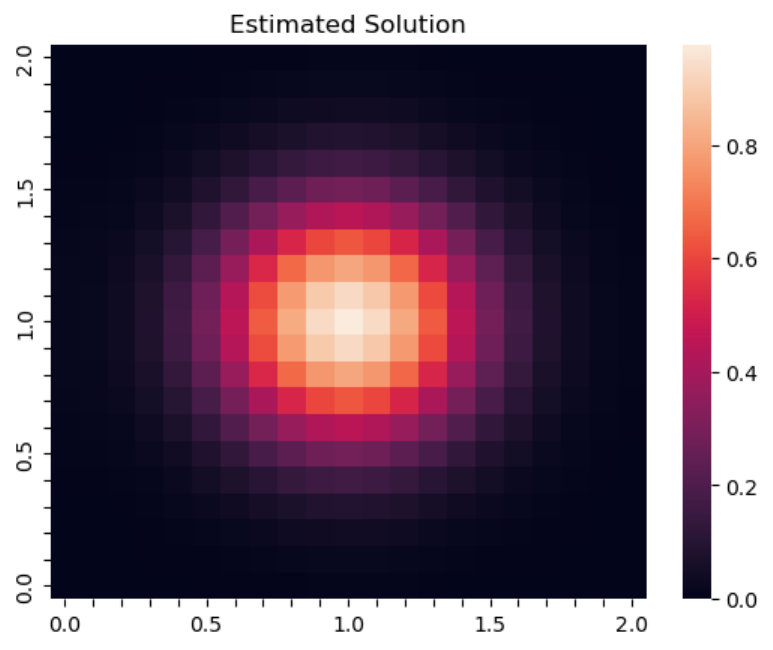}
    \caption{\tiny $t=0.0$,\\Estimated solution }
  \end{subfigure}
  \hfill
  \begin{subfigure}[b]{0.18\textwidth}
    \centering
    \includegraphics[width=\textwidth]{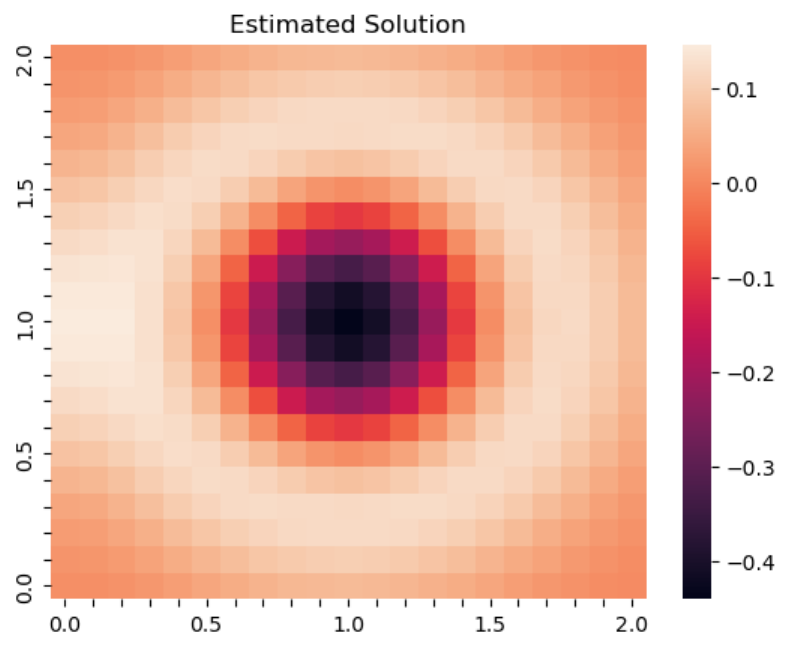}
    \caption{\tiny $t=0.5$,\\Estimated solution}
  \end{subfigure}
  \hfill
  \begin{subfigure}[b]{0.18\textwidth}
    \centering
    \includegraphics[width=\textwidth]{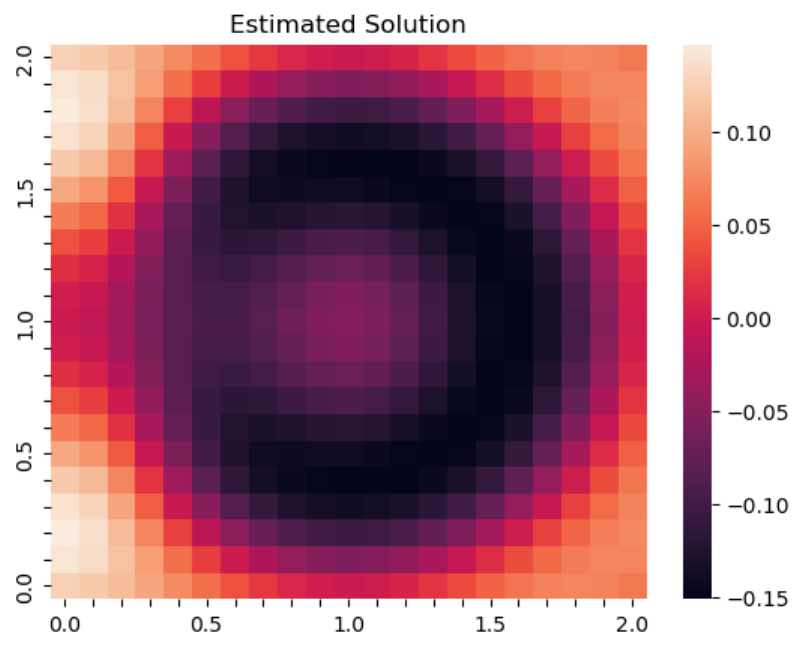}
    \caption{\tiny $t=1.0$,\\Estimated solution}
  \end{subfigure}
  \hfill
  \begin{subfigure}[b]{0.18\textwidth}
    \centering
    \includegraphics[width=\textwidth]{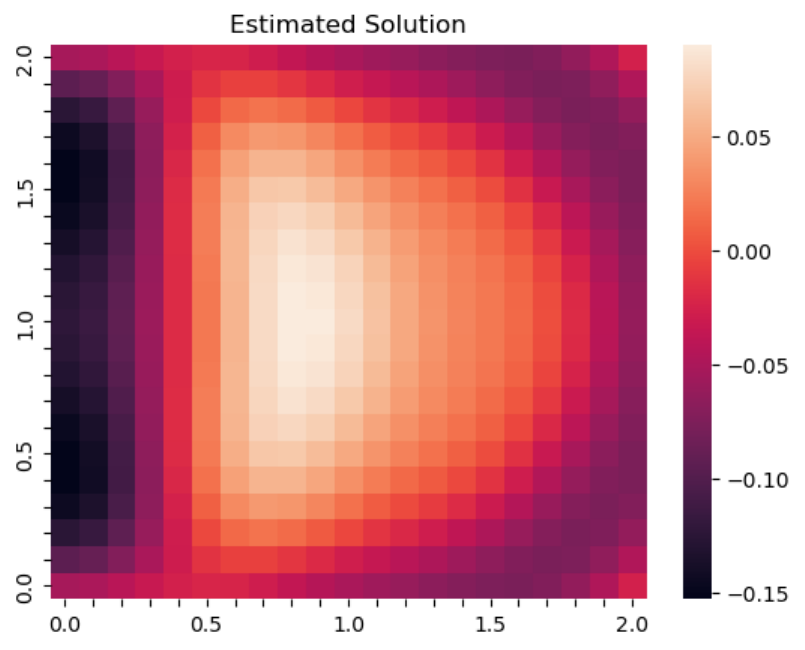}
    \caption{\tiny $t=1.5$,\\Estimated solution}
  \end{subfigure}
  \hfill
  \begin{subfigure}[b]{0.18\textwidth}
    \centering
    \includegraphics[width=\textwidth]{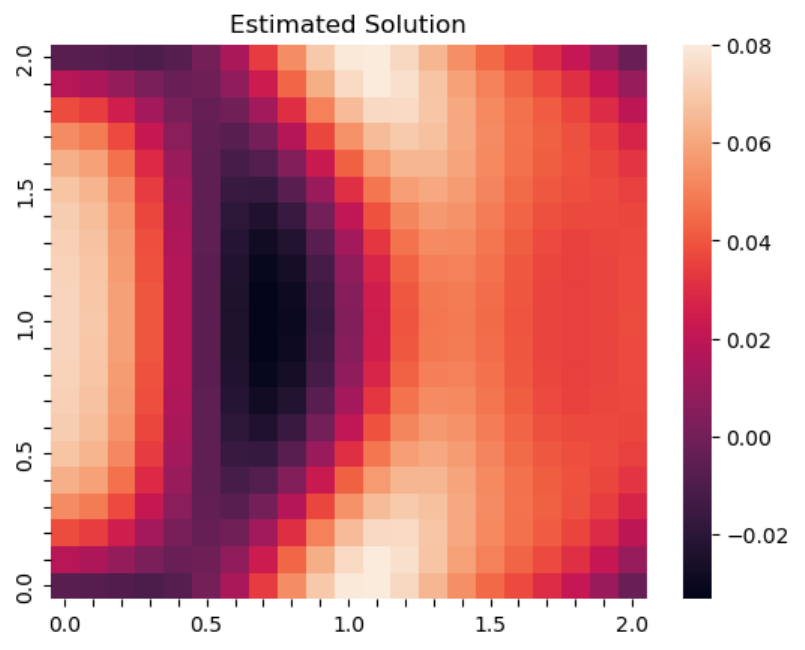}
    \caption{\tiny $t=2.0$,\\Estimated solution}
  \end{subfigure}
  \caption{Visual comparison between the true solution and B-EPGP prediction for the 3D wave equation from Section~\ref{subsec:3dwave}.
  The snapshots are taken in the plane $y=1$.}  
\end{figure*}

\subsection{Hybrid B-EPGP in a circular sector}\label{sec:sector}
We  also implement our Hybrid B-EPGP method from Section~\ref{sec:hybrid} for the 2D wave equation in a circular sector, see Example~\ref{exa:sector}. Let
$\Omega=\{x,y>0,\,x^2+y^2\leq 4\}$, $t\in(0,4)$. The equations are:
\begin{align*}
    \begin{cases}
        u_{tt}-(u_{xx}+u_{yy})=0&\text{in }(0,4)\times \Omega\\
        u(0,x,y)= f(x,y)\text{ and } u_t(0,x,y)= 0&\text{in }\Omega\\
         u_n = 0&\text{on }xy=0\\
        u=0&\text{on } \text{\(x^2+y^2=4\)},
    \end{cases}
\end{align*}
where $f(x,y)=5\exp (-10((x-1)^2+(y-1)^2)$.
In short, we set Dirichlet boundary conditions on the arc and Neumann boundary conditions on the wedge $xy=0$.

These boundary conditions on the flat and curved pieces of the boundary are dealt with separately in our algorithm. To account for the Neumann boundary condition on $\{x=0\}$ and $\{y=0\}$ we use the B-EPGP basis for a wedge
$$
e^{at+bx+cy}+e^{at-bx+cy}+e^{at+bx-cy}+e^{at-bx-cy}\quad\text{for }a^2=b^2+c^2,
$$
which can be calculated using the ideas of Appendix~\ref{sec:wedge_90deg}. To account for the Dirichlet boundary condition on the circular arc $\Gamma=\{x^2+y^2=4,\,x,y>0\}$, we assign data $u(t_h,x_h,y_h)=0$ at many points $(t_h,x_h,y_h)\in \Gamma$. We give the training details  in Appendix~\ref{sec:2d_hybrid_details}.

Snapshots of our solution are presented in Figure~\ref{fig:sector} and the conservation of energy is demonstrated in Figure~\ref{fig:sector_energy}. We reiterate that the conservation of energy over time is equivalent to our prediction satisfying \textit{both} the equation and the boundary condition exactly, see Appendix~\ref{sec:conservation_energy}. 
\begin{figure}[h]
\centering
  \centering
    \begin{subfigure}[b]{0.49\textwidth}
    \centering
    \includegraphics[width=\textwidth]{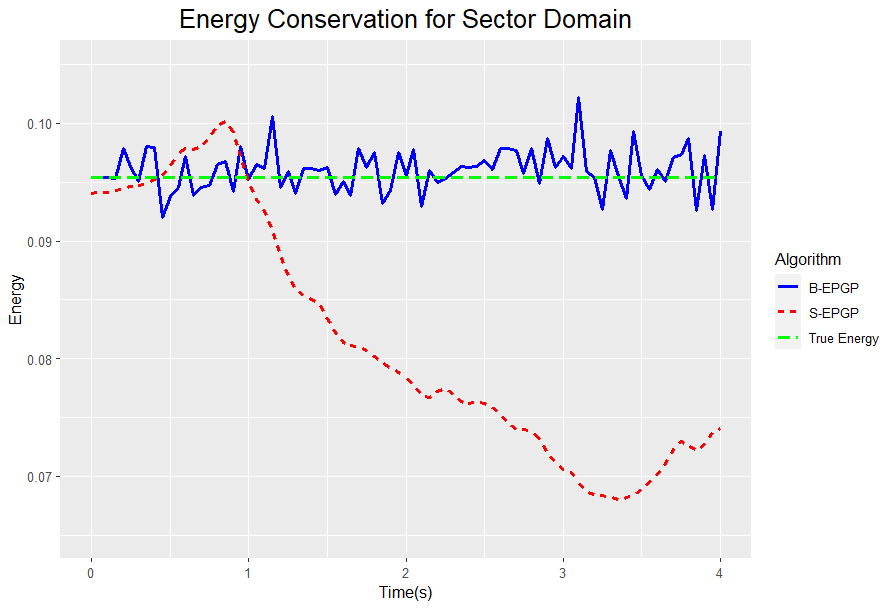}
  \end{subfigure}  \hfill
    \begin{subfigure}[b]{0.49\textwidth}
    \centering
    \includegraphics[width=\textwidth]{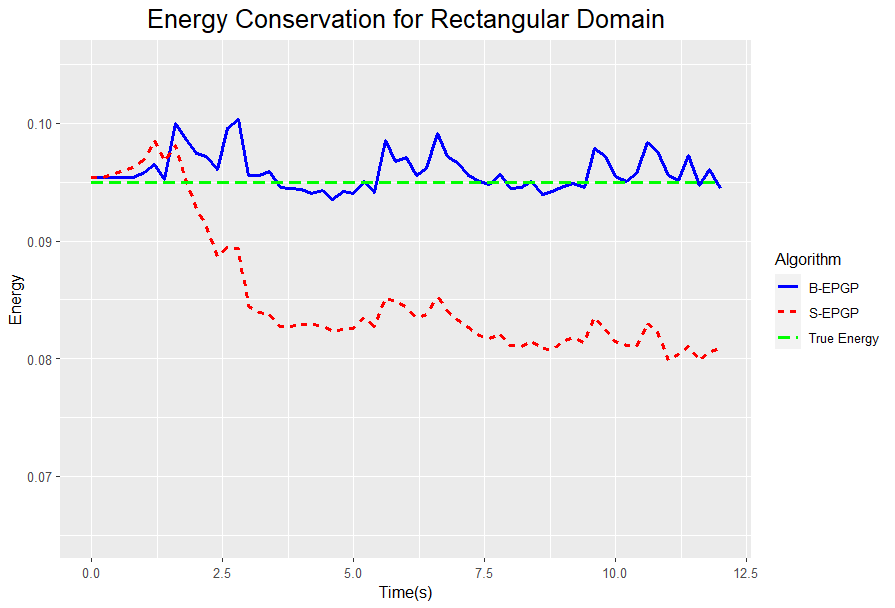}
  \end{subfigure}
\caption{In Section~\ref{sec:sector}, EPGP dissipates energy for the 2D wave equation in a sector domain, which is physically incorrect. B-EPGP only has an error due to approximating  the energy integral. The same phenomenon is observed for a rectangular domain in Appendix~\ref{sec:rectangle}.}\label{fig:sector_energy}
\end{figure}
\begin{figure*}[hbt!]
  \vskip 0.2in
  \centering
    \begin{subfigure}[b]{0.1\textwidth}
    \centering
    \includegraphics[width=\textwidth]{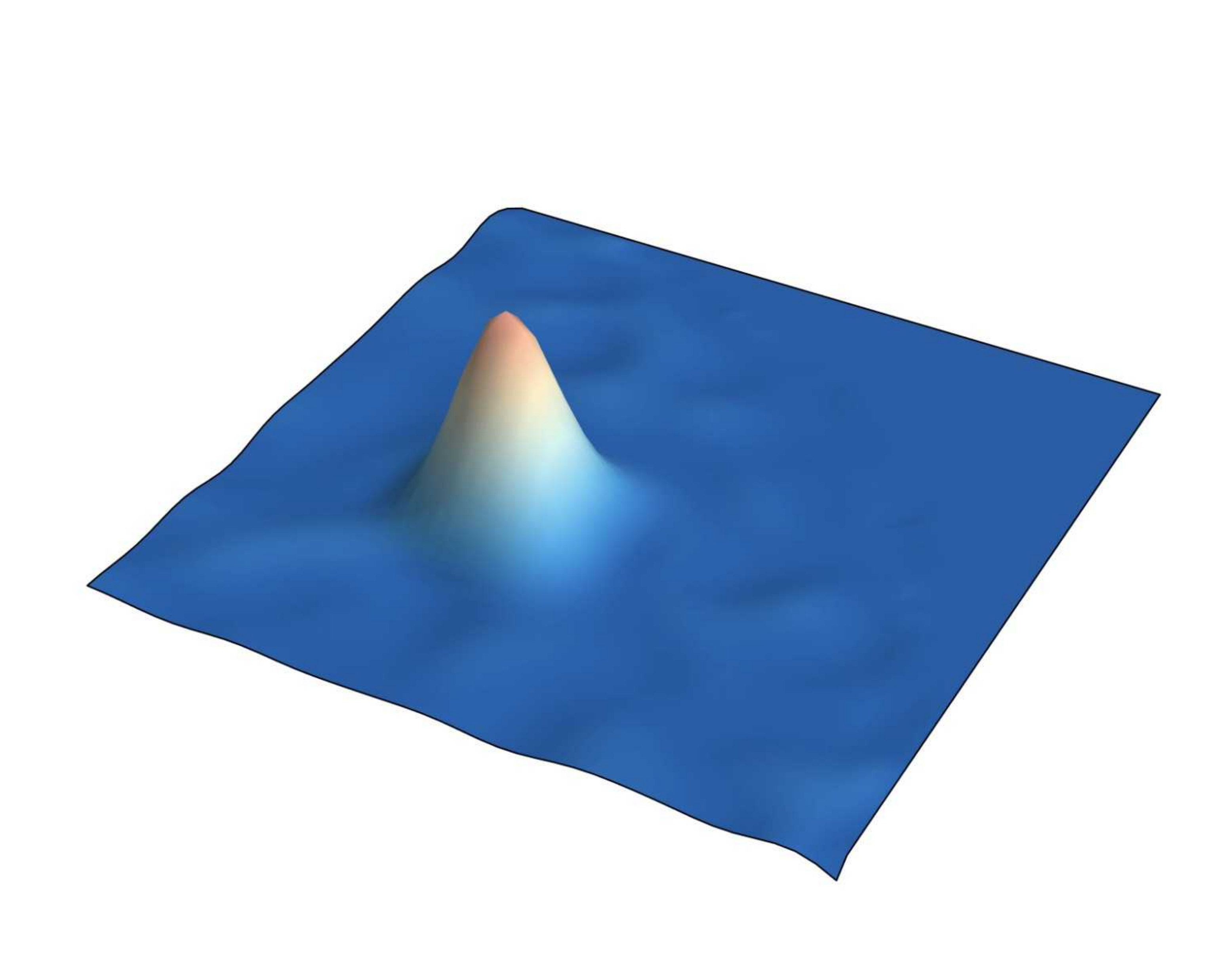}
    \caption{\tiny $t=0.0$\\EPGP}
  \end{subfigure}
  \hfill
  \begin{subfigure}[b]{0.1\textwidth}
    \centering
    \includegraphics[width=\textwidth]{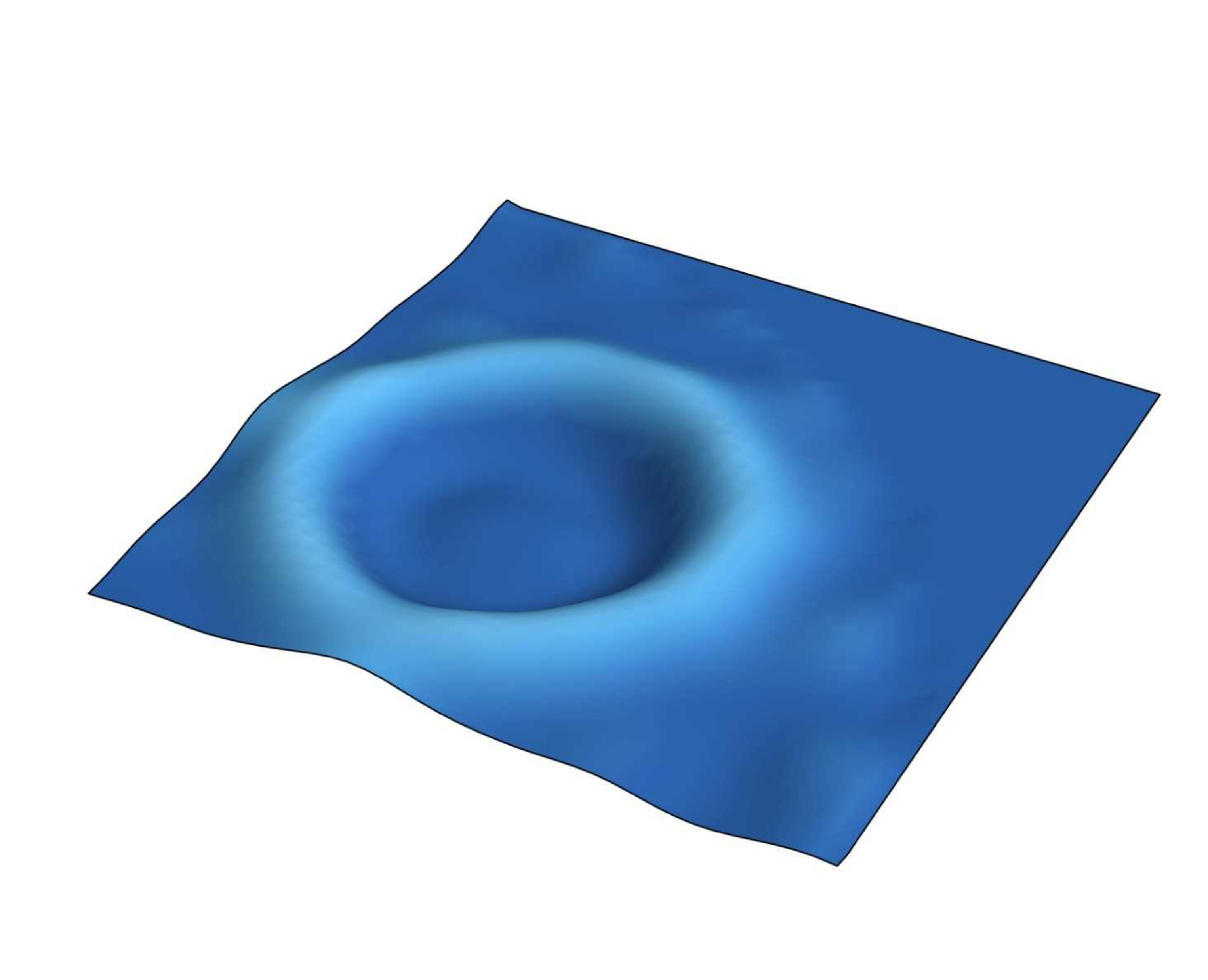}
    \caption{\tiny $t=0.5$\\EPGP}
  \end{subfigure}
  \hfill
  \begin{subfigure}[b]{0.1\textwidth}
    \centering
    \includegraphics[width=\textwidth]{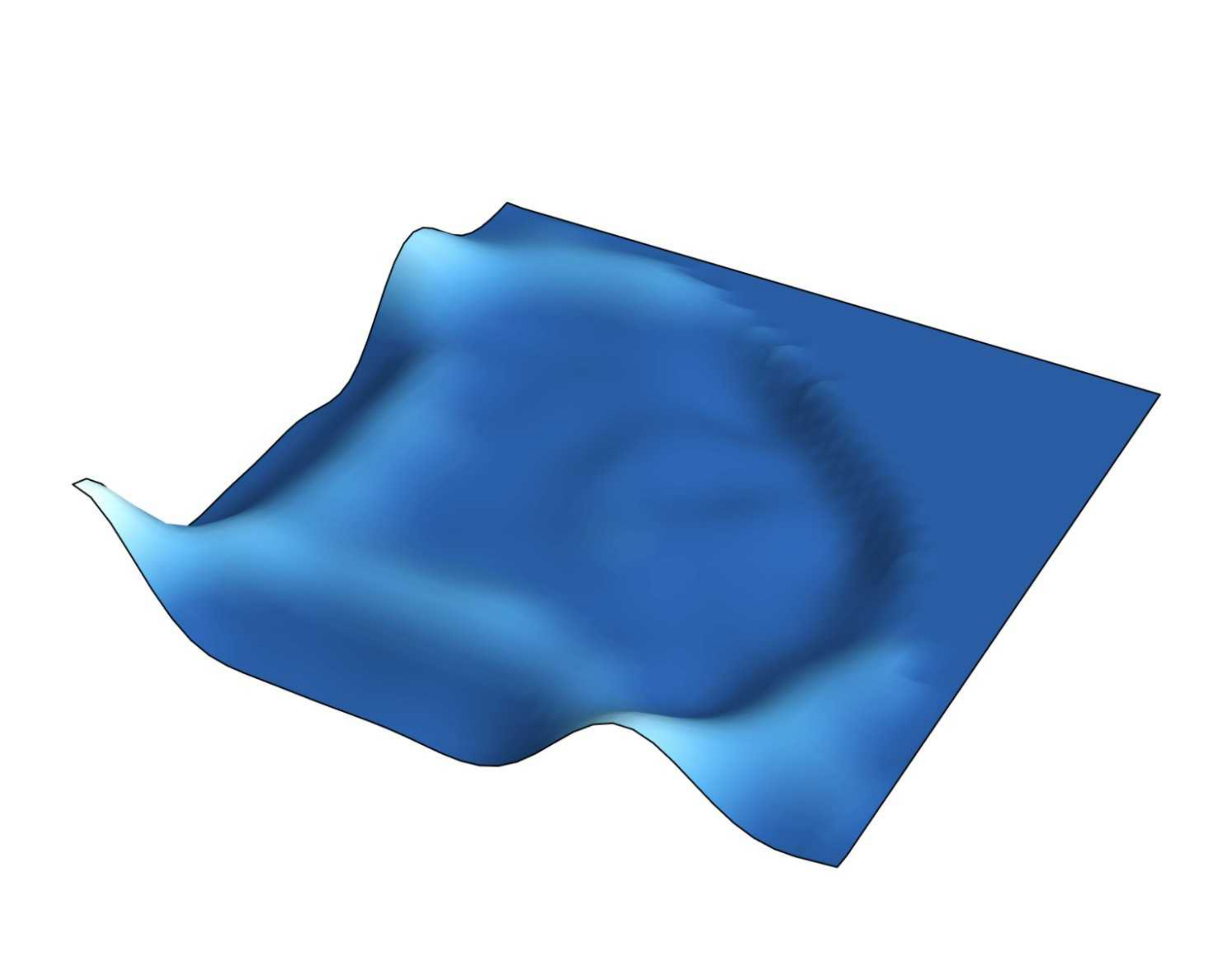}
    \caption{\tiny $t=1.0$\\EPGP}
  \end{subfigure}
  \hfill
  \begin{subfigure}[b]{0.1\textwidth}
    \centering
    \includegraphics[width=\textwidth]{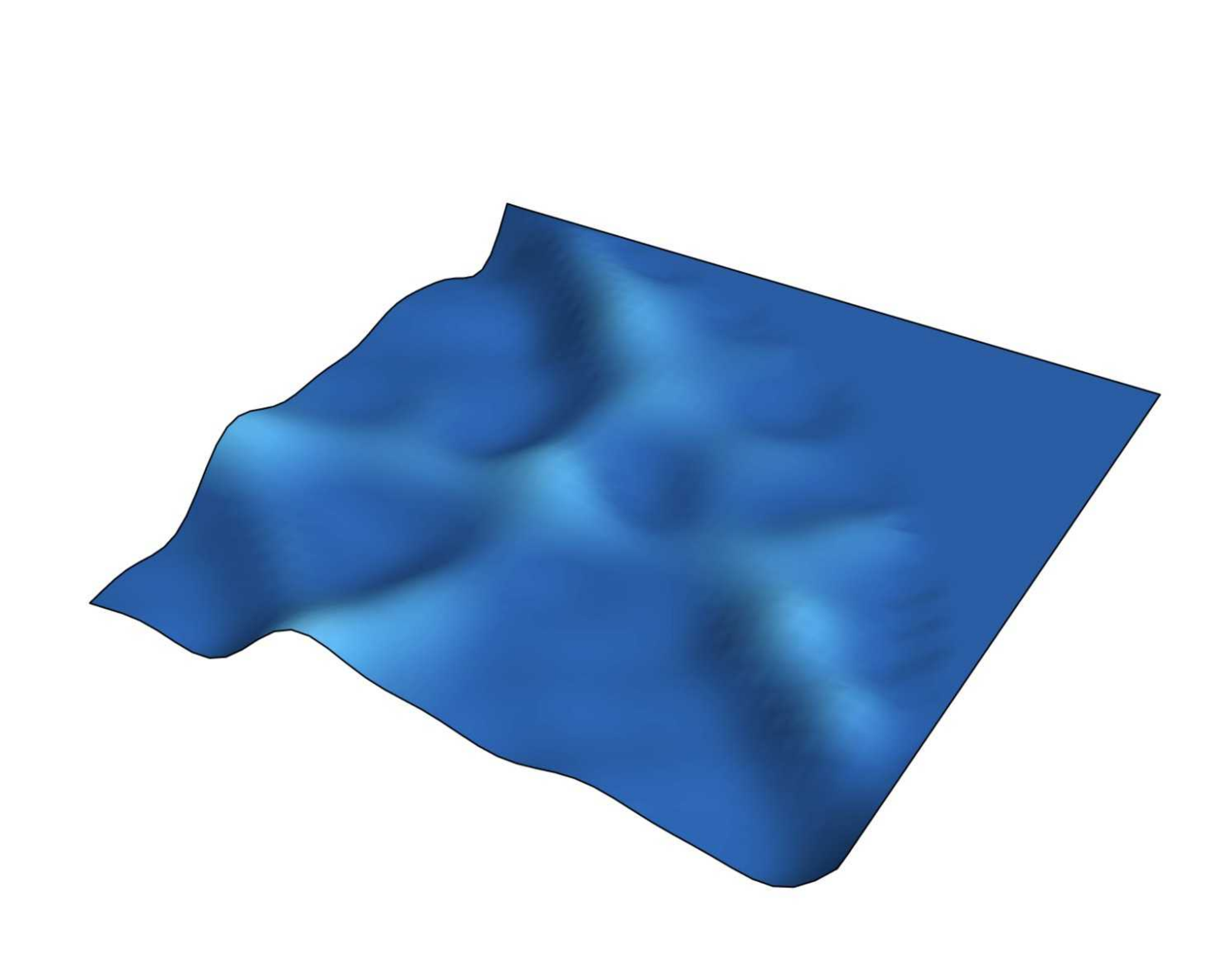}
    \caption{\tiny $t=1.5$\\EPGP}
  \end{subfigure}
  \hfill
  \begin{subfigure}[b]{0.1\textwidth}
    \centering
    \includegraphics[width=\textwidth]{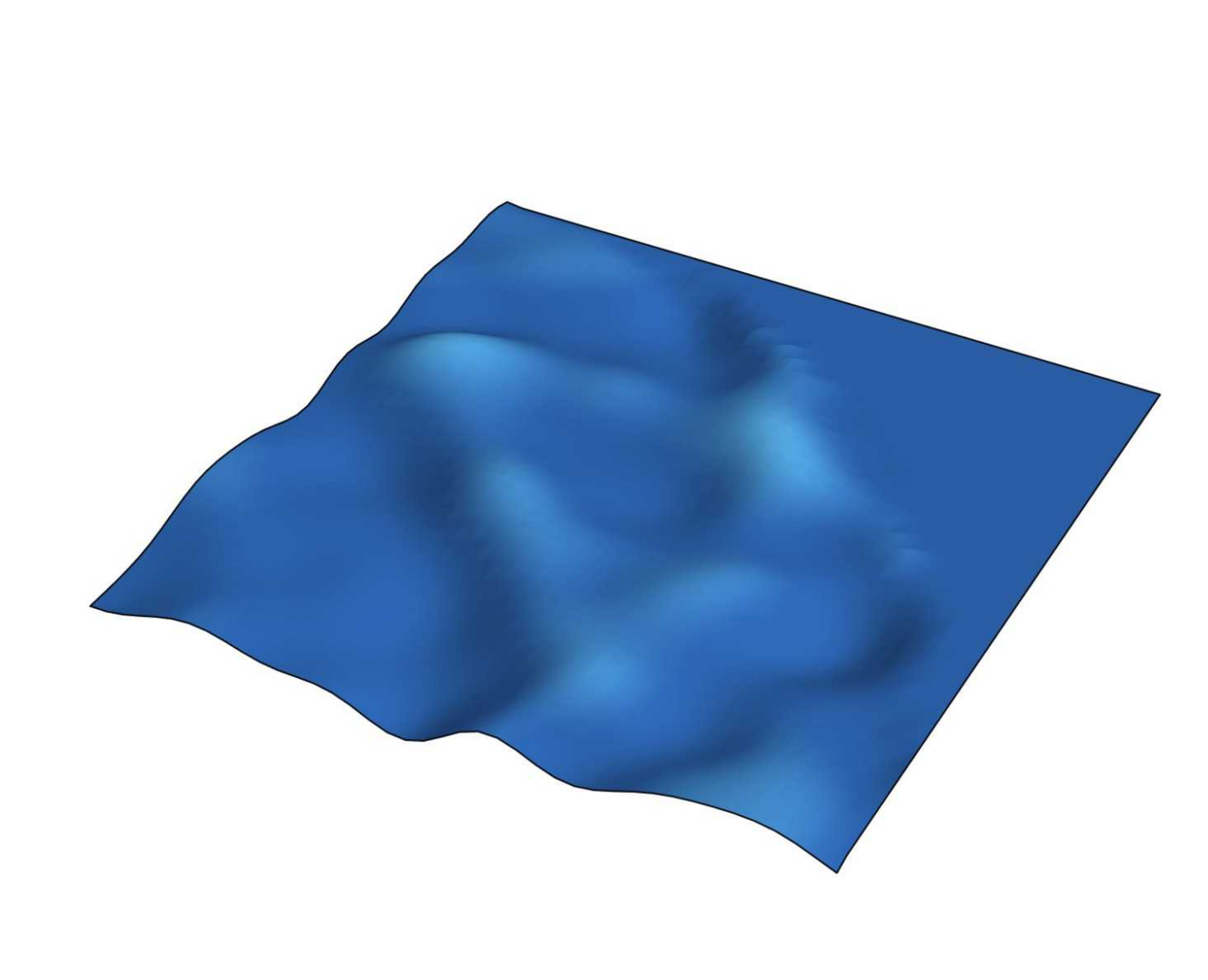}
    \caption{\tiny $t=2.0$\\EPGP}
  \end{subfigure}
  \hfill
  \begin{subfigure}[b]{0.1\textwidth}
    \centering
    \includegraphics[width=\textwidth]{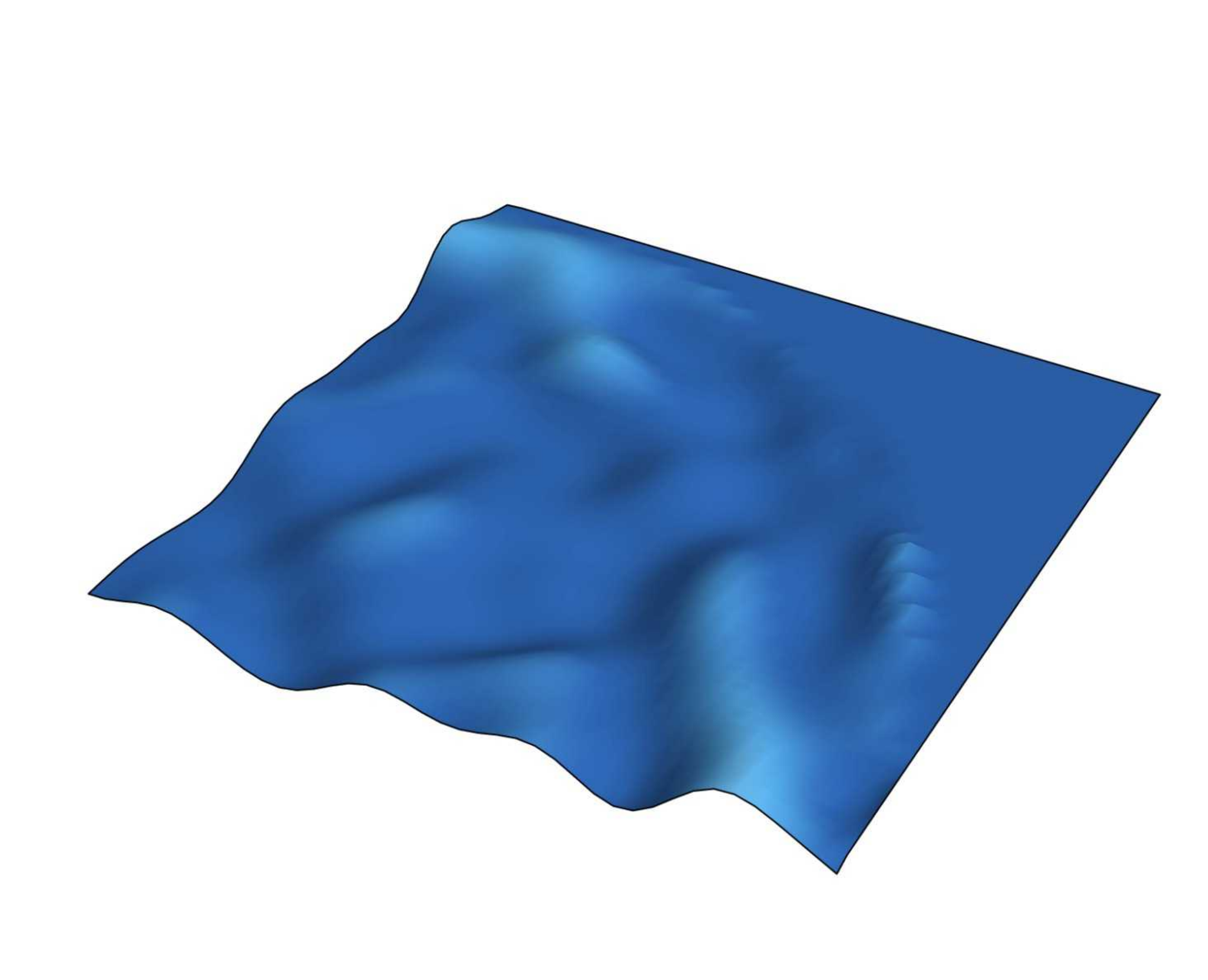}
    \caption{\tiny $t=2.5$\\EPGP}
  \end{subfigure}
  \hfill
  \begin{subfigure}[b]{0.1\textwidth}
    \centering
    \includegraphics[width=\textwidth]{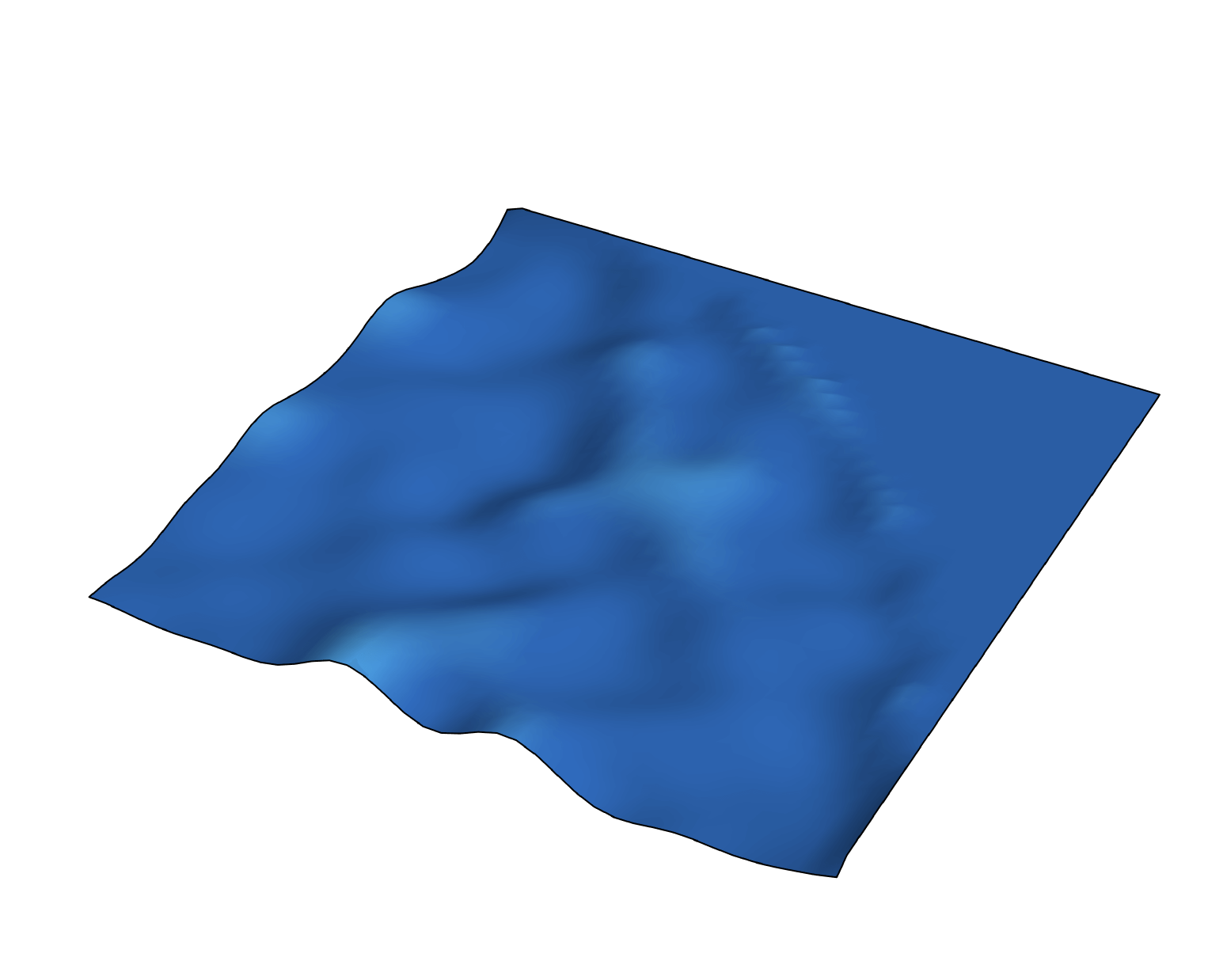}
    \caption{\tiny $t=3.0$\\EPGP}
  \end{subfigure}
  \hfill
  \begin{subfigure}[b]{0.1\textwidth}
    \centering
    \includegraphics[width=\textwidth]{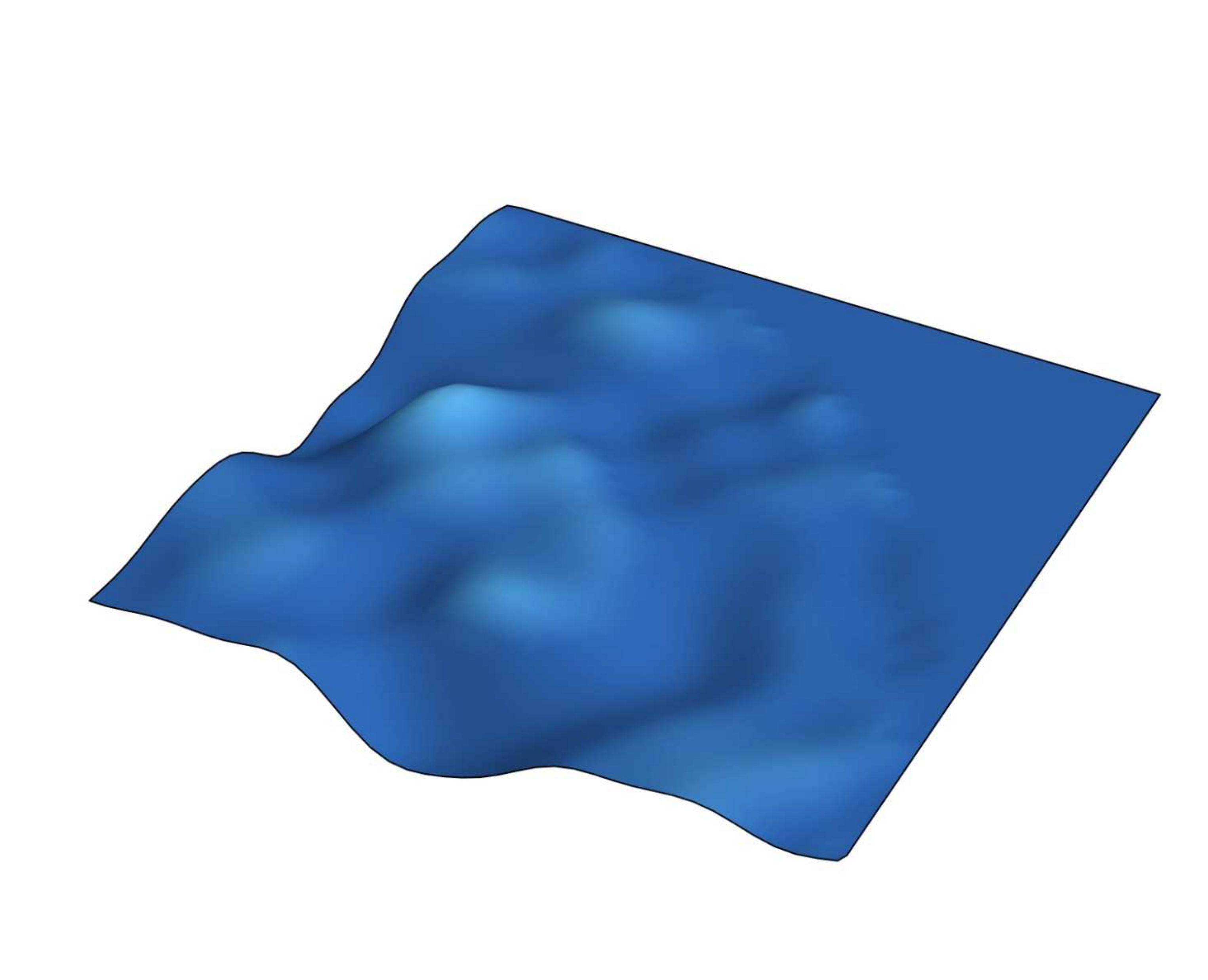}
    \caption{\tiny $t=3.5$\\EPGP}
  \end{subfigure}
  \hfill
  \begin{subfigure}[b]{0.1\textwidth}
    \centering
    \includegraphics[width=\textwidth]{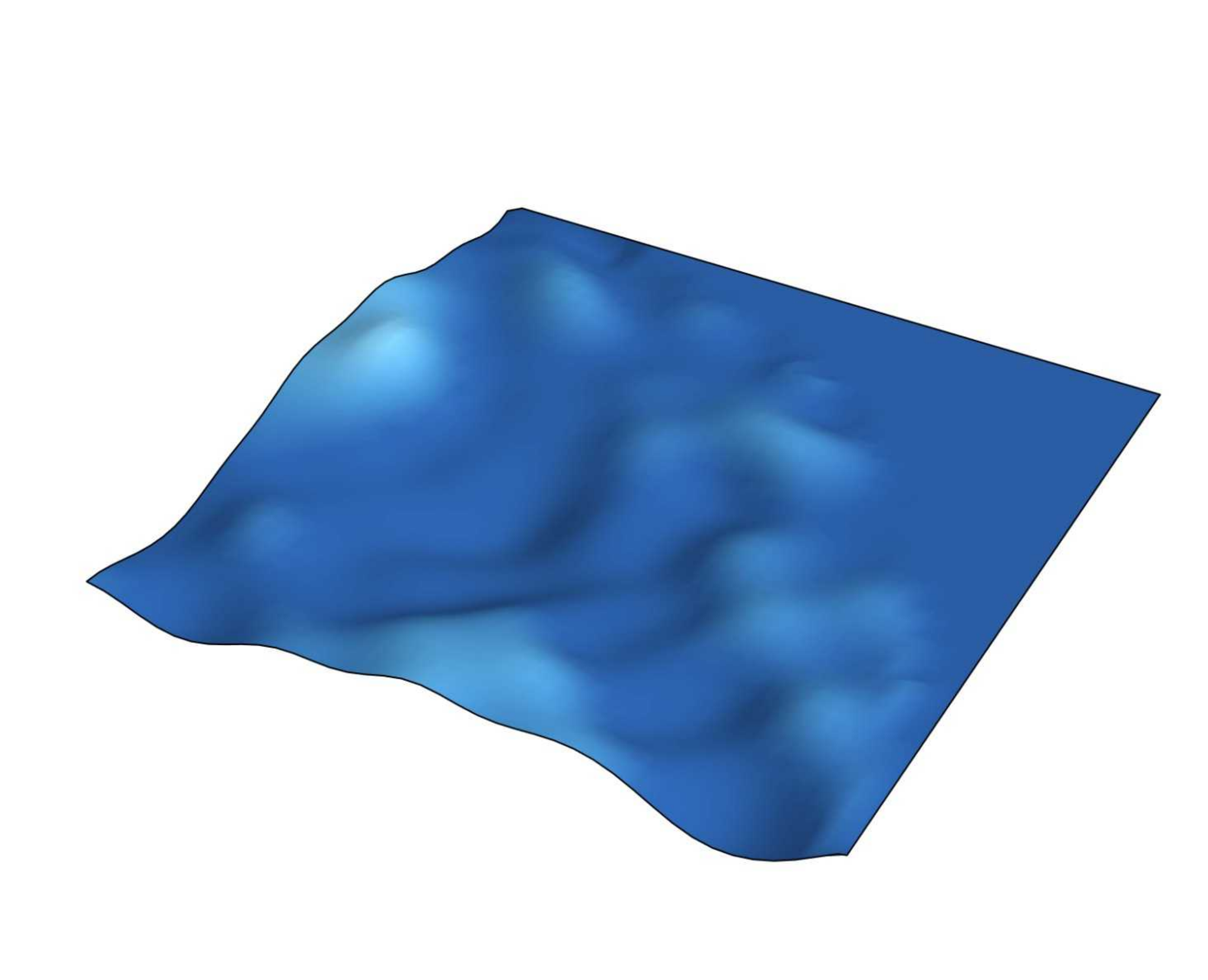}
    \caption{\tiny $t=4.0$\\EPGP}
    \end{subfigure}
  \\
  \begin{subfigure}[b]{0.1\textwidth}
    \centering
    \includegraphics[width=\textwidth]{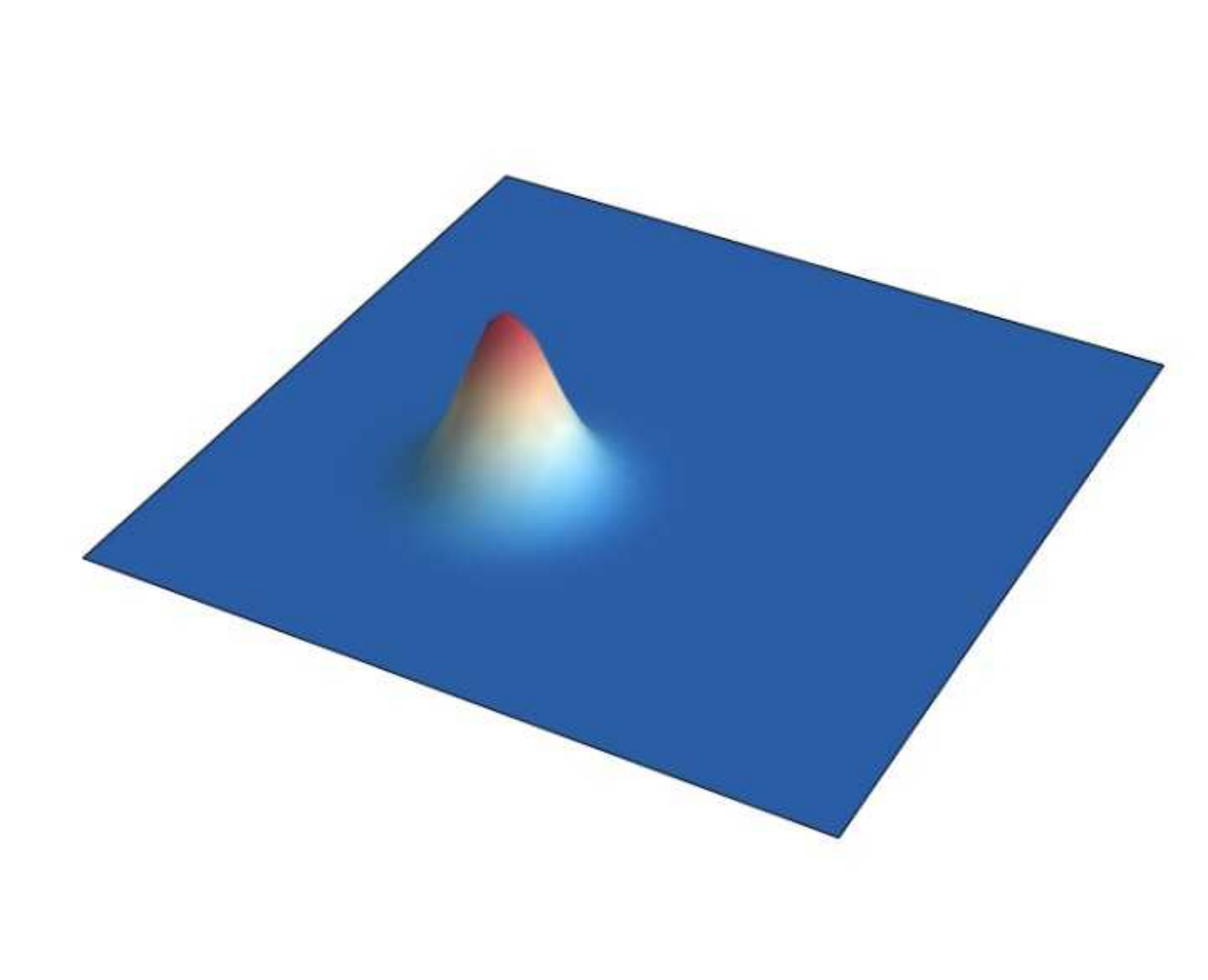}
    \caption{\tiny $t=0.0$\\B-EPGP}
  \end{subfigure}
  \hfill
  \begin{subfigure}[b]{0.1\textwidth}
    \centering
    \includegraphics[width=\textwidth]{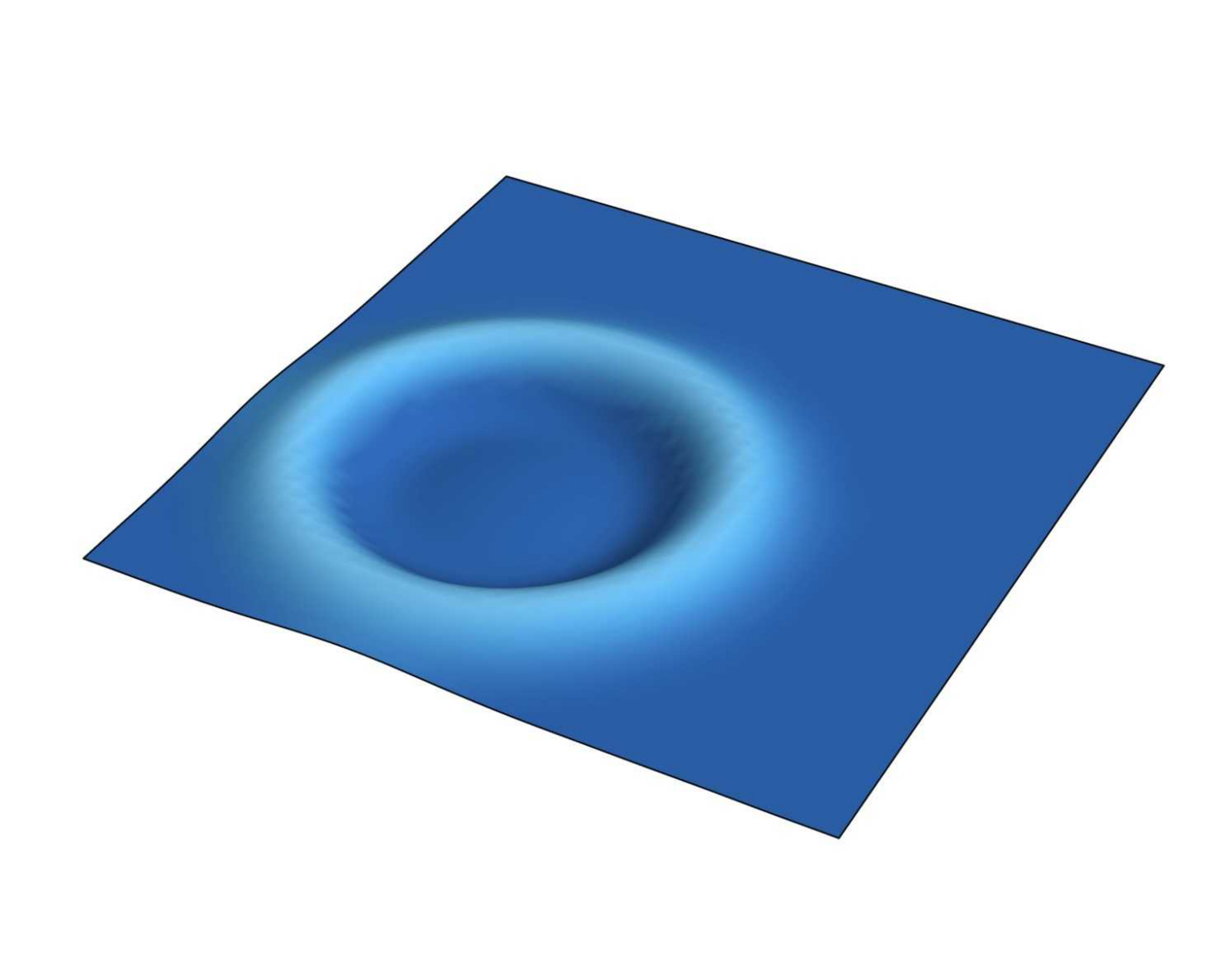}
    \caption{\tiny $t=0.5$\\B-EPGP}
  \end{subfigure}
  \hfill
  \begin{subfigure}[b]{0.1\textwidth}
    \centering
    \includegraphics[width=\textwidth]{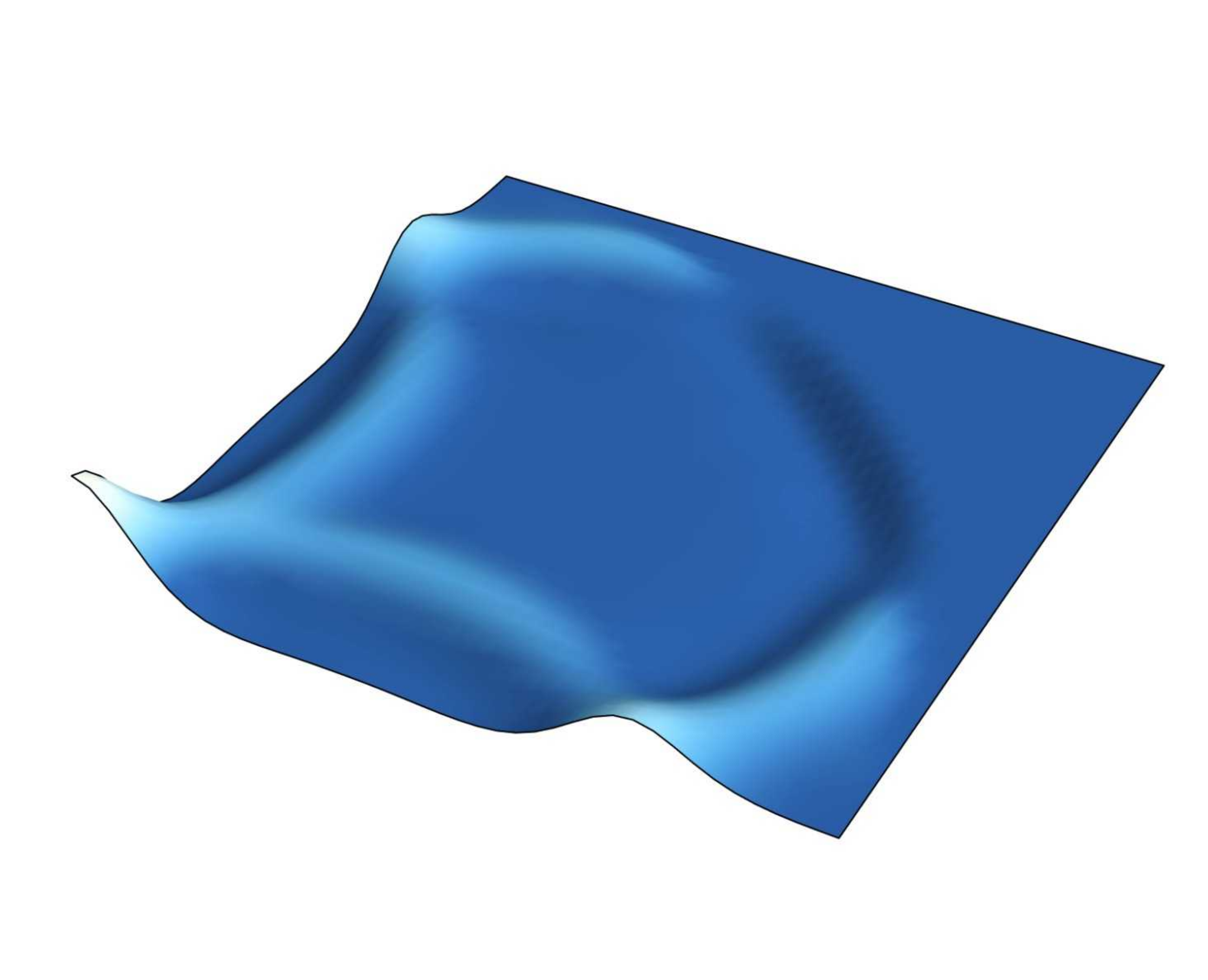}
    \caption{\tiny $t=1.0$\\B-EPGP}
  \end{subfigure}
  \hfill
  \begin{subfigure}[b]{0.1\textwidth}
    \centering
    \includegraphics[width=\textwidth]{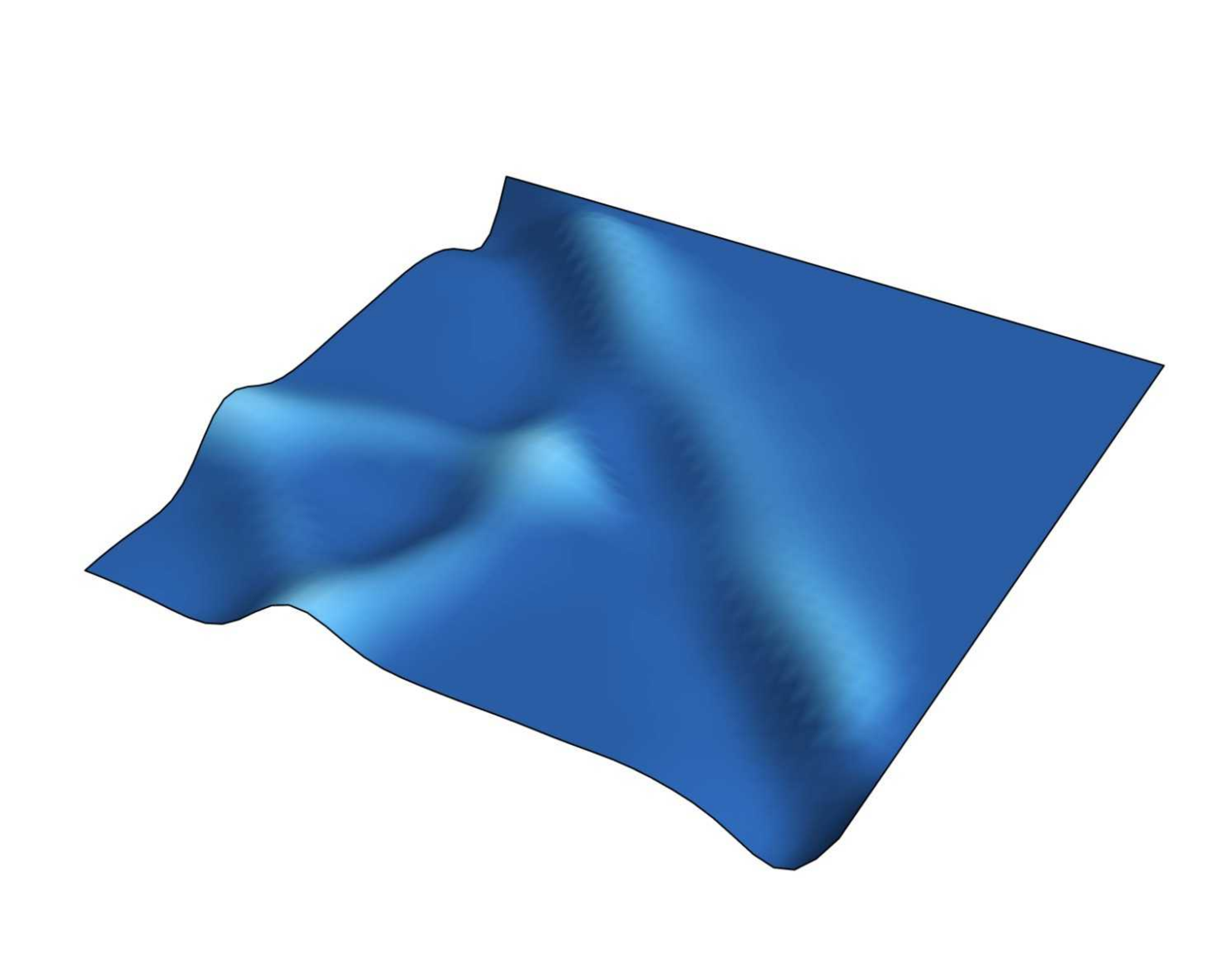}
    \caption{\tiny $t=1.5$\\B-EPGP}
  \end{subfigure}
  \hfill
  \begin{subfigure}[b]{0.1\textwidth}
    \centering
    \includegraphics[width=\textwidth]{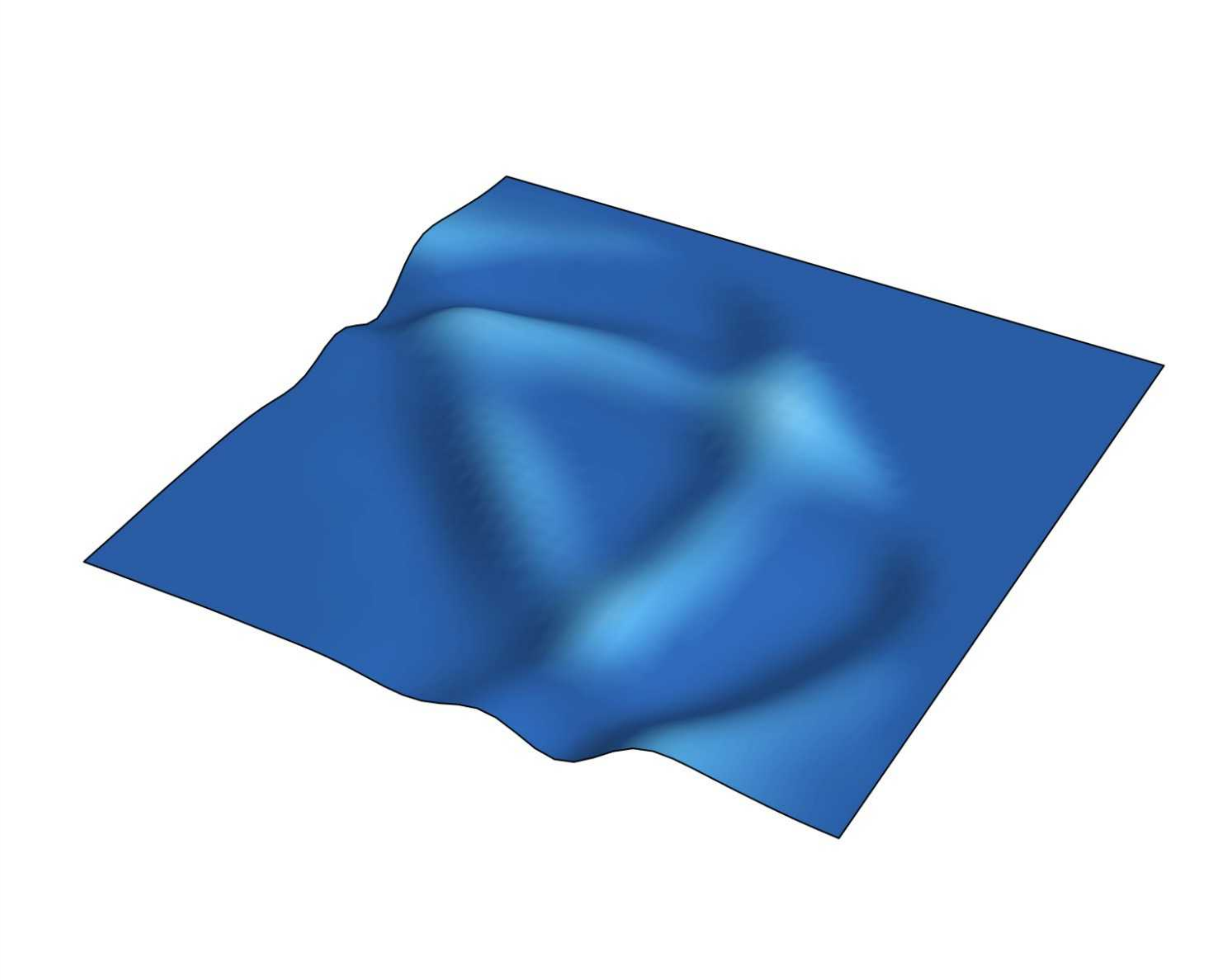}
    \caption{\tiny $t=2.0$\\B-EPGP}
  \end{subfigure}
  \hfill
  \begin{subfigure}[b]{0.1\textwidth}
    \centering
    \includegraphics[width=\textwidth]{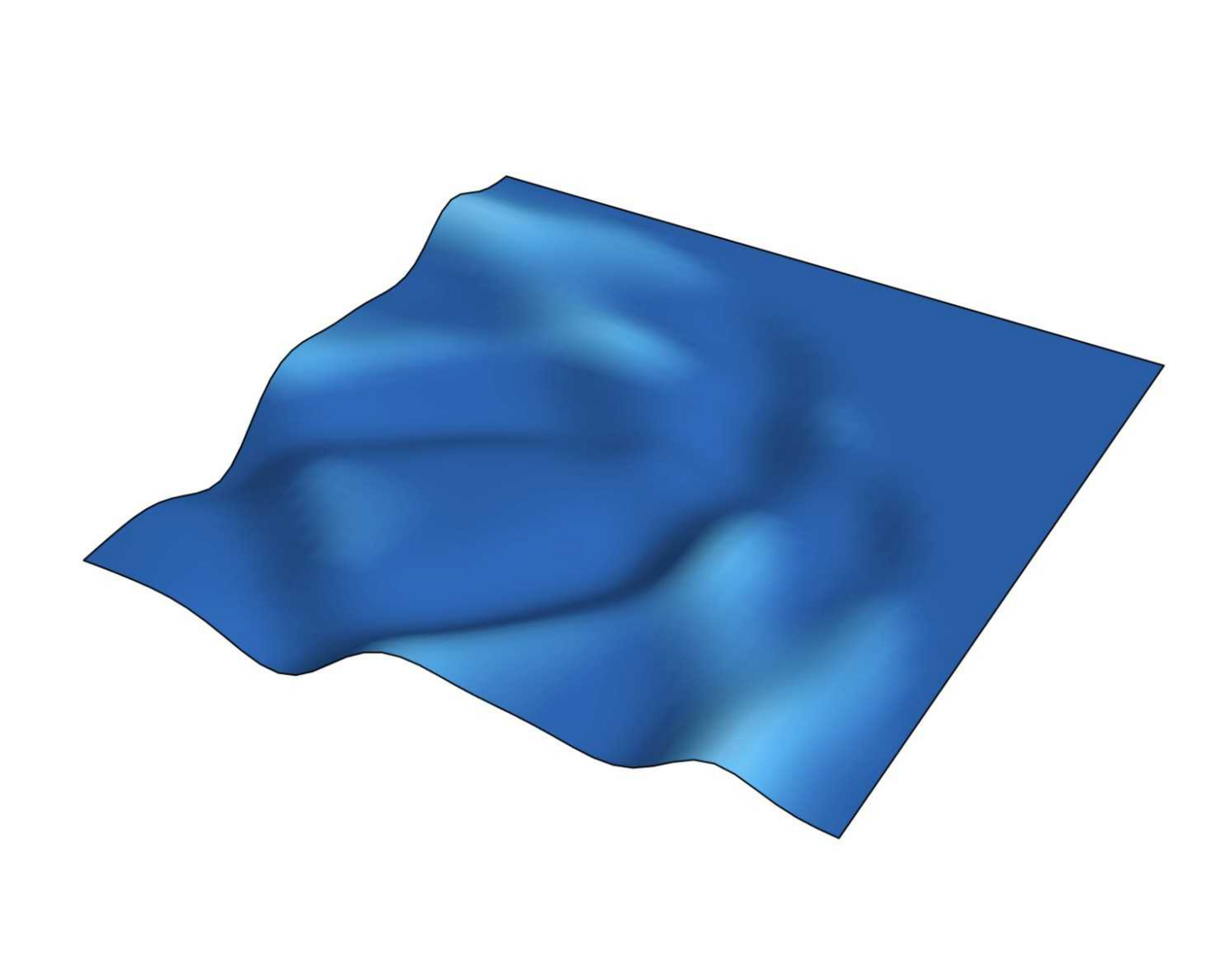}
    \caption{\tiny $t=2.5$\\B-EPGP}
  \end{subfigure}
  \hfill
  \begin{subfigure}[b]{0.1\textwidth}
    \centering
    \includegraphics[width=\textwidth]{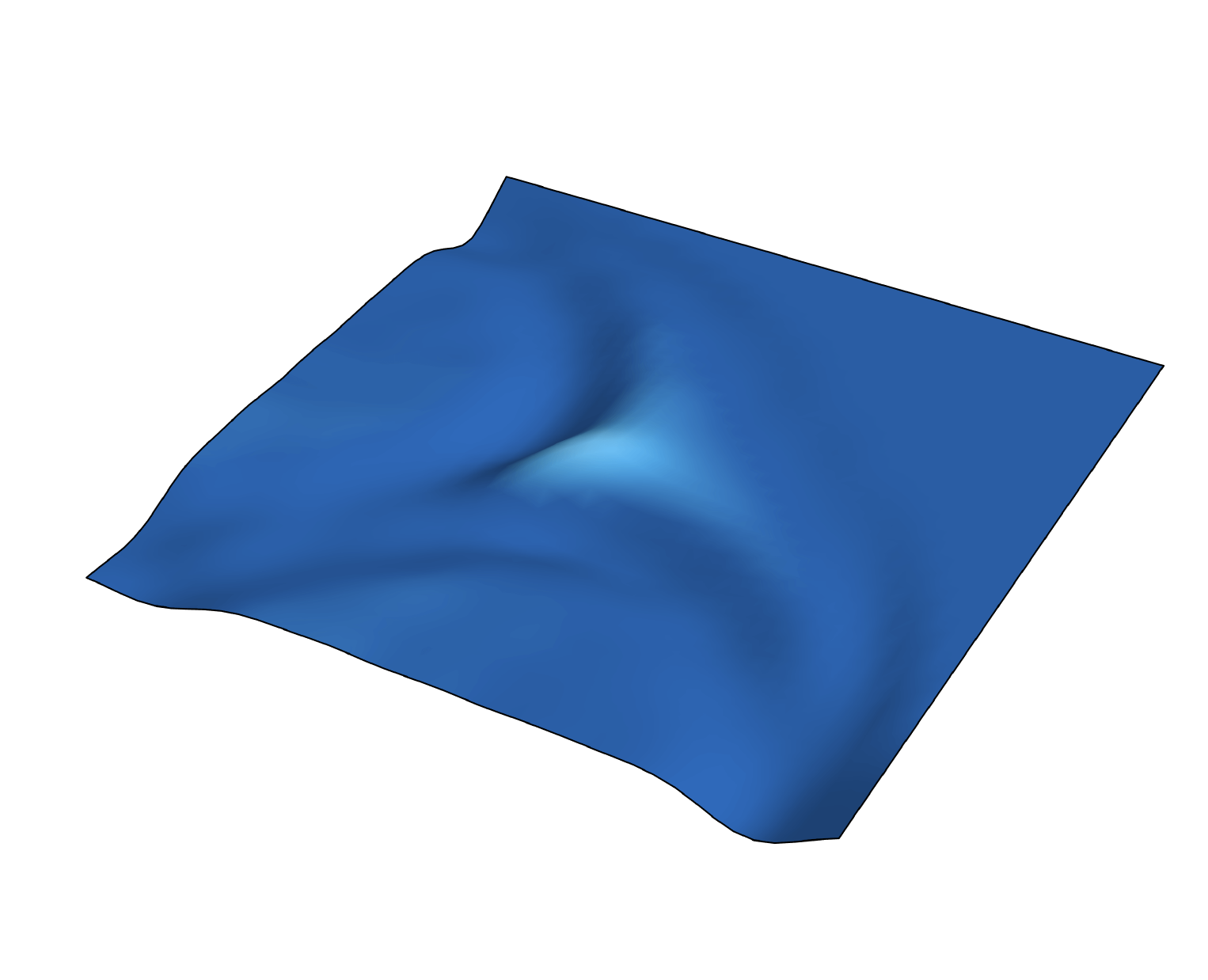}
    \caption{\tiny $t=3.0$\\B-EPGP}
  \end{subfigure}
  \hfill
  \begin{subfigure}[b]{0.1\textwidth}
    \centering
    \includegraphics[width=\textwidth]{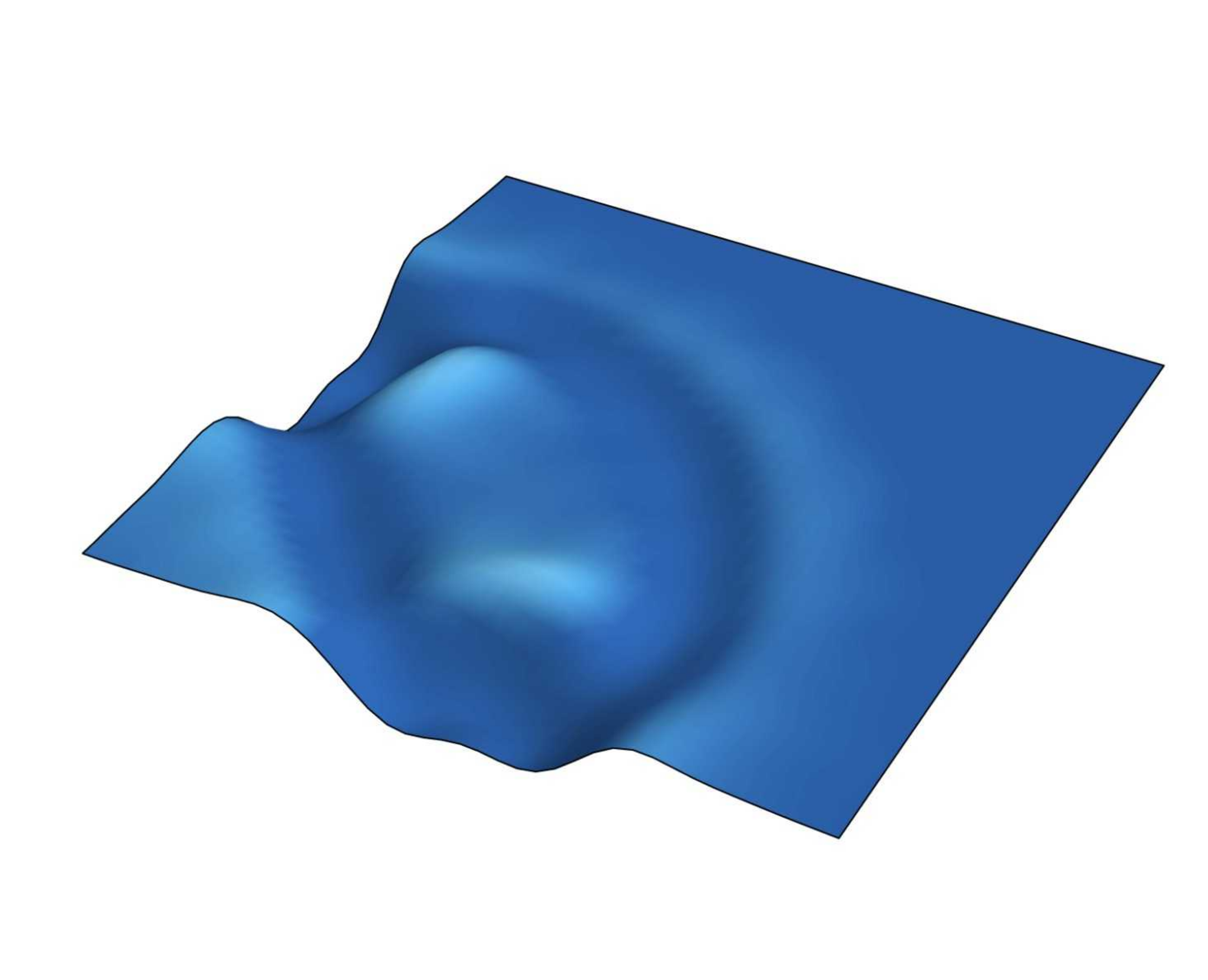}
    \caption{\tiny $t=3.5$\\B-EPGP}
  \end{subfigure}
  \hfill
  \begin{subfigure}[b]{0.1\textwidth}
    \centering
    \includegraphics[width=\textwidth]{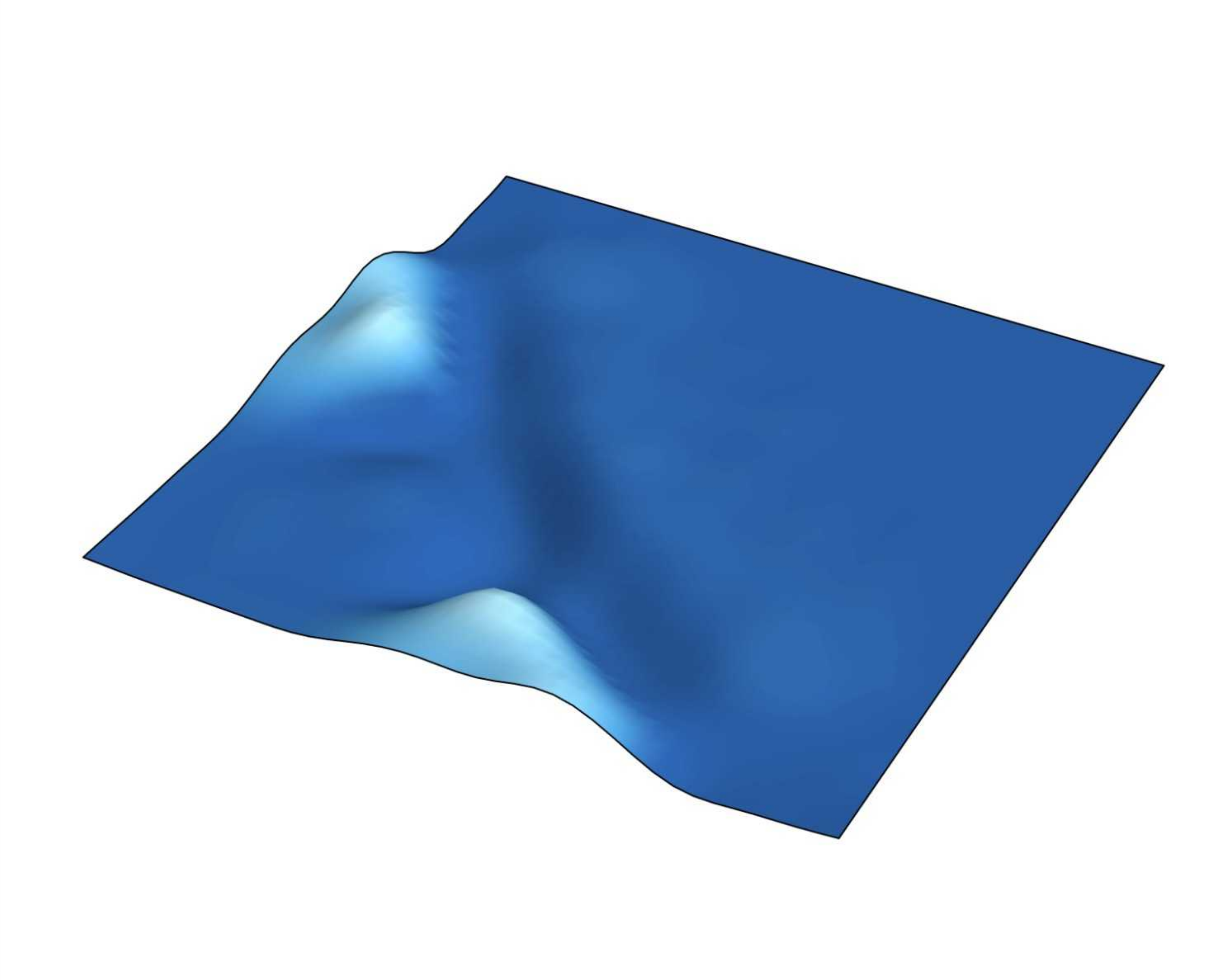}
    \caption{\tiny $t=4.0$\\B-EPGP}
  \end{subfigure}
  \caption{Solution of 2D wave equation in a sector domain calculated using the Hybrid B-EPGP method. The superiority of B-EPGP over EPGP is visually striking and is further evidenced by the energy plot in Figure~\ref{fig:sector_energy}.
  Animations can be found in the supplementary material as \texttt{sector\_B-EPGP.mp4} and \texttt{sector\_EPGP.mp4}.}\label{fig:sector}
  \vskip -0.2in
\end{figure*}

\section{Summary}

We introduced Boundary Ehrenpreis--Palamodov Gaussian Processes (B-EPGPs), a new framework for constructing GP priors that exactly satisfy systems of linear PDEs with constant coefficients and linear boundary conditions. Our approach generalizes existing methods by leveraging the Ehrenpreis--Palamodov theorem and extending it to include boundary constraints through symbolic algebra and Fourier analysis. Our priors are probability measures on infinite dimensional spaces, the solution sets of our underdetermined PDEs. We provided formal guarantees, explicit constructions, and demonstrated superior empirical performance over state-of-the-art neural operator models and Gaussian process models in both accuracy (by at least a factor of 2) and efficiency (by at least an order of magnitude).  Our experiments are implemented in a broad class of domains which includes polygonal and curved boundaries. We do not compare with deterministic numerical solvers as our model is probabilistic.


\newpage
\section{Reproducibility statement}

We include our source code and videos of PDE solutions in the supplementary materials.

The comparison to neural operators on the 1D wave equation from Section~\ref{sec:comparison_no} is described in detail in Appendix~\ref{sec:operator_details}.
The high-dimensional example on 3D wave equation from Section~\ref{subsec:3dwave} is described in detail in Appendix~\ref{sec:3d_operator_details}.
The hybrid B-EPGP example on the 2D wave equation that has a circular sector domain from Section~\ref{sec:sector} is described in detail in Appendix~\ref{sec:2d_hybrid_details}

The free wave example from Appendix~\ref{sec:free_wave} is described in detail in Appendix~\ref{sec:free_wave_details}.
The examples of 2D wave on a different bounded domain from Appendix~\ref{sec:wave_full} and Appendix~\ref{sec:wave_two_halfspaces} are described in detail in Appendix~\ref{sec:wave_full_details}
The comparison between B-EPGP and EPGP on the 2D wave equation from Section~\ref{sec:comparison_epgp} is described in detail in Appendix~\ref{sec:comparison_epgp_details}.
The example of 2D heat equation from Appendix~\ref{sec:heat} is described in detail in Appendix~\ref{sec:heat_details}.
The examples of inhomogeneous boundary condition from Appendix~\ref{sec:inhom} are described in detail in Appendix~\ref{sec:inhom_details}.

\section{Ethics statement}
Our method constructs machine learning models constrained to physical equations. While potentially harmful use cases of our models cannot be ruled out, we do not believe that our work currently raises any questions regarding the Code of Ethics.

\bibliography{arxiv_v2}
\bibliographystyle{plain}



\newpage
\appendix

\section{State of the art: machine learning for the 3D wave equation}
\label{sec:wave_sota}

Many recent related papers either do not tackle wave equations with boundary conditions \citep{henderson2023wave} or do not tackle PDEs in 4D \citep{raonic2024convolutional}. 
However, few papers using PINN training and neural operator learning scale to 3D wave physics (time domain and frequency domain), despite the necessity of high computational times. B-EPGP avoids this high computational times.

Neural operators fall into two dominant families.
(i) \emph{Neural operators (time domain)} learn solution \emph{operators} for the elastic wave PDE from 3D media and source parameters to surface or volumetric wavefields. Architectures include Fourier/Factorized Fourier Neural Operators (FNO/F-FNO) and the U-shaped Neural Operator (UNO). They deliver fast surrogates once trained and generalize across media and sources \citep{Lehmann2023UNO,Lehmann2024FFNO}. (ii) \emph{Frequency-domain operator learning} replaces the time-domain PDE with the 3D Helmholtz equation; the Helmholtz Neural Operator (HNO) amortizes many forward solves (and even adjoint gradients) across frequencies and media with large speed/memory advantages \citep{Zou2024HNO}.

For 3D acoustics (Helmholtz), forward PINNs reach FEM-level fields on evaluation grids \citep{Schoder2024PINN,Schoder2025MDPI} and report wall-clock comparisons and inference times orders of magnitude below FEM once trained. These studies target low to mid frequencies in room-sized domains, and use DeepXDE/JAX/PyTorch backends with residual and boundary-loss terms.

Neural operators typically require hours to tens of hours of multi-GPU training on large synthetic corpora. Generating these synthetic corpora is also highly time intensive.
After training, they evaluate in milliseconds to sub-second per case; e.g., a 3D UNO/FNO trained on 30k SEM3D simulations (surface traces) took \(11\)~h on \(4\times\)A100 for \(110\) epochs (87M parameters) \citep{Lehmann2023UNO}. Frequency-domain HNO reports \(\sim\)100\(\times\) faster forward modeling than a spectral-element baseline and \(\sim\)350\(\times\) speedup for inversion, while using \(\sim\)40\(\times\) less memory than an equivalent time-domain operator \citep{Zou2024HNO}. 3D Helmholtz PINNs report setup times of ``a few hours,'' training \(38\)--\(42.8\)~h, FEM baselines of \(17\)--\(19\)~min, and PINN inference \(\approx 0.05\)~s (\(>\)20{,}000\(\times\) faster than FEM at inference) \citep{Schoder2024PINN}.

\section{Gröbner bases}
\label{sec:appendix_groebner}

Gröbner bases are a fundamental tool from computational algebra that generalize well-known algorithms in linear algebra and number theory to the setting of multivariate polynomial rings. This section provides an intuitive overview, with a focus on syzygies and how they are used in our construction of boundary-constrained Gaussian processes.

\subsection{Gröbner Bases}

Gröbner bases extend the idea of the Gaussian elimination algorithm, which solves linear systems, to polynomial systems in several variables. Similarly, they generalize the Euclidean algorithm from univariate polynomials to multivariate settings.

Given a system of polynomial equations with multiple variables, solving or simplifying the system directly is difficult because polynomial rings in several variables are not principal ideal domains. Gröbner bases provide a canonical, algorithmically computable generating set for an ideal of polynomials that simplifies many tasks: solving systems, ideal membership testing, elimination of variables, and finding relations among polynomials.

The key idea is to choose a monomial ordering (such as lexicographic or graded reverse lexicographic order), and then reduce polynomials using a division algorithm adapted to this ordering. A Gröbner basis allows such reductions to behave predictably and mimic row operations in linear algebra.

A fully formal description of Gr\"obner bases exceeds the scope of this appendix.
We refer to the literature for the fully story \citep{SturmfelsWhatIs,eis,al,gp,GerI,Buch}.
Various computer algebra systems implement Gröbner bases, most famously the open source systems Singular \citep{singular} and Macaulay2 \citep{M2}.

\subsection{complexity}

Sadly, the complexity of Gr\"obner bases is in the vicinity of EXPSPACE completeness (cf.\ \citep{Mayr,mayrMeyer,BS88,mayrritscher}).
Luckily, the ``average interesting example'' usually terminates instantaneously (less than 0.01 seconds for all examples in this paper), and we could even compute all examples in this paper by hand.

This quick termination is to be expected.
Although Gröbner-basis computation has this daunting worst-case bounds, modern algorithms (notably F4/F5) reduce it largely to structured linear algebra on Macaulay-type matrices; on many non-adversarial inputs this step is fast and rarely the overall bottleneck. F4 explicitly organizes reductions as batched Gaussian eliminations, making the cost dominated by dense linear algebra rather than term-by-term reductions \citep{FaugereF4}. Under generic or semi-regular assumptions, the F5 analysis predicts complexity governed by the degree of regularity, again pointing to linear-algebra–dominated runtimes \citep{BardetFaugereSalvyF5}. In structured tasks such as computing critical points of polynomial maps, generic bounds show that the Gröbner step is tractable (e.g., \(D^{O(n)}\)), with experiments confirming that it typically behaves well in practice \citep{SpaenlehauerCriticalPoints}. Worst-case phenomena remain (e.g., dimension-dependent double-exponential behavior), but these arise from carefully engineered instances rather than typical geometric or applied systems \citep{mayrritscher}. 
Many geometric and differential-algebraic systems carry symmetries—permutation actions, torus multigradings, or representation-theoretic structure—that dramatically shrink Gröbner computations. For permutation-invariant ideals one can work in the invariant subring or fold equivalent $S$-pairs, collapsing critical pairs and Macaulay matrices \citep{SteidelSymGB}. Equivariant Gröbner bases compute a finite basis \emph{up to symmetry}, solving all finite quotients at once \citep{HillarKroneLeykinEqGB}. In toric/sparse settings, Gröbner theory aligns with polyhedral combinatorics; suitable weights give squarefree initial ideals and compact universal bases \citep{SturmfelsGBPolytopes}. Classical GL-invariant families such as determinantal ideals admit symmetry-compatible orders for which the natural generators already form Gröbner bases \citep{BrunsConcaDetGB}.

In our case, this reduction in complexitiy holds in particular since the Gr\"obner basis computations only involve the (small) operator equations. They do not have  the (potentially big) data as inputs.

\subsection{fibres}

Let \(V \subset \CC^{n}\) be given by an ideal \(I \subset \CC[z',z_n]\) with \(z'=(z_1,\dots,z_{n-1})\), and let \(\pi:V\to\CC^{n-1}\) be the projection \((z',z_n)\mapsto z'\).
Following \citep{barakat2022algorithmic}, the fibres
\[
S_{z'} \;=\; \{(z',z_n)\in V\}
\]
can be obtained by a single Gröbner basis computation and then inexpensive specializations. In typical use, once the Gröbner basis is computed, finding the fibre over any concrete \(z'\) amounts to solving a small zero-dimensional system—fast and routine in computer algebra systems.
For implementations of this and a similar algorithm we refer to the command \texttt{ConstructibleImage} in \url{https://homalg-project.github.io/CategoricalTowers/ZariskiFrames/} and to \url{https://github.com/coreysharris/TotalImage}.

\subsection{Syzygies}

In linear algebra, a \emph{syzygy} is simply a relation among vectors that leads to a zero vector when linearly combined, i.e.\ an element of the kernel of a matrix with these vectors as columns. These syzygies form a vector space.

In the more general context of polynomial systems, syzygies are relations among (vectors of) polynomials that annihilate a given combination. More precisely, given a tuple of (vectors of) polynomials $f_1, \ldots, f_m$, a syzygy is a tuple of polynomials $(g_1, \ldots, g_m)$ such that $\sum_i g_i f_i = 0$.
The set of all such relations forms a module over the polynomial ring, called the \emph{syzygy module}.

Computing a basis of this module is analogous to solving a homogeneous system of linear equations, but over a multivariate polynomial ring\footnote{This is very explicitly written in the formulas \eqref{eq:B-EP_dirichlet} and \eqref{eq:B-EP_neumann}.}. Instead of using the Gaussian algorithm to compute a basis, one uses Buchberger's algorithm to compute a Gröbner basis. Then, one can (in a non-trivial way called Schreyer’s algorithm) read of the syzygies.

\subsection{Syzygies for B-EPGP}

In our construction of \emph{boundary-constrained Ehrenpreis--Palamodov Gaussian processes} (B-EPGP), we seek linear combinations of exponential-polynomial functions that satisfy both the differential equation and the boundary conditions exactly. These basis functions are of the form $e^{z \cdot x}$, where $z$ lies in the \emph{characteristic variety} associated with the PDE system.

The boundary condition introduces additional linear constraints on these exponentials. Specifically, for a fixed fiber of frequencies $S_{z'}$ (i.e., a set of vectors $z$ sharing the same spatial components), we need to find coefficients $w_z$ such that a linear combination $\sum_{z \in S_{z'}} w_z B(z)$ vanishes, where $B(z)$ represents the boundary operator evaluated at $z$.

This is precisely a syzygy computation: we seek all tuples $(w_z)_{z \in S_{z'}}$ such that a symbolic linear combination of the boundary terms gives zero. These syzygies are computed by standard Gröbner basis algorithms, and the resulting linear combinations form the building blocks for our B-EPGP basis functions.

We comment on the role of the frequency vectors $z$ in this computation. For each fixed $S_{z'}$, the values $z\in S_{z'}$ are concrete complex numbers and the $B(z)$ are concrete complex vectors. Hence, the $w_z$ for these concrete numbers might be computed by a Gaussian algorithm. However, this induces massively redundant computations of the Gaussian algorithm for each frequency vector in each step of the training algorithm, including backpropagation through this computation. Hence, it is advantageous to symbolically precompute the $w_z$ for symbolic $z$.

For a correct computation, we need to include the polynomial relations between the $z\in S_{z'}$, i.e.\ that they are distinct fibers of $z'$ of the map $V\to \CC^{n-1}$. The Gröbner basis needs to be computed over the residue class ring of the polynomial ring by these polynomial relations.

Gröbner bases and syzygies enable a symbolic guarantee that all samples from our Gaussian process model satisfy the PDE system \emph{and} the boundary conditions \emph{exactly}, not just approximately. This exactness is crucial in applications such as physics-informed modeling or solving ill-posed PDEs, where small violations of boundary constraints can lead to qualitatively incorrect behavior.

\section{Calculation of B-EPGP basis for heat and wave equations}\label{sec:B-EPGP_calc}
\subsection{Dirichlet and Neumann conditions on halfspaces}
We will look at systems
$$
\begin{cases}
    A(\partial)u=0&\text{in }\RR^n_+\\
    B(\partial)u=0&\text{on }\RR^{n-1}
\end{cases}
$$
where the boundary conditions are Dirichlet ($B=1$) or Neumann ($B=\partial_n$). These are some of the most common boundary conditions used in PDE. As in the body of our paper, $A$ will be assumed to be a single equation and $u$ a scalar field (in Appendix~\ref{sec:EPGP} we will explain the extension to systems and vector fields). By an affine change of variable, $\RR^n_+$ can be replaced with any other halfspace.

Inspired by the Ehrenpreis--Palamodov Theorem~\ref{thm:EP}, we will investigate linear combinations of exponential solutions
$$
u(x)=\sum_{j=1}^M w_je^{x\cdot z_j}
$$
which satisfy $A(\partial) u=0$ if
$$
\sum_{j=1}^M w_jA(z_j)e^{x\cdot z_j}=0,
$$
which implies $A(z_j)=0$. This gives our first restriction
$
z_j\in V=\{z\in \CC^n\colon A(z)=0\},
$
where $V$ is the \textit{characteristic variety} of $A$. We proceed to incorporate the boundary condition as well. For this, we write $x=(x',x_n)$ and in general $z'$ is the projection of $z\in \CC^n$ on $\CC^{n-1}$ and $z=(z',z_n)$. We get
$$
\sum_{j=1}^M w_jB(z_j)e^{x'\cdot z_j'}=0,
$$
which now does \textit{not} imply $B(z_j)=0$ since some of the exponentials may be identical, i.e., some $z_j\in V$ may have the same $z_j'$. Therefore, for each unique $z_J'$, we will have that 
\begin{align*}
    \sum_{
   \{j\colon z_j'=z_J'\}
    } w_jB(z_j)=0.
\end{align*}
This motivates our definition of the \textbf{boundary characteristic constructible set}
$$
V'=\{\zeta\in \CC^{n-1}\colon \zeta=z'\text{ for some }z\in V\}.
$$
Generically, this constructible set is a variety, which we then call \textbf{boundary characteristic variety}.
This gives the ansatz for the B-EPGP basis
\begin{align*}
    \left\{\sum_{z\in V\text{ s.t. }z'=\zeta} w_z e^{x\cdot z}\colon \sum_{z\in V\text{ s.t. }z'=\zeta}w_zB(z)=0\right\}_{\zeta\in V'}.
\end{align*}
We simplify this explicitly for Dirichlet and Neumann boundary conditions, first Dirichlet:
\begin{align}\label{eq:B-EP_dirichlet}
     \left\{\sum_{z\in V\text{ s.t. }z'=\zeta} w_z e^{x\cdot z}\colon \sum_{z\in V\text{ s.t. }z'=\zeta}w_z=0\right\}_{\zeta\in V'}.
\end{align}
and then Neumann
\begin{align}\label{eq:B-EP_neumann}
     \left\{\sum_{z\in V\text{ s.t. }z'=\zeta} w_z e^{x\cdot z}\colon \sum_{z\in V\text{ s.t. }z'=\zeta}w_zz_n=0\right\}_{\zeta\in V'}.
\end{align}
We will calculate this explicitly for examples in the following.
\subsection{Wave equation}\label{sec:B-EPGP_wave_calc}
We will calculate the B-EPGP bases for Dirichlet and Neumann for the 2D wave equation in the domain $y>0$. The calculations extend easily to arbitrary dimensions, but the formulas are cumbersome and we do not include them here. Our calculations extend to all halfspaces parallel to the $t$-axis by affine changes of variable. Considering boundary conditions on halfspaces that are not parallel to the time axis, which would violate the physical meaning of initial and boundary conditions.

We start with Dirichlet conditions. To be specific we look at
$$
\begin{cases}
    u_{tt}-(u_{xx}+u_{yy})=0&\text{in }\{(t,x,y)\colon y>0\}\\
    u=0&\text{at }(t,x,0).
\end{cases}
$$
In this case,
$$
V=\{(a,b,c)\in \CC^3\colon a^2=b^2+c^2\},
$$
so $a=\pm\sqrt{b^2+c^2}$, meaning $a$ can be any complex root of $z^2-(b^2+c^2)=0$. In this case it we have that 
$$
V'=\CC^2,
$$
since for any $(a,b)\in \CC^2$ there is at least one (and generically two) $c\in \CC$ such that $(a,b,c)\in V$.

We proceed with computing the basis for Dirichlet boundary conditions, so we substitute in \eqref{eq:B-EP_dirichlet} to get that for any $a,\,b\in \CC$, the vectors $(a,b,c)\in V$ are given by $c=\pm\sqrt{a^2-b^2}$, so if we write $z^\pm=(a,b,\pm\sqrt{a^2-b^2})$, the relations in \eqref{eq:B-EP_dirichlet} are
$$
w_{z^+}+w_{z_-}=0\implies w_{z_-}=-w_{z_+},
$$
which leads to the basis
$$
\{e^{at+bx+\sqrt{a^2-b^2}y}-e^{at+bx-\sqrt{a^2-b^2}y}\}_{a,b\in\CC}.
$$
This can be rearranged as 
$$
\{e^{\pm\sqrt{a^2+b^2}t+ax+by}-e^{\pm\sqrt{a^2+b^2}t+ax-by}\}_{a,b\in\CC},
$$
which is what we have in Example~\ref{exa:2-D_wave}.

In the case of Neumann boundary conditions, the calculation is similar. Using the same notation for $z^{\pm}$ as above, we get from \eqref{eq:B-EP_neumann} that
$$
cw_{z^+}-cw_{z_-}=0\implies w_{z_-}=w_{z_+},
$$
which leads to the basis
$$
\{e^{at+bx+\sqrt{a^2-b^2}y}+e^{at+bx-\sqrt{a^2-b^2}y}\}_{a,b\in\CC}.
$$
This can be rearranged as 
$$
\{e^{\pm\sqrt{a^2+b^2}t+ax+by}+e^{\pm\sqrt{a^2+b^2}t+ax-by}\}_{a,b\in\CC},
$$
which is what we have in Example~\ref{exa:neumann}. 
\subsection{Heat equation}\label{sec:B-EPGP_heat_calc}
We will proceed similarly to the previous subsection. The same considerations concerning higher dimensions and choosing various halfspaces apply. We look at
$$
\begin{cases}
    u_{t}-(u_{xx}+u_{yy})=0&\text{in }\{(t,x,y)\colon y>0\}\\
    u=0&\text{at }(t,x,0).
\end{cases}
$$
In this case,
$$
V=\{(a,b,c)\in \CC^3\colon a=b^2+c^2\},
$$
so $a={b^2+c^2}$. In this case it we have that 
$$
V'=\CC^2,
$$
since for any $(a,b)\in \CC^2$ there is at least one (and generically two) $c\in \CC$ such that $(a,b,c)\in V$.

We proceed with computing the basis for Dirichlet boundary conditions, so we substitute in \eqref{eq:B-EP_dirichlet} to get that for any $a,\,b\in \CC$, the vectors $(a,b,c)\in V$ are given by $c=\pm\sqrt{a-b^2}$, so if we write $z^\pm=(a,b,\pm\sqrt{a-b^2})$, the relations in \eqref{eq:B-EP_dirichlet} are
$$
w_{z^+}+w_{z_-}=0\implies w_{z_-}=-w_{z_+},
$$
which leads to the basis
$$
\{e^{at+bx+\sqrt{a-b^2}y}-e^{at+bx-\sqrt{a-b^2}y}\}_{a,b\in\CC}.
$$
This can be rearranged as 
$$
\{e^{{(a^2+b^2)}t+ax+by}-e^{{(a^2+b^2)}t+ax-by}\}_{a,b\in\CC},
$$
which is slightly more general than what we have in Example~\ref{exa:1-D_heat}.

In the case of Neumann boundary conditions, the calculation is similar. Using the same notation for $z^{\pm}$ as above, we get from \eqref{eq:B-EP_neumann} that
$$
cw_{z^+}-cw_{z_-}=0\implies w_{z_-}=w_{z_+},
$$
which leads to the basis
$$
\{e^{at+bx+\sqrt{a-b^2}y}+e^{at+bx-\sqrt{a-b^2}y}\}_{a,b\in\CC}.
$$
This can be rearranged as 
$$
\{e^{{(a^2+b^2)}t+ax+by}+e^{{(a^2+b^2)}t+ax-by}\}_{a,b\in\CC},
$$
which is slightly more general than what we have in Example~\ref{exa:neumann} for the 1D heat equation. 
\section{Proof that B-EPGP gives all solutions for heat and wave equations}\label{sec:proof}
We will prove Theorem~\ref{thm:heat_wave_halfspace} which applies to heat $\partial_tu-\Delta u=0$ and wave equations $\partial^2_{t}u-\Delta u=0$ (here we use the notation $\Delta u=\partial^2_{x_1}+\partial^2_{x_2}+\ldots+\partial^2_{x_n}$ for the Laplacian operator) and any halfspace parallel to the time axis. Since the Laplacian operator is invariant under rotations, we can perform an affine change of variable to reduce the boundary condition to the plane $x_1=0$. Our proof below stems from the fact that, in the case of both equations, given a solution in $\{x_1>0\}$ with Dirichlet (resp. Neumann) boundary conditions on $\{x_1=0\}$, its odd (resp. even) extension with respect to $\{x_1=0\}$ will satisfy the same equation in full space.

We will  only show the calculations in the case of the 2D wave equation with Dirichlet boundary condition. Increasing the dimension barely changes the argument. The modification required to deal with the heat equation is also minimal (one less integration by parts in time). To deal with the Neumann boundary condition, one uses even extension instead of odd extension in the calculation.
\begin{proof}[Proof of Theorem~\ref{thm:heat_wave_halfspace}]
Writing $\square u=u_{tt}-u_{xx}-u_{yy}$, we consider smooth solutions of 
$$
\begin{cases}
    \square u=0&\text{for }x>0,y\in\R,t>0\\
    u(t,0,y)=0&\text{for }y\in\R,t>0.
\end{cases}
$$
The main observation is that the odd extension (even for Neumann boundary condition) of $u$ is a solution in full space of the wave equation. Let 
$$
v(t,x,y)=\begin{cases}
    u(t,x,y)&\text{for }x\geq 0\\
    -u(t,-x,y)&\text{for }x<0.
\end{cases}
$$
Clearly $\square v(t,x,y)=0$ for $x\neq 0$. We still need to check that $\square v=0$ across $x=0$, but since the odd extension need not have two classical derivatives, we compute the \emph{distributional derivative} across ${x=0}$. To see this let $\phi\in C_c^\infty(\{t>0\})$ be a test function. We write $dV=dxdydt$ and integrate by parts 
\begin{align*}
    \int v\square \phi dV&=\int_{x>0} u\square \phi dV-\int_{x<0} u(t,-x,y)\square \phi(t,x,y) dV\\
    &=-\int_{x=0} u\phi_xdydt-\int_{x>0}u_t\phi_t-u_x\phi_x-u_y\phi_y dV-\int_{x=0} u\phi_xdydt\\
    &+\int_{x<0}u_t(t,-x,y)\phi_t(t,x,y)+u_x(t,-x,y)\phi_x(t,x,y)-u_y(t,-x,y)\phi_y(t,x,y) d V\\
    &=-\int_{x=0} u_x\phi dydt+\int_{x>0} \phi\square u dV+\int_{x=0} u_x\phi dydt+\int_{x<0} \phi\square u dV=0,
\end{align*}
so indeed $\square v=0$ in full space. Since $v$ is odd, we have
$$
v(t,x,y)=\tfrac{1}{2}v(t,x,y)-\tfrac{1}{2}v(y,-x,y),
$$
so for $x>0$ we obtain
\begin{align}\label{eq:EP_wave}
u(t,x,y)=\tfrac{1}{2}u(t,x,y)-\tfrac{1}{2}u(y,-x,y).
\end{align}
By Ehrenpreis--Palamodov Theorem~\ref{thm:EP} we know that we can approximate
$$
u(t,x,y)\approx \sum_{j} c_je^{\alpha_jt+\beta_jx+\gamma_jy}
$$
for some $\alpha_j,\,\beta_j,\,\gamma_j\in\CC$ with $\alpha_j^2=\beta_j^2+\gamma_j^2$. By \eqref{eq:EP_wave}, it follows that
$$
u(t,x,y)\approx \sum_{j} \tfrac{1}{2}c_j(e^{\alpha_jt+\beta_jx+\gamma_jy}-e^{\alpha_jt-\beta_jx+\gamma_jy}).
$$
This is exactly the basis produced by our B-EPGP algorithm, see Example~\ref{exa:2-D_wave}.
\end{proof}
We next look at the heat and wave equation in rectangles and give a generalization of Theorems~\ref{thm:slab} and~\ref{thm:wave_rectangle}:
\begin{theorem}[Heat and Wave in rectangles]\label{thm:heat_wave_rectangle}
    Let $A=\partial_{t}-\Delta $ or $A=\partial^2_{tt}u-\Delta u$, $\Omega=[0,L_1]\times[0,L_2]\times\ldots \times[0,L_n]$, and $B$ be Dirichlet or Neumann boundary condition on $\partial\Omega\times\R$. Consider the equation
    \begin{align*}
        \begin{cases}
            Au=0&\text{in }\Omega\times\RR\\
            Bu=0&\text{on }\partial\Omega\times\RR.
        \end{cases}
    \end{align*}
    Then B-EPGP gives the  basis with dense linear span
    $$
    \{e^{\sqrt{-1}ht}f(j_1x_1\pi/L_1)f(j_2x_2\pi/L_2)\ldots f(j_nx_n\pi/L_n)\colon j_1,j_2,\ldots,j_n\in\mathbb Z\},
    $$
    where
    \begin{itemize}
        \item $h=j_1^2+j_2^2+\ldots+j_n^2$ for heat equation and $h=\pm\sqrt{j_1^2+j_2^2+\ldots+j_n^2}$ for wave equation,
        \item $f=\sin$ for Dirichlet boundary condition and $f=\cos$ for Neumann boundary condition.
    \end{itemize}
\end{theorem}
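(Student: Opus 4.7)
The plan is to prove Theorem~\ref{thm:heat_wave_rectangle} in two stages that mirror the earlier special cases (Theorems~\ref{thm:slab} and~\ref{thm:wave_rectangle}): first derive the stated basis from the B-EPGP algorithm, then establish density in the Fréchet topology on $\ker_{C^\infty(\Omega\times\mathbb{R})} A$ (the topology used in Theorem~\ref{thm:EP}, recalled in Appendix~\ref{sec:EPGP}). For the basis derivation I would apply the halfspace construction of Section~\ref{sec:B-EPGP_halfspaces} one coordinate at a time. Because each boundary operator only involves a single coordinate, the fibre/syzygy computations decouple across the $x_i$. Starting from the EPGP exponential $e^{\alpha t+\beta\cdot x}$ on the characteristic variety and applying the recipe of Examples~\ref{exa:1-D_heat}, \ref{exa:2-D_wave}, \ref{exa:neumann} (in the $n$-dimensional form worked out in Appendix~\ref{sec:B-EPGP_calc}) to the face $\{x_i=0\}$ produces the factor $\sin(\beta_i x_i)$ (Dirichlet) or $\cos(\beta_i x_i)$ (Neumann) with $\beta_i\in\CC$ still free. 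Imposing the parallel boundary condition at $\{x_i=L_i\}$ then requires this factor (Dirichlet) or its normal derivative (Neumann) to vanish at $x_i=L_i$, which in either case quantizes $\beta_i=j_i\pi/L_i$ for $j_i\in\ZZ$. Taking the tensor product over all $i$ and substituting into the dispersion relation $\alpha^2=\sum_i\beta_i^2$ (wave) or $\alpha=\sum_i\beta_i^2$ (heat) produces exactly the claimed basis.

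For density I would extend the halfspace argument used in the proof of Theorem~\ref{thm:heat_wave_halfspace} to a full reflection across the box. Given $u\in\ker_{C^\infty}A$ satisfying the prescribed boundary conditions, I would successively odd- or even-extend $u$ across each face $\{x_i=0\}$ and $\{x_i=L_i\}$ according to whether the condition on that face is Dirichlet or Neumann, obtaining a function $\widetilde u$ that is $2L_i$-periodic in each $x_i$ and, by the same distributional integration by parts as in the proof of Theorem~\ref{thm:heat_wave_halfspace}, still satisfies $A\widetilde u=0$ globally. The resulting $\widetilde u$ is thus a smooth solution on the torus $\mathbb{T}^n\times\RR$ with $\mathbb{T}^n=\prod_i(\RR/2L_i\ZZ)$ that is odd (resp.\ even) under reflection in each Dirichlet (resp.\ Neumann) coordinate. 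Its Fourier series on $\mathbb{T}^n$ converges in the $C^\infty$ topology; the reflection symmetries force only pure sine (Dirichlet) or cosine (Neumann) products in the spatial variables, and substituting the expansion into $A\widetilde u=0$ reduces each time-dependent Fourier coefficient to the scalar ODE with solutions $e^{-ht}$ (heat) or $e^{\pm\sqrt{-1}ht}$ (wave). Restriction of the partial sums to $\Omega\times\RR$ then exhibits $u$ as a Fréchet-limit of finite linear combinations of the proposed basis.

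The main obstacle I anticipate is the regularity of the extension $\widetilde u$ across the reflection interfaces: one needs $\widetilde u\in C^\infty(\mathbb{T}^n\times\RR)$, not merely a distributional solution, so that the torus Fourier series converges in $C^\infty$ rather than only in $L^2$. The matching conditions at $\{x_i=0\}$ and $\{x_i=L_i\}$ require that all even-order (Dirichlet) or odd-order (Neumann) $x_i$-derivatives of $u$ vanish there. This will follow by induction: differentiating the boundary condition in the tangential and time variables kills those derivatives on the boundary, and the equation $Au=0$ lets one solve for pure $\partial_{x_i}^2$-derivatives in terms of lower and tangential ones, propagating vanishing to arbitrary order. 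Once this regularity is established the rest reduces to classical Fourier analysis on a torus.
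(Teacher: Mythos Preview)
Your proposal is correct and follows essentially the same route as the paper's proof: derive the basis via iterated halfspace reflections and quantization (as in the slab and rectangle examples), then prove density by odd/even-reflecting $u$ to a $2L_i$-periodic function, checking distributionally that $A\widetilde u=0$, and reading off the claimed family from the spatial Fourier expansion together with the resulting time ODE. Your attention to the smoothness of the extension across the reflection interfaces (needed for $C^\infty$ convergence of the torus Fourier series) is in fact more careful than the paper's treatment, which simply asserts the distributional identity and Fourier-expands without further comment.
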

\begin{proof}
    We will only cover the case of the wave equation with Dirichlet boundary conditions. We will make some  simplifications which do not restrict the idea of proof: we set $n=2$ so $u=u(t,x,y)$ and  $L_1=L_2=\ldots=L_n=\pi$. 
    
     Let $u$ be a solution of
     \begin{align*}
        \begin{cases} 
     \square u=0&\text{for }x\in(0,\pi), y\in(0,\pi)\\
          u=0&\text{for }x\text{ or }y=0\text{ or }\pi.
         \end{cases}
     \end{align*}
    We define the extension to $x,y\in(0,2\pi)$ by
    $$
    v(t,x,y)=\begin{cases}
        u(t,x,y)&x\in(0,\pi), y\in(0,\pi)\\
        -u(t,x-\pi,y)&x\in(\pi,2\pi), y\in(0,\pi)\\
        -u(t,x,y-\pi)&x\in(0,\pi), y\in(\pi,2\pi)\\
        u(t,x-\pi,y-\pi)&x\in(\pi,2\pi), y\in(\pi,2\pi).
    \end{cases}
    $$
    We extend $v$ by periodically in $(x,y)$ with cell $(0,2\pi)^2$ to $\RR^{1+2}$ without changing its name. We make several observations:
\begin{itemize}
    \item
For each $t$, the function $(x,y)\mapsto v(t,x,y)$ is $(0,2\pi)^2$-periodic, therefore $v$ has a Fourier expansion $\sum_{j,k\in\mathbb Z} f_{j,k}(t) e^{i(jx+ky)}$.
\item For each $(t,x)$, the function $y\mapsto v(t,x,y)$ is odd, therefore only the terms $\sin(ky)$ will appear in the Fourier expansion.
\item Similarly $x\mapsto v(t,x,y)$ is odd, so $v$ has a Fourier expansion $\sum_{j,k\in\mathbb Z} f_{j,k}(t) \sin(jx)\sin(ky)$.
\item $v$ solves the equation in full space, $\square v=0$ in $\RR^{1+2}$.
\end{itemize}
Only the last assertion is non-obvious and requires a careful distributional calculation as in the proof of Theorem~\ref{thm:heat_wave_halfspace} above, taking $\phi \in C_c^\infty(\RR^{1+2})$ and showing that $\int u\square \phi dV=0$ by careful integration by parts. We omit the details.

Finally, we plug in $v=\sum_{j,k\in\mathbb Z} f_{j,k}(t) \sin(jx)\sin(ky)$ in the equation $\square v=0$ to obtain $f_{j,k}''(t)+(j^2+k^2)f_{j,k}(t)=0$, which is an ODE with linearly independent solutions $e^{\pm \sqrt{-1}\sqrt{j^2+k^2}t}$. This gives us the Fourier basis
$$
e^{\pm \sqrt{-1}\sqrt{j^2+k^2}t}\sin(jx)\sin(ky)\quad\text{for }j,k\in\mathbb Z,
$$
which is the basis computed using B-EPGP in Example~\ref{exa:2-D_wave_rectangle}. This coincides with the \emph{separation of variables} method.
\end{proof}
The same extension works for the heat equation with Dirichlet boundary condition. For the Neumann boundary condition (for both equations), one removes the two ``minus'' signs in rows 2 and 3 of the definition of $v$; its periodic extension is thus an even function. Higher dimensions take more effort to set up $v$ on $2^n$ branches, but the idea is the same.

    \section{Ehrenpreis--Palamodov Theorem and EPGP}\label{sec:EPGP}
We will begin with a very precise version of the statement that \emph{exponential-polynomial solutions are dense in the space of all solutions} for a linear PDE system.
For notational clarity, we write $\CC[\partial]:=\CC[\partial_1,\ldots,\partial_n]$, $x=(x_1,\ldots,x_n)$, and $z=(z_1,\ldots,z_n)$.

\begin{theorem}
    Let  $\Omega\subset\RR^n$ be an open convex set. Let $A=A(\partial)\in \CC[\partial]^{\ell\times k}$ be an operator matrix of a system of linear PDEs with constant coefficients and $V=\{z\in\CC^n\colon \ker A(z)\neq 0\}$ its characteristic variety. 
    Then there exist a decomposition $V=\bigcup_{i=1}^m V_i$ into irreducible varieties $V_i$ and a set of vector polynomials in $2n$ variables called \textup{Noetherian multipliers} $\{p_{i,j}\}_{j=1\ldots r_i,i=1,\ldots m}\subset \CC[x,z]^{k}$   such that solutions of the form
    \begin{align}\tag{EP}\label{eq:EP}
\sum_{h=1}^N\sum_{i=1}^m\sum_{j=1}^{r_i}    c_{i,j,h} p_{i,j}(x,z_{i,j,h})e^{x\cdot z_{i,j,h}}\quad\text{with }z_{i,j,h}\in V_i
    \end{align}
    are dense in the space of smooth solutions of $A(\partial)u(x)=0$ in $\Omega$.
\end{theorem}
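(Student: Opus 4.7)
The plan is to combine three ingredients: (i) a Fourier--Laplace representation of smooth solutions, (ii) the algebraic structure encoded in the primary decomposition of the module $M = \CC[\partial]^k/A^{T}\CC[\partial]^{\ell}$, and (iii) a Hahn--Banach duality argument that converts the density claim into a rigidity statement about distributions. Throughout, the solution space $\ker_{C^\infty(\Omega)}A$ is endowed with the Fr\'echet topology of uniform convergence of all derivatives on compact subsets of $\Omega$.

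First I would reduce density to duality. By Hahn--Banach, it suffices to show that any continuous linear functional $T$ on the solution space that annihilates every summand $p_{i,j}(x,z)e^{x\cdot z}$ (for $z\in V_i$) must vanish identically. Such functionals extend to compactly supported distributions on $\Omega$ modulo the closed range of $A^{T}$ acting on $\mathcal{E}'$. Next I would apply the Fourier--Laplace transform: by Paley--Wiener--Schwartz, $\hat T$ is an entire function on $\CC^n$ of exponential type with polynomial growth on $\RR^n$. The hypothesis that $T$ annihilates $p_{i,j}(x,z)e^{x\cdot z}$ translates, after differentiating in $z$ and evaluating, into the vanishing of certain polynomial-coefficient differential operators applied to $\hat T$ along each irreducible component $V_i$. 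The heart of the argument is the \emph{Fundamental Principle}: the Noetherian multipliers $\{p_{i,j}\}$ are chosen so that these vanishing conditions are precisely equivalent to the statement $\hat T \in A^{T}\cdot(\text{entire functions})$. This equivalence is obtained from the primary decomposition of $M$ over $\CC[z]$, with each primary component represented by differential operators normal to the corresponding $V_i$.

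Finally, I would invoke the quantitative division theorem of Ehrenpreis--Palamodov: if $\hat T$ is entire of exponential type and lies in $A^{T}\cdot(\text{entire})$, then a factorization $\hat T=A^{T}\hat S$ exists with $\hat S$ also of exponential type and with controlled growth on $\RR^n$. Paley--Wiener then returns a compactly supported distribution $S$ with $T=A^{T}S$, so $T$ vanishes on the solution space, as required.

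The main obstacle is the construction and correctness of the Noetherian multipliers $p_{i,j}$: one must produce, from the primary decomposition of $M$, explicit polynomial differential operators on $\CC^n$ that jointly detect membership in the image of $A^{T}$ pointwise along each $V_i$. Equally delicate is the sharp division theorem needed to upgrade this pointwise-on-$V$ condition to a global factorization through $A^{T}$ while preserving the Paley--Wiener growth estimates; the standard path uses Hörmander's $L^2$ estimates for $\bar\partial$ combined with semialgebraic geometry of $V$ to build bounded right inverses.
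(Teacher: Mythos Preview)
The paper does not actually prove this theorem. It is stated in Appendix~E as the classical Ehrenpreis--Palamodov Fundamental Principle and used as a black box; after the statement the paper only clarifies the Fr\'echet topology on $C^\infty(\Omega)$ and then moves on to describe the EPGP regression algorithm. So there is no ``paper's own proof'' to compare against.

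Your outline is essentially the standard strategy one finds in the literature (Ehrenpreis, Palamodov, Bj\"ork, H\"ormander): Hahn--Banach reduces density to a duality statement on $\mathcal{E}'(\Omega)$, Paley--Wiener--Schwartz translates this to entire functions of exponential type, Noetherian operators attached to the primary components of the module encode membership in the image of $A^{T}$, and a division theorem with growth bounds closes the loop. The sketch is correct in spirit, though you should be aware that each of the two ``delicate'' steps you flag is itself a substantial piece of work: the construction of Noetherian multipliers from the primary decomposition is genuinely algebraic and nontrivial (this is Palamodov's contribution), and the division with Paley--Wiener bounds is where convexity of $\Omega$ is used and where the analytic difficulty concentrates. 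Your outline would need considerable expansion at both points to become a proof, but as a high-level roadmap it is accurate.
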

Next, we clarify the notion of smooth solution and the topology with respect to which we have density. First, we write $\mathcal F=C^\infty(\Omega)$ to be the space of smooth functions $\Omega\to\CC$. This is a Frech\'et space under the standard topology induced by the semi-norms $s_{a,b}(u)=\max_{|\alpha|=a,x\in\Omega_b}|\partial^\alpha u(x)|$, where $\Omega_b\uparrow\Omega$ is an increasing sequence of compact sets which exhausts $\Omega$. This is to say that a sequence $u_q\to u$ in $\mathcal F$ if $s_{a,b}(u_q-u)\to 0$ for all $a,b$. Algebraically,  $\mathcal F^k$ is a $\CC[\partial]$-module under the action of differentiation.

Our solution space is then
$$
\ker_{\mathcal F}A=\{u\in\mathcal F^k\colon A(\partial)u=0\}.
$$
Ehrenpreis--Palamodov Theorem states that each element $u\in\ker_{\mathcal F}A$ can be approximated by $u_q\to u$ in $\mathcal F^k$ with solutions $u_q$ of the form \eqref{eq:EP}. 

The algorithm EPGP from \citep{harkonen2023gaussian} revolves around fitting coefficients $c_{i,j,h}$ and ``frequencies'' $z_{i,j,h}$ in formula \eqref{eq:EP}. To simplify notation, we will simply write $\{b(x;z)\}_{z\in V}$ for the continuously indexed basis that we are working with (exponential-polynomial solutions of $A(\partial) u=0$). Predictions are written in the form
\begin{align}\label{eq:prediction}
\phi(x)=\sum_{j=1}^N c_jb(x;z_j).
\end{align}
We will assume that our solution is given as data points $y_h\approx u(x_h)$ for $h=1,\ldots M$, where $N\ll M$. We write $C=(c_j)\in\CC^N$, $Z=(z_j)\in V^N$, $X=(x_h)\in\RR^M$, $Y=(y_h)\in\CC^M$. 

We will model $C\sim \mathcal N(0,\Sigma)$ as multivariate Gaussian distribution with covariance $\Sigma=\mathrm{diag}(\sigma_j^2)_{j=1}^N$. 
We will also assume that the data has Gaussian noise, so $Y-\phi(X)\sim \mathcal N(0,\sigma_0^2I_M)$.
Writing $B=(b(x_h;z_j))\in\CC^{M\times N}$, we have that $\phi(X)=BC$, so $\phi (X)\sim \mathcal N(0,B\Sigma B^T)$. We write $\sigma^2=(\sigma_j^2)_{j=0}^N$ for the vector of parameters of the underlined distributions. The marginal log likelihood for this model is maximized if the function
$$
L(Z,\sigma^2;X,Y)=\frac{1}{\sigma_0^2}(|Y|^2-Y^HBA^{-1}B^HY)+(M-N)\log\sigma_0^2+\log \det\Sigma+\log \det A,
$$
is minimized, where $A=N\sigma_0^2\Sigma^{-1}+B^HB$ and $H$ denotes the conjugate-transpose operation. We then use stochastic gradient descent to minimize $L(Z,\sigma^2)$. Once we obtain $Z$, we plug in the explicit formula for $C=A^{-1}B^HY$ and use \eqref{eq:prediction} as our prediction.

We use the same regression model in the present paper, by choosing $b(x;z)$ according to the B-EPGP base instead of the EPGP base.


\section{Free wave equation in 2D and 3D} \label{sec:free_wave}

In fact, our first improvement of EPGP is to update it to include initial velocity as well as initial displacement. We will explain this for the example of the wave equation in arbitrary dimension $n$. 

Even without boundary conditions, this is an important example for which ongoing research is being developed \citep{henderson2023wave}. Writing $\square u=u_{tt}-u_{x_1x_1}-u_{x_2x_2}-\ldots- u_{x_nx_n}$, the  problem to consider is the \textbf{Free Wave Equation}:
$$
\begin{cases}
    \square u=0&\text{in }\R^{n}\times (0,\infty)\\
    u(0,x)=f(x)&\text{for }x\in\R^{n}\\
    u_t(0,x)=g(x)&\text{for }x\in\R^{n}.
\end{cases}
$$
Vanilla EPGP only deals with the case $g=0$. Our main observation is that if $u$ is a GP solution with covariance kernel $k$, then $(u,u_t)$ is a GP with covariance kernel
$$
\left[
\begin{matrix}
    k(x,t;x',t')&\partial_t k(x,t;x',t')\\
    \partial_{t'}k(x,t;x',t')&\partial^2_{tt'}k(x,t;x',t'),
\end{matrix}
\right]
$$
which we fit to data $(u(0,X),u_t(0,X))=(f(X),g(X))$. Mathematically, this is the same as considering the PDE system
$$
\begin{cases}
    \square u=0&\text{in }\R^{n}\times (0,\infty)\\
v-u_t=0&\text{in }\R^{n}\times (0,\infty)\\
    u(0,x)=f(x)&\text{for }x\in\R^{n}\\
    v(0,x)=g(x)&\text{for }x\in\R^{n}.
\end{cases}
$$
This can then be solved using Vanilla EPGP for $A(u,v)=(\square u,v-u_t)$ with initial condition for both $u$ and $v$.

As an example, we will consider the 2D case,
\begin{align*}
    \begin{cases}
        u_{tt}= u_{xx}+u_{yy}&\text{in }\R^{2}\times (0,\infty)\\
        u(0,x,y)= f(x-2)+f(y-2)&\text{in }\R^2\\
        u_t(0,x,y) = f'(x-2)+f'(y-2)&\text{in }\R^2,
    \end{cases}
\end{align*}
where \(f(x)=\exp(-5x^2)\) or  \(f(x)=\cos(5x)\). The true solution is given by \(u(t,x,y)=f(x+t-2)+f(y+t-2)\) and our results can be found in Figure~\ref{fig:free_wave}.

\begin{figure*}[hbt!]
  \vskip 0.2in
  \centering
  \begin{subfigure}[b]{0.19\textwidth}
    \centering
    \includegraphics[width=\textwidth]{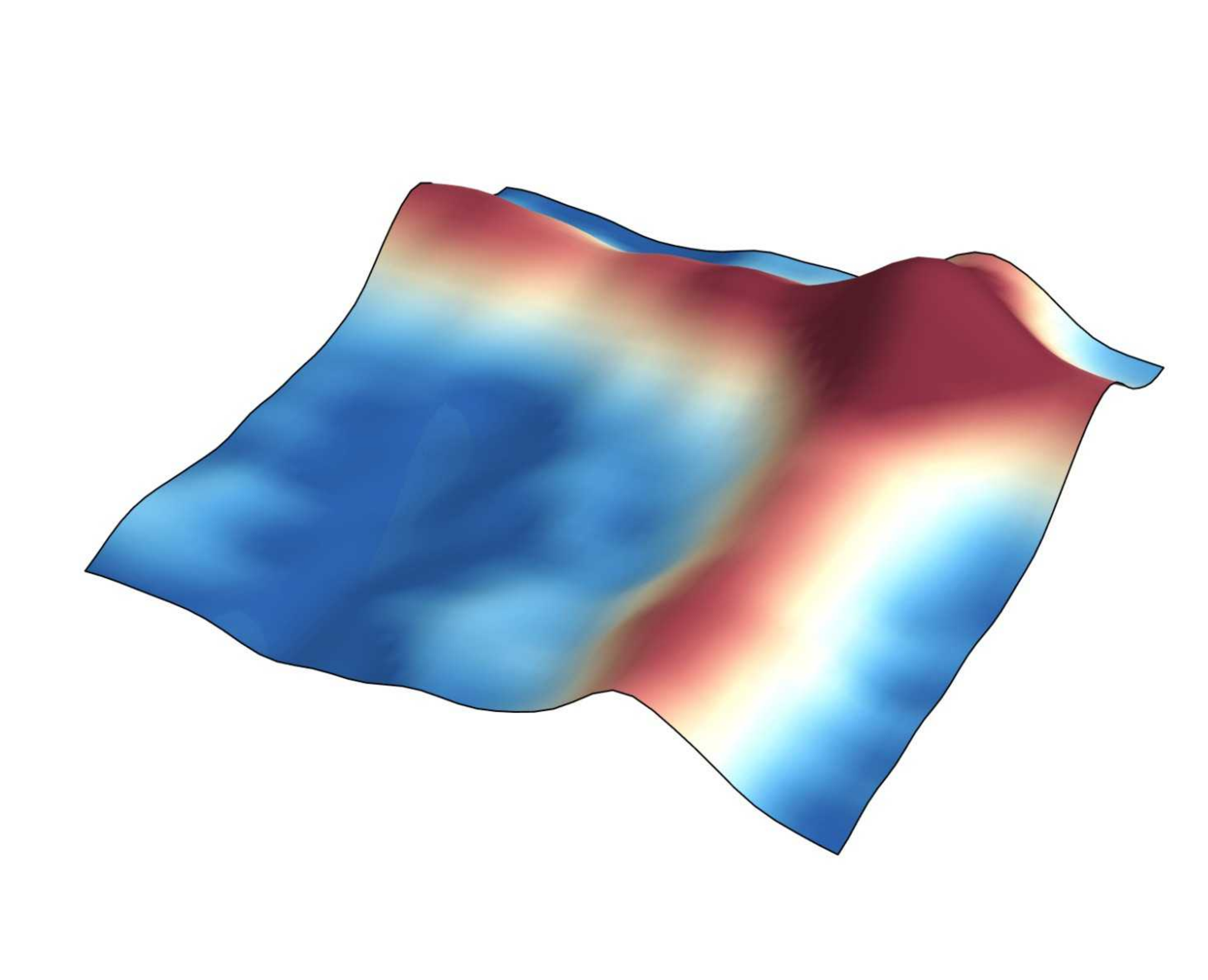}
    \caption{\tiny $t=0$}
  \end{subfigure}
  \hfill
  \begin{subfigure}[b]{0.19\textwidth}
    \centering
    \includegraphics[width=\textwidth]{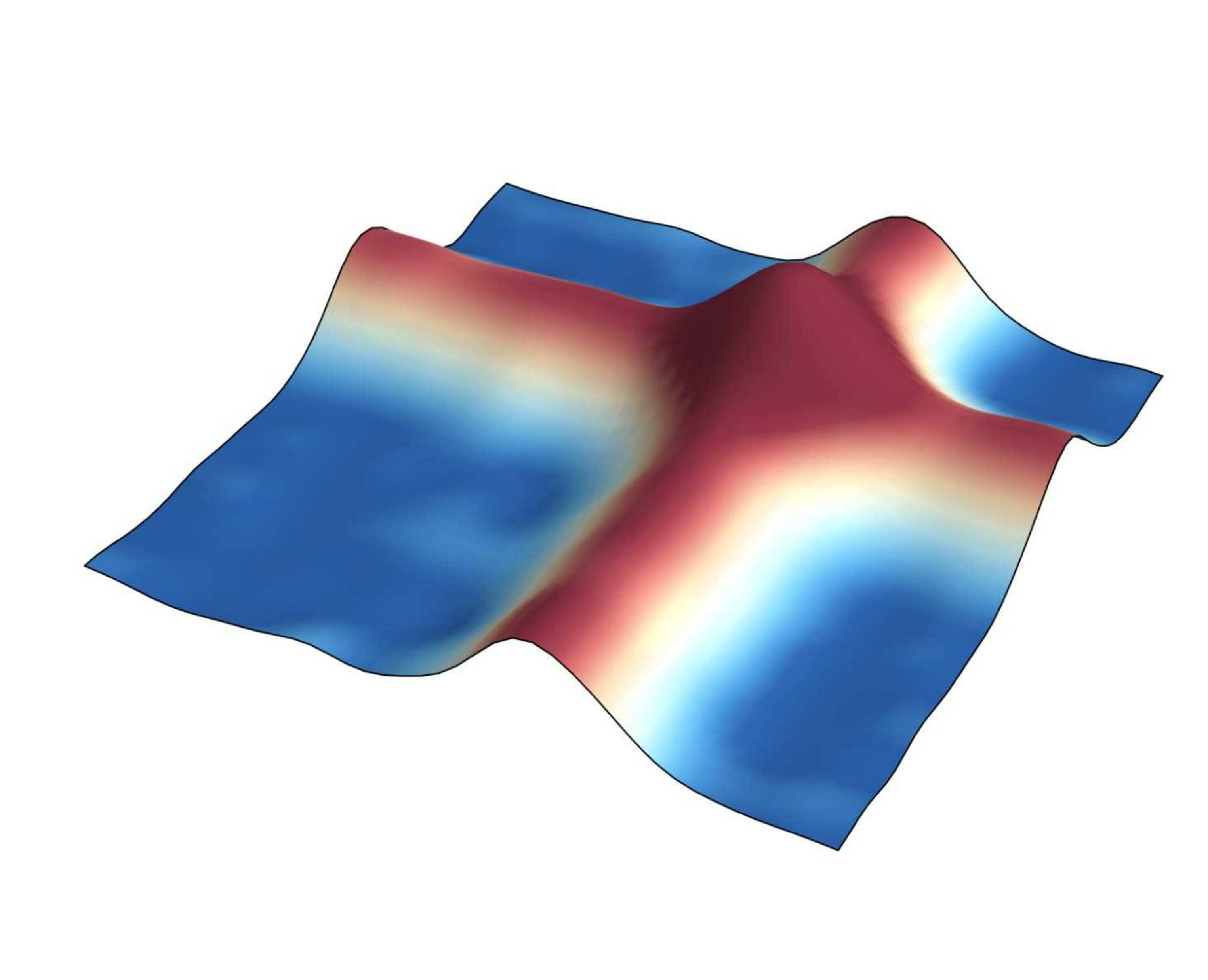}
    \caption{\tiny $t=1$}
  \end{subfigure}
  \hfill
  \begin{subfigure}[b]{0.19\textwidth}
    \centering
    \includegraphics[width=\textwidth]{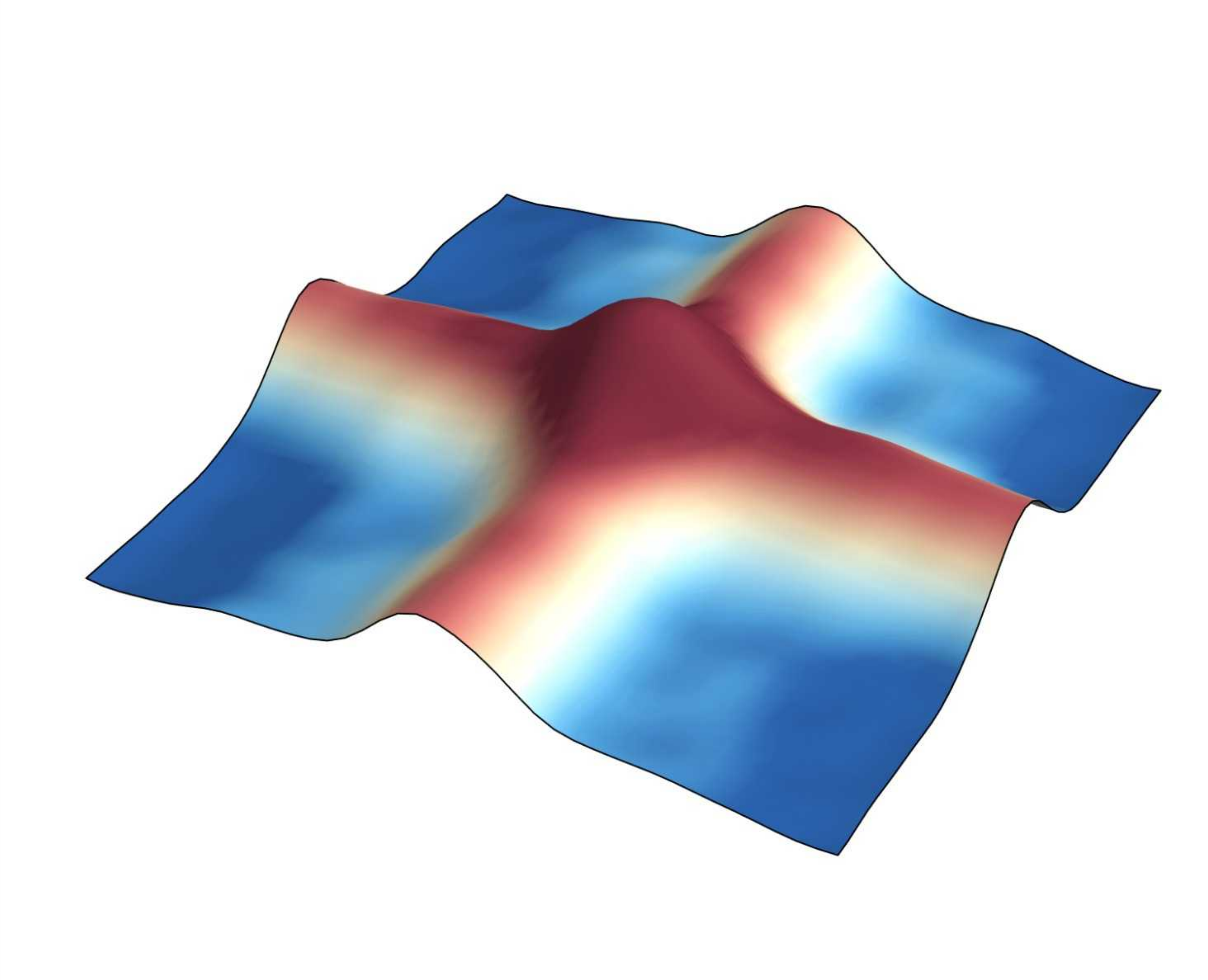}
    \caption{\tiny $t=2$}
  \end{subfigure}
  \hfill
  \begin{subfigure}[b]{0.19\textwidth}
    \centering
    \includegraphics[width=\textwidth]{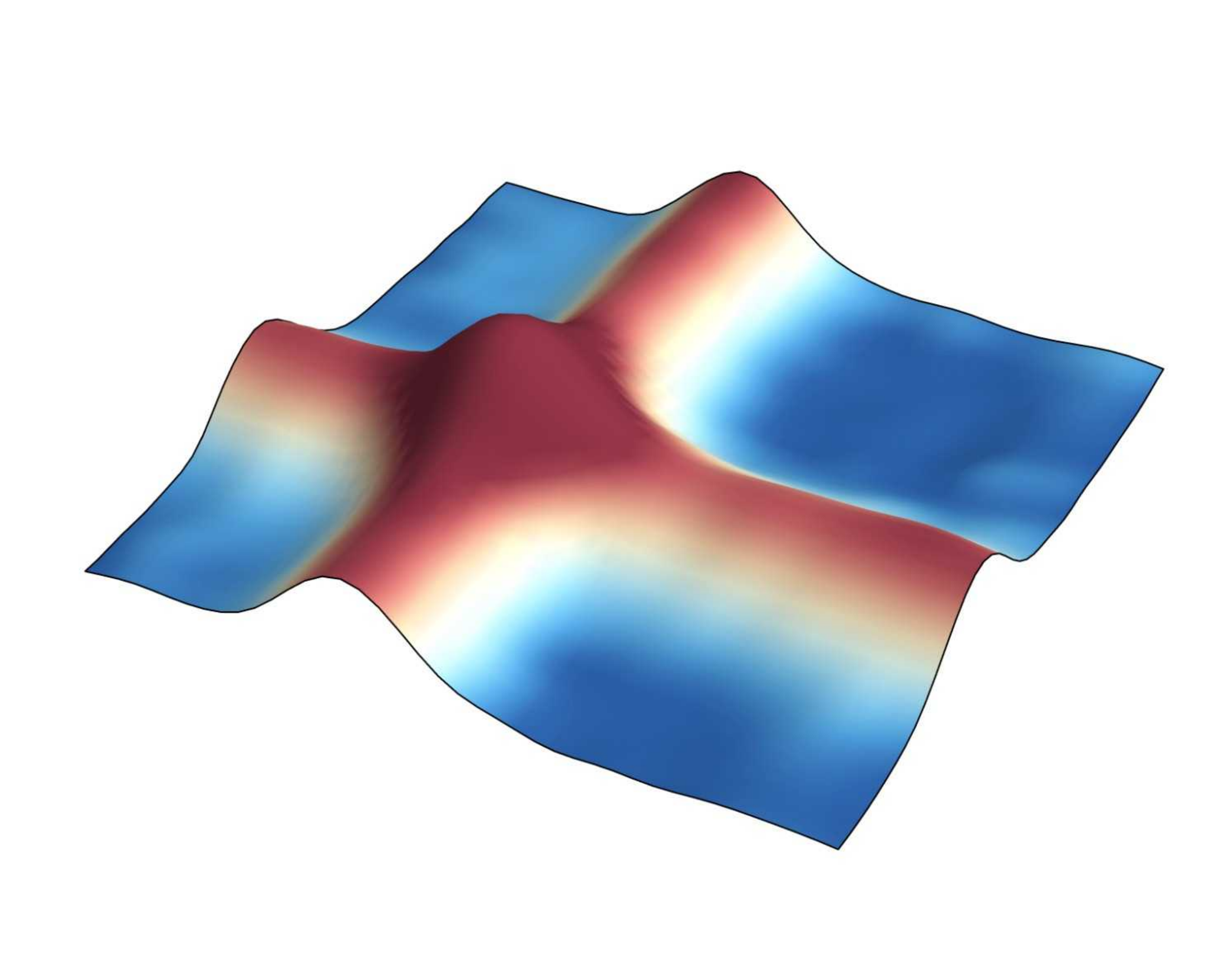}
    \caption{\tiny $t=3$}
  \end{subfigure}
  \hfill
  \begin{subfigure}[b]{0.19\textwidth}
    \centering
    \includegraphics[width=\textwidth]{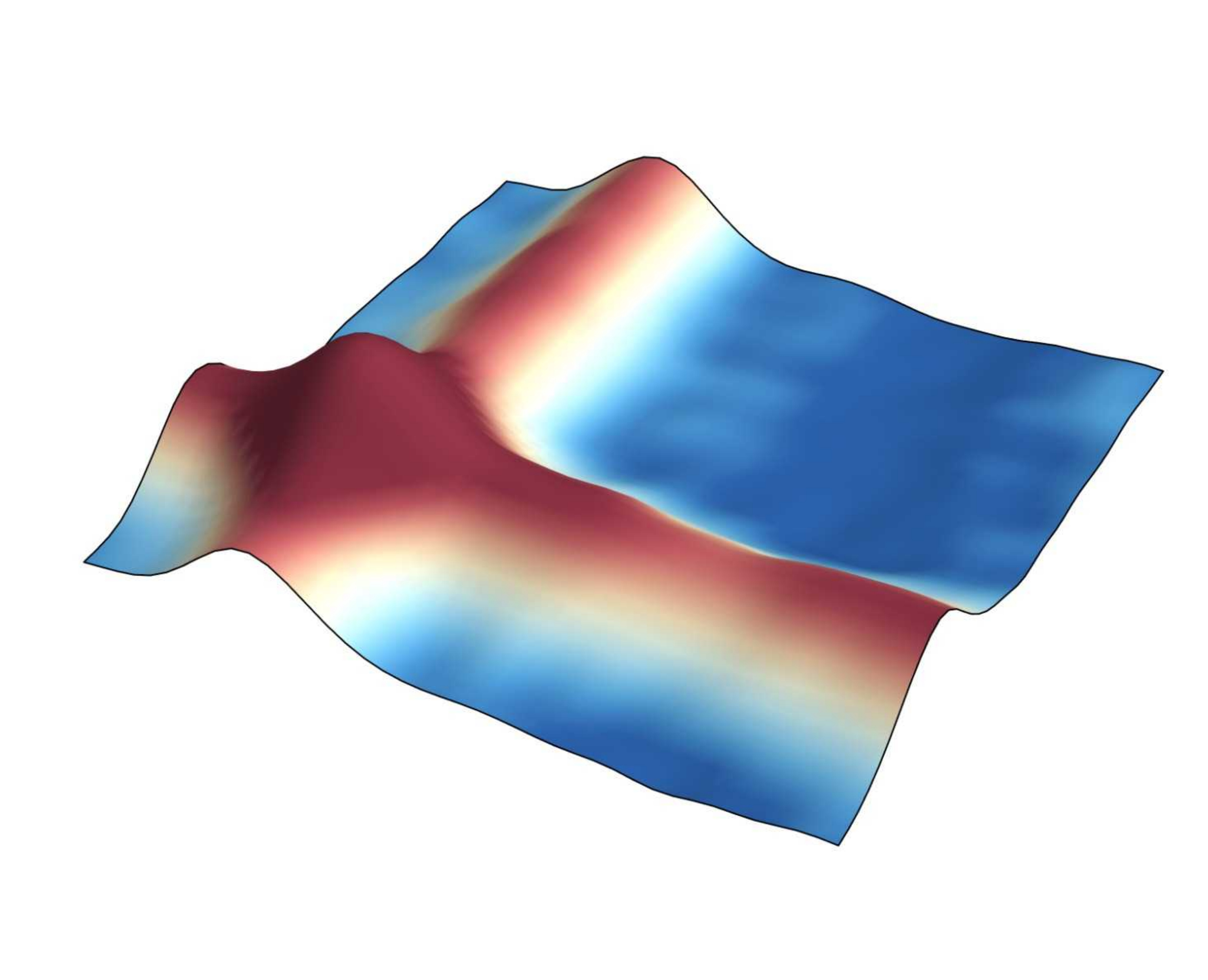}
    \caption{\tiny $t=4$}
  \end{subfigure}
\\
    \begin{subfigure}[b]{0.19\textwidth}
    \centering
    \includegraphics[width=\textwidth]{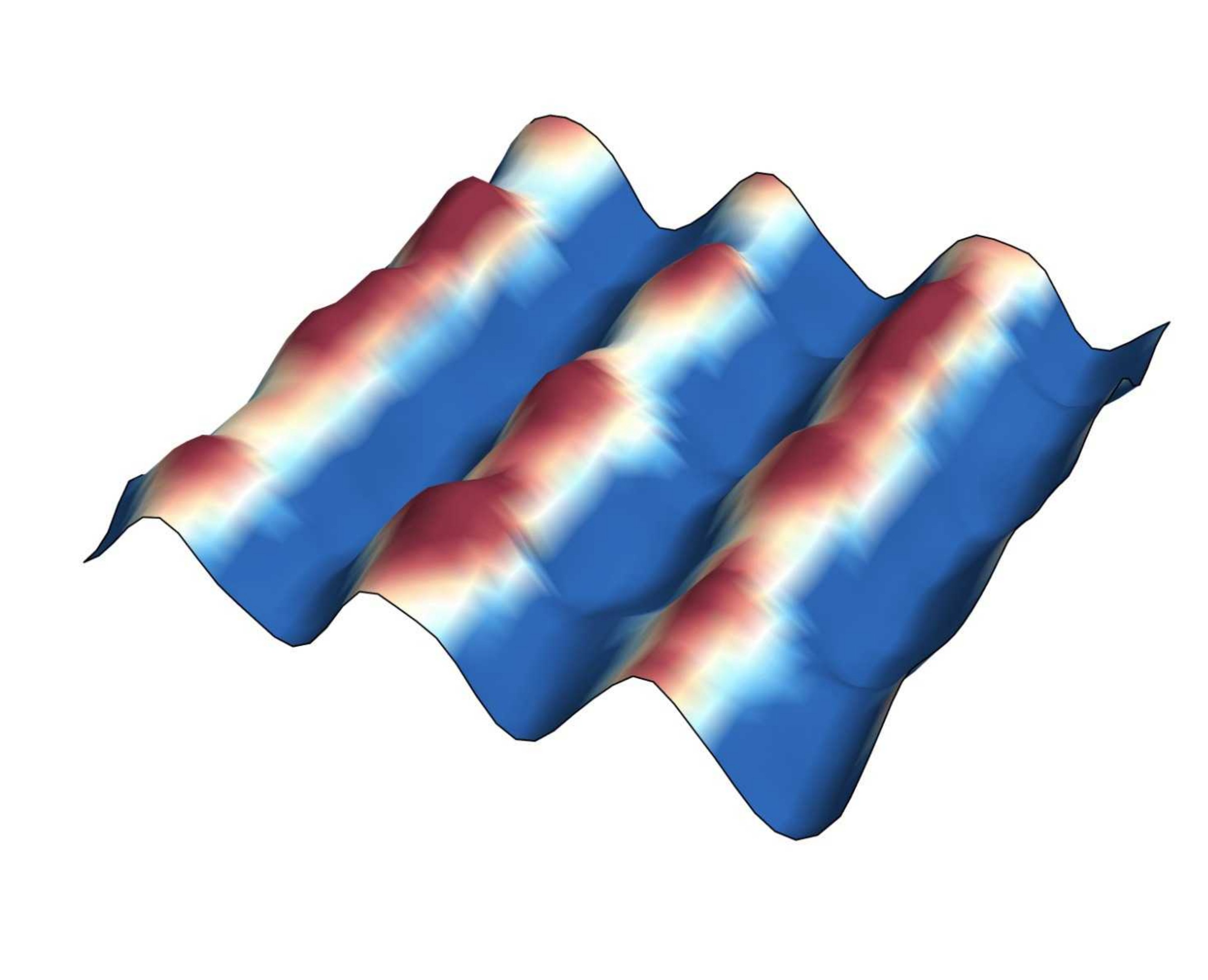}
    \caption{\tiny $t=0$}
  \end{subfigure}
  \hfill
  \begin{subfigure}[b]{0.19\textwidth}
    \centering
    \includegraphics[width=\textwidth]{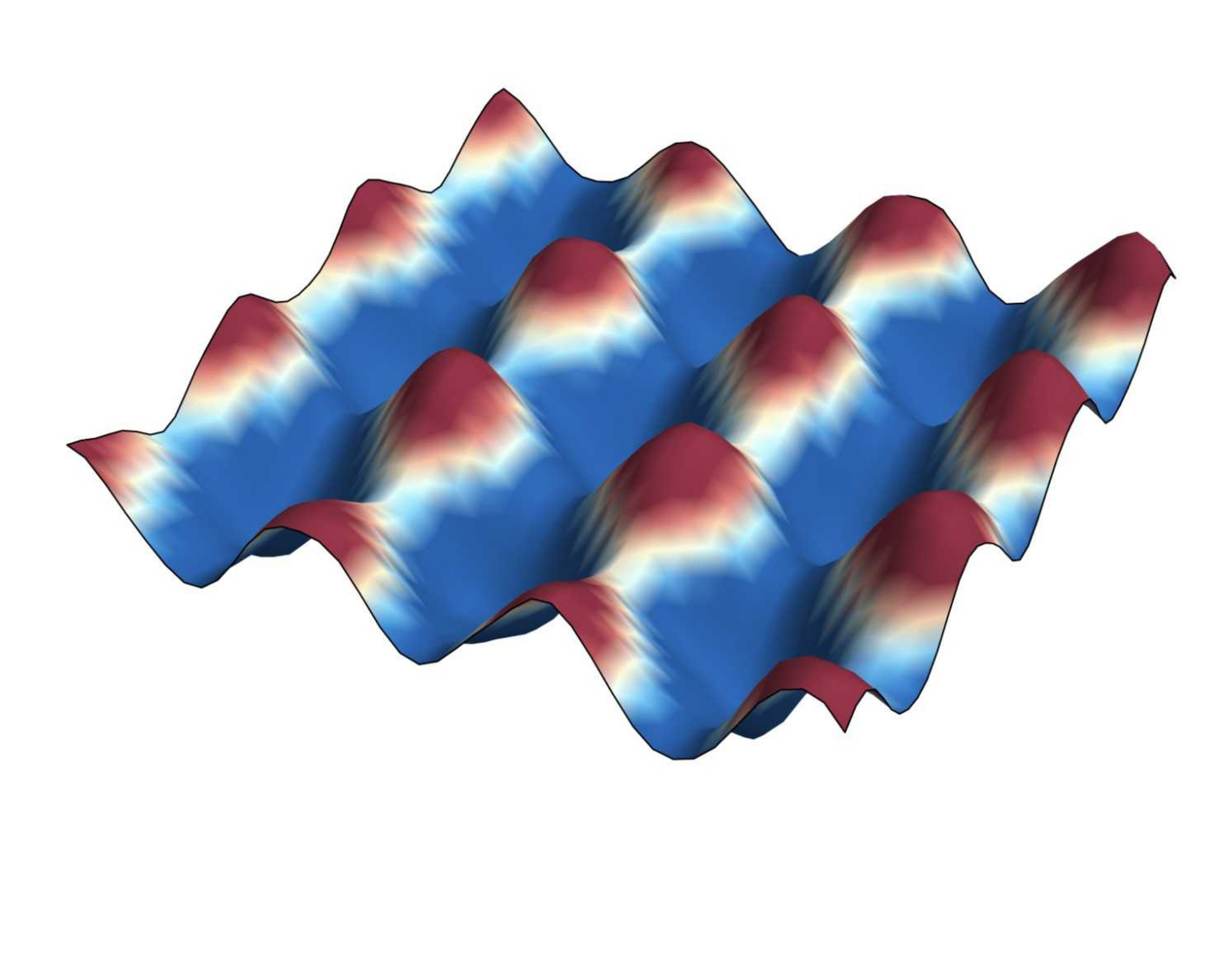}
    \caption{\tiny $t=1$}
  \end{subfigure}
  \hfill
  \begin{subfigure}[b]{0.19\textwidth}
    \centering
    \includegraphics[width=\textwidth]{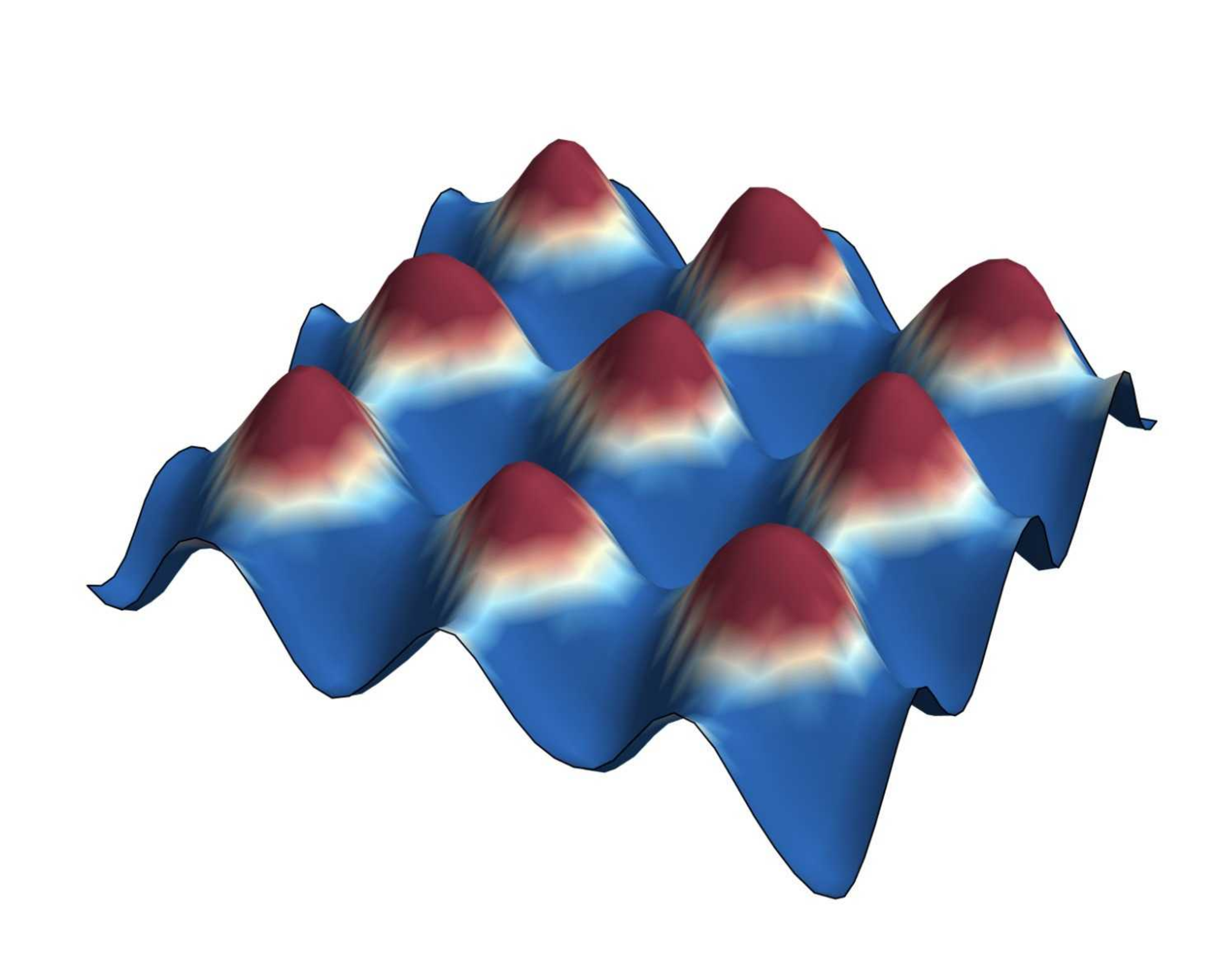}
    \caption{\tiny $t=2$}
  \end{subfigure}
  \hfill
  \begin{subfigure}[b]{0.19\textwidth}
    \centering
    \includegraphics[width=\textwidth]{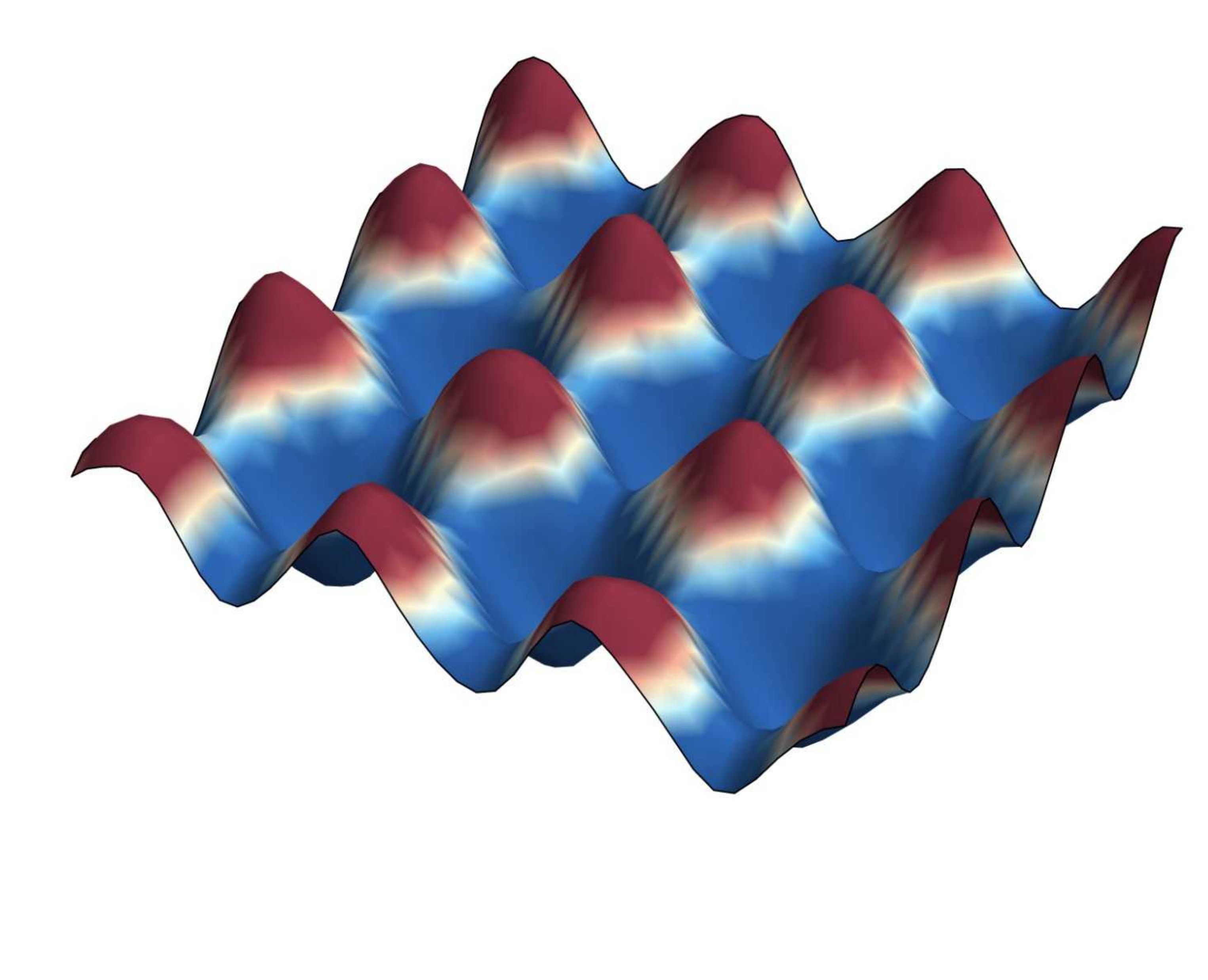}
    \caption{\tiny $t=3$}
  \end{subfigure}
  \hfill
  \begin{subfigure}[b]{0.19\textwidth}
    \centering
    \includegraphics[width=\textwidth]{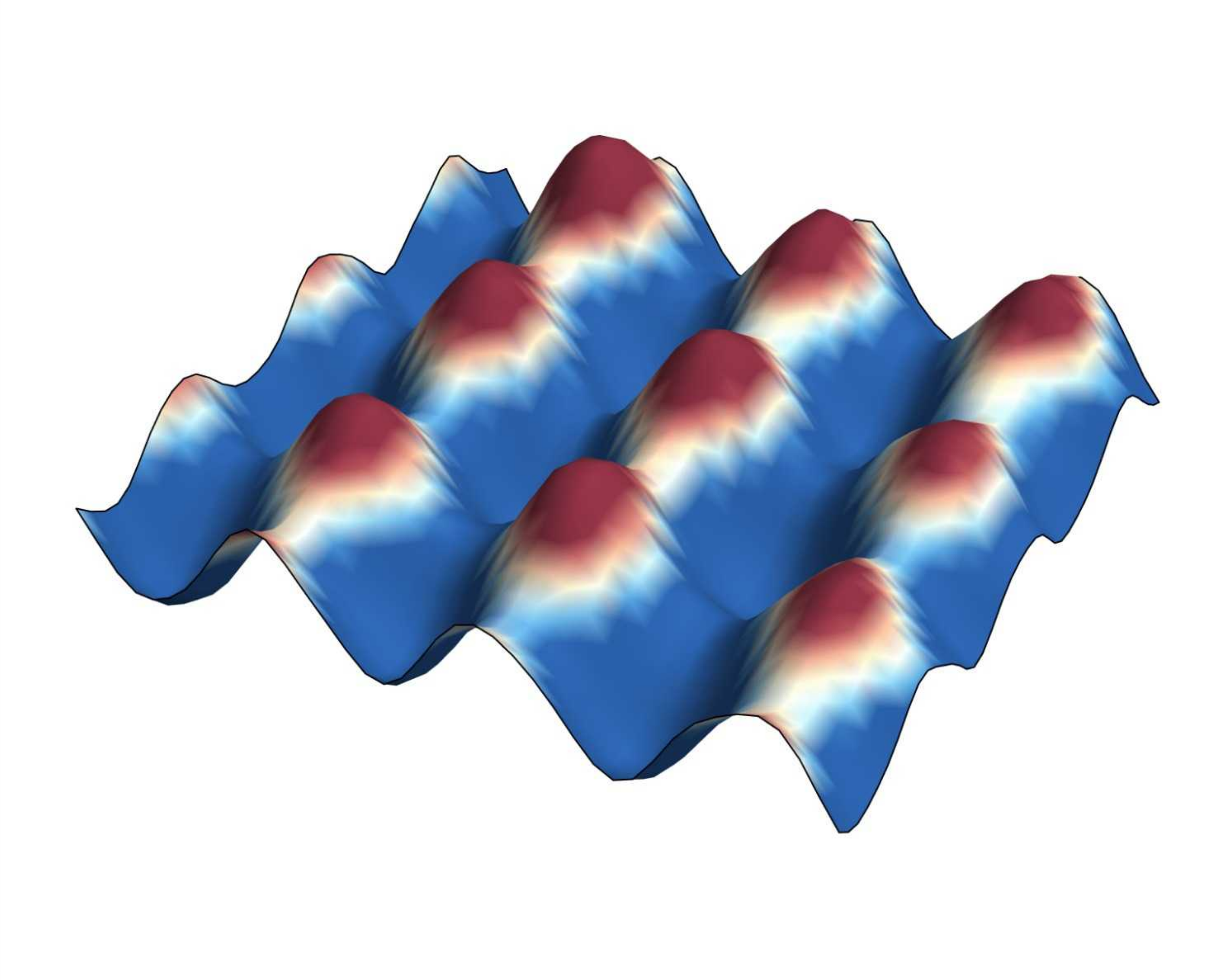}
    \caption{\tiny $t=4$}
  \end{subfigure}
  \caption{Solutions to the initial boundary value problem for the 2D wave equation (no boundary conditions). Solution fits both initial condition and initial speed. This is a necessary improvement of the EPGP algorithm from \citep{harkonen2023gaussian}, which can produce non-physical solutions in certain cases. 
  Animations can be found in the supplementary material as \texttt{free\_velocity1.mp4} and \texttt{free\_velocity2.mp4}.}\label{fig:free_wave}
  \vskip -0.2in
\end{figure*}
\section{Check using conservation of energy}\label{sec:conservation_energy}
So far, we checked the accuracy of our results either by comparison with true solutions (wherever we could construct them) or by comparison with other solvers. For certain equations, there is another mathematical tool that we can use, namely conservation of energy:
\begin{theorem}\label{thm:energy_conservation}
    Let $\Omega\subset\R^d$ be a convex, bounded, open set and consider a smooth solution $u\in C^\infty(\bar\Omega\times[0,\infty))$ of the wave equation
    $$
    \square u=0\quad\text{in }\Omega\times(0,\infty).
    $$
    Suppose that $u$ satisfies Dirichlet or Neumann boundary conditions
    $$
    u=0\quad\text{or}\quad\dfrac{\partial u}{\partial n}u=0\quad\text{for }x\in\partial\Omega.
    $$
    Then the \textbf{energy} of the solution $u$
    $$
E(t)=\int_{\Omega}\left|\frac{\partial u(x, t)}{\partial t}\right|^2+\sum_{j=1}^d \int_{\Omega}\left|\frac{\partial u(x, t)}{\partial x_j}\right|^2 d x
$$
is constant for all $t\geq0$.
\end{theorem}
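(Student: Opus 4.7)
The plan is to use the standard energy method: differentiate $E(t)$ in time and show that the PDE together with the boundary conditions force $\frac{dE}{dt}=0$. Since $u$ is smooth up to the boundary, differentiation under the integral sign is justified by dominated convergence on the bounded domain $\Omega$.

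First I compute
\[
\frac{dE}{dt}=2\int_\Omega u_t u_{tt}\,dx+2\sum_{j=1}^d\int_\Omega u_{x_j}u_{x_j t}\,dx.
\]
Using the wave equation $u_{tt}=\Delta u$, the first integral becomes $2\int_\Omega u_t\Delta u\,dx$. Next I apply Green's identity (integration by parts) to this term:
\[
\int_\Omega u_t\Delta u\,dx=-\int_\Omega \nabla u_t\cdot\nabla u\,dx+\int_{\partial\Omega}u_t\,\frac{\partial u}{\partial n}\,dS.
\]
Since $\nabla u_t\cdot\nabla u=\sum_j u_{x_j t}u_{x_j}$, the volume term here exactly cancels the second sum in $\frac{dE}{dt}$, leaving
\[
\frac{dE}{dt}=2\int_{\partial\Omega}u_t\,\frac{\partial u}{\partial n}\,dS.
\]

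It remains to show the boundary integral vanishes under either boundary condition. For Neumann data, $\frac{\partial u}{\partial n}\equiv 0$ on $\partial\Omega$ and the integrand is identically zero. For Dirichlet data, $u(x,t)=0$ for all $x\in\partial\Omega$ and all $t\ge 0$; differentiating this identity in $t$ (which is allowed because $u\in C^\infty(\bar\Omega\times[0,\infty))$) gives $u_t=0$ on $\partial\Omega$, so again the integrand vanishes. Hence $\frac{dE}{dt}\equiv 0$ and $E(t)$ is constant.

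There is no real obstacle here: the whole argument is a one-line integration by parts once the regularity $u\in C^\infty(\bar\Omega\times[0,\infty))$ is granted (this is exactly what legitimizes both the boundary trace and the differentiation under the integral). The only subtle point worth flagging in the write-up is the Dirichlet case, where one needs the \emph{time}-derivative of the boundary trace, obtained from differentiating $u|_{\partial\Omega}\equiv 0$ in $t$ rather than from the PDE itself. Convexity of $\Omega$ is not actually used; a bounded $C^1$ domain would suffice for Green's identity, but convexity is what guarantees the validity of the Ehrenpreis--Palamodov machinery used elsewhere in the paper, so it is natural to assume it here too.
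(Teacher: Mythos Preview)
Your proof is correct and follows essentially the same route as the paper's: differentiate $E(t)$, integrate by parts to reduce to the boundary term $2\int_{\partial\Omega}u_t\,\partial u/\partial n\,dS$, and observe that this vanishes under either boundary condition. The only cosmetic difference is that you substitute $u_{tt}=\Delta u$ before integrating by parts whereas the paper integrates the $\nabla u\cdot\nabla u_t$ term by parts first; your remarks on regularity and on differentiating the Dirichlet trace in $t$ are welcome additions not spelled out in the paper.
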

\begin{proof}
We can see that
\begin{align*}
    E(t)=\int_{\Omega}u_t^2+|\nabla u|^2dx
\end{align*}
Then we can integrate by parts to obtain
\begin{align*}
    E'(t)&=\int_{\Omega}\left[2u_tu_{tt}+2(\nabla u\cdot \nabla u_t)\right]dx\\
    &=\int_{\Omega}2u_tu_{tt}dx-2\int_{\Omega}(\Delta u)u_t dx + 2\int_{\p\Omega}u_t\frac{\p u}{\p n}dS\\
    &=\int_{\Omega}2u_t(u_{tt}-\Delta u)dx+ 2\int_{\p\Omega}u_t\frac{\p u}{\p n}dS\\
    &=2\int_{\p\Omega}u_t\frac{\p u}{\p n}dS.
\end{align*}
For either Dirichlet or Neumann boundary condition, we have that 
\[
\int_{\p\Omega}u_t\frac{\p u}{\p n}dS=0,
\]
Therefore, \(E(t)\) is indeed constant.
\end{proof}
In practice, we approximate $E(t)$ a Riemann sum of step-size $h$. For instance, when $d=2$ we take
\[
Q(t)=\frac{\text{Area}(\Omega)}{\# (h\mathbb Z)^2\cap\Omega}\sum_{(x,y)\in (h\mathbb Z)^2\cap\Omega}(u_t^2+u_x^2+u_y^2)\big|_{(x,y,t)}.
\]
The quantity we implement in our code is
\begin{align}\label{eq:energy_integral}
\tilde Q(hT)=\sum_{(x,y)\in (h\mathbb Z)^2\cap\Omega}(u_t^2+u_x^2+u_y^2)\big|_{(x,y,hT)}
\end{align}
where $T$ is the final time ($t\in[0,T]$) and we choose $h=0
.1$.

\section{Wave equation in bounded domains}\label{sec:wave_full}

In this section we will provide numerical results for the $2$-D wave equation in bounded domains, meaning that we investigate
$
    u_{tt}- (u_{xx}+u_{yy})=0\text{ in }\Omega\times (0,T)
$
with Dirichlet or Neumann boundary conditions on $\partial\Omega$, where $\Omega\subset\R^2$ will is a bounded open convex set. We will use both EPGP (Remark~\ref{rem:direct_method}) and B-EPGP methods and compare them.

We will check the validity of our results using the conservation of energy principle from Theorem~\ref{thm:energy_conservation}. We will show that by using EPGP a non-negligible amount of energy is lost/dissipated. In fact, we can say more: We will solve initial boundary value problems with given initial condition and zero initial speed, meaning that we will solve  
$$
\begin{cases}
       u_{tt}- (u_{xx}+u_{yy})=0&\text{in }\Omega\times (0,T)\\
       u(0,x,y)=f(x,y)&\text{at }t=0\\
       u_t(0,x,y)=0&\text{at }t=0
\end{cases}
$$
which is a well-posed problem under Dirichlet or Neumann boundary conditions. By Theorem~\ref{thm:energy_conservation} we have
$$
E(t)=E(0)=\int_\Omega u_t^2(0,x,y)+u_x^2(0,x,y)+u_y^2(0,x,y)dxdy=\int_\Omega f_x^2+f_y^2dxdy.
$$
Thus in our experiments we will compare the energy of the B-EPGP with the true value computed from initial conditions above and also show its superiority over EPGP.
\subsection{Rectangular domains}\label{sec:rectangle}
We first consider \(\Omega=(0,4)^2\) and $t\in(0,12)$ and look for the solution of the initial boundary value problem
\begin{align*}
    \begin{cases}
        u_{tt}- (u_{xx}+u_{yy})=0&\text{for }x,y\in(0,4),\,t\in(0,12)\\
        u(0,x,y)= \exp (-10((x-1)^2+(y-1)^2))&\text{for }x,y\in(0,4)\\
        u_t(0,x,y) = 0&\text{for }x,y\in(0,4)\\
        u(t,x,y) = 0 &\text{for $x$ or $y=0$ or $4$.}
    \end{cases}
\end{align*}
We will give a shortcut to our B-EPGP algorithm for finding a basis. We consider $H_1=\{x=0\}$, $H_2=\{y=0\}$, $H_3=\{x=4\}$, $H_4=\{y=4\}$ and let $e^{at+bx+cy}$ be a solution of the wave equation, meaning that $a^2=b^2+c^2$. We can calculate a basis for the wedge $\{x,y>0\}$, see Appendix~\ref{sec:wedge_90deg}, which is
\begin{align*}
&\phantom{=}e^{at+bx+cy}-e^{at-bx+cy}-e^{at+bx-cy}+e^{at-bx-cy}\\
&=e^{at}(e^{bx+cy}-e^{-bx+cy}-e^{bx-cy}+e^{-bx-cy})\\
&=e^{at}(e^{cy}(e^{bx}-e^{-bx})-e^{-cy}(e^{bx}-e^{-bx})\\
&=2\sqrt{-1}e^{at}\sin(bx)(e^{cy}-e^{-cy})\\
&=-4e^{at}\sin(bx)\sin(cy)
\end{align*}
Here we observe a shortcut: if we set $b,c\in \mathbb Z$, we obtain a set of functions
\begin{align*}
    \{e^{\pm \tfrac{\pi\sqrt{-1}}{4}\sqrt{j^2+k^2}t}\sin(\tfrac\pi4 jx)\sin(\tfrac\pi4 ky)\}_{j,k\in\mathbb Z}
\end{align*}
which satisfy the boundary condition on $H_2$ as well (see also Appendix~\ref{sec:slab}). That set of functions has linear span which is dense in the set of all solutions to \eqref{eq:1} by Theorem~\ref{thm:heat_wave_rectangle}.

Snapshots of our solution are presented in Figure~\ref{fig:rectangle} and the conservation of energy is demonstrated in Figure~\ref{fig:rectangle_energy}.

\begin{figure*}[hbt!]
  \vskip 0.2in
  \centering
  \begin{subfigure}[b]{0.1\textwidth}
    \centering
    \includegraphics[width=\textwidth]{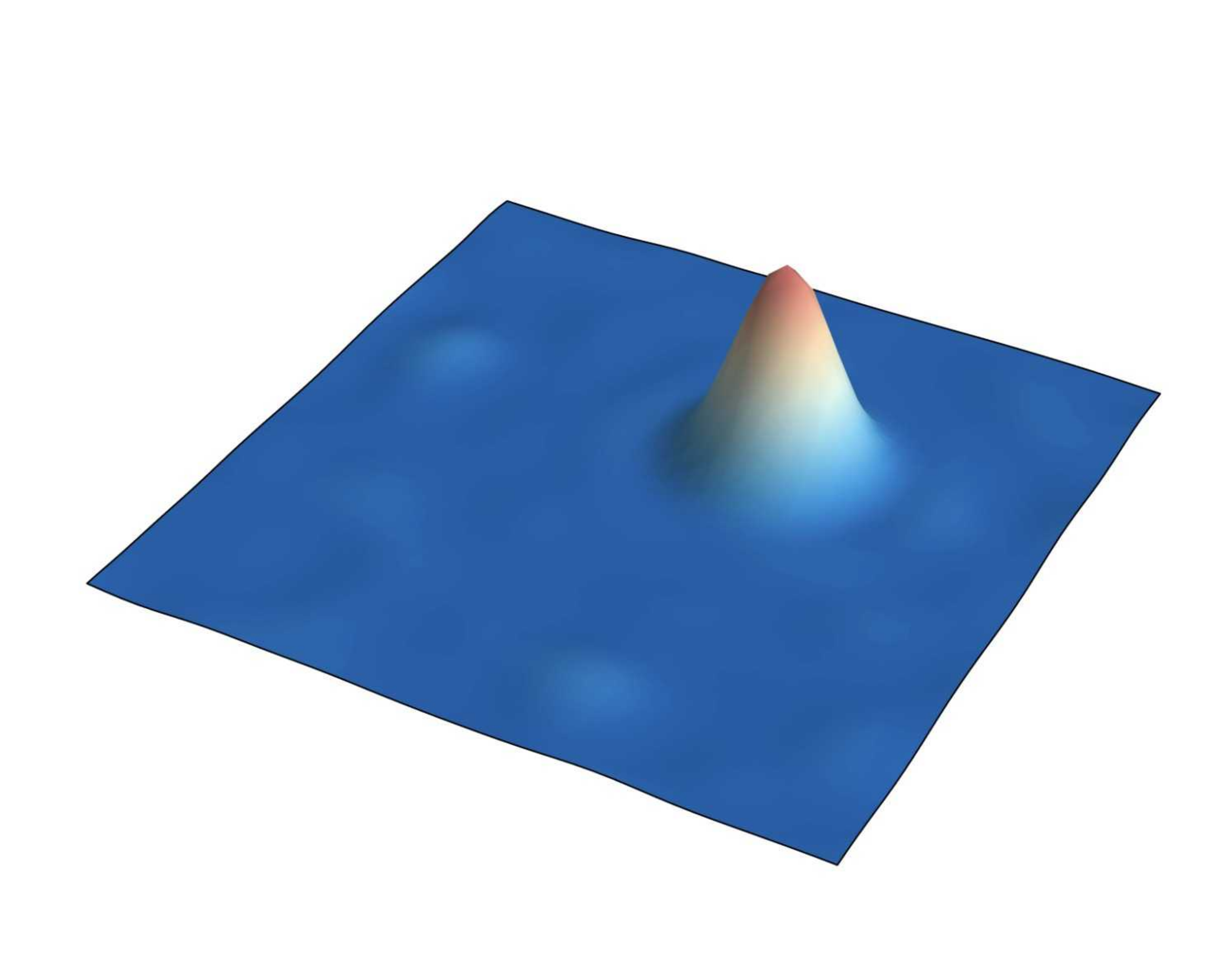}
    \caption{\tiny $t=0$,\\ EPGP}
  \end{subfigure}
  \hfill
  \begin{subfigure}[b]{0.1\textwidth}
    \centering
    \includegraphics[width=\textwidth]{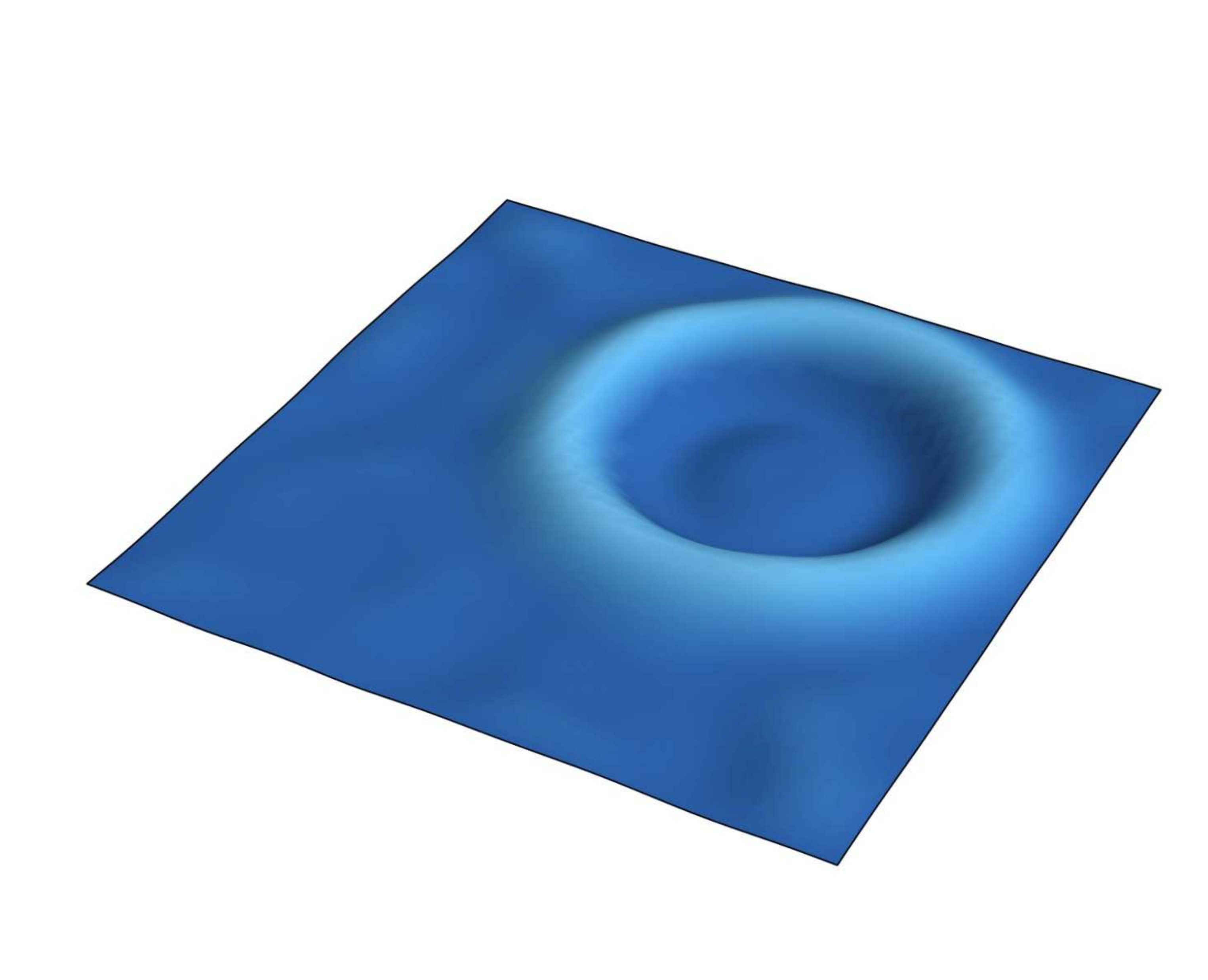}
    \caption{\tiny $t=1$,\\ EPGP}
  \end{subfigure}
  \hfill
  \begin{subfigure}[b]{0.1\textwidth}
    \centering
    \includegraphics[width=\textwidth]{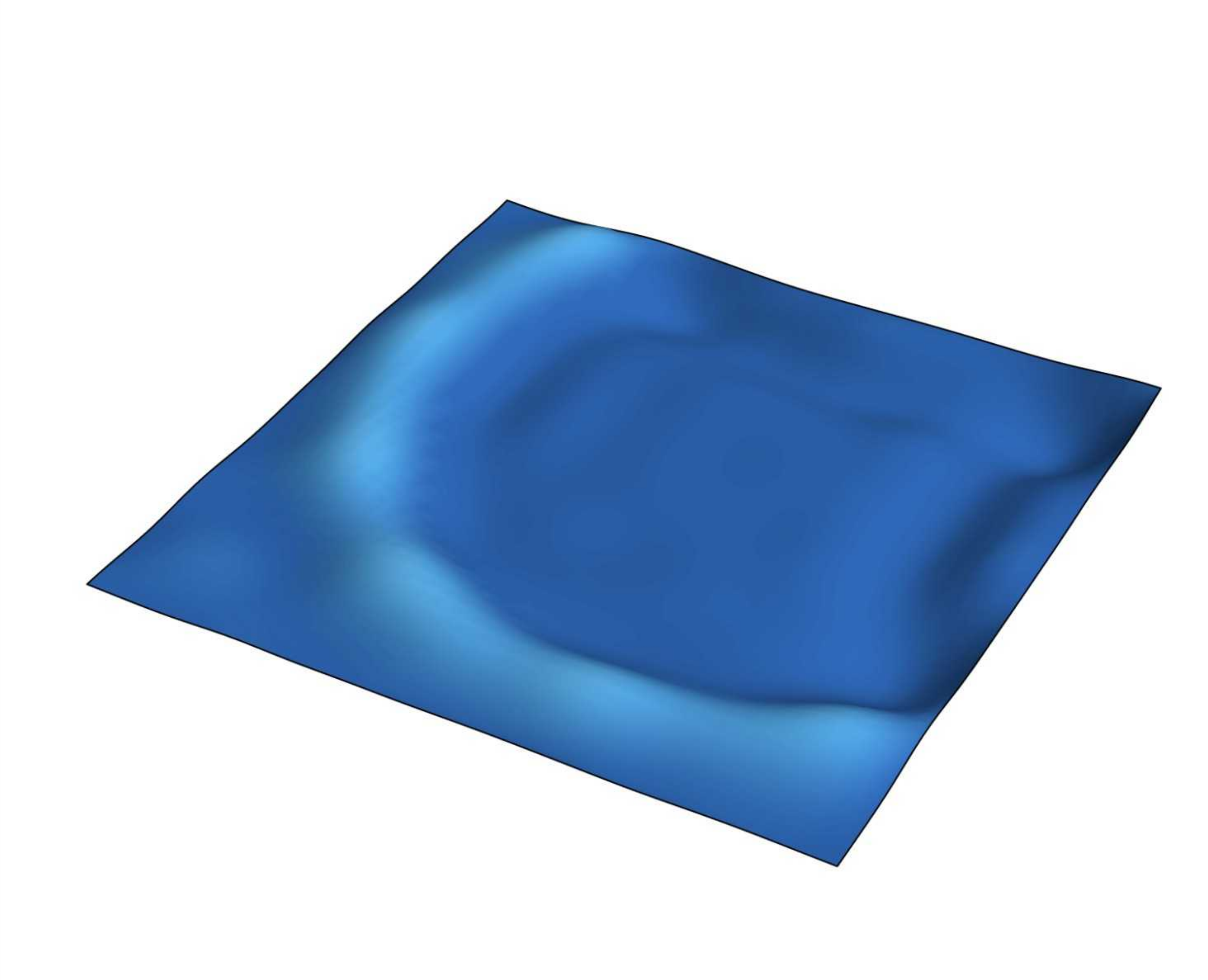}
    \caption{\tiny $t=2$,\\ EPGP}
  \end{subfigure}
  \hfill
  \begin{subfigure}[b]{0.1\textwidth}
    \centering
    \includegraphics[width=\textwidth]{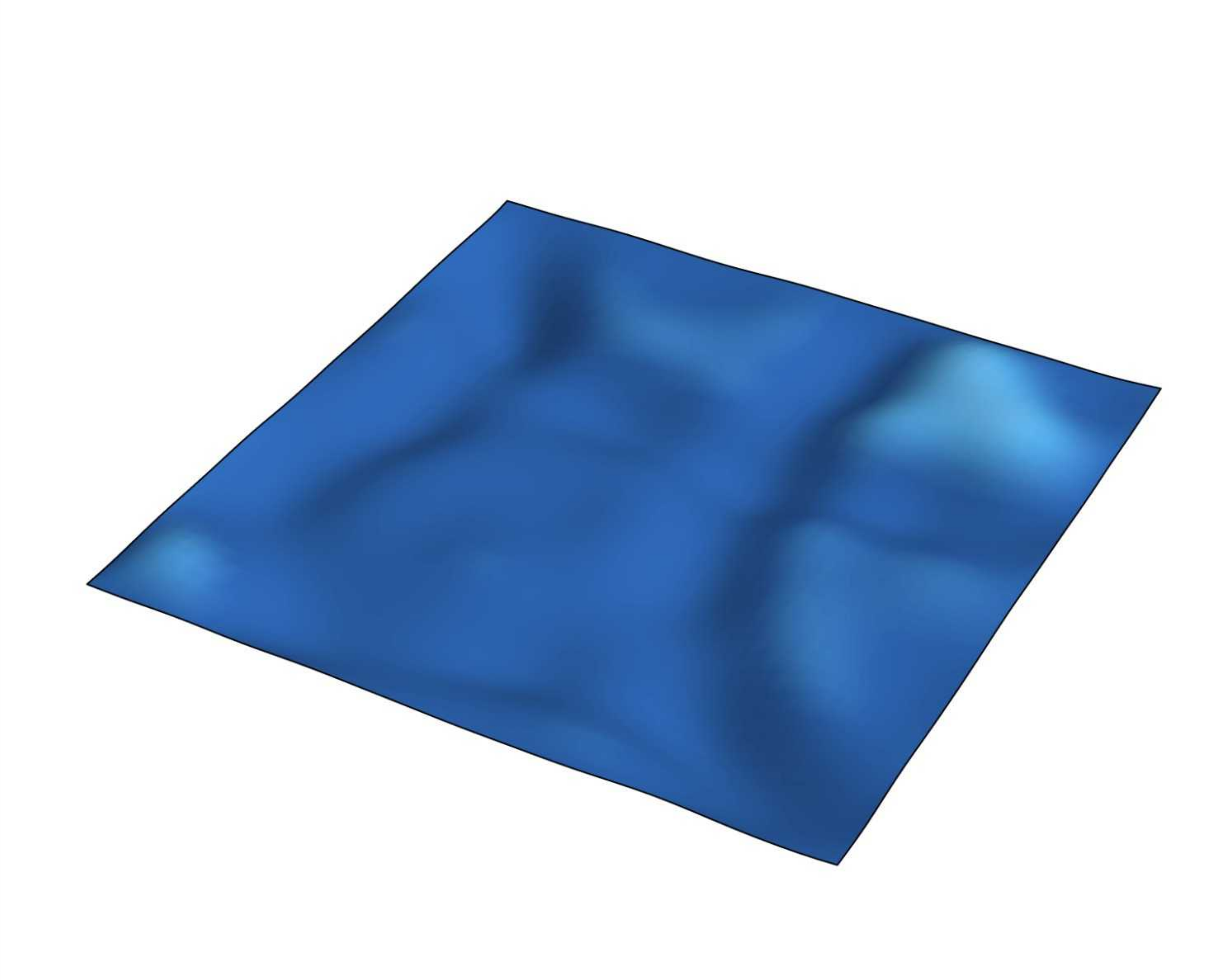}
    \caption{\tiny $t=3$,\\ EPGP}
  \end{subfigure}
  \hfill
  \begin{subfigure}[b]{0.1\textwidth}
    \centering
    \includegraphics[width=\textwidth]{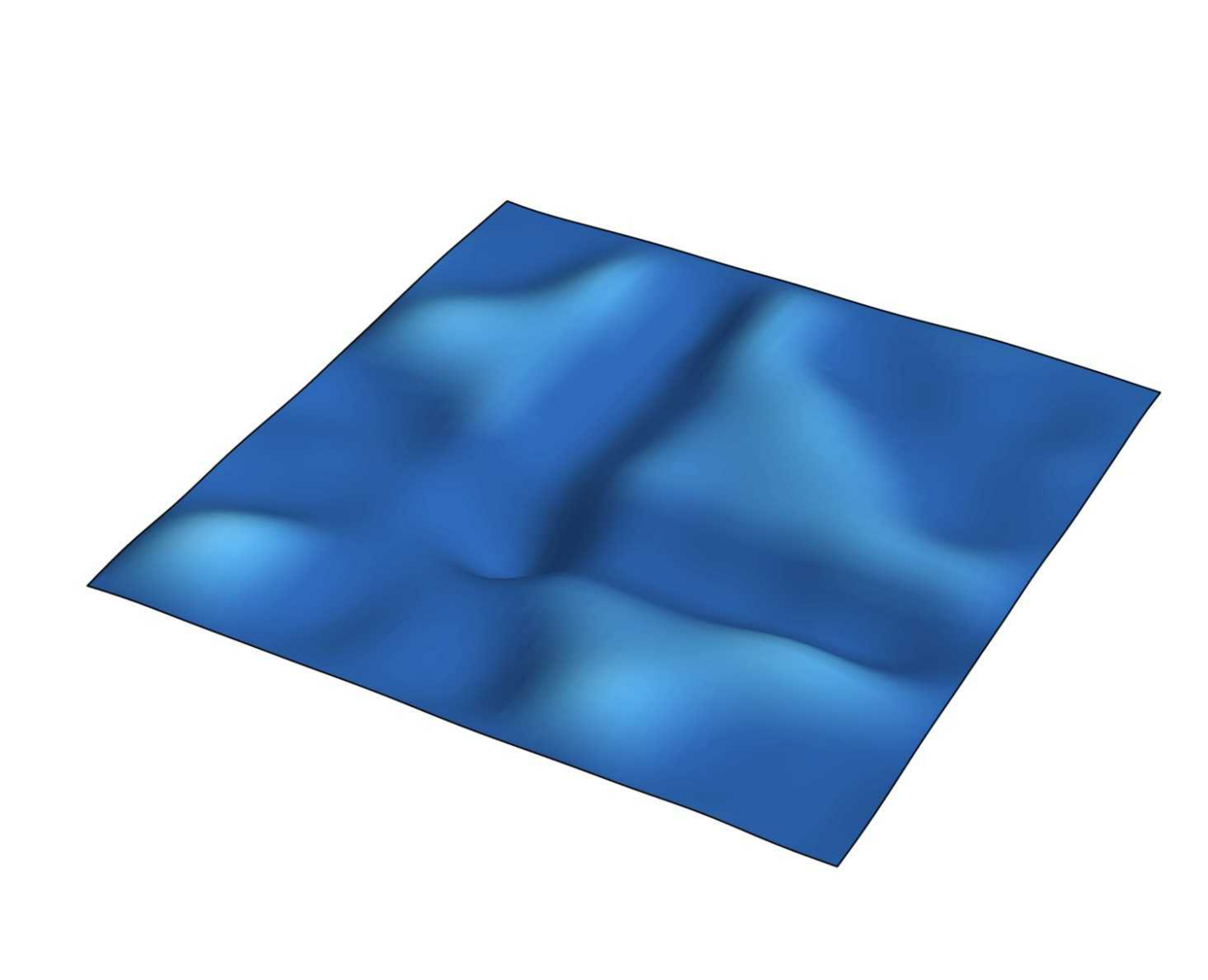}
    \caption{\tiny $t=4$,\\ EPGP}
  \end{subfigure}
  \hfill
  \begin{subfigure}[b]{0.1\textwidth}
    \centering
    \includegraphics[width=\textwidth]{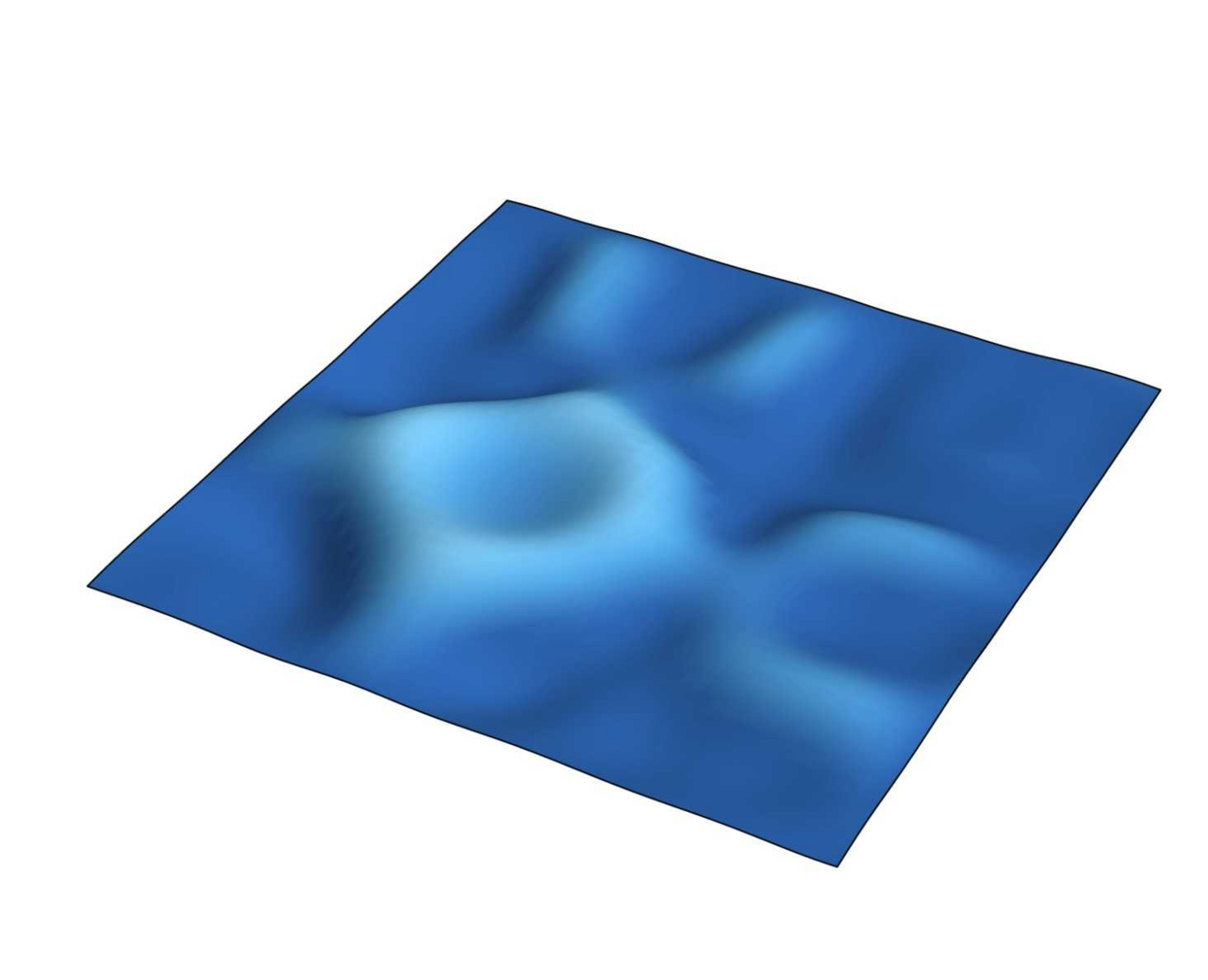}
    \caption{\tiny $t=5$,\\ EPGP}
  \end{subfigure}
  \hfill
  \begin{subfigure}[b]{0.1\textwidth}
    \centering
    \includegraphics[width=\textwidth]{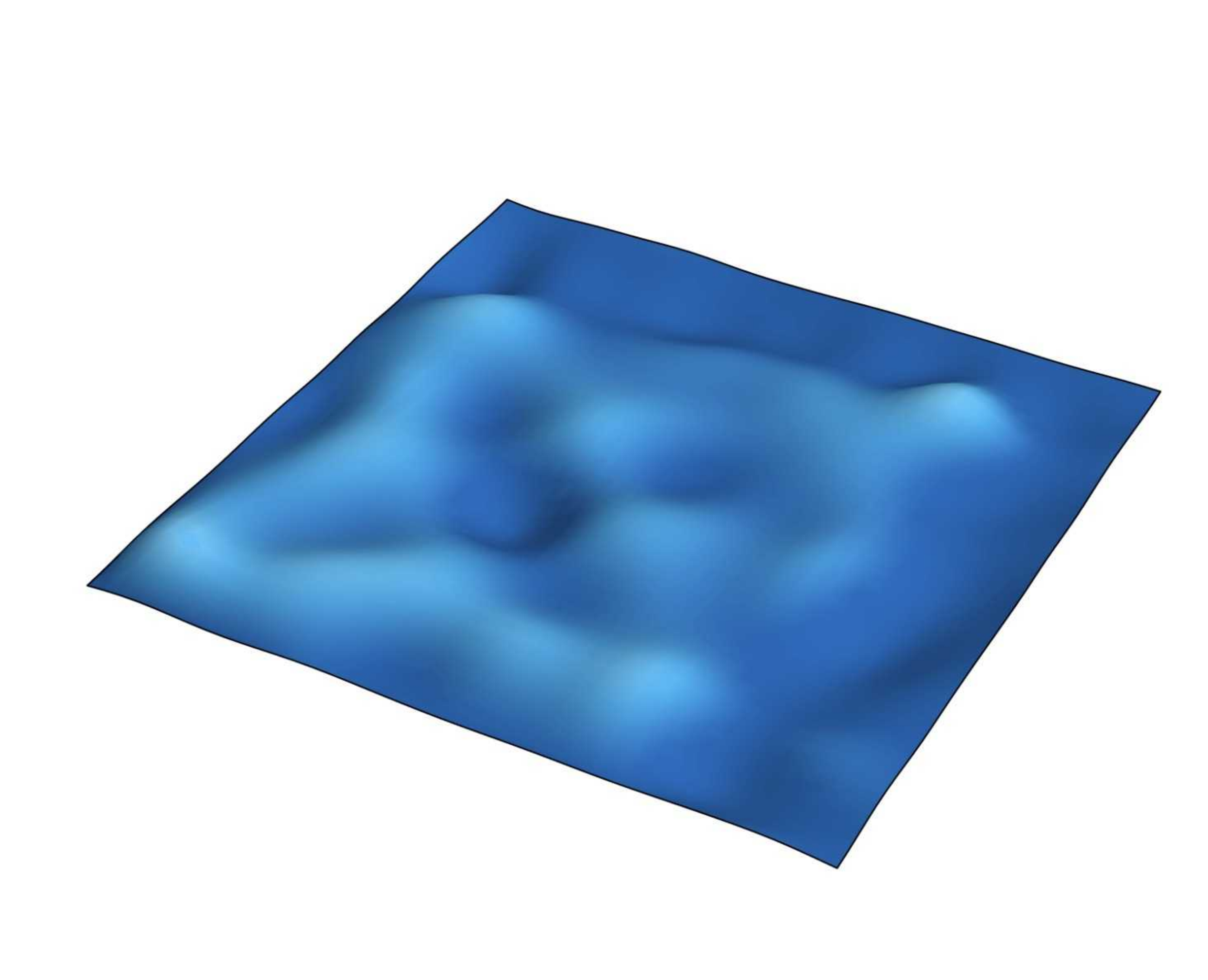}
    \caption{\tiny $t=6$,\\ EPGP}
  \end{subfigure}
  \hfill
  \begin{subfigure}[b]{0.1\textwidth}
    \centering
    \includegraphics[width=\textwidth]{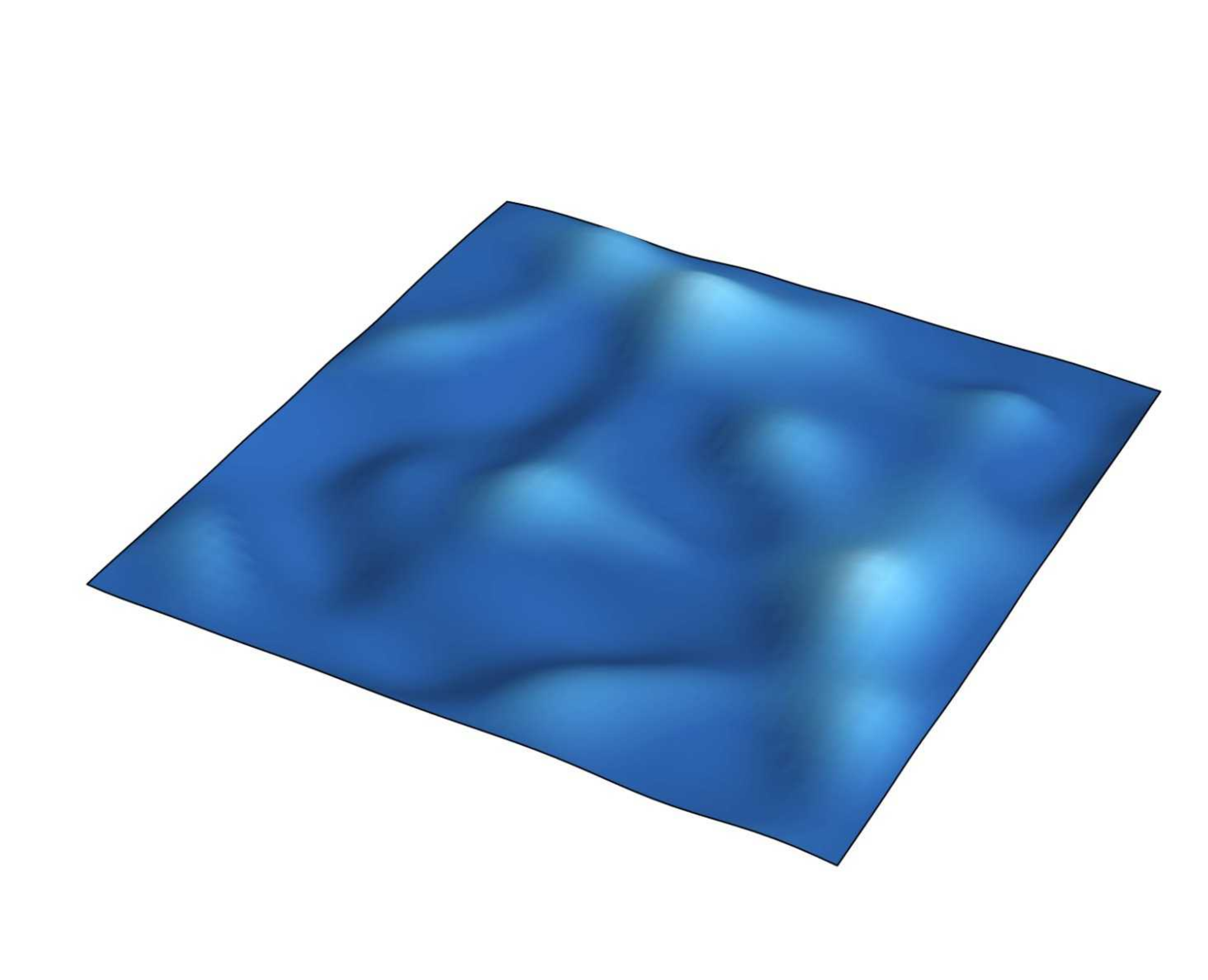}
    \caption{\tiny $t=7$,\\ EPGP}
  \end{subfigure}
  \hfill
  \begin{subfigure}[b]{0.1\textwidth}
    \centering
    \includegraphics[width=\textwidth]{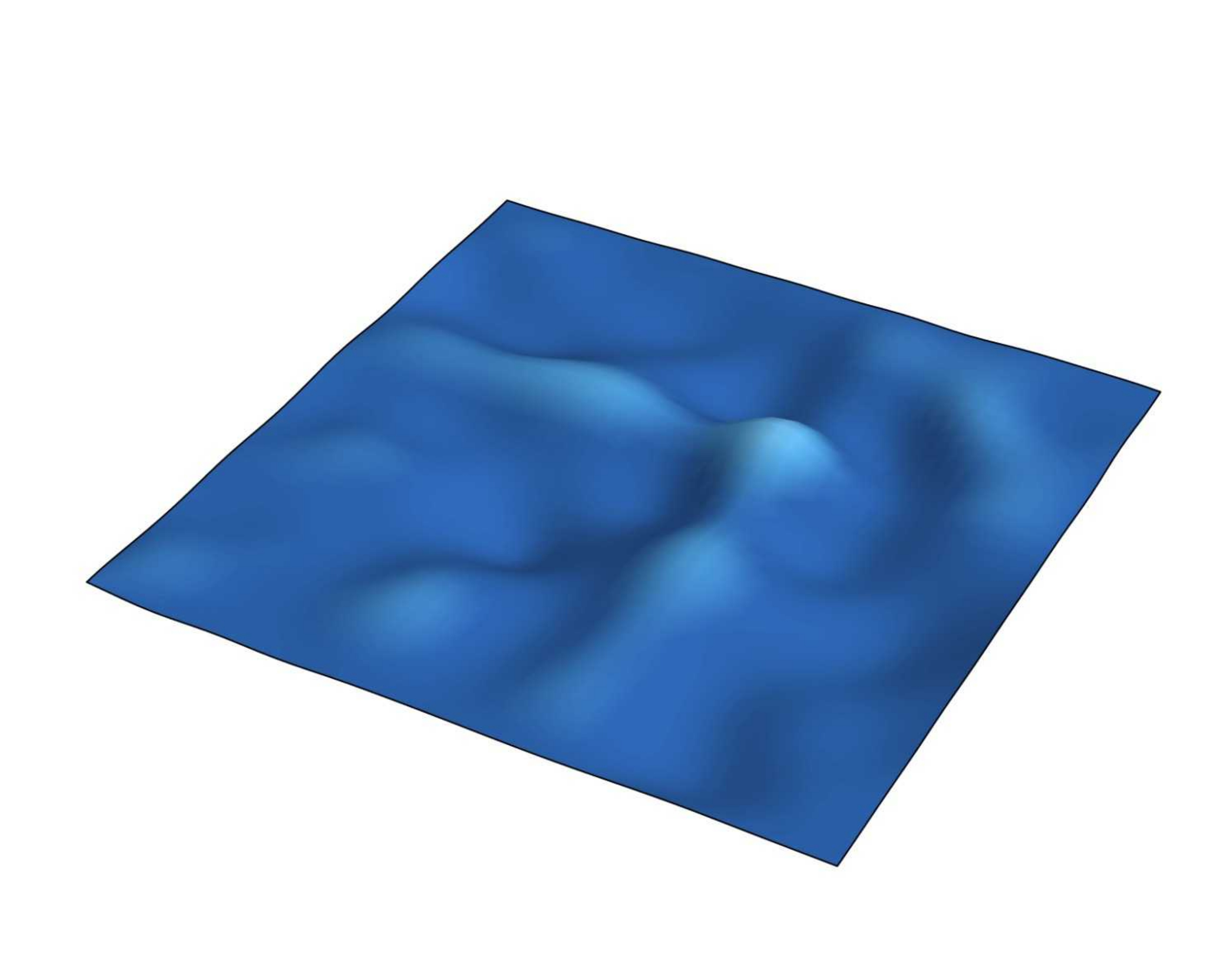}
    \caption{\tiny $t=8$,\\ EPGP}
  \end{subfigure}
  \hfill
  \\
  \begin{subfigure}[b]{0.1\textwidth}
    \centering
    \includegraphics[width=\textwidth]{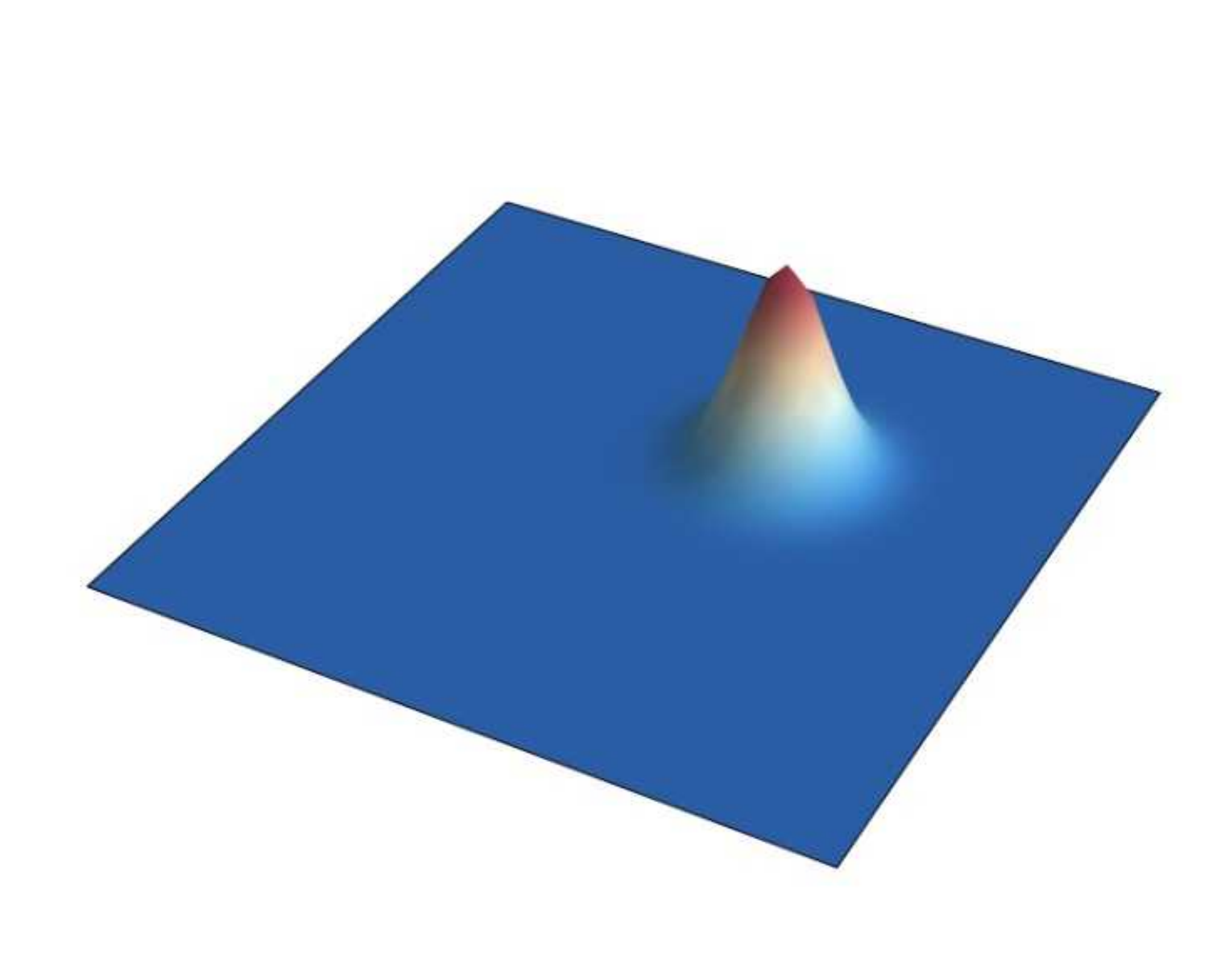}
    \caption{\tiny $t=0$,\\ B-EPGP}
  \end{subfigure}
  \hfill
  \begin{subfigure}[b]{0.1\textwidth}
    \centering
    \includegraphics[width=\textwidth]{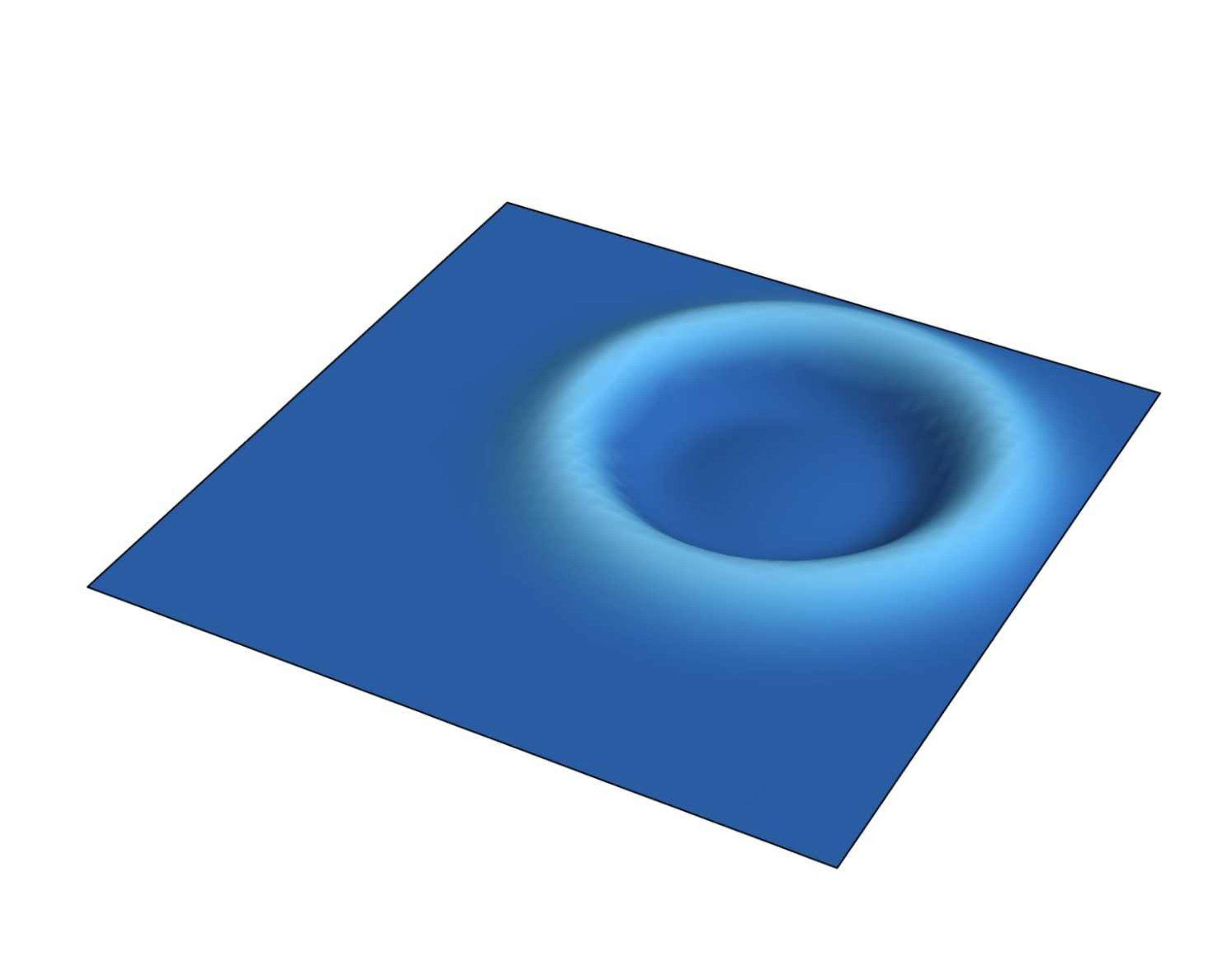}
    \caption{\tiny $t=1$,\\ B-EPGP}
  \end{subfigure}
  \hfill
  \begin{subfigure}[b]{0.1\textwidth}
    \centering
    \includegraphics[width=\textwidth]{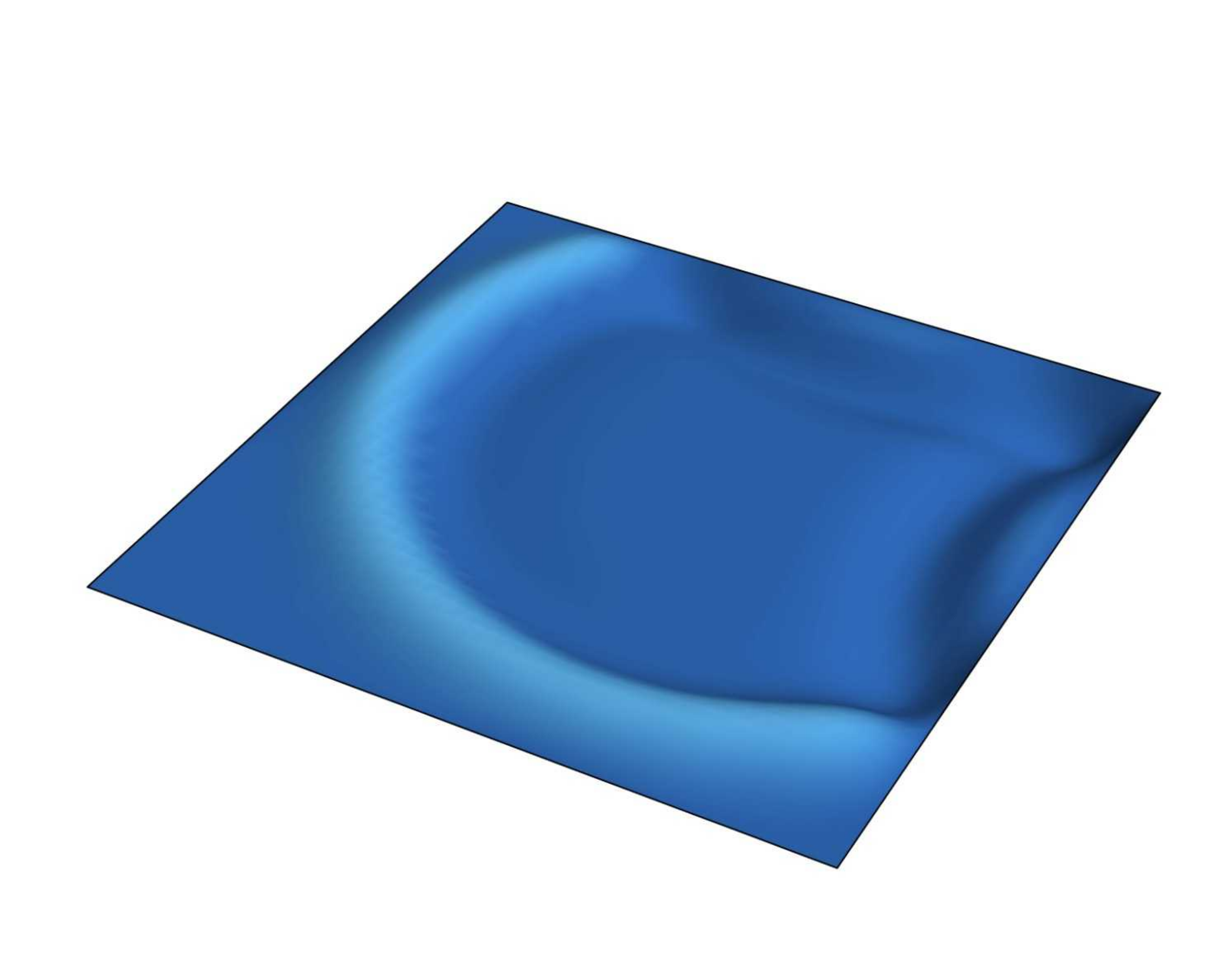}
    \caption{\tiny $t=2$,\\ B-EPGP}
  \end{subfigure}
  \hfill
  \begin{subfigure}[b]{0.1\textwidth}
    \centering
    \includegraphics[width=\textwidth]{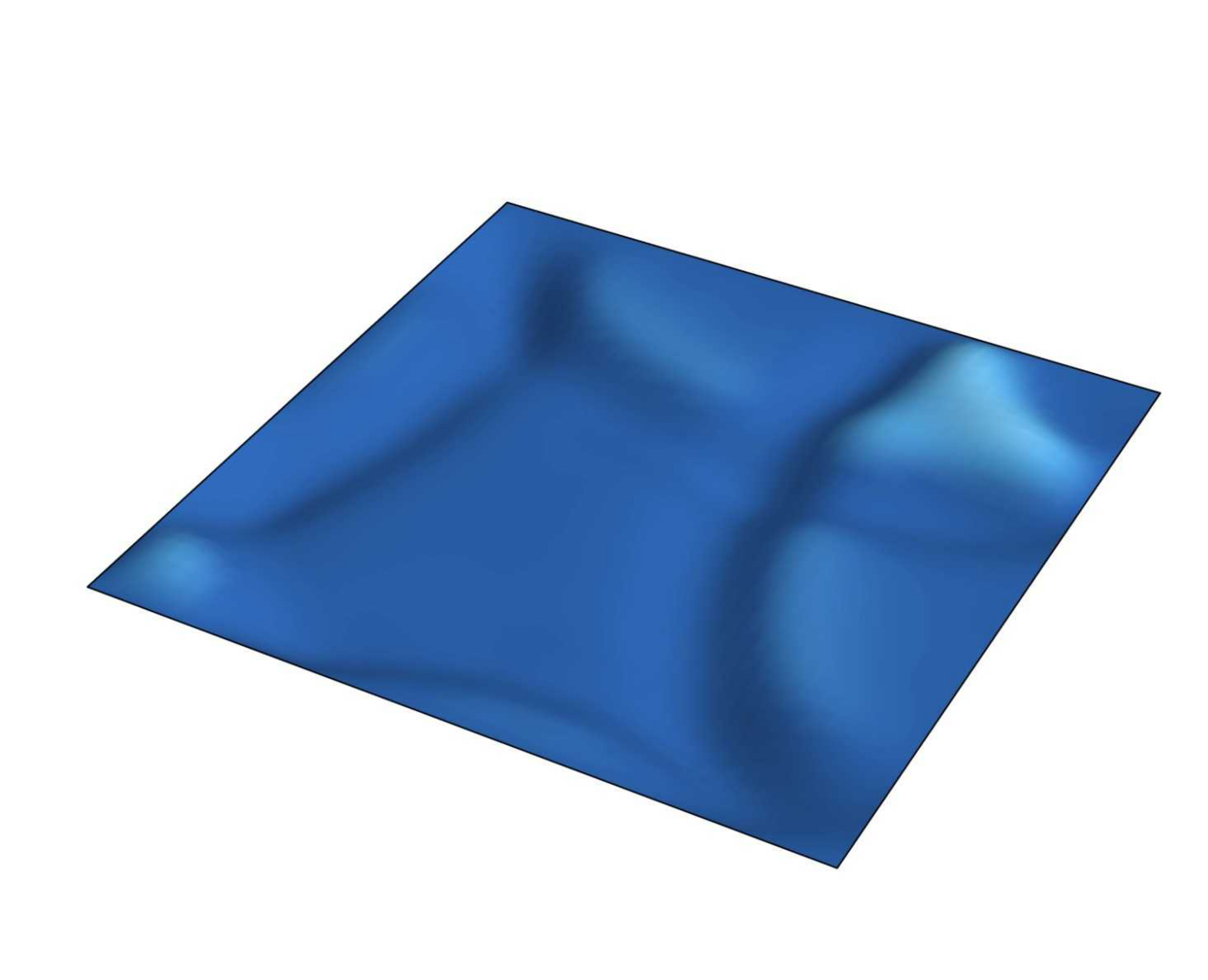}
    \caption{\tiny $t=3$,\\ B-EPGP}
  \end{subfigure}
  \hfill
  \begin{subfigure}[b]{0.1\textwidth}
    \centering
    \includegraphics[width=\textwidth]{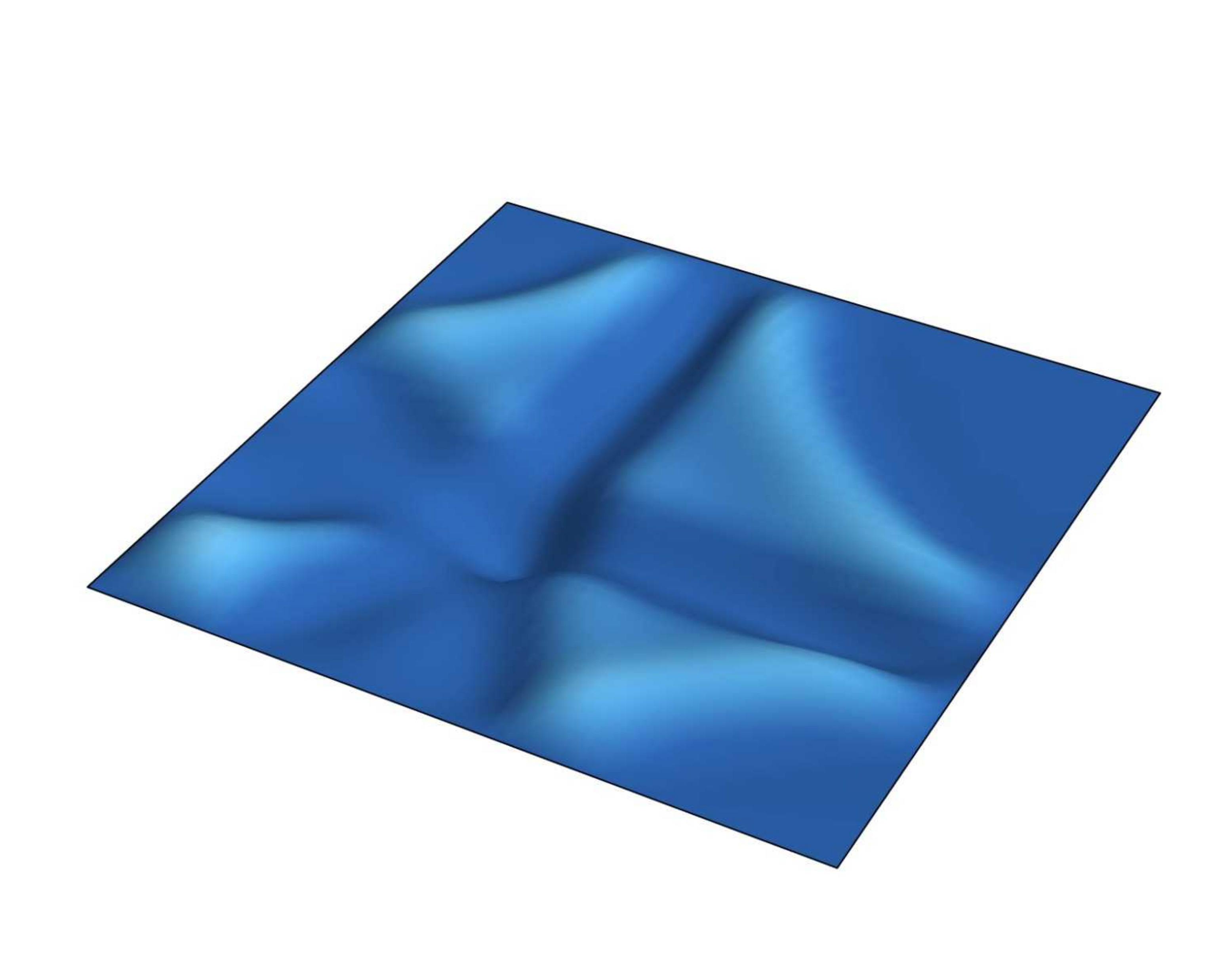}
    \caption{\tiny $t=4$,\\ B-EPGP}
  \end{subfigure}
  \hfill
  \begin{subfigure}[b]{0.1\textwidth}
    \centering
    \includegraphics[width=\textwidth]{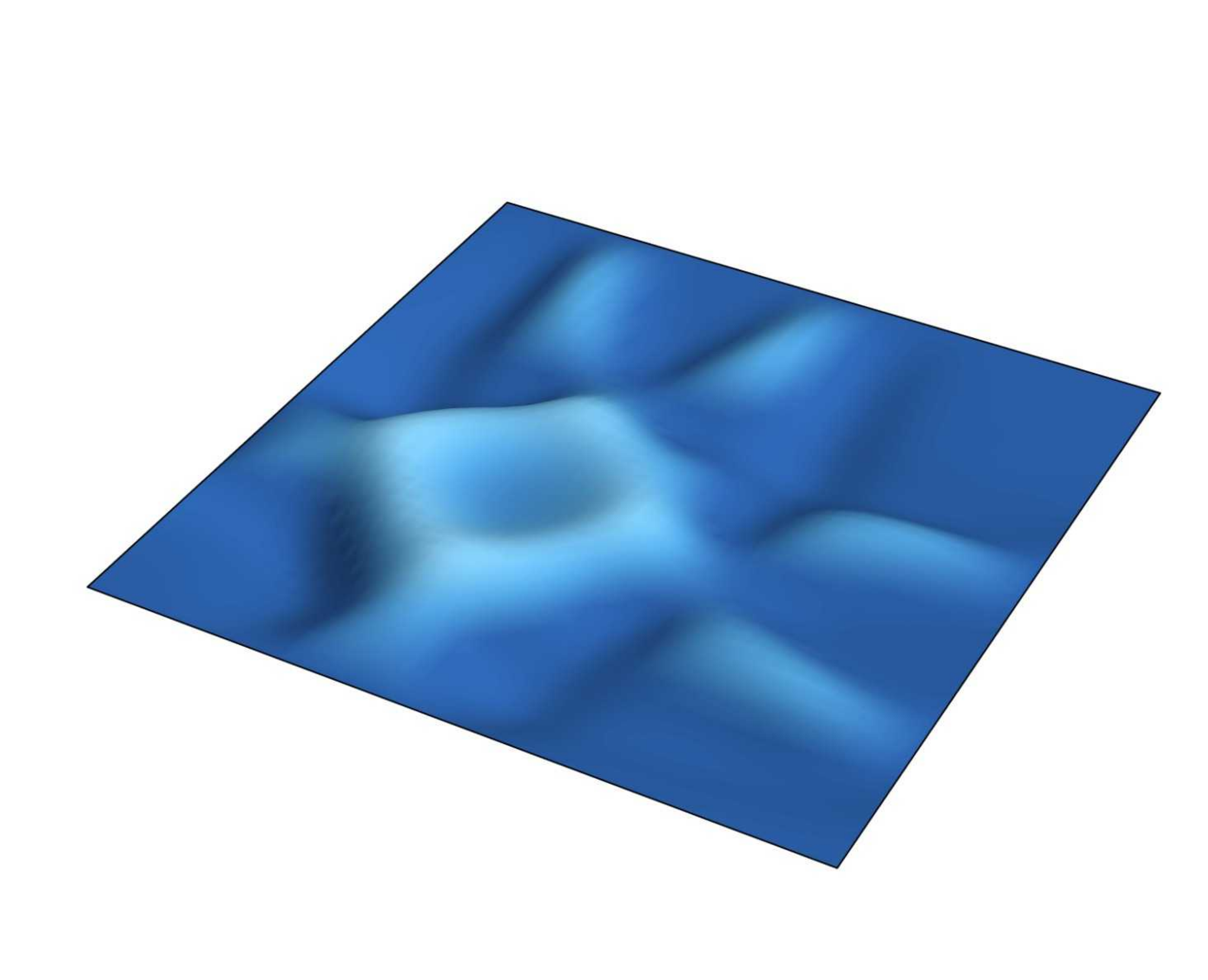}
    \caption{\tiny $t=5$,\\ B-EPGP}
  \end{subfigure}
  \hfill
  \begin{subfigure}[b]{0.1\textwidth}
    \centering
    \includegraphics[width=\textwidth]{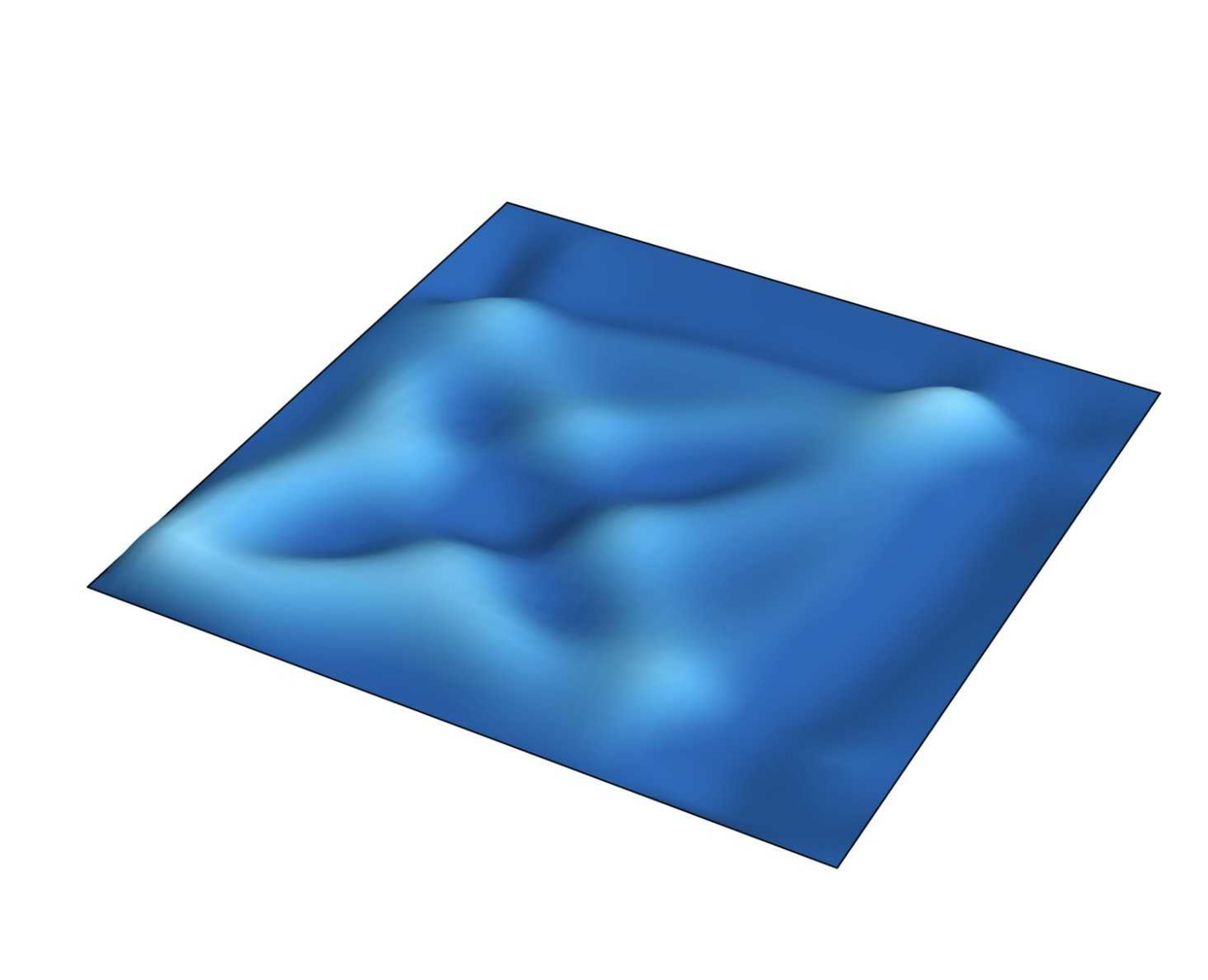}
    \caption{\tiny $t=6$,\\ B-EPGP}
  \end{subfigure}
  \hfill
  \begin{subfigure}[b]{0.1\textwidth}
    \centering
    \includegraphics[width=\textwidth]{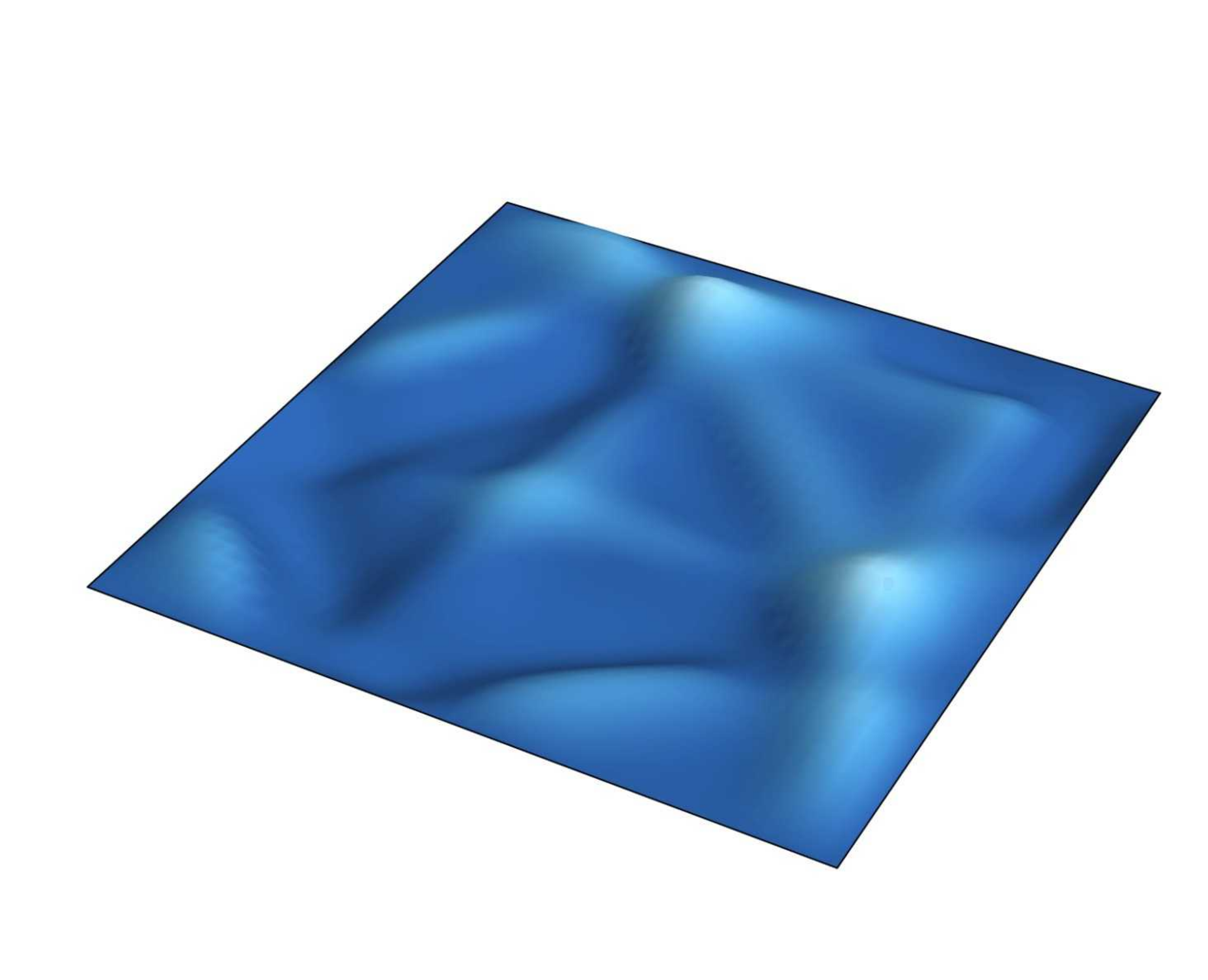}
    \caption{\tiny $t=7$,\\ B-EPGP}
  \end{subfigure}
  \hfill
  \begin{subfigure}[b]{0.1\textwidth}
    \centering
    \includegraphics[width=\textwidth]{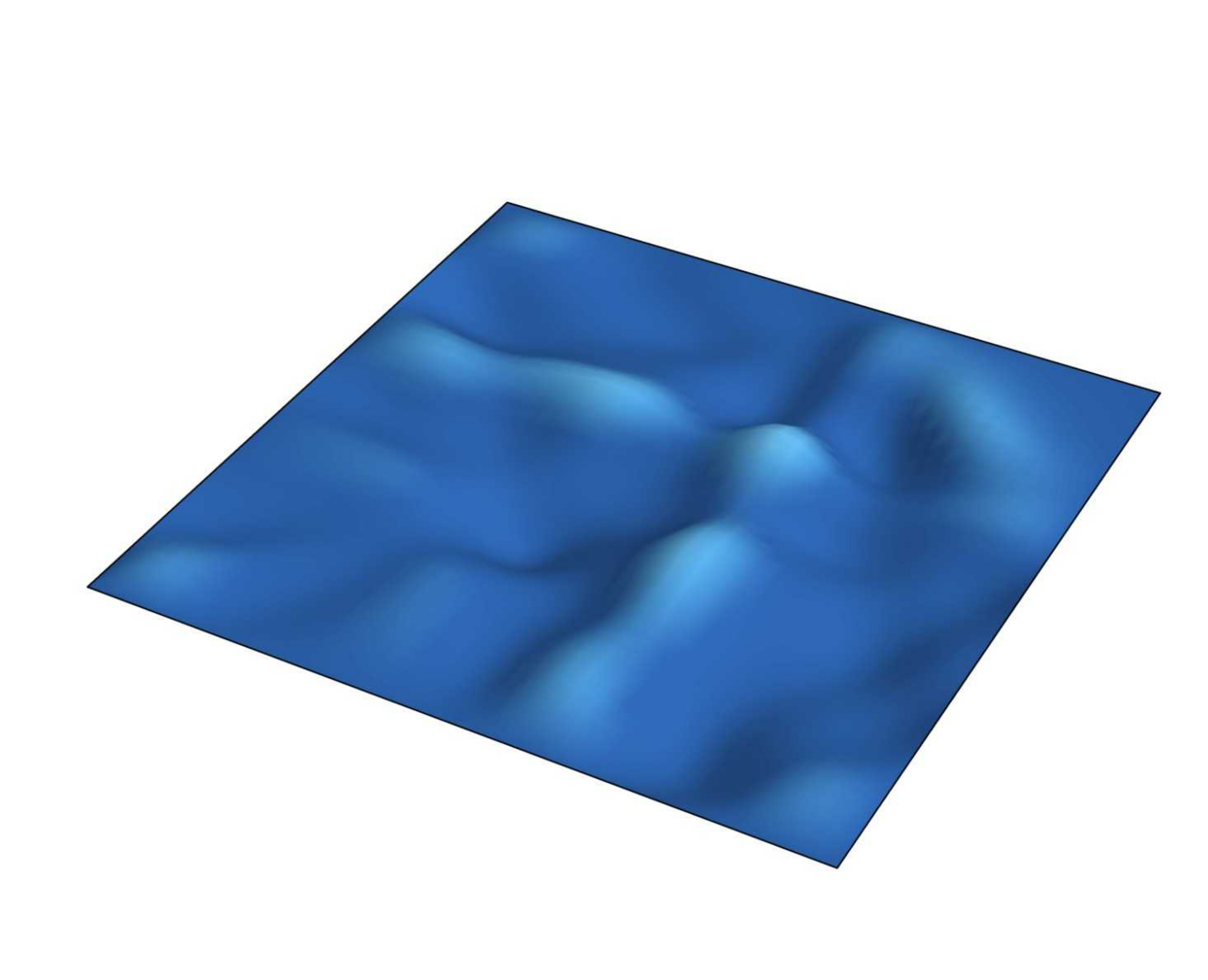}
    \caption{\tiny $t=8$,\\ B-EPGP}
  \end{subfigure}
  \caption{Solution of 2D wave equation in a rectangular domain calculated using the EPGP and B-EPGP methods. We use a Dirichlet boundary condition which is visible above from the fact that the edges of our plots have the same color in all snapshots.
  The results of B-EPGP seem more reasonable, as they are more stable and lack corrupting high frequencies.
  Animations can be found in the supplementary material as \texttt{rectangle\_EPGP.mp4} and \texttt{rectangle\_B-EPGP.mp4}.}
  \label{fig:rectangle}
  \vskip -0.2in
\end{figure*}
  \begin{figure*}[hbt!]
      \centering
      \includegraphics[width=0.5\linewidth]{figures/ERec.png}
      \caption{This figure shows the energy conservation for the 2D wave equation in a rectangular domain. We expect a constant value from physical principles. EPGP method incurs a non-negligible  loss of energy. The error in B-EPGP is partly due to our approximation of the energy integral in \eqref{eq:energy_integral}.}\label{fig:rectangle_energy}
  \end{figure*}

\subsection{Triangular domains}\label{sec:triangle}
We next consider \(\Omega=\{(x,y)\colon 0<y<x<4\}\) and $t\in(0,4)$ and look for the solution of the initial boundary value problem
\begin{align*}
    \begin{cases}
        u_{tt}- (u_{xx}+u_{yy})=0&\text{in }\Omega\times(0,4)\\
        u(0,x,y)= \exp (-10((x-3)^2+(y-3)^2))&\text{in }\Omega\\
        u_t(0,x,y) = 0&\text{in }\Omega\\
        u(t,x,y) = 0 &\text{ for \((x,y)\in \p \Omega\)}
    \end{cases}
\end{align*}
We will use the B-EPGP basis computed in Section~\ref{sec:rectangle} and an  odd reflection in the diagonal line $y=x$ which gives
$$
\left\{e^{\pm \tfrac{\pi\sqrt{-1}}{4}\sqrt{j^2+k^2}t}\left[\sin(\tfrac\pi4 jx)\sin(\tfrac\pi4 ky)-\sin(\tfrac\pi4 kx)\sin(\tfrac\pi4 jy)\right]\right\}_{j,k\in\mathbb Z}.
$$
It is easy to see that this new basis satisfies the Dirichlet boundary condition on all three boundary hyperplanes $\{x=4\},\{y=0\},\,\{x=y\}$. Snapshots of our solution are presented in Figure~\ref{fig:triangle} and the conservation of energy is demonstrated in Figure~\ref{fig:triangle_energy}.

\begin{figure*}[hbt!]
  \vskip 0.2in
  \centering
  \begin{subfigure}[b]{0.1\textwidth}
    \centering
    \includegraphics[width=\textwidth]{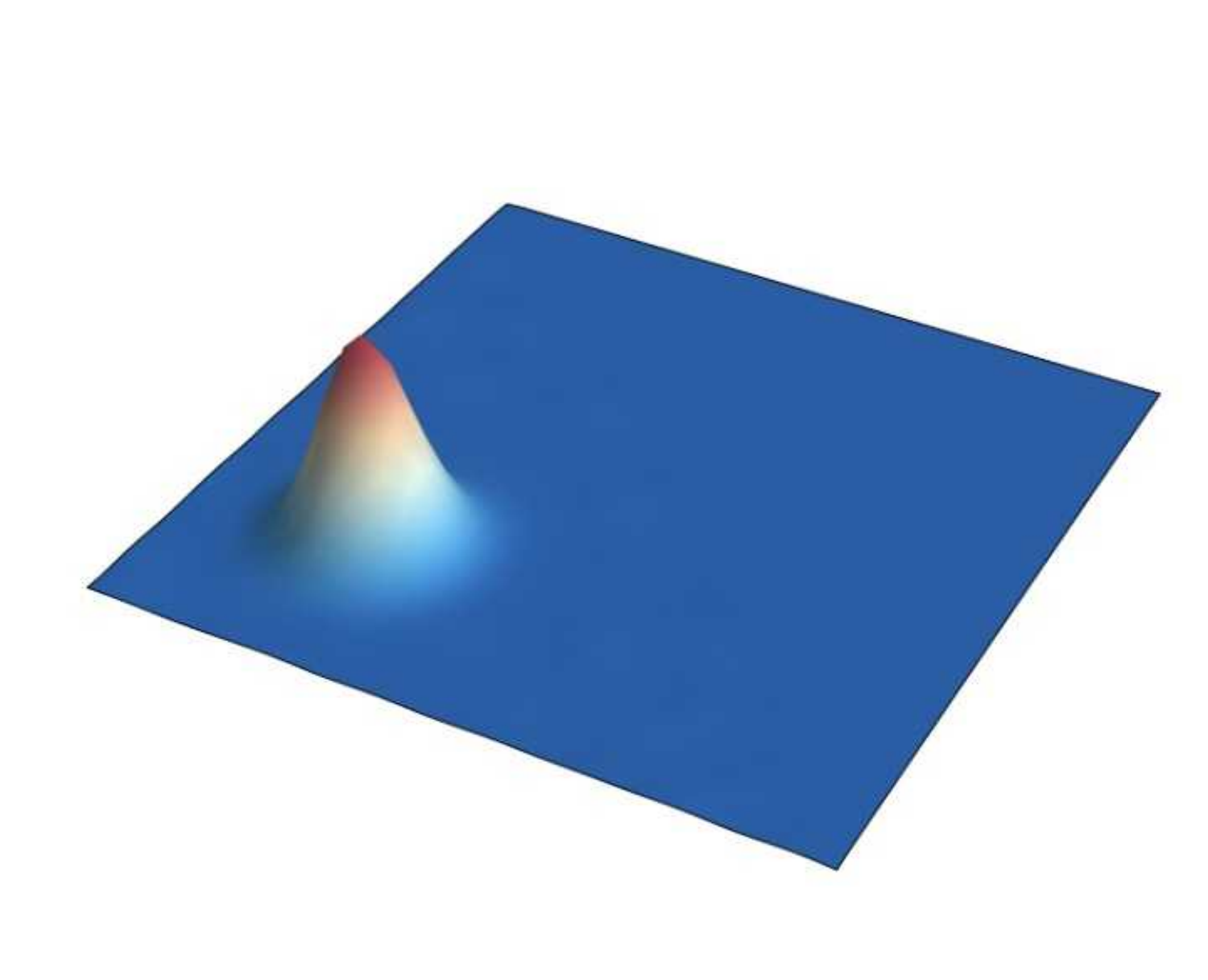}
    \caption{\tiny $t=0.0$\\EPGP}
  \end{subfigure}
  \hfill
  \begin{subfigure}[b]{0.1\textwidth}
    \centering
    \includegraphics[width=\textwidth]{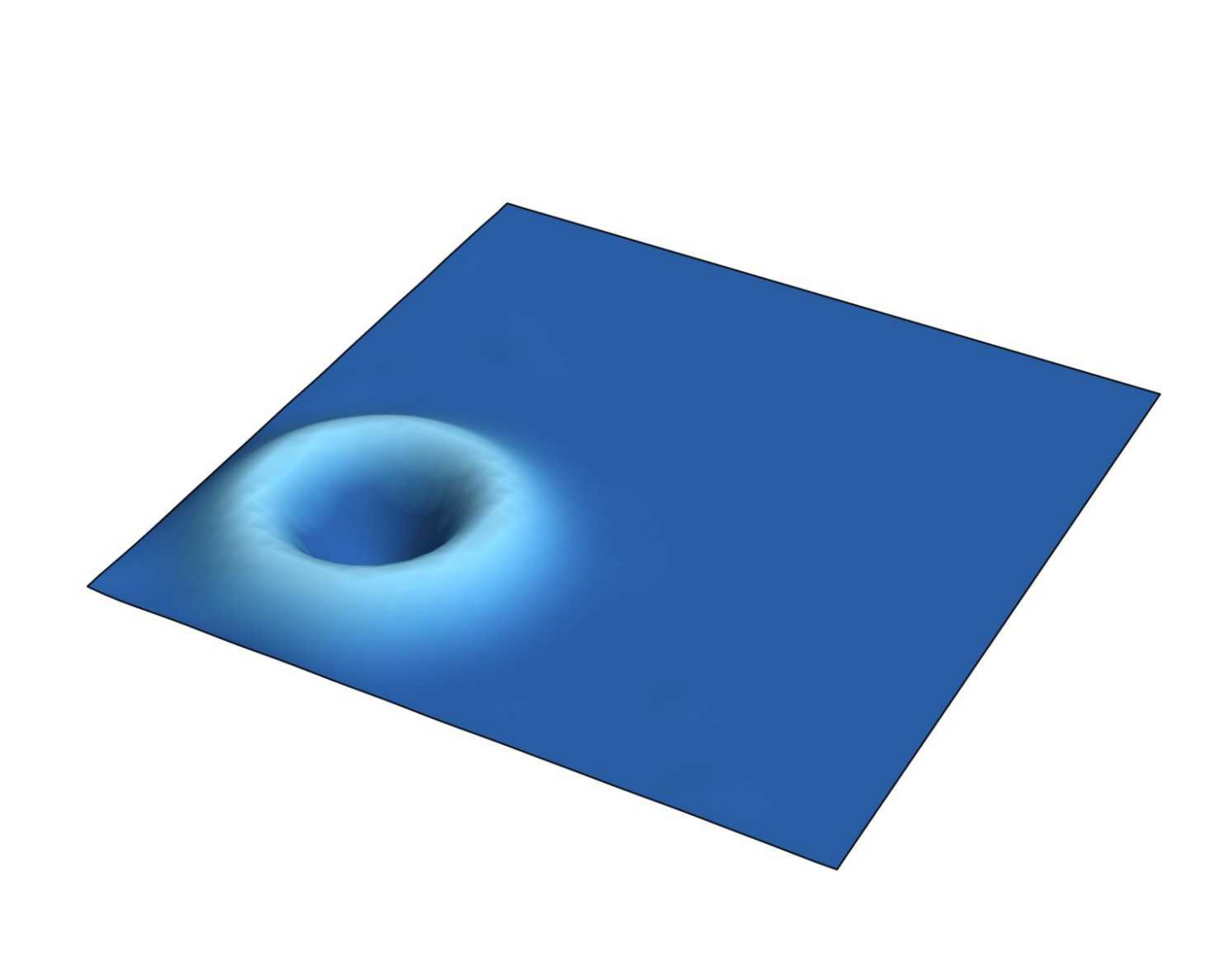}
    \caption{\tiny $t=0.5$\\EPGP}
  \end{subfigure}
  \hfill
  \begin{subfigure}[b]{0.1\textwidth}
    \centering
    \includegraphics[width=\textwidth]{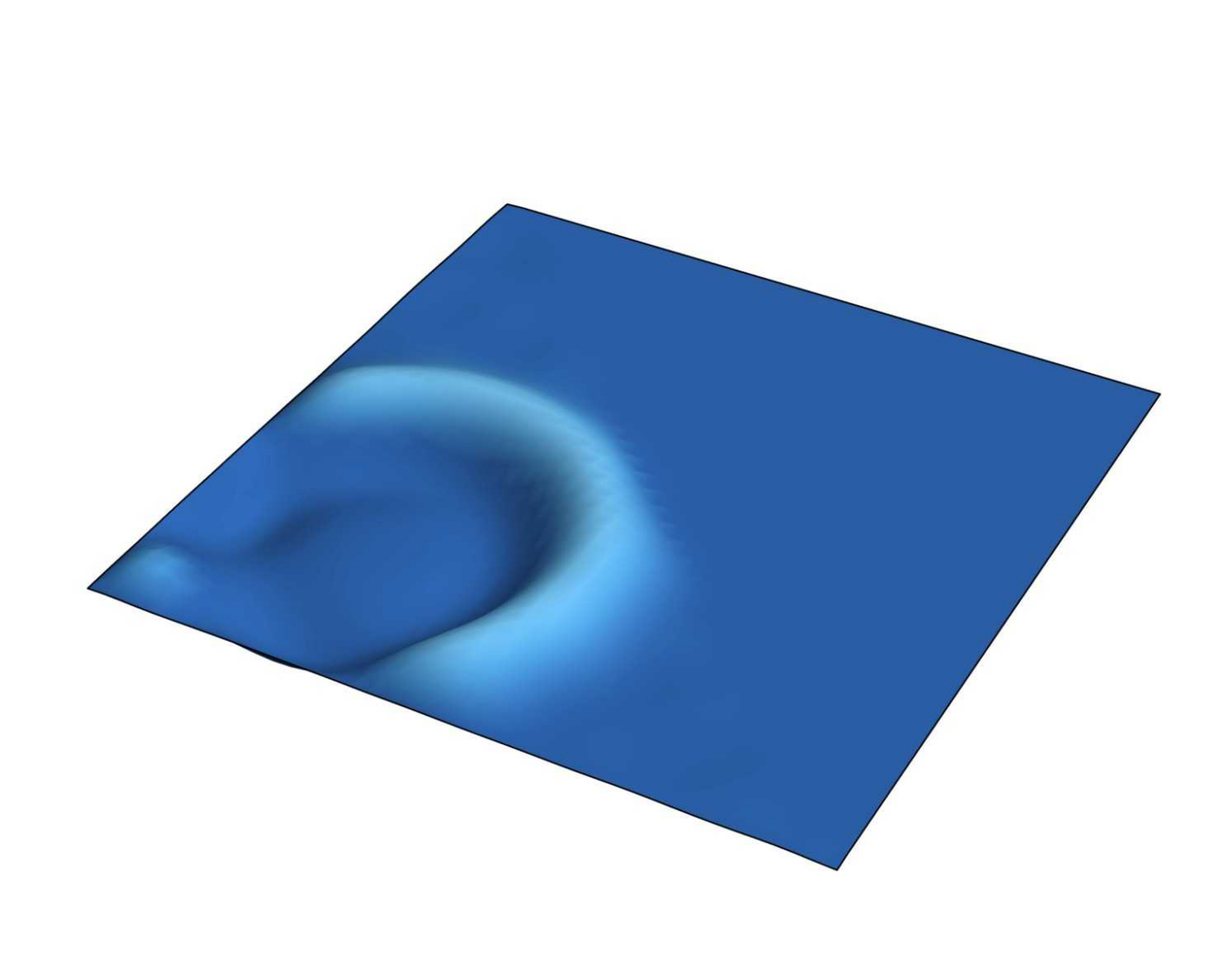}
    \caption{\tiny $t=1.0$\\EPGP}
  \end{subfigure}
  \hfill
  \begin{subfigure}[b]{0.1\textwidth}
    \centering
    \includegraphics[width=\textwidth]{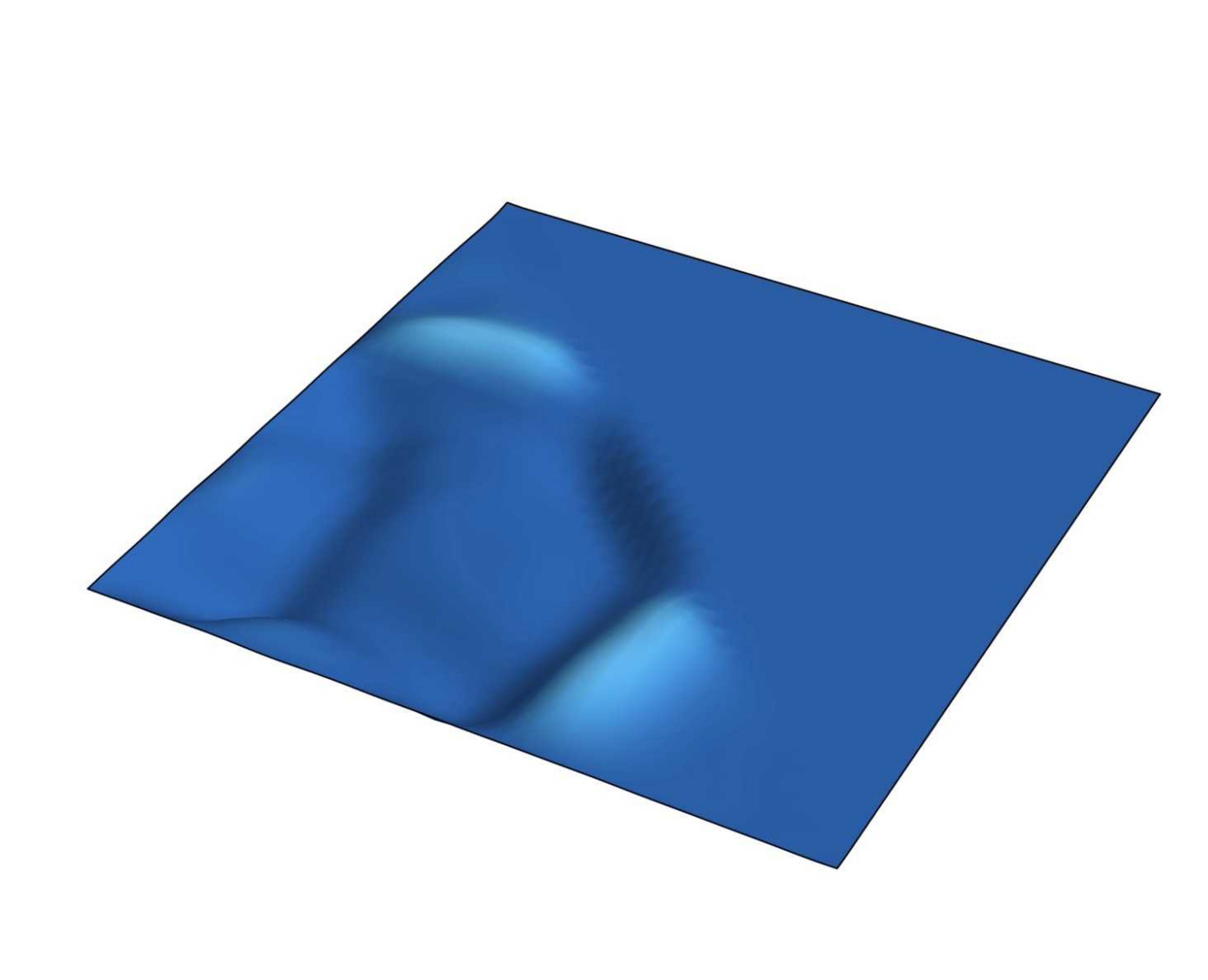}
    \caption{\tiny $t=1.5$\\EPGP}
  \end{subfigure}
  \hfill
  \begin{subfigure}[b]{0.1\textwidth}
    \centering
    \includegraphics[width=\textwidth]{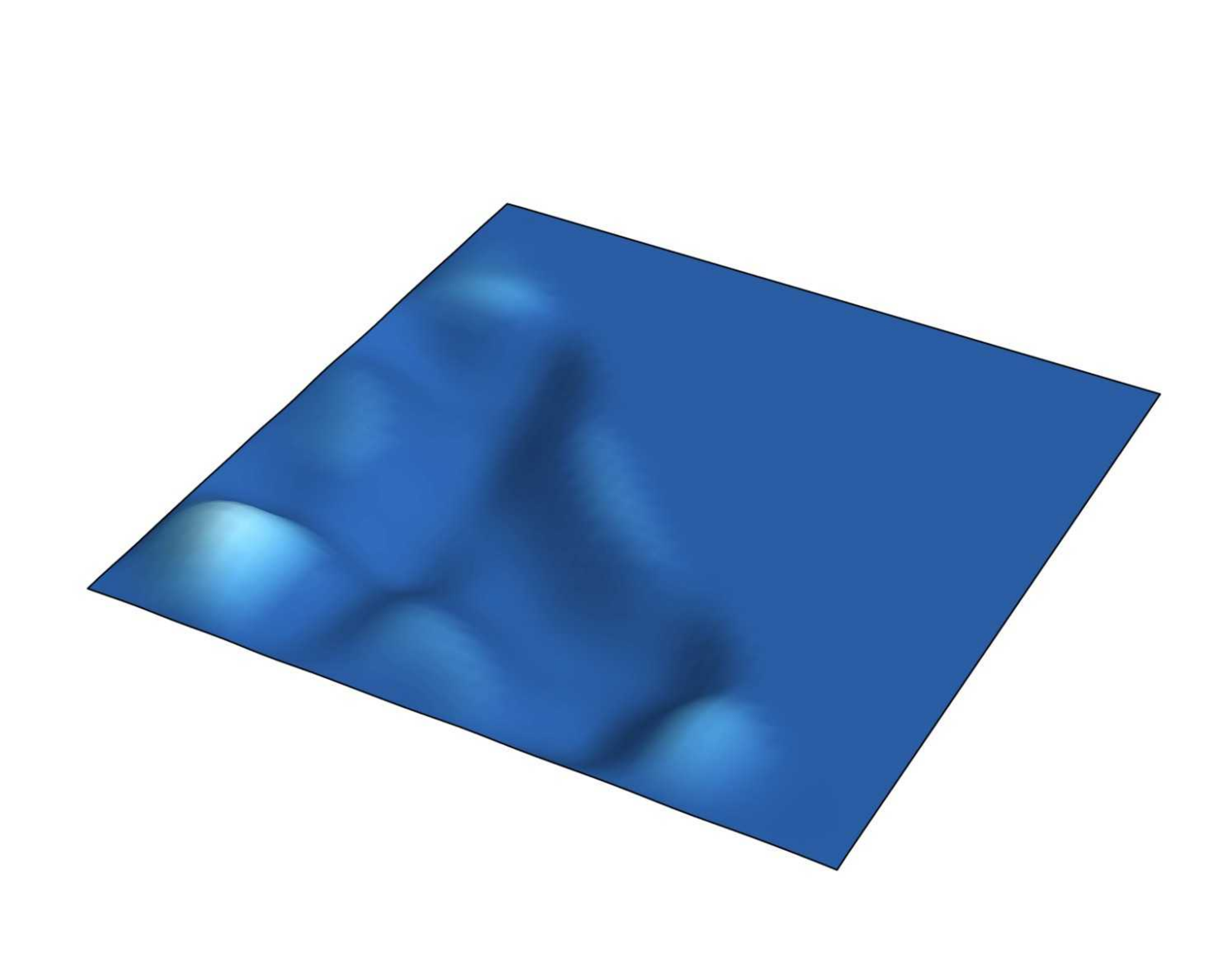}
    \caption{\tiny $t=2.0$\\EPGP}
  \end{subfigure}
  \hfill
  \begin{subfigure}[b]{0.1\textwidth}
    \centering
    \includegraphics[width=\textwidth]{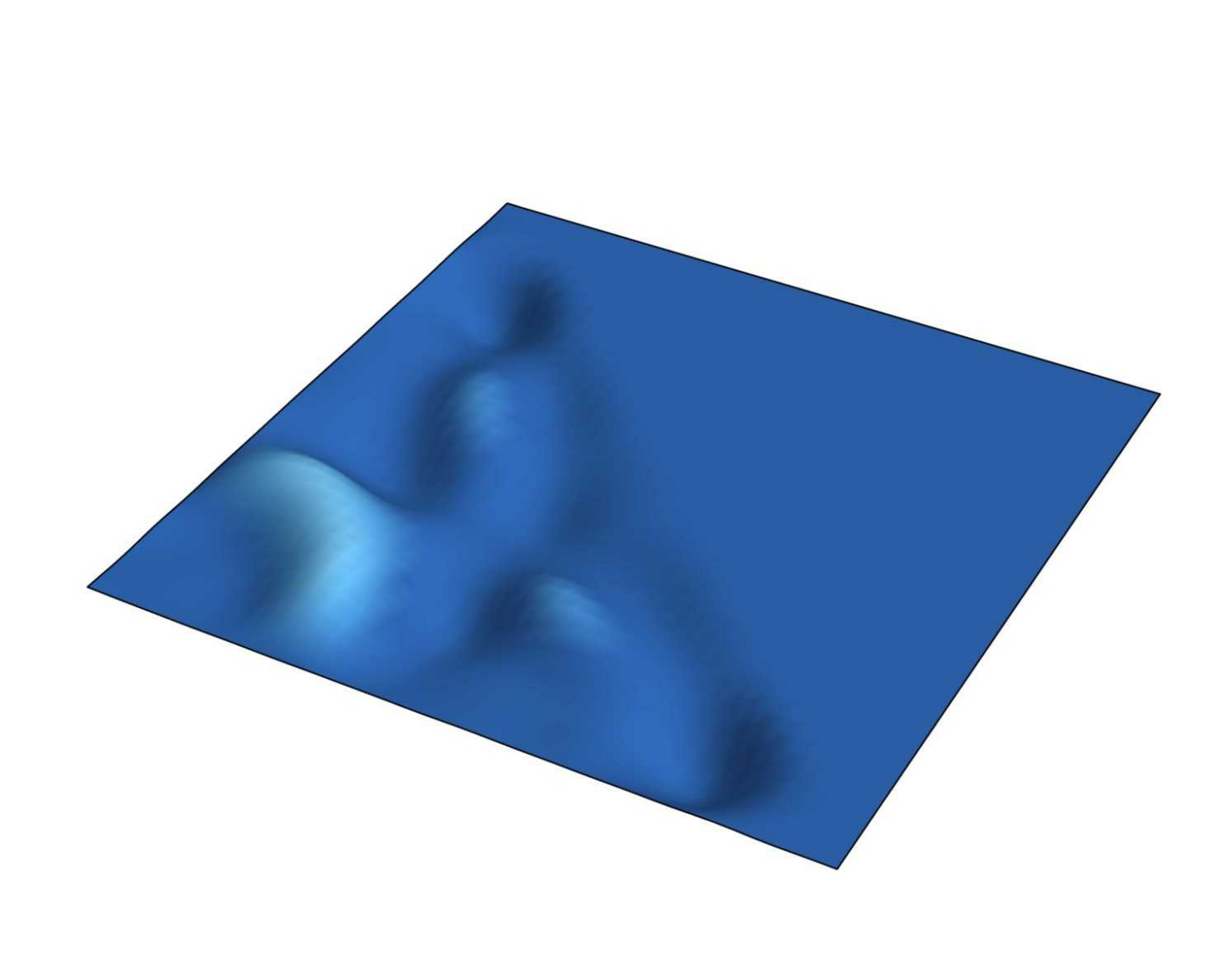}
    \caption{\tiny $t=2.5$\\EPGP}
  \end{subfigure}
  \hfill
  \begin{subfigure}[b]{0.1\textwidth}
    \centering
    \includegraphics[width=\textwidth]{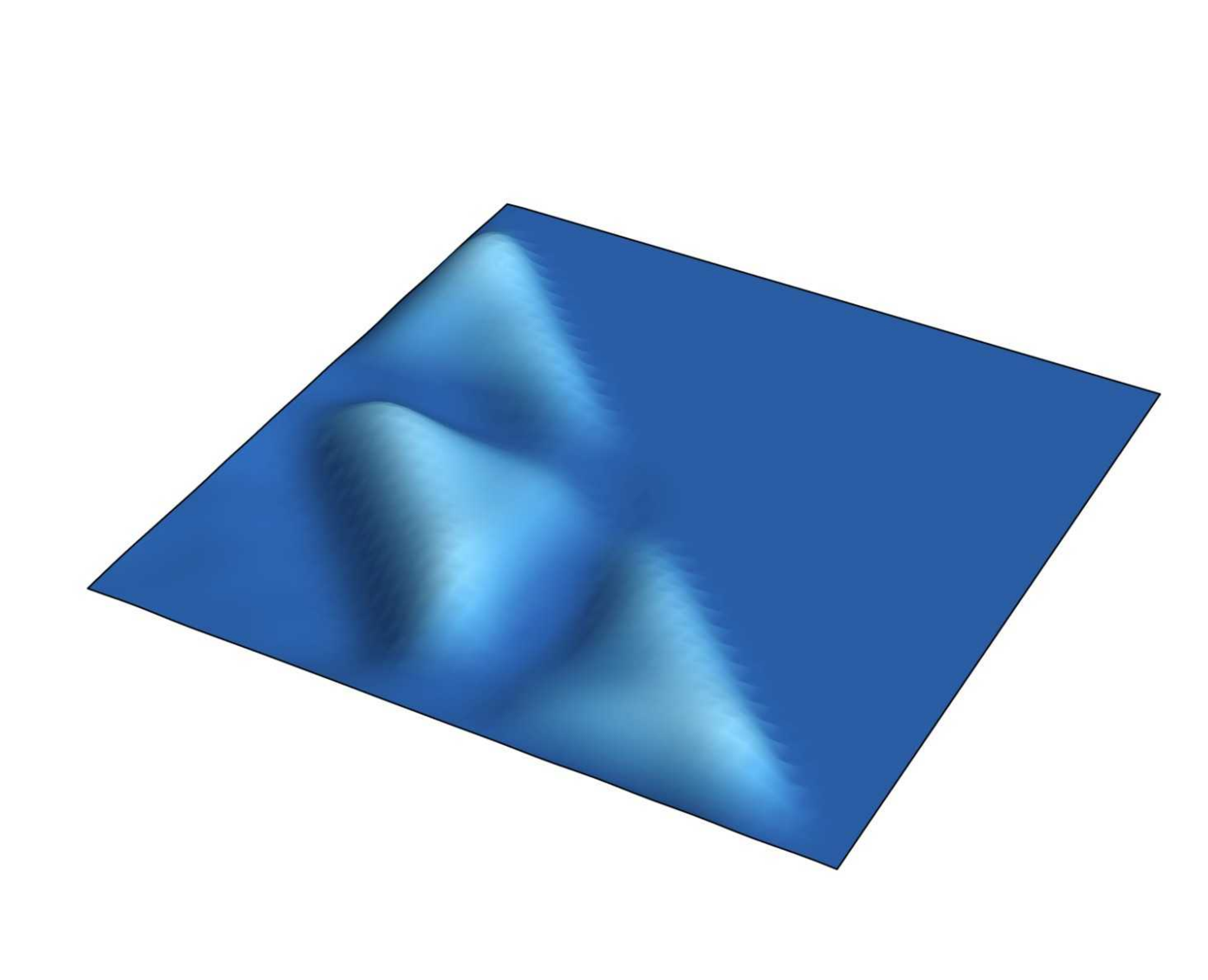}
    \caption{\tiny $t=3.0$\\EPGP}
  \end{subfigure}
  \hfill
  \begin{subfigure}[b]{0.1\textwidth}
    \centering
    \includegraphics[width=\textwidth]{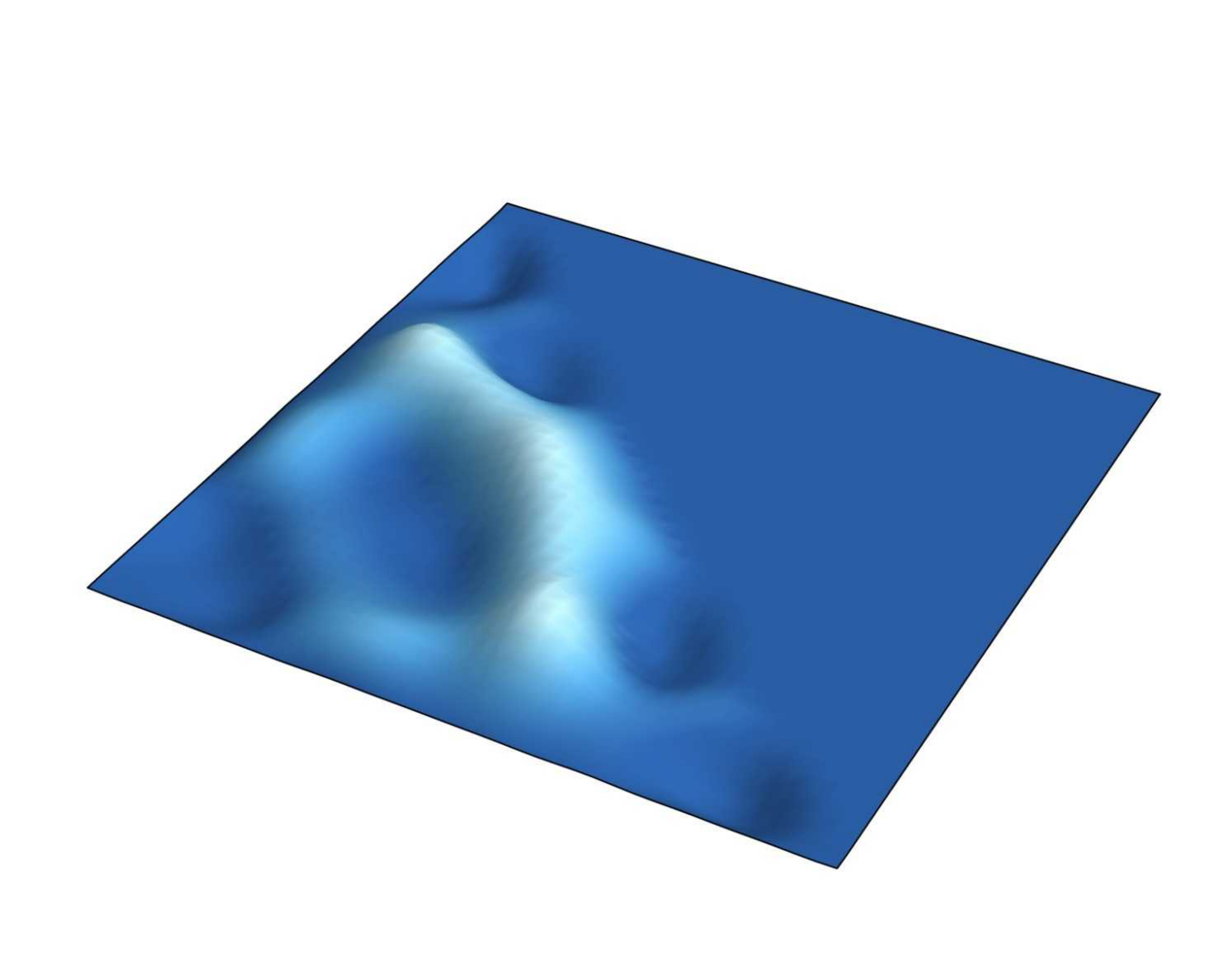}
    \caption{\tiny $t=3.5$\\EPGP}
  \end{subfigure}
  \hfill
  \begin{subfigure}[b]{0.1\textwidth}
    \centering
    \includegraphics[width=\textwidth]{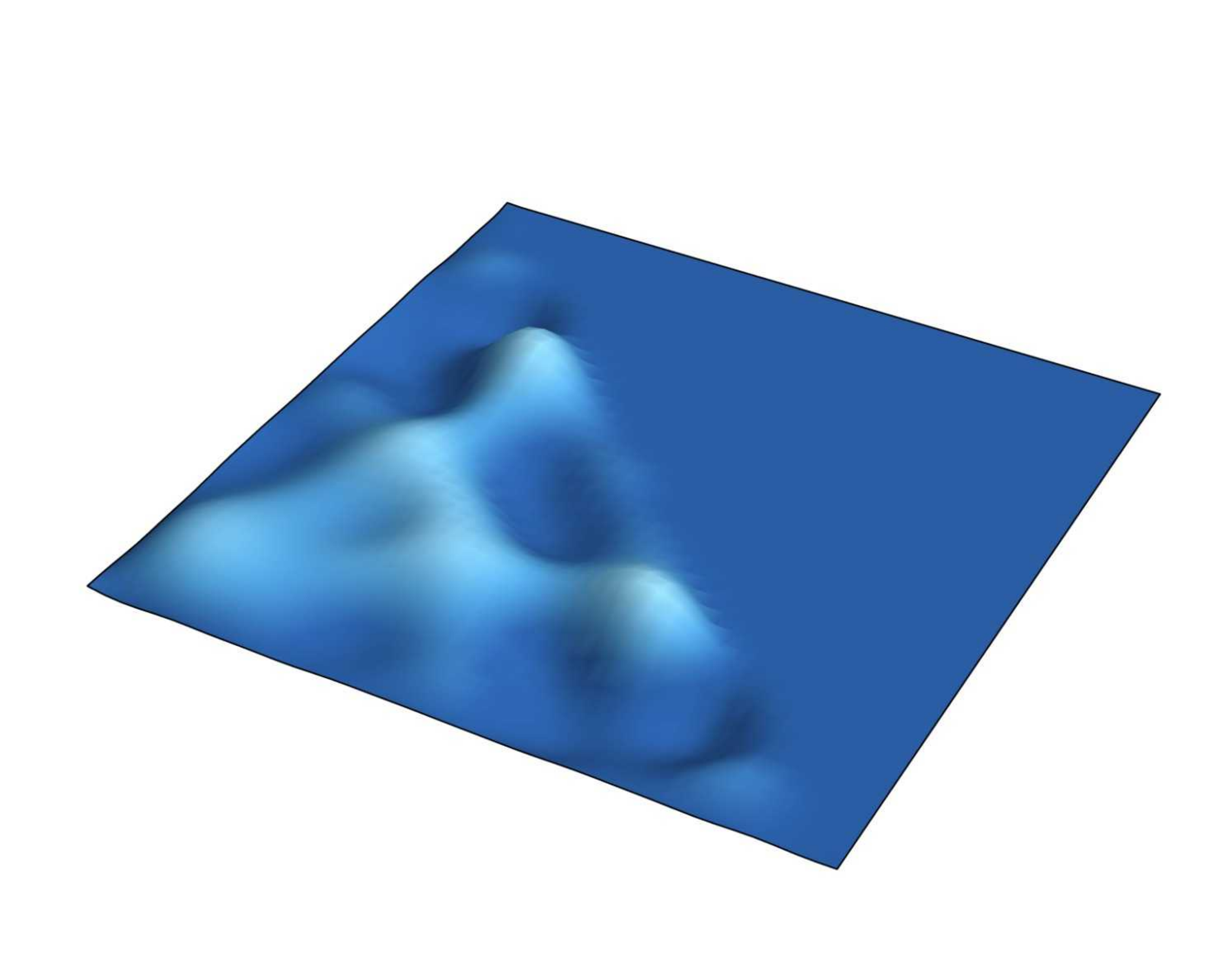}
    \caption{\tiny $t=4.0$\\EPGP}
  \end{subfigure}
  \\
  \begin{subfigure}[b]{0.1\textwidth}
    \centering
    \includegraphics[width=\textwidth]{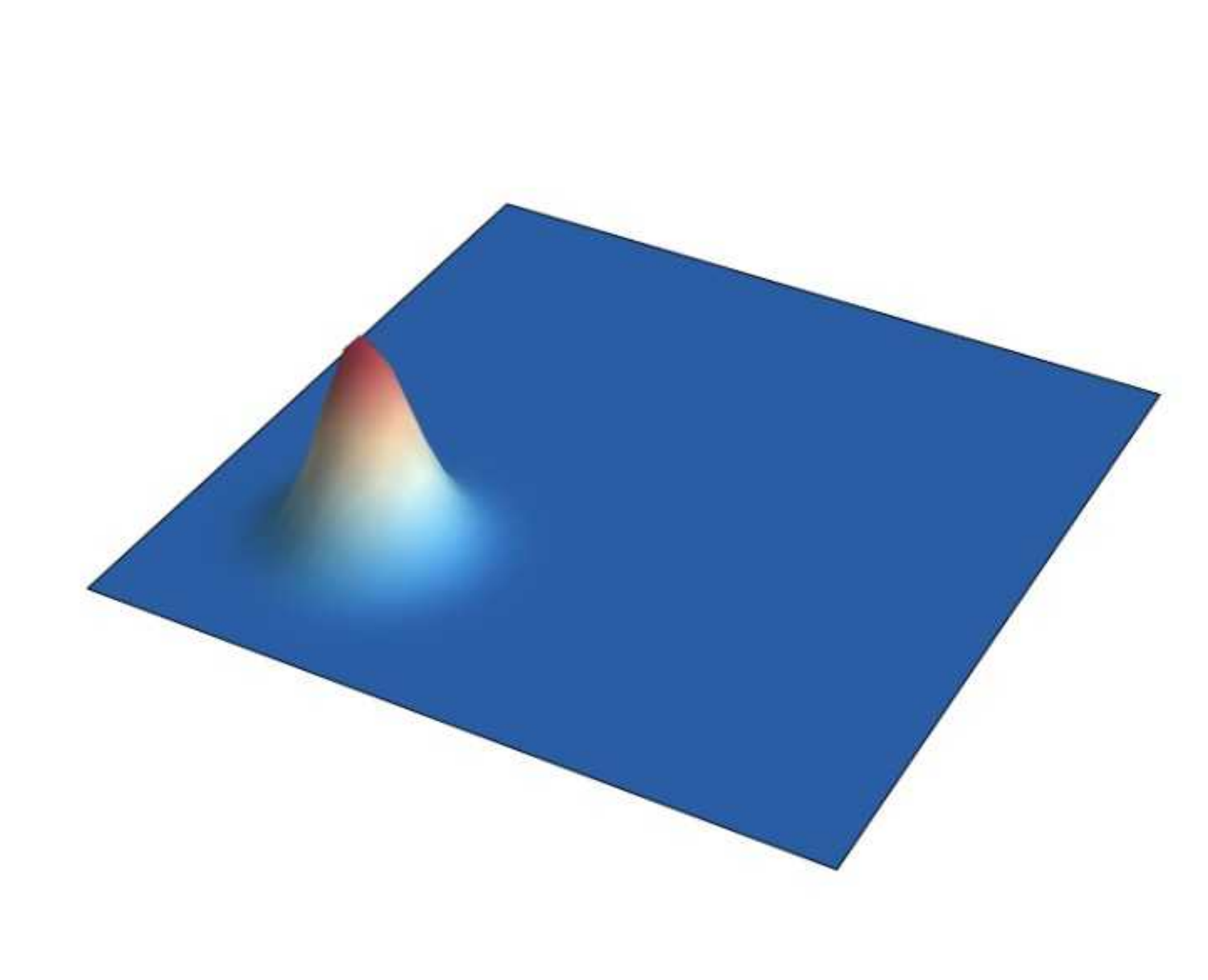}
    \caption{\tiny $t=0.0$\\B-EPGP}
  \end{subfigure}
  \hfill
  \begin{subfigure}[b]{0.1\textwidth}
    \centering
    \includegraphics[width=\textwidth]{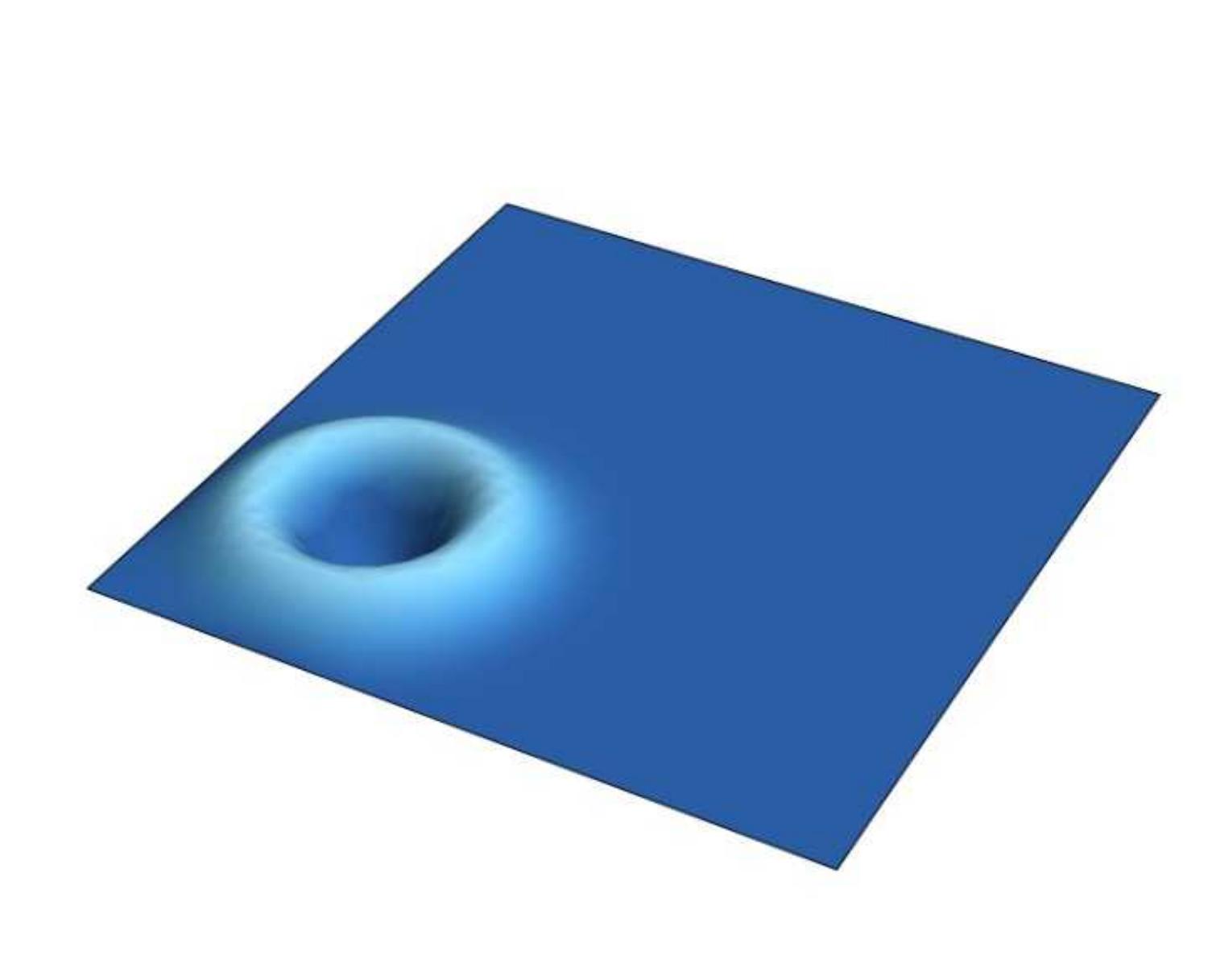}
    \caption{\tiny $t=0.5$\\B-EPGP}
  \end{subfigure}
  \hfill
  \begin{subfigure}[b]{0.1\textwidth}
    \centering
    \includegraphics[width=\textwidth]{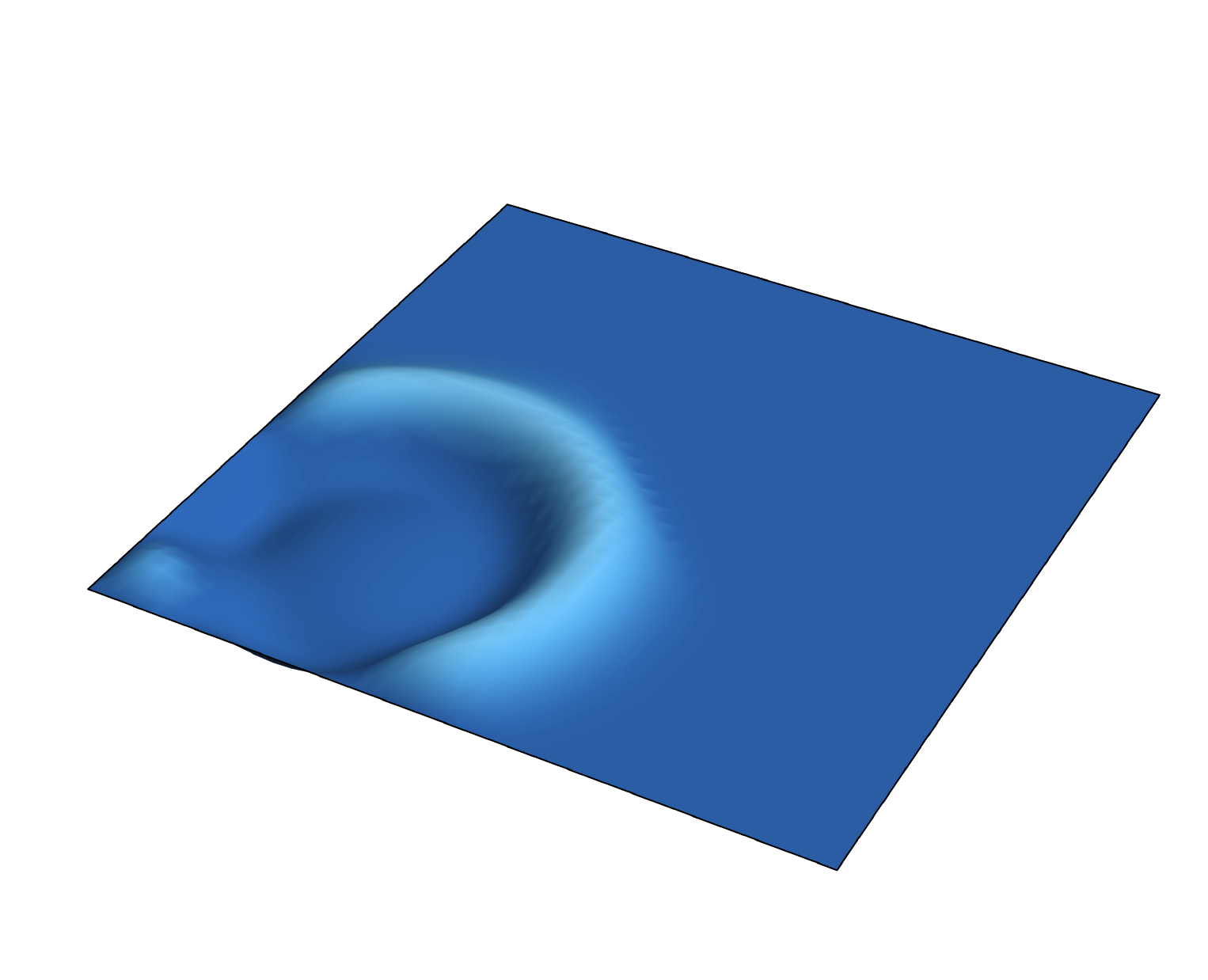}
    \caption{\tiny $t=1.0$\\B-EPGP}
  \end{subfigure}
  \hfill
  \begin{subfigure}[b]{0.1\textwidth}
    \centering
    \includegraphics[width=\textwidth]{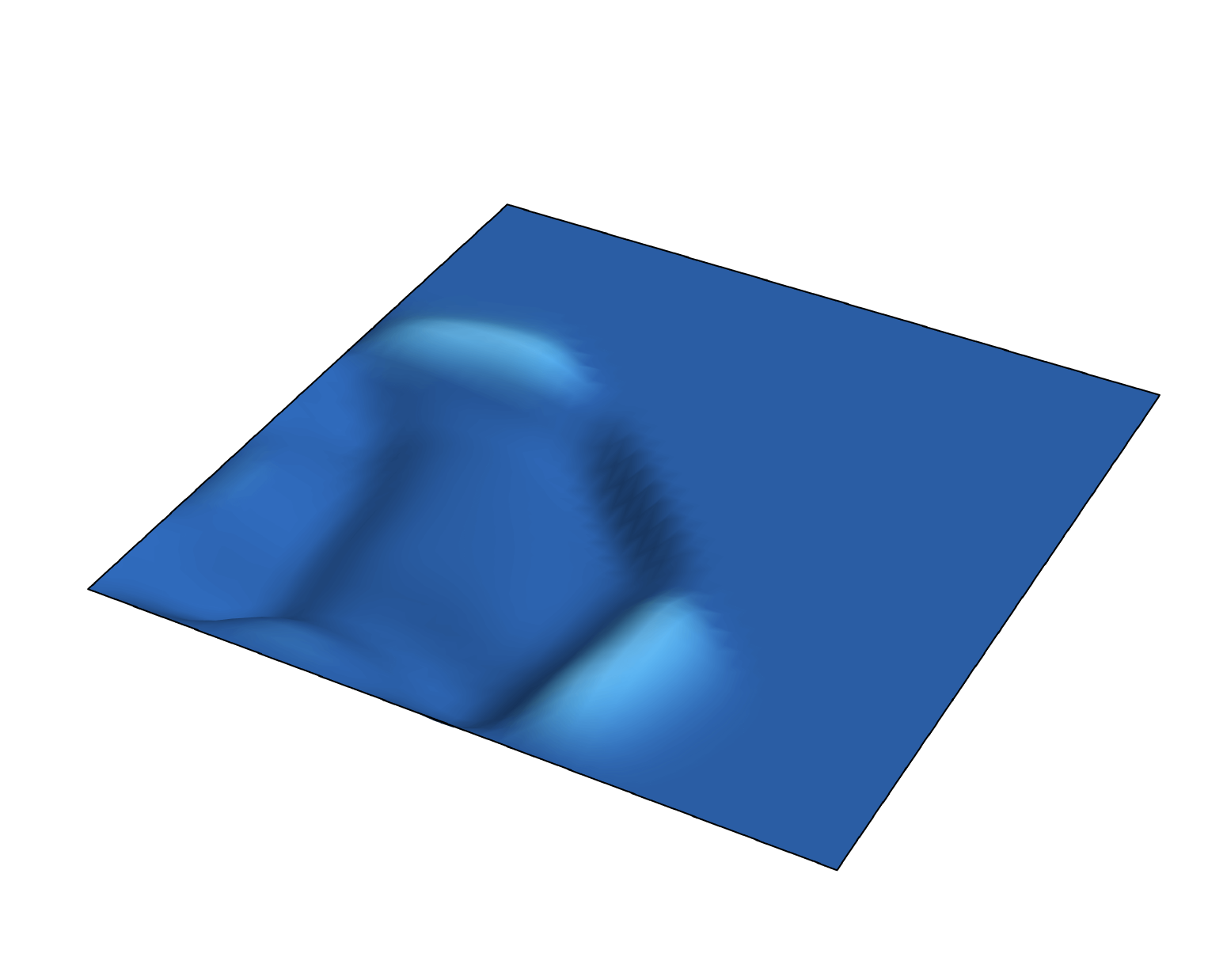}
    \caption{\tiny $t=1.5$\\B-EPGP}
  \end{subfigure}
  \hfill
  \begin{subfigure}[b]{0.1\textwidth}
    \centering
    \includegraphics[width=\textwidth]{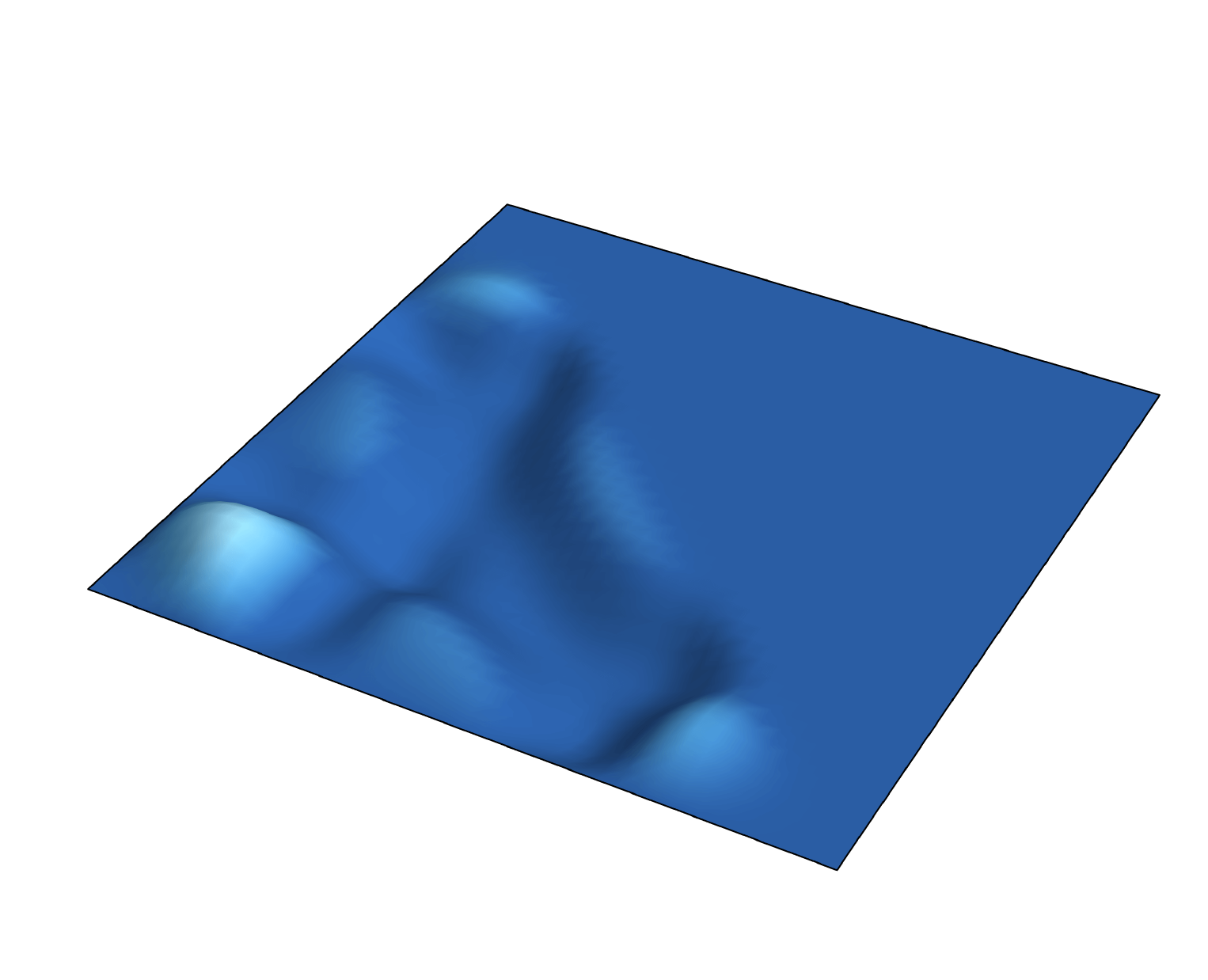}
    \caption{\tiny $t=2.0$\\B-EPGP}
  \end{subfigure}
  \hfill
  \begin{subfigure}[b]{0.1\textwidth}
    \centering
    \includegraphics[width=\textwidth]{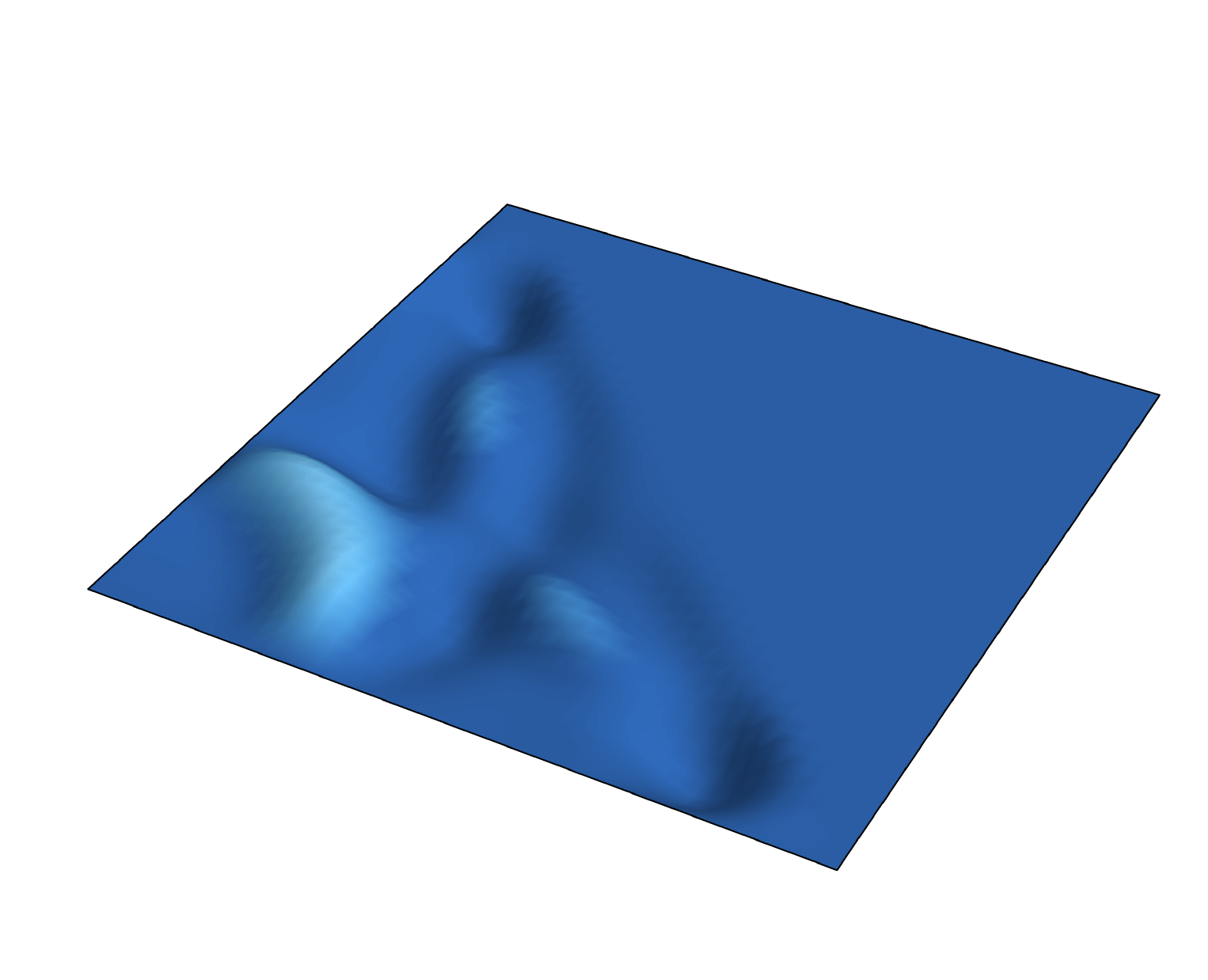}
    \caption{\tiny $t=2.5$\\B-EPGP}
  \end{subfigure}
  \hfill
  \begin{subfigure}[b]{0.1\textwidth}
    \centering
    \includegraphics[width=\textwidth]{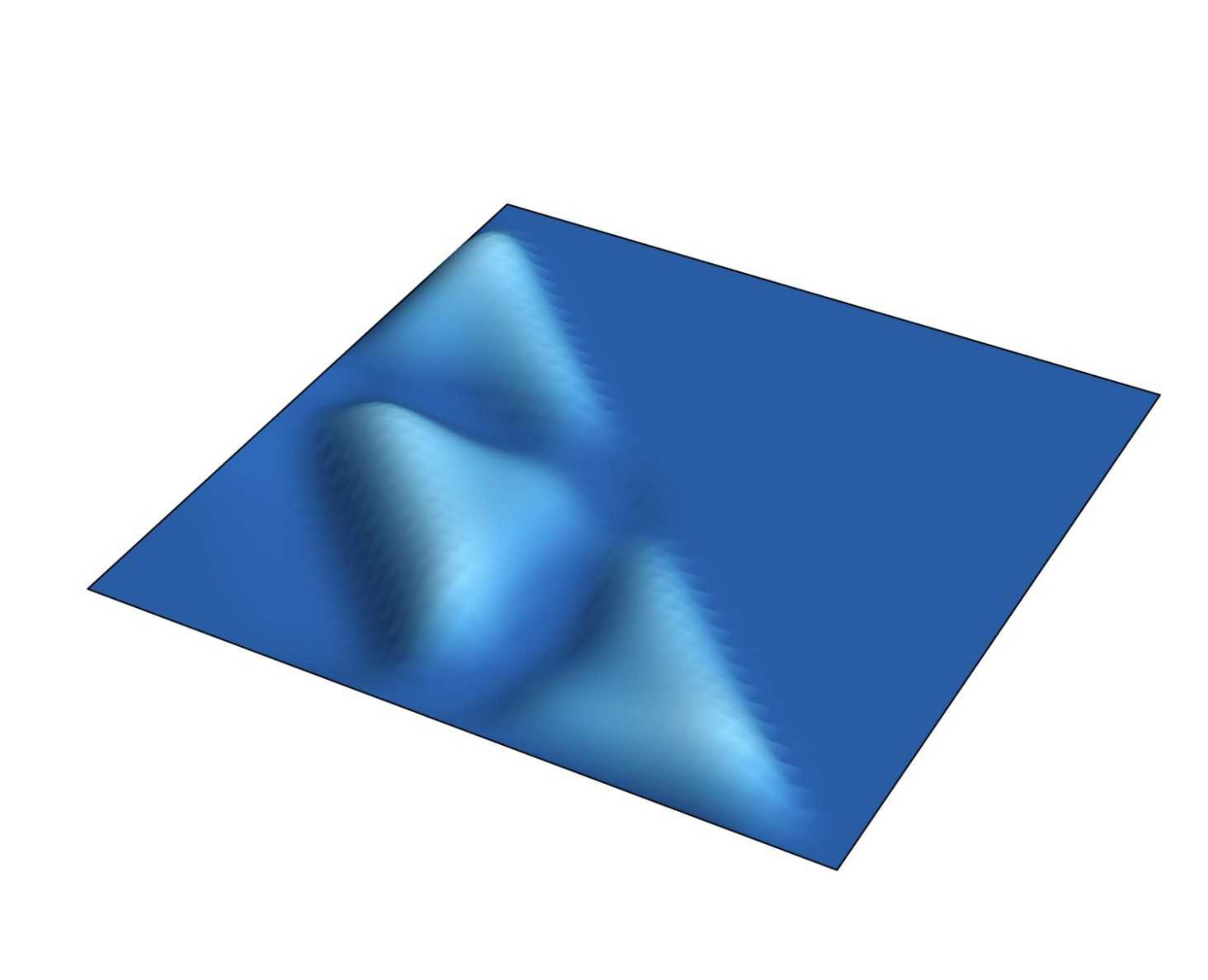}
    \caption{\tiny $t=3.0$\\B-EPGP}
  \end{subfigure}
  \hfill
  \begin{subfigure}[b]{0.1\textwidth}
    \centering
    \includegraphics[width=\textwidth]{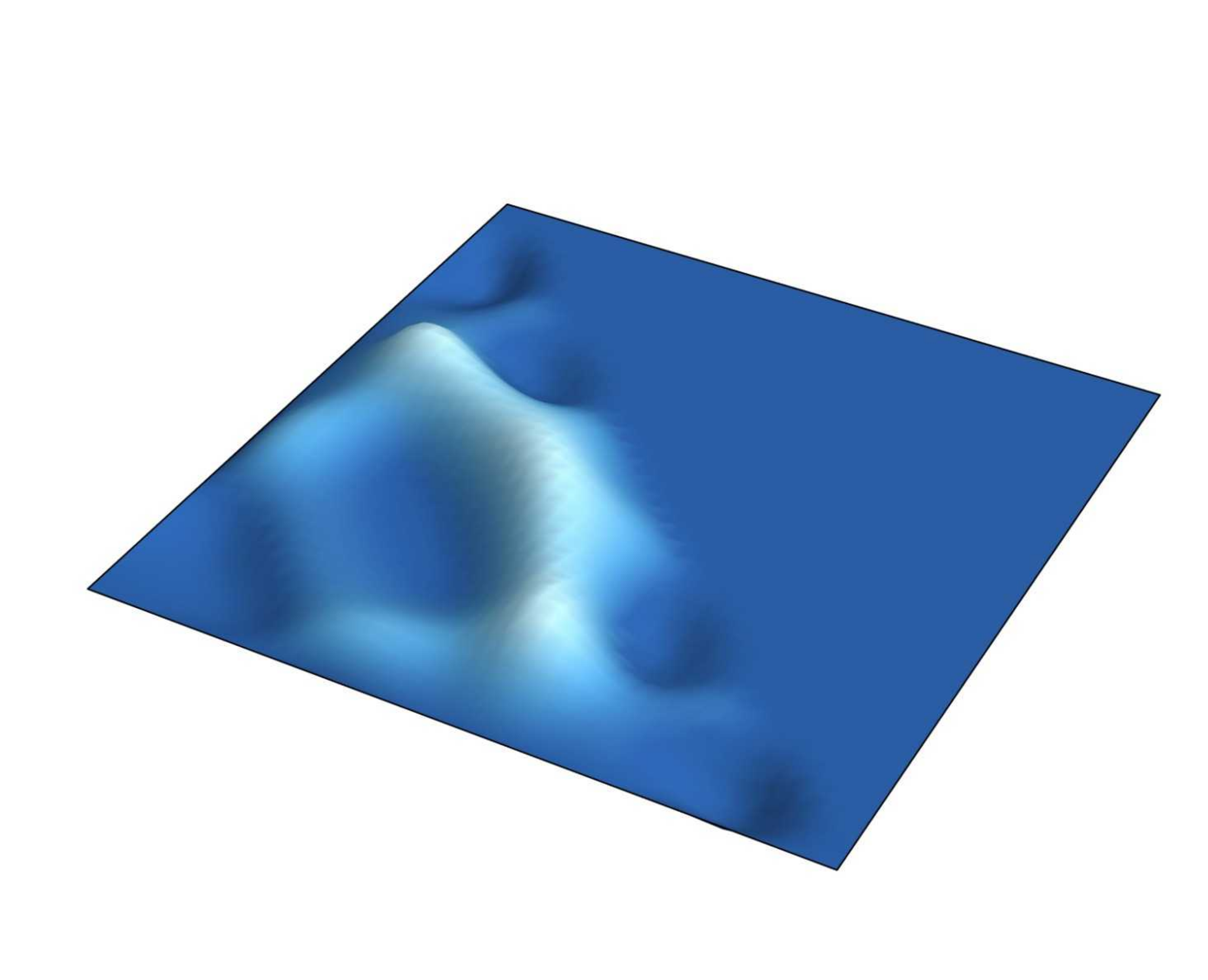}
    \caption{\tiny $t=3.5$\\B-EPGP}
  \end{subfigure}
  \hfill
  \begin{subfigure}[b]{0.1\textwidth}
    \centering
    \includegraphics[width=\textwidth]{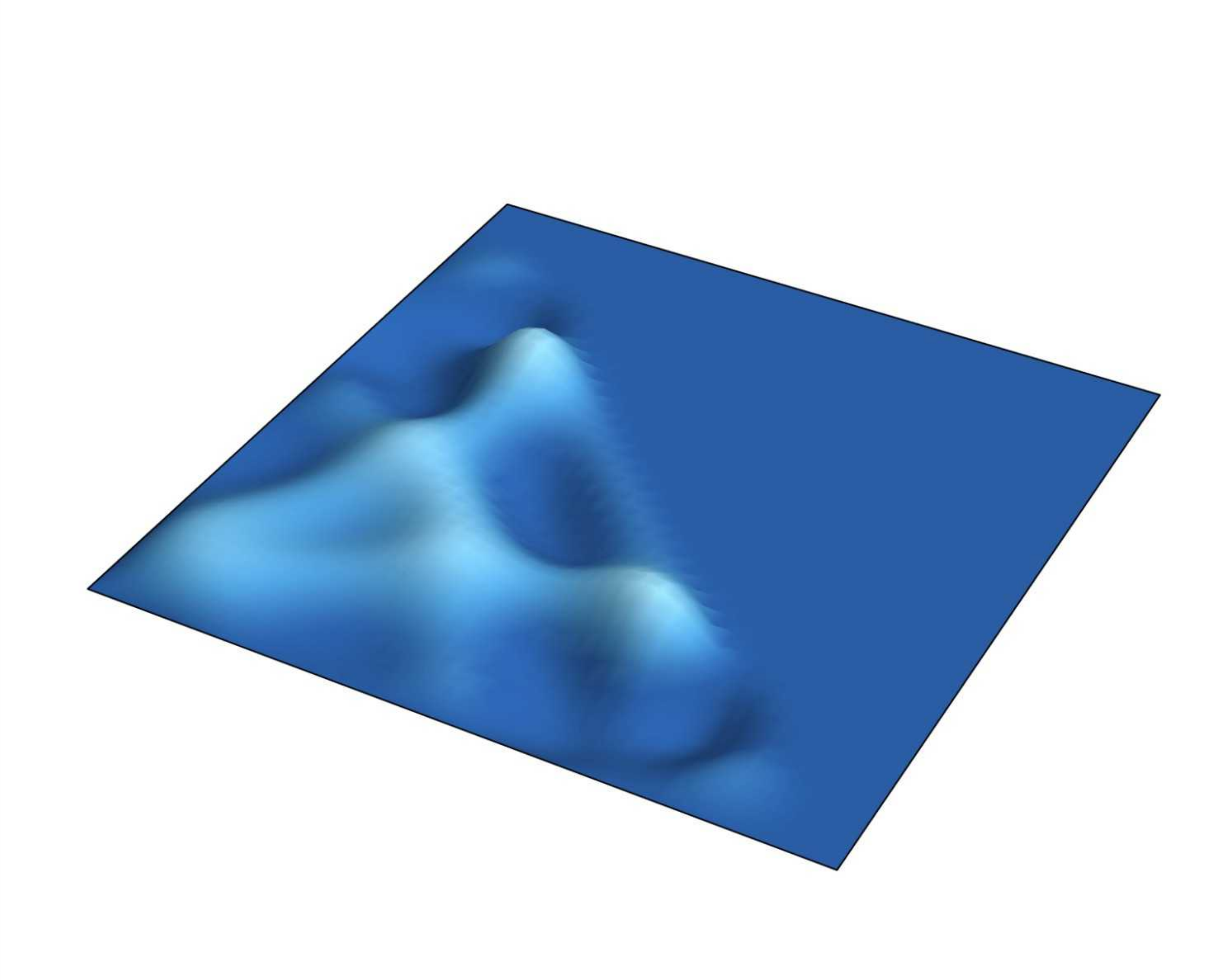}
    \caption{\tiny $t=4.0$\\B-EPGP}
  \end{subfigure}
  \caption{Solution of 2D wave equation in a triangular domain with Dirichlet boundary conditions calculated using  the EPGP and B-EPGP methods.
  Animations can be found in the supplementary material as \texttt{triangle\_EPGP.mp4} and \texttt{triangle\_B-EPGP.mp4}.}\label{fig:triangle}
  \vskip -0.2in
\end{figure*}
\begin{figure*}[hbt!]
      \centering
      \includegraphics[width=0.5\linewidth]{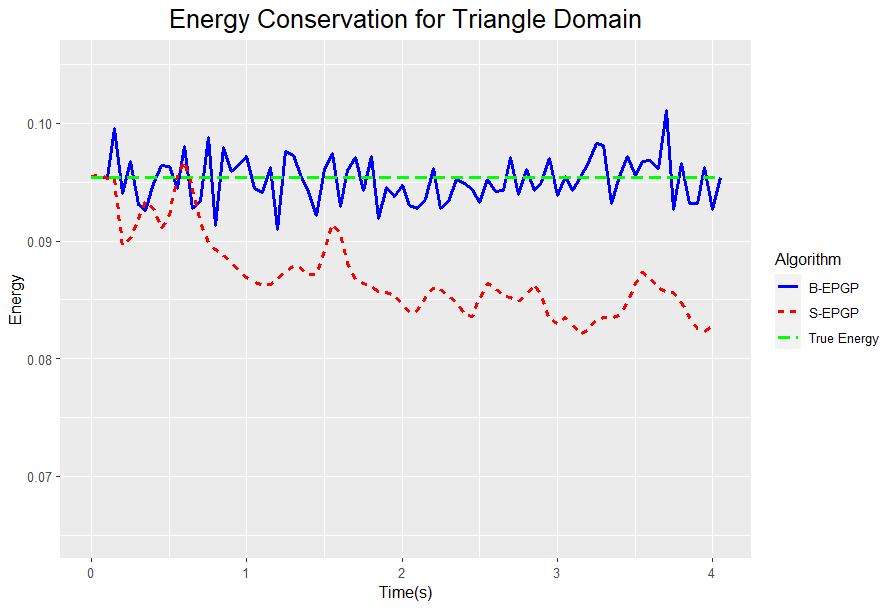}
      \caption{This figure shows the of energy conservation for the 2D wave equation in a triangular domain. We expect a constant value from physical principles. S-EPGP method incurs a non-negligible  loss of energy. The error in B-EPGP is partly due to our approximation of the energy integral in \eqref{eq:energy_integral}.}\label{fig:triangle_energy}
  \end{figure*}


\subsection{Circle: Drum membrane}\label{sec:circle}

Our methods extend to non polygonal domains. Here we consider the 2D wave equation in a disc with Dirichlet boundary conditions, a classic model for circular drum membranes.
We will consider the spatial domain \(\Omega=\{(x,y):x^2+y^2\leq 16\}\) and $t\in(0,7)$, and the equations 
\begin{align*}
    \begin{cases}
        u_{tt}-( u_{xx}+u_{yy})=0&\text{in }(0,7)\times\Omega\\
        u(0,x,y)= \exp (-10(x^2+y^2))&\text{in }\Omega\\
        u_t(0,x,y) = 0&\text{in }\Omega\\
        u(t,x,y) = 0 &\text{on } (0,7)\times\p \Omega.
    \end{cases}
\end{align*}
Here we cannot use  B-EPGP for polygons or Hybrid B-EPGP  since no piece of the boundary of $\Omega$ is polygonal. Instead, we will use the following implementation of the baseline method EPGP as described in Remark~\ref{rem:direct_method}: We consider the EPGP basis for the 2D wave equation without boundary conditions $\{e^{\pm\sqrt{a^2+b^2}t+ax+by}\}_{a,b\in\mathbb C}$ and model the Dirichlet boundary condition as data $u(x_h,y_h,t_h)=0$ for points $(x_h,y_h,t_h)\in\partial\Omega\times(0,7)$. We also give data points to represent the initial conditions at $t=0$ as in the rest of the paper. 

Snapshots of our solution are presented in Figure~\ref{fig:circle} and the conservation of energy is demonstrated in Figure~\ref{fig:circle_energy}.
\begin{figure*}[hbt!]
  \vskip 0.2in
  \centering
  \begin{subfigure}[b]{0.11\textwidth}
    \centering
    \includegraphics[width=\textwidth]{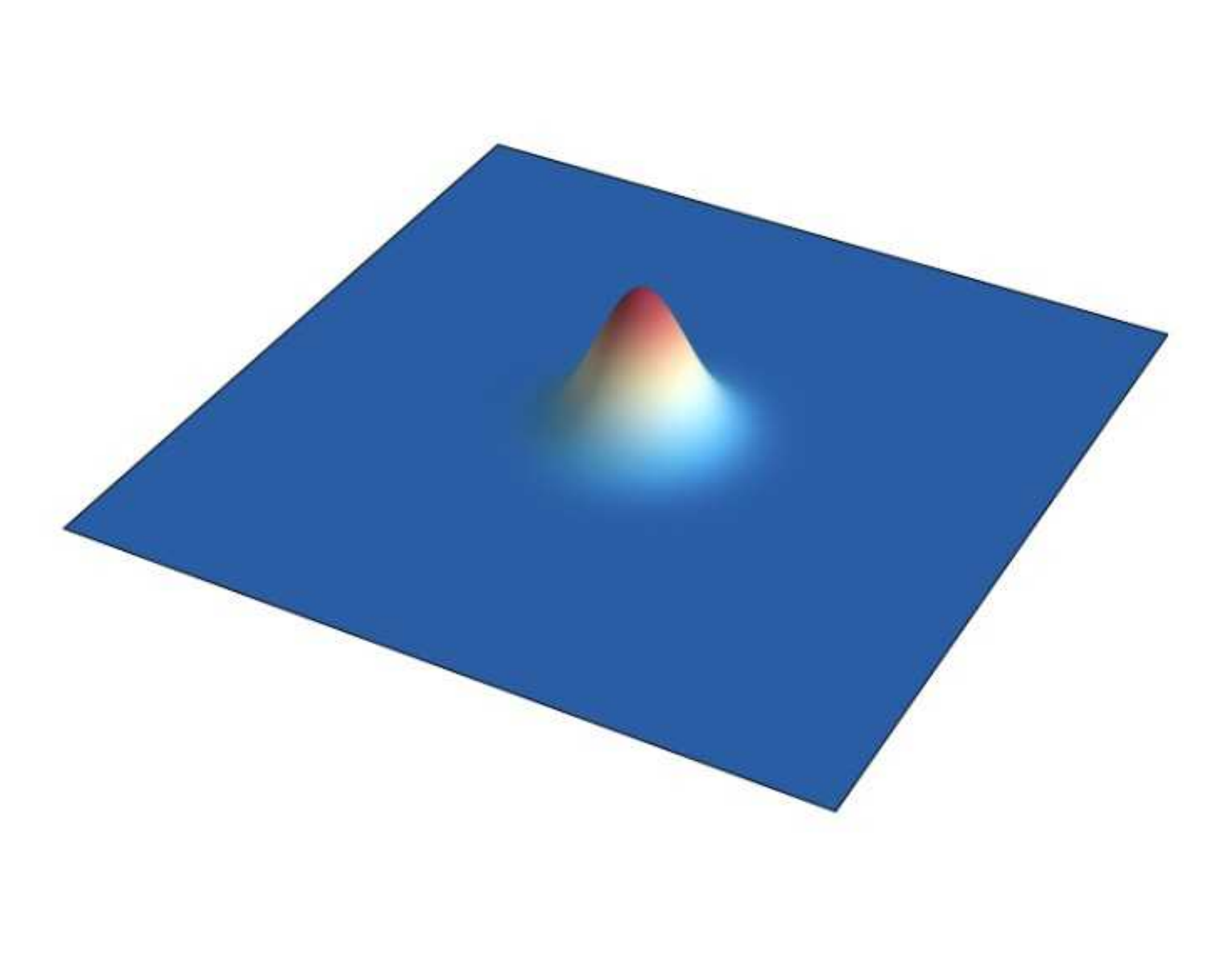}
    \caption{\tiny $t=0$}
  \end{subfigure}
  \hfill
  \begin{subfigure}[b]{0.11\textwidth}
    \centering
    \includegraphics[width=\textwidth]{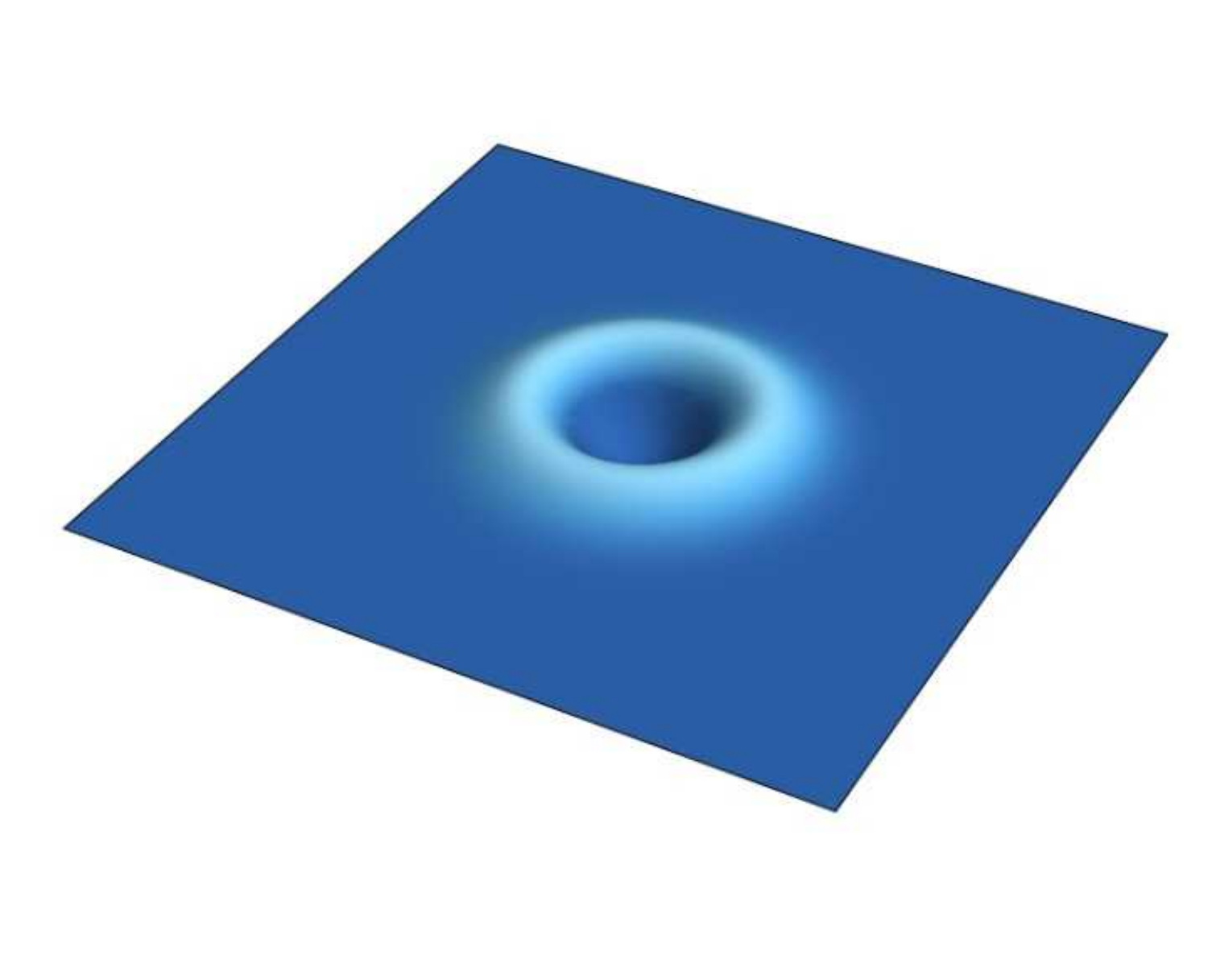}
    \caption{\tiny $t=1$}
  \end{subfigure}
  \hfill
  \begin{subfigure}[b]{0.11\textwidth}
    \centering
    \includegraphics[width=\textwidth]{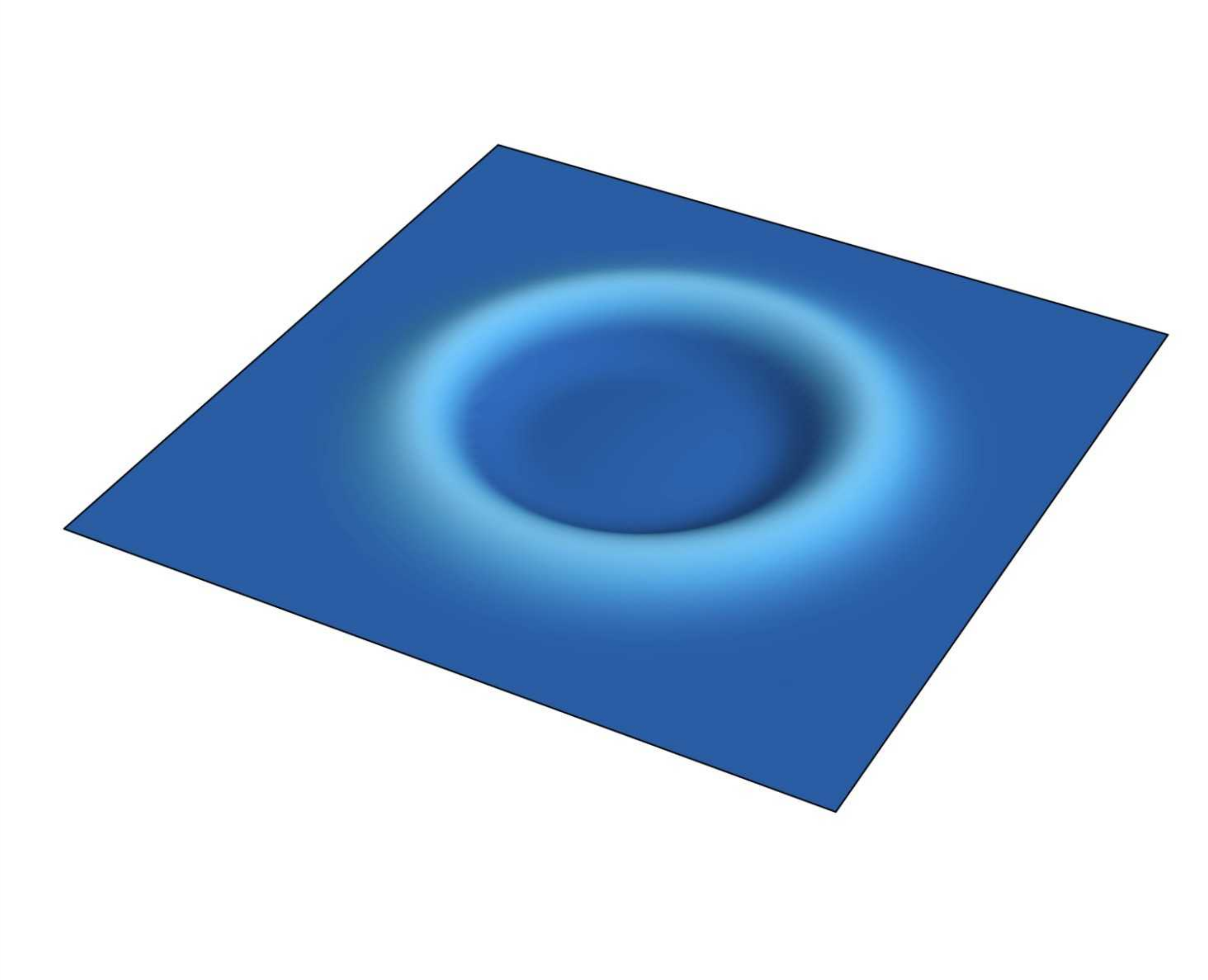}
    \caption{\tiny $t=2$}
  \end{subfigure}
  \hfill
  \begin{subfigure}[b]{0.11\textwidth}
    \centering
    \includegraphics[width=\textwidth]{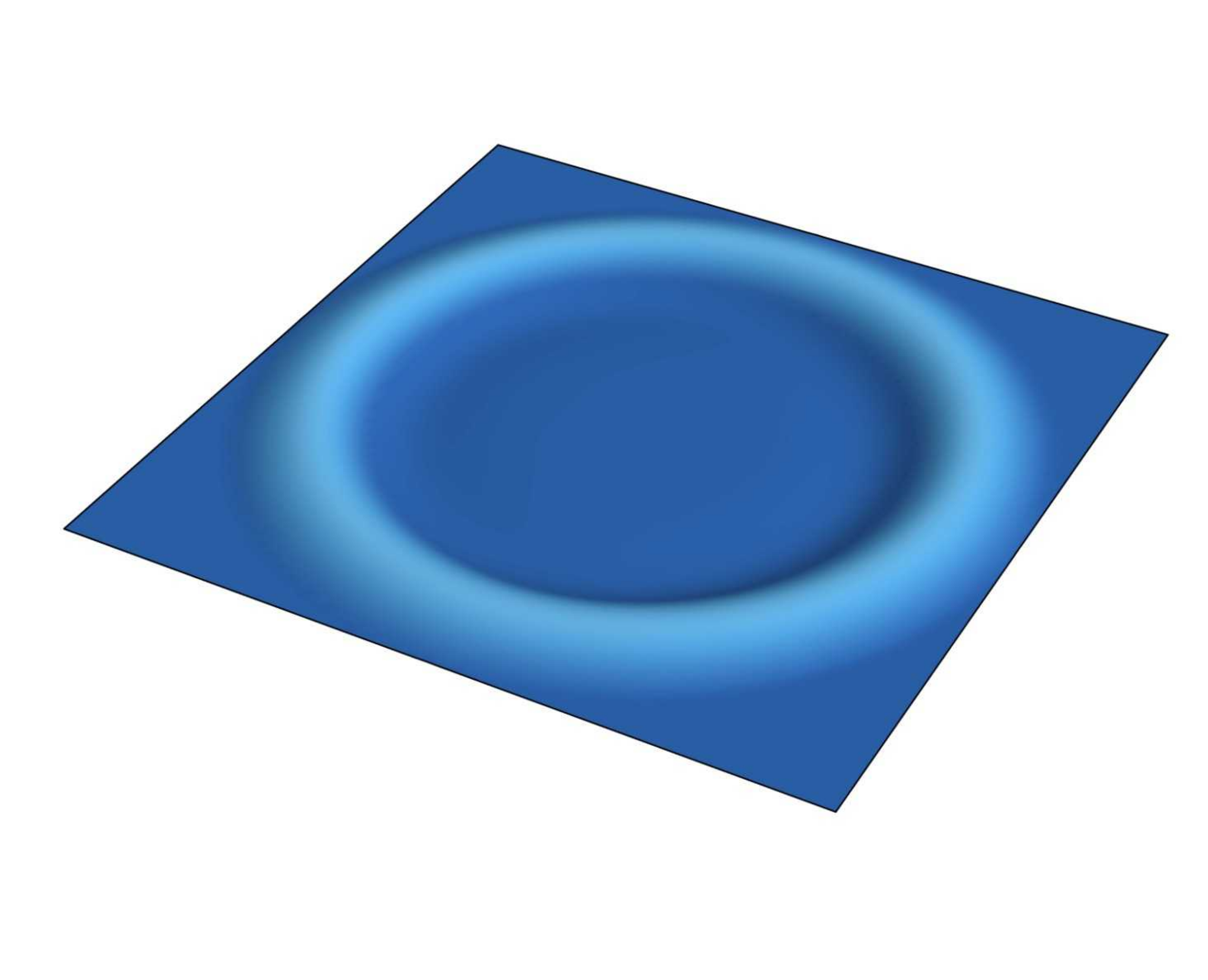}
    \caption{\tiny $t=3$}
  \end{subfigure}
  \hfill
  \begin{subfigure}[b]{0.11\textwidth}
    \centering
    \includegraphics[width=\textwidth]{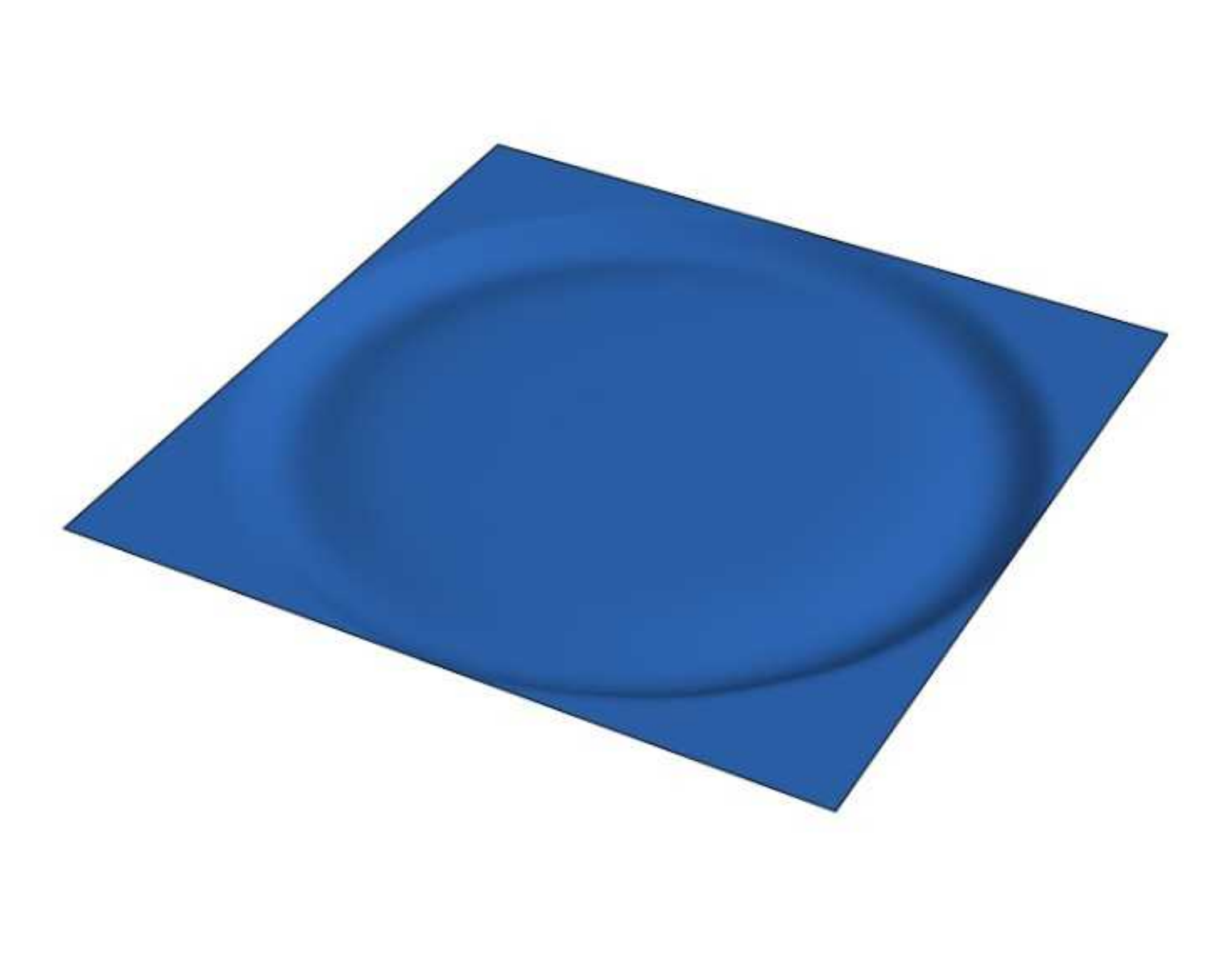}
    \caption{\tiny $t=4$}
  \end{subfigure}
  \hfill
  \begin{subfigure}[b]{0.11\textwidth}
    \centering
    \includegraphics[width=\textwidth]{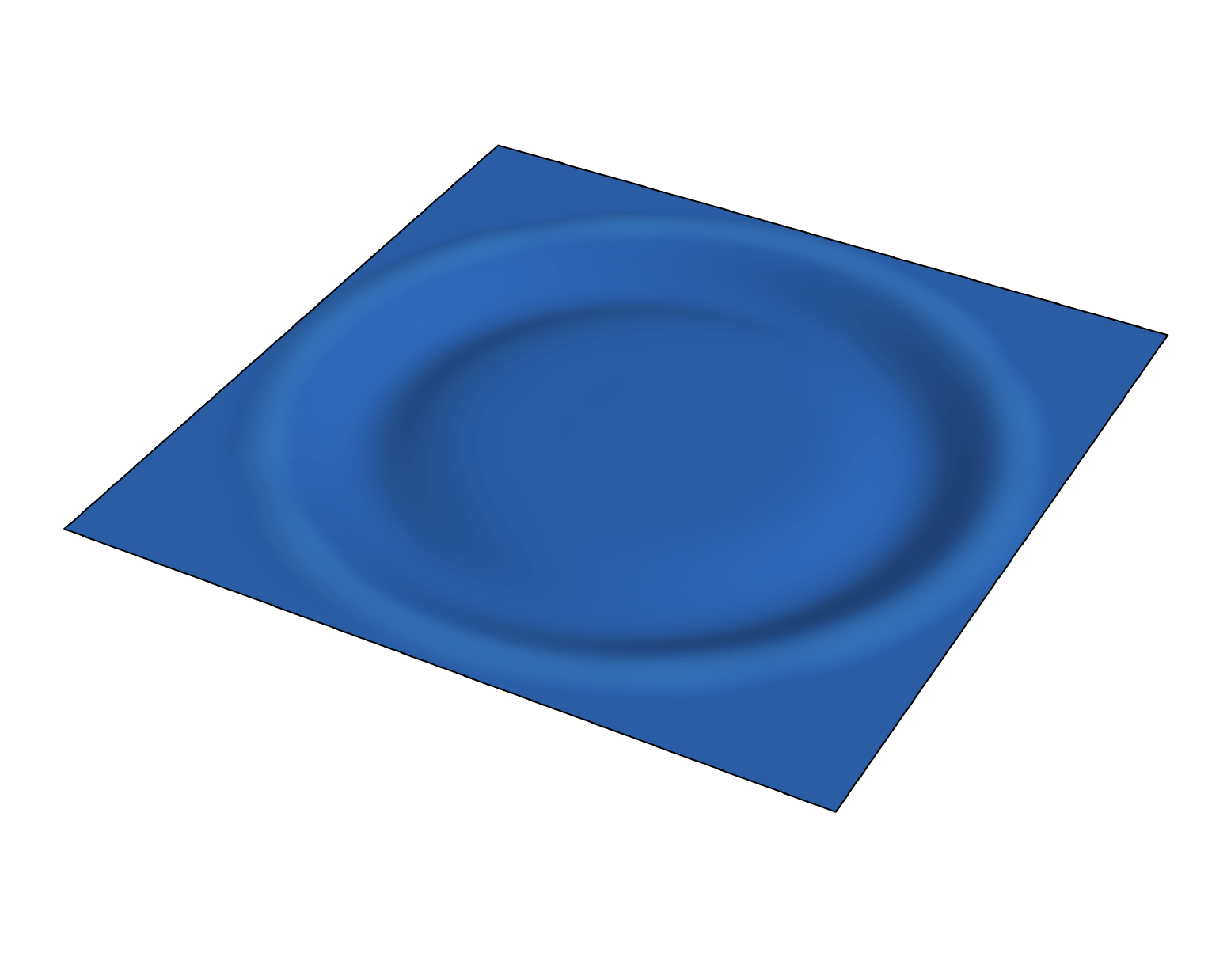}
    \caption{\tiny $t=5$}
  \end{subfigure}
  \hfill
  \begin{subfigure}[b]{0.11\textwidth}
    \centering
    \includegraphics[width=\textwidth]{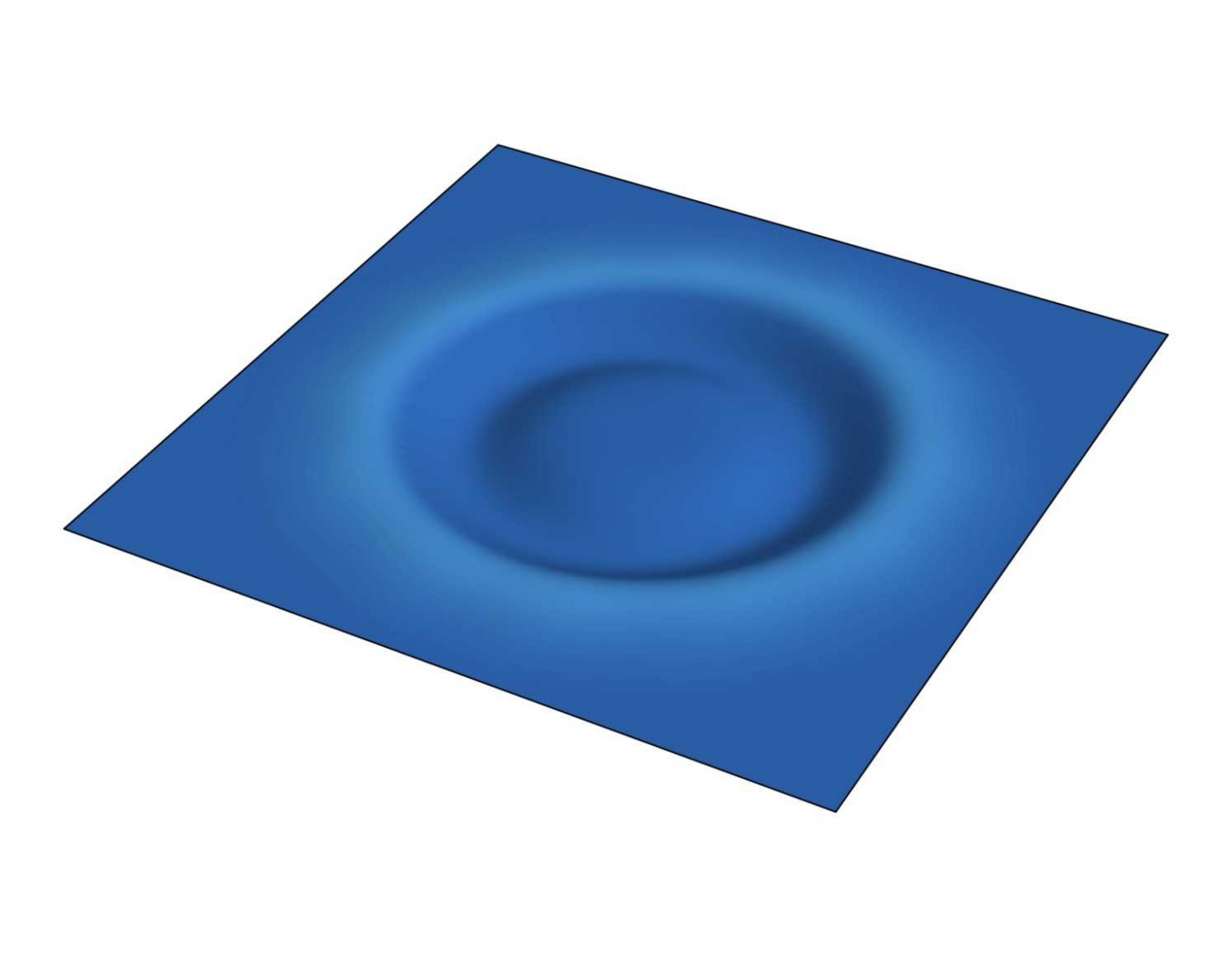}
    \caption{\tiny $t=6$}
  \end{subfigure}
  \hfill
  \begin{subfigure}[b]{0.11\textwidth}
    \centering
    \includegraphics[width=\textwidth]{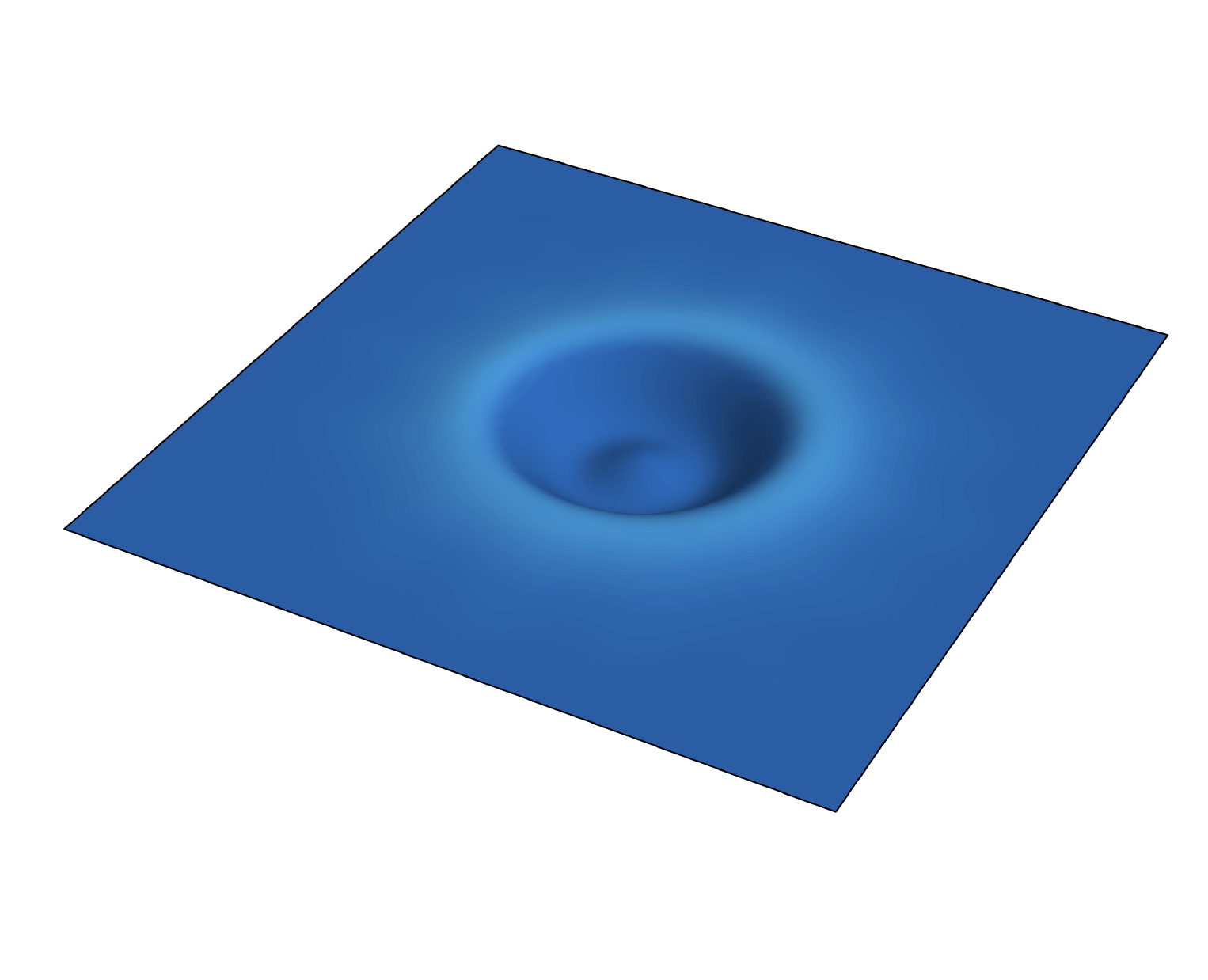}
    \caption{\tiny $t=7$}
  \end{subfigure}
  \hfill
  \caption{Radially symmetric solution to 2D wave equation in a circular domain with Dirichlet boundary conditions evaluated at 8 timepoints.
  The animation can be found in the supplementary material as \texttt{circle\_EPGP.mp4}.}\label{fig:circle}
  \vskip -0.2in
\end{figure*}
 \begin{figure*}[hbt!]
      \centering
      \includegraphics[width=0.5\linewidth]{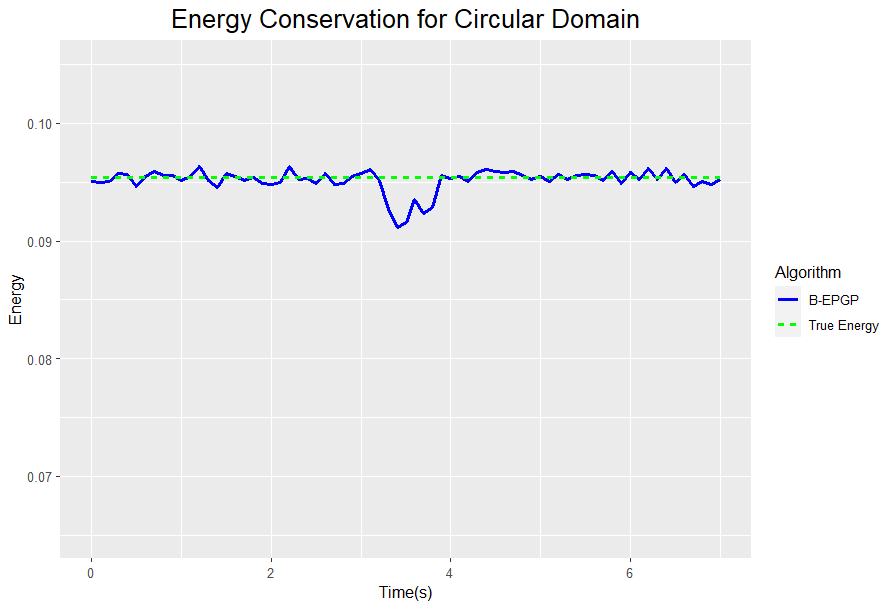}
      \caption{This figure shows the energy conservation for the 2D wave equation in a circular domain. We expect a constant value from physical principles. The error  is partly due to our numerical approximation of the energy integral in \eqref{eq:energy_integral}.}\label{fig:circle_energy}
  \end{figure*}

\section{B-EPGP and EPGP on 2D Wave Equation}\label{sec:comparison_epgp}

\begin{table}

\caption{The results from Section~\ref{sec:comparison_epgp} show the superiority of B-EPGP over EPGP when comparing the median L1-difference to the \textit{exact} solution and the computation time. We report the standard deviation of ten repetitions.}
\label{table:comparison}
\centering
\begin{tabular}{|c||c|c|c|}
\hline
Algorithm & Abs Err(\(10^{-4}\)) & Rel Err(\%) & Time(s) \\
\hline
\small{EPGP} & $4.26 \pm 0.21$ & $1.82 \pm 0.09$ & $6059$\\
\small{B-EPGP (ours)} & $\textbf{0.48} \pm 0.02$ & $\textbf{0.72} \pm 0.03$ & $804$\\
\hline
\end{tabular}
\end{table}

We compare  B-EPGP and EPGP in-depth, discussing accuracy and computational resources.
Consider the following initial boundary value problem for the 2D wave equation in a halfspace $x>0$:
\[
\begin{cases}
    u_{tt}=u_{xx}+u_{yy}
    &\text{for \(t\in(0,\infty), x\in(0,\infty), y\in\R\)}\\
    u(0,x,y)=f(x,y) \text{ and } u_t(0,x,y)=0
    &\text{for \(x\in[0,\infty),y\in\R\)}\\
    u_x(t,0,y)=0 &\text{for \(t\in(0,\infty), y\in\R\)}
\end{cases}
\]
with initial condition $f=f_1+f_2+f_3$ for $c_i=-5i^2+20i-10$ and
\begin{align*}
    f_i(x,y)=&J_0(c_i\cdot\sqrt{((x-i)^2+(y-i)^2)})+J_0(c_i\cdot\sqrt{((x+i)^2+(y-i)^2)}),
\end{align*}
where \(J_0\) is the Bessel function of order \(0\).
The unique exact solution is given by
\begin{align*}
u(t,x,y)&=f_1(x,y)\cos(5t)+f_2(x,y)\cos(10t)+f_3(x,y)\cos(5t).
\end{align*}

We can use B-EPGP to obtain the basis
\begin{align*}
e^{\alpha x+\beta y+\tau t}+e^{-\alpha x+\beta y+\tau t}
+e^{\alpha x+\beta y-\tau t}+e^{-\alpha x+\beta y-\tau t}\text{ for $\alpha,\beta,\tau\in\CC$ and $\alpha^2+\beta^2=\tau^2$.}
\end{align*}
We fix the number of frequencies to be the same in both EPGP and B-EPGP. Hence, B-EPGP requires only one-quarter as many trainable parameters as EPGP. Both methods model initial data on a spatial grid of spacing 0.2. EPGP requires additional boundary data sampled on a grid, which triples the number of data and increases the size of the covariance matrix by a factor of nine, leading to substantially higher computational cost. A one-sided paired Wilcoxon signed-rank test over ten repetitions, with the alternative hypothesis that the location shift is less than zero, yields a highly significant $p$-value of $0.001 \ll 0.05$. Table~\ref{table:comparison} sums up the comparison, where B-EPGP reduces prediction error, runtime, and memory usage by an order of magnitude. Figure~\ref{datasize} describes how EPGP and B-EPGP perform with different data size and basis elements. A visual comparison of the two solutions can be seen in Figure~\ref{fig:bessel}.

\begin{figure*}[hbt!]
  \vskip 0.2in
  \centering
  \begin{subfigure}[b]{0.1\textwidth}
    \centering
    \includegraphics[width=\textwidth]{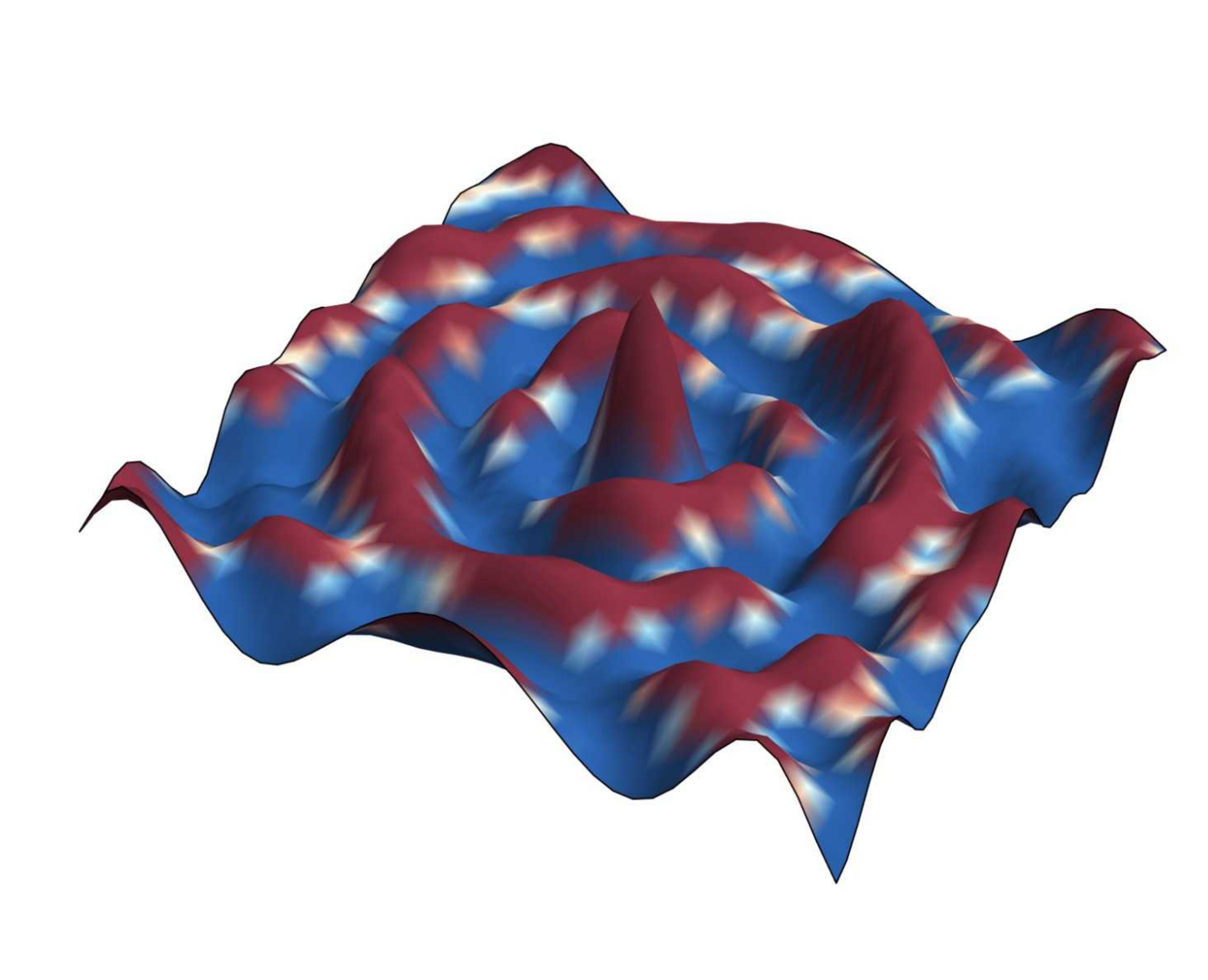}
    \caption{\tiny $t=0.0$,\\ EPGP}
  \end{subfigure}
  \hfill
  \begin{subfigure}[b]{0.1\textwidth}
    \centering
    \includegraphics[width=\textwidth]{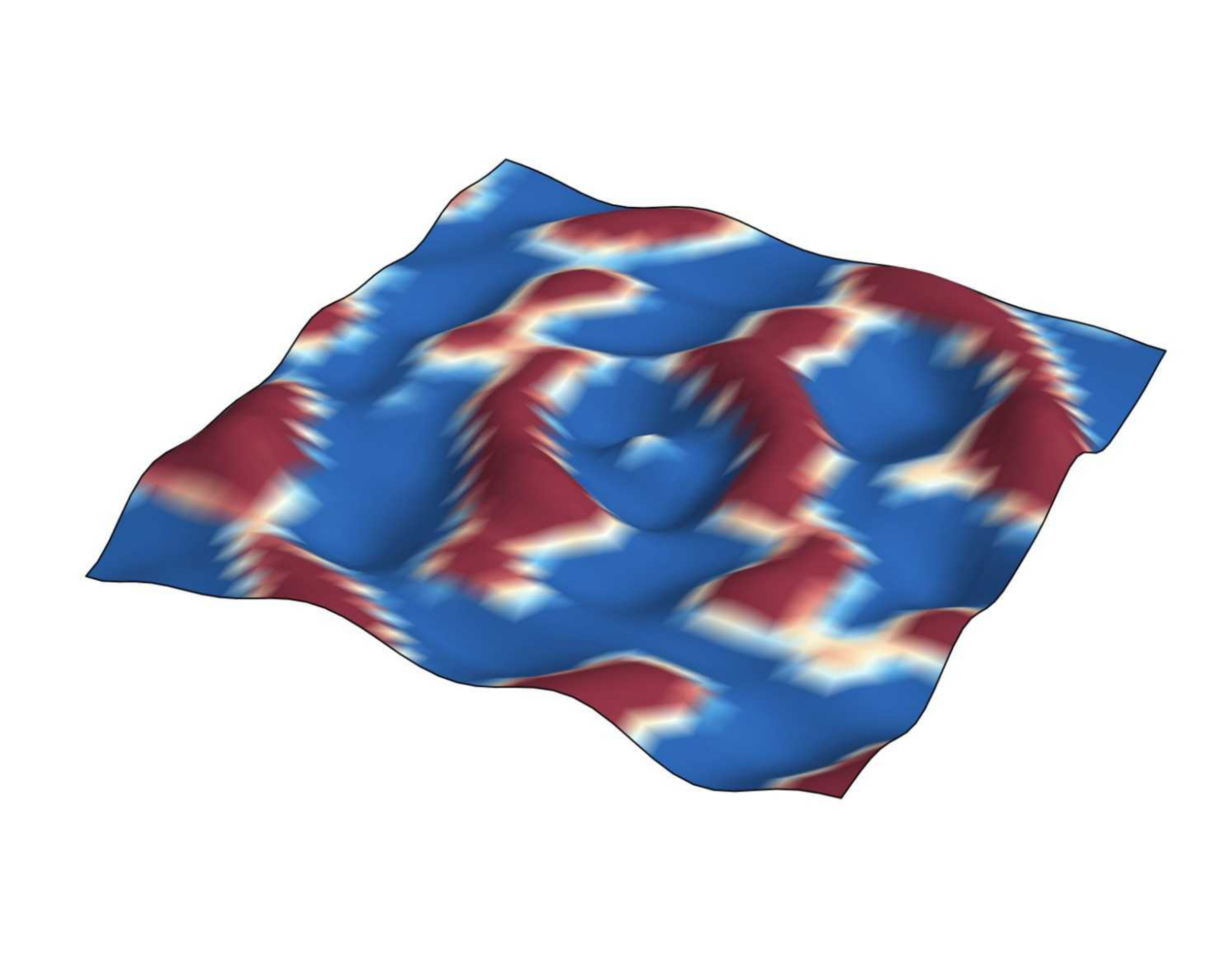}
    \caption{\tiny $t=0.5$,\\ EPGP}
  \end{subfigure}
  \hfill
  \begin{subfigure}[b]{0.1\textwidth}
    \centering
    \includegraphics[width=\textwidth]{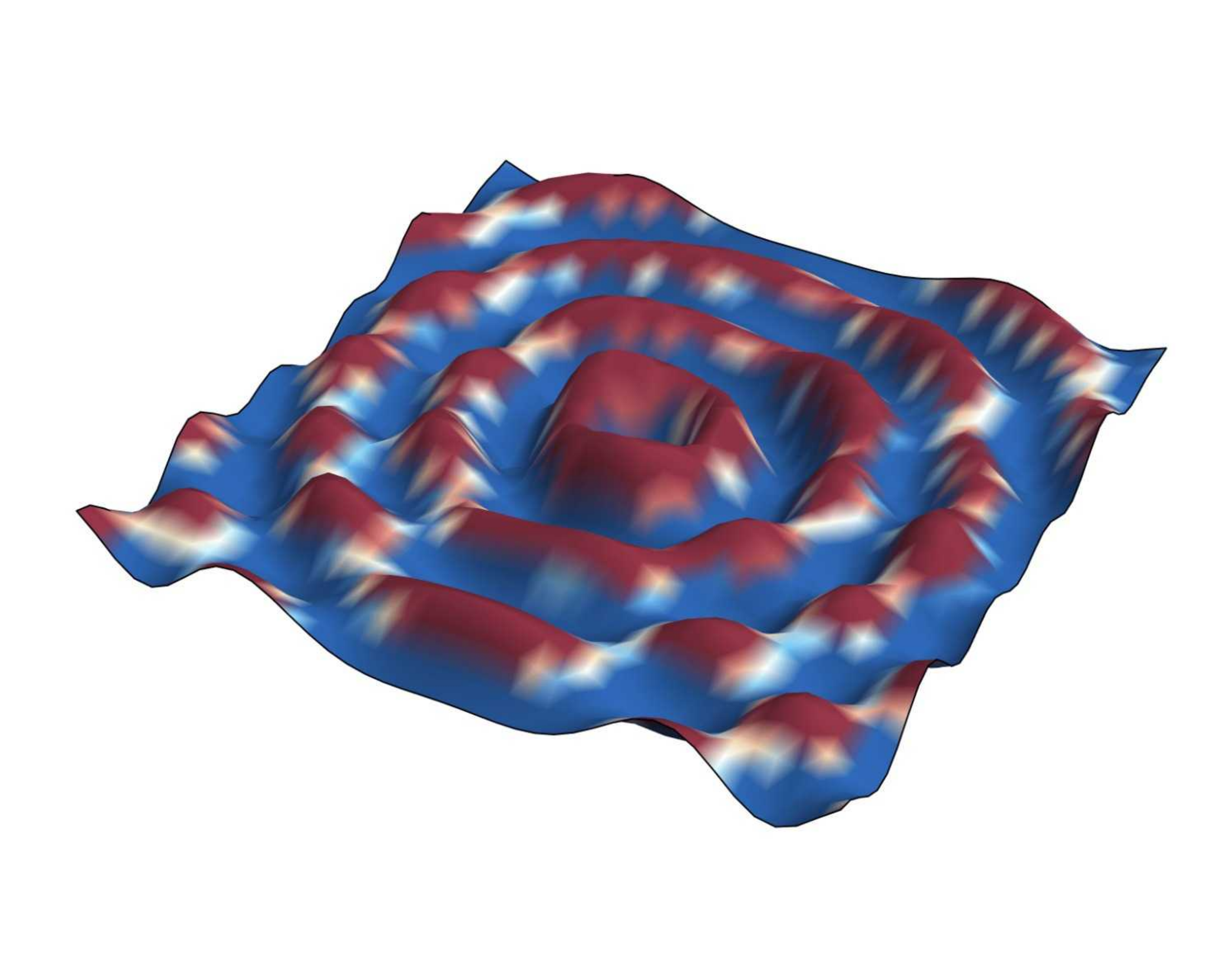}
    \caption{\tiny $t=1.0$,\\ EPGP}
  \end{subfigure}
  \hfill
  \begin{subfigure}[b]{0.1\textwidth}
    \centering
    \includegraphics[width=\textwidth]{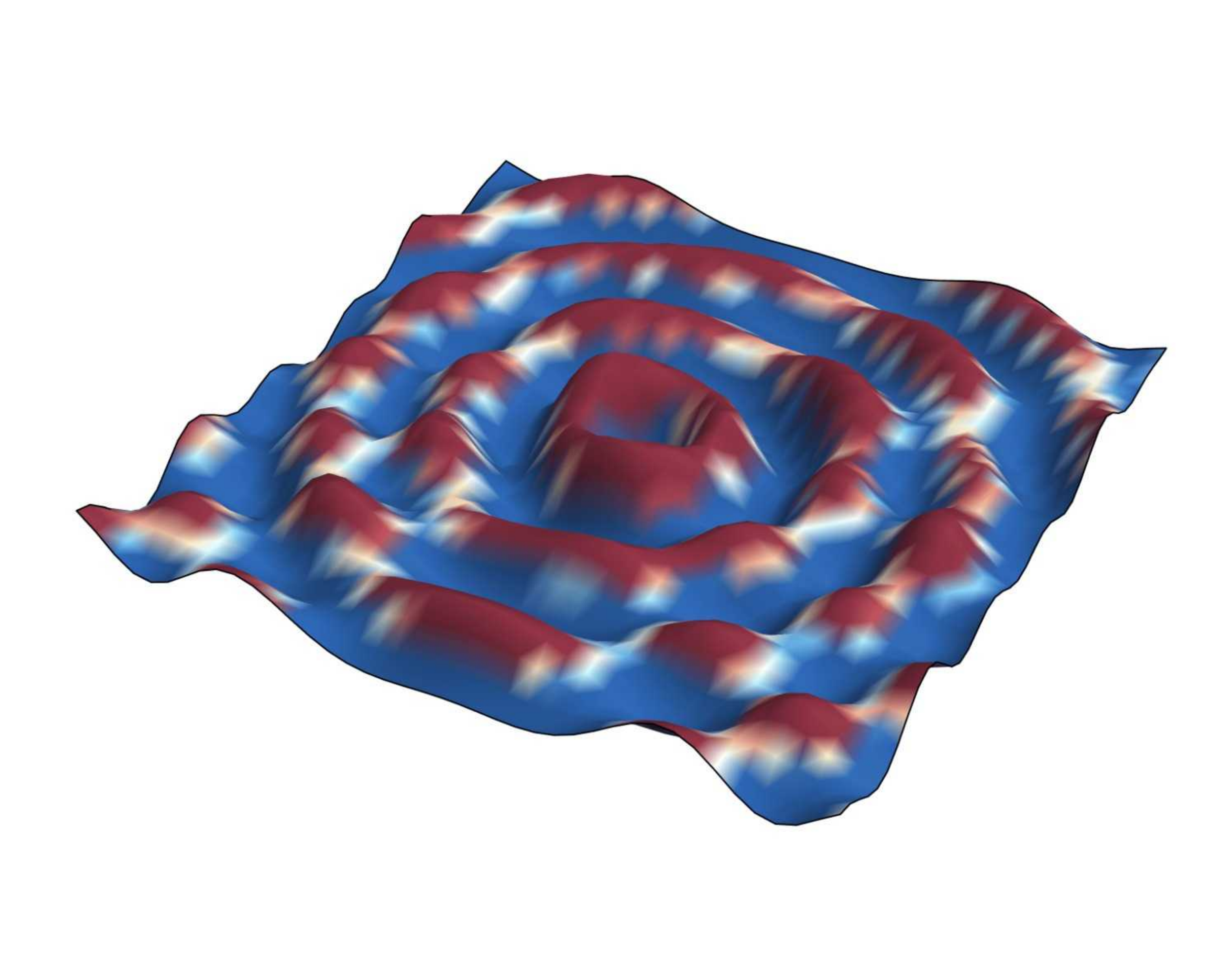}
    \caption{\tiny $t=1.5$,\\ EPGP}
  \end{subfigure}
  \hfill
  \begin{subfigure}[b]{0.1\textwidth}
    \centering
    \includegraphics[width=\textwidth]{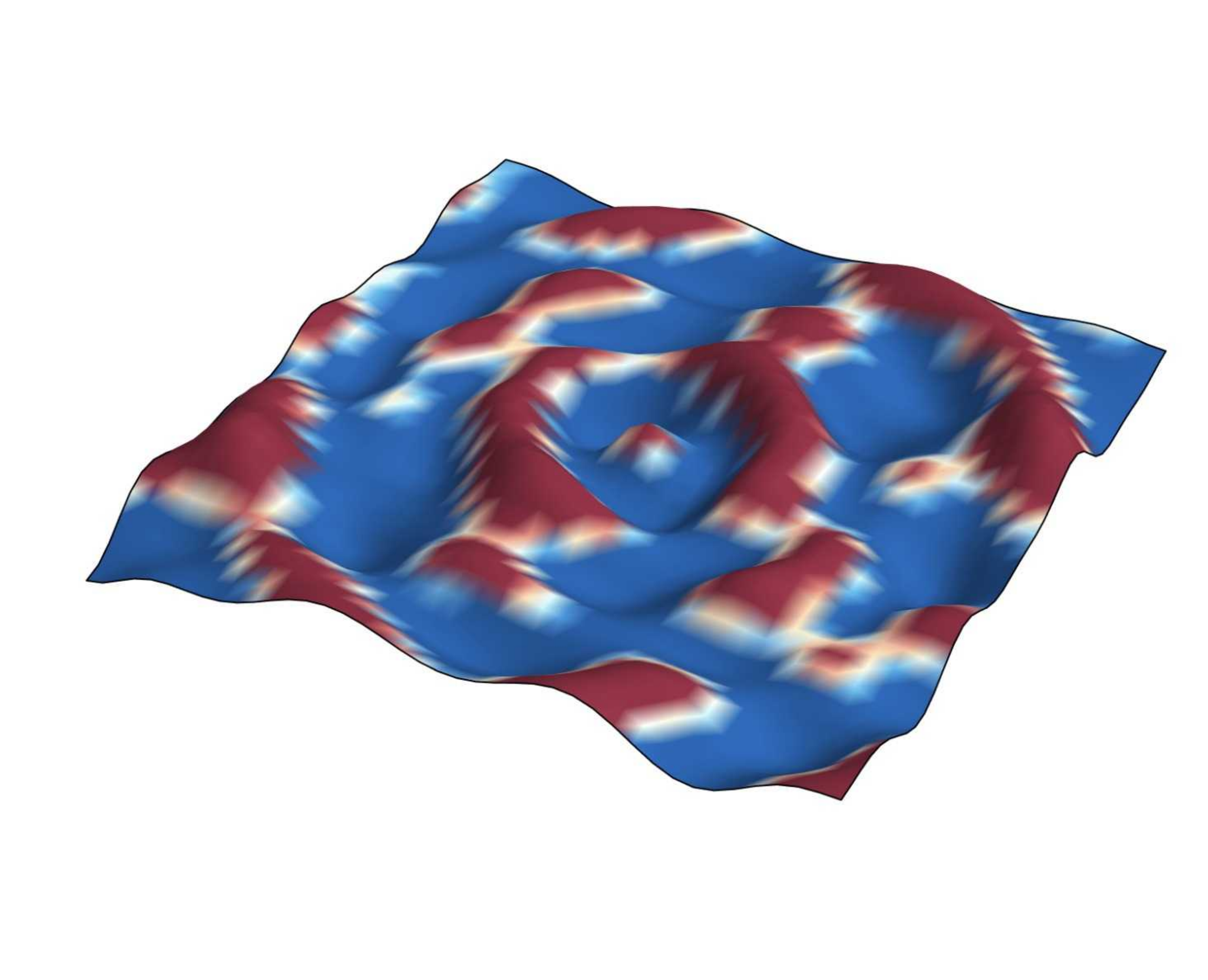}
    \caption{\tiny $t=2.0$,\\ EPGP}
  \end{subfigure}
  \hfill
  \begin{subfigure}[b]{0.1\textwidth}
    \centering
    \includegraphics[width=\textwidth]{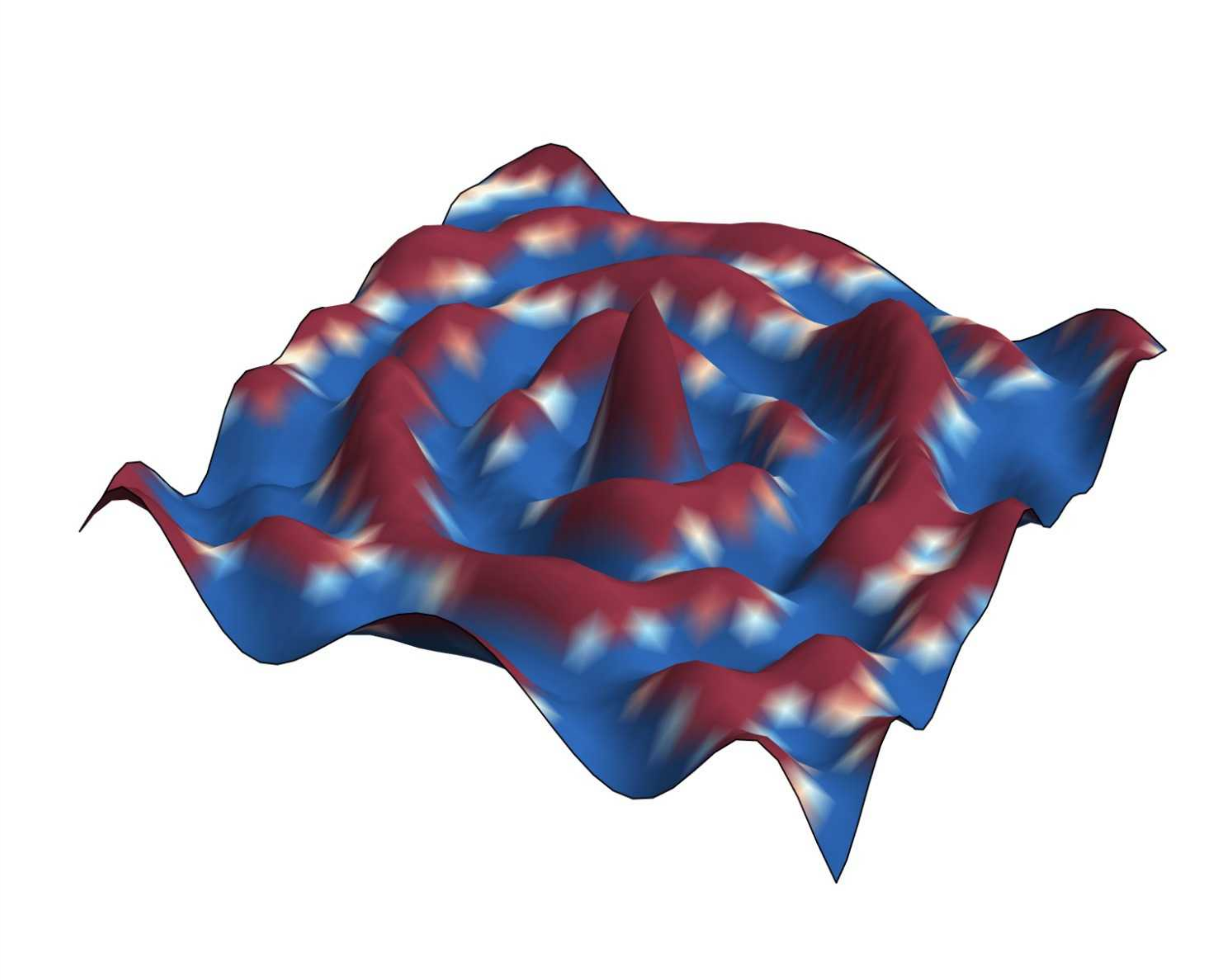}
    \caption{\tiny $t=2.5$,\\ EPGP}
  \end{subfigure}
  \hfill
  \begin{subfigure}[b]{0.1\textwidth}
    \centering
    \includegraphics[width=\textwidth]{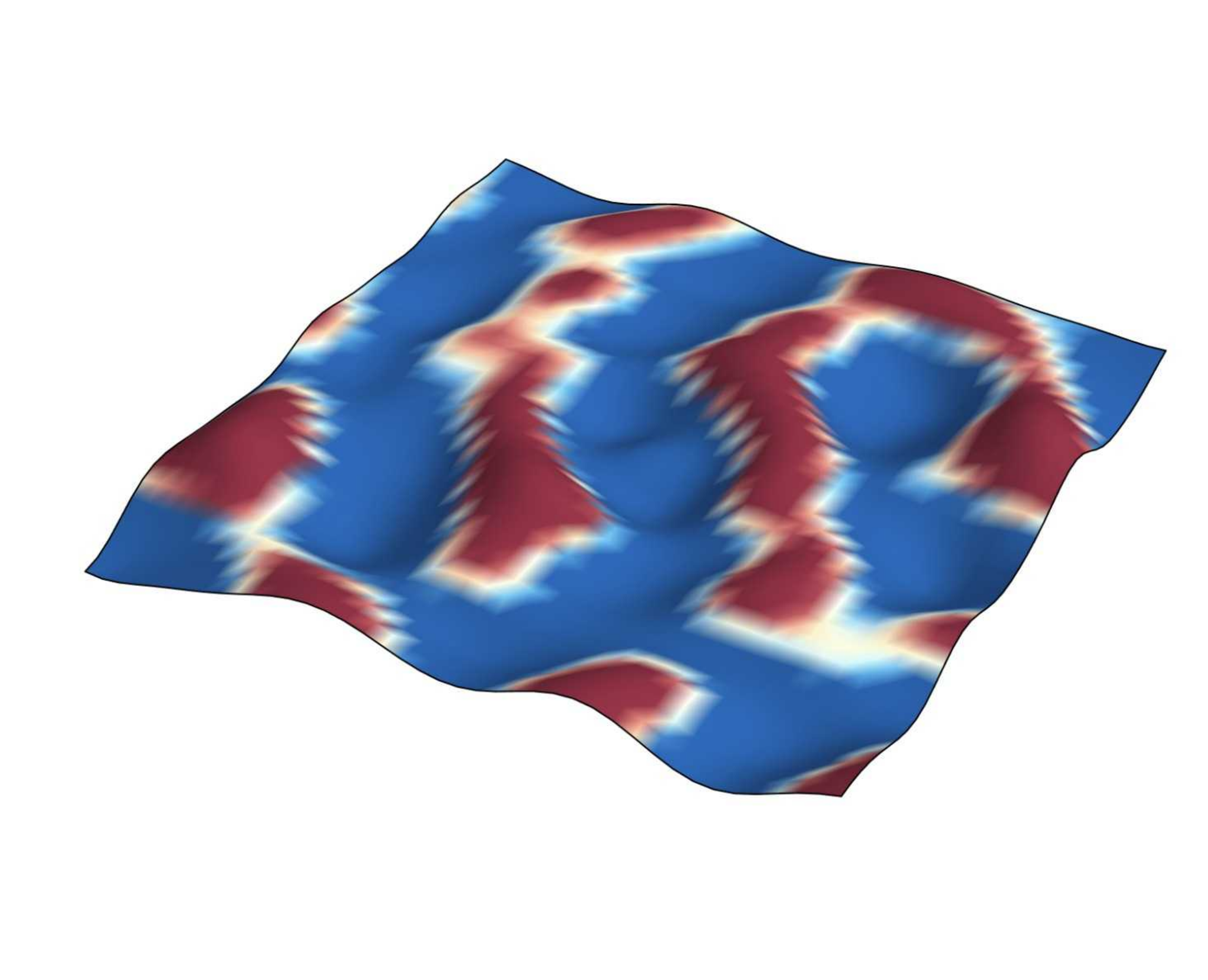}
    \caption{\tiny $t=3.0$,\\ EPGP}
  \end{subfigure}
  \hfill
  \begin{subfigure}[b]{0.1\textwidth}
    \centering
    \includegraphics[width=\textwidth]{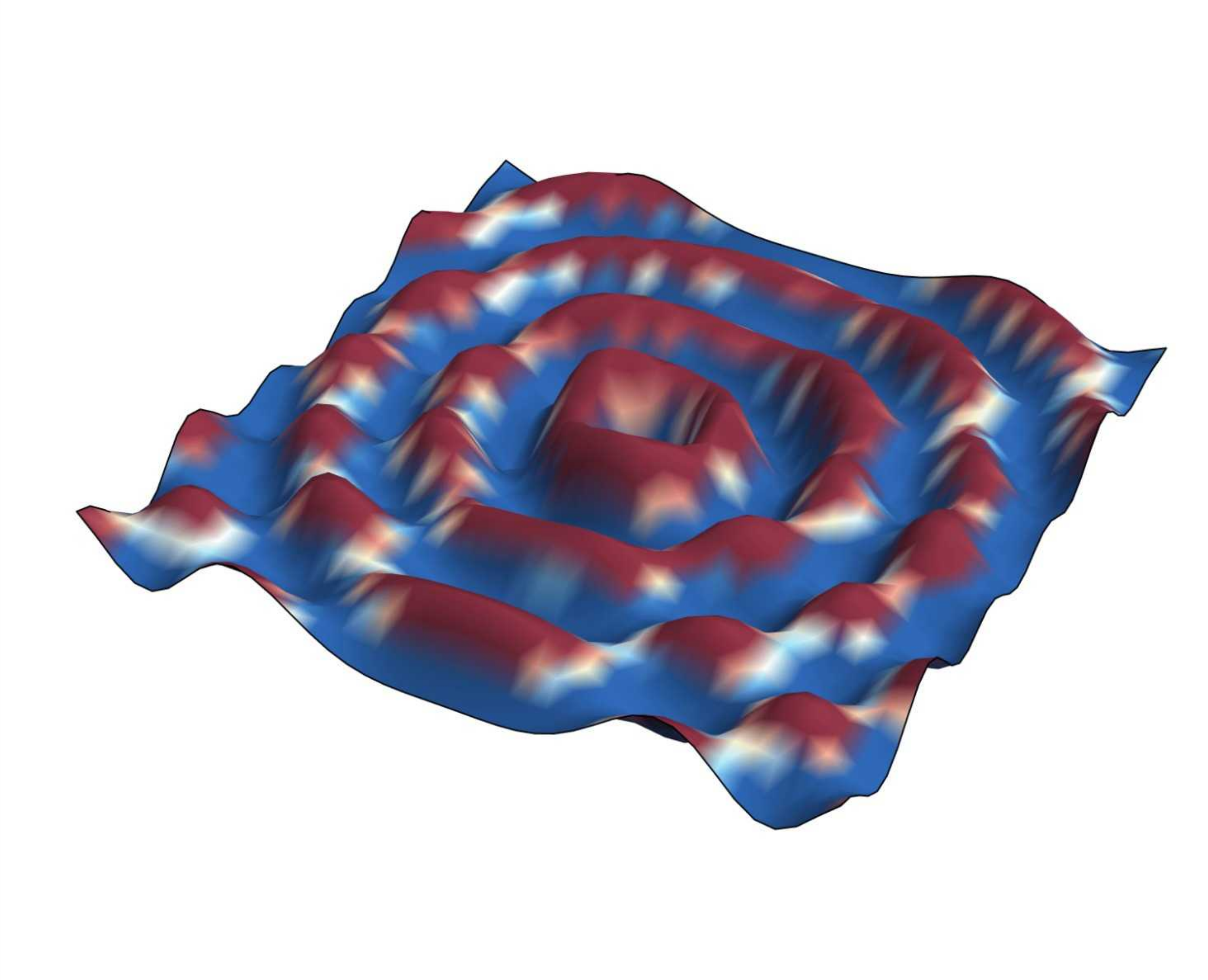}
    \caption{\tiny $t=3.5$,\\ EPGP}
  \end{subfigure}
  \hfill
  \begin{subfigure}[b]{0.1\textwidth}
    \centering
    \includegraphics[width=\textwidth]{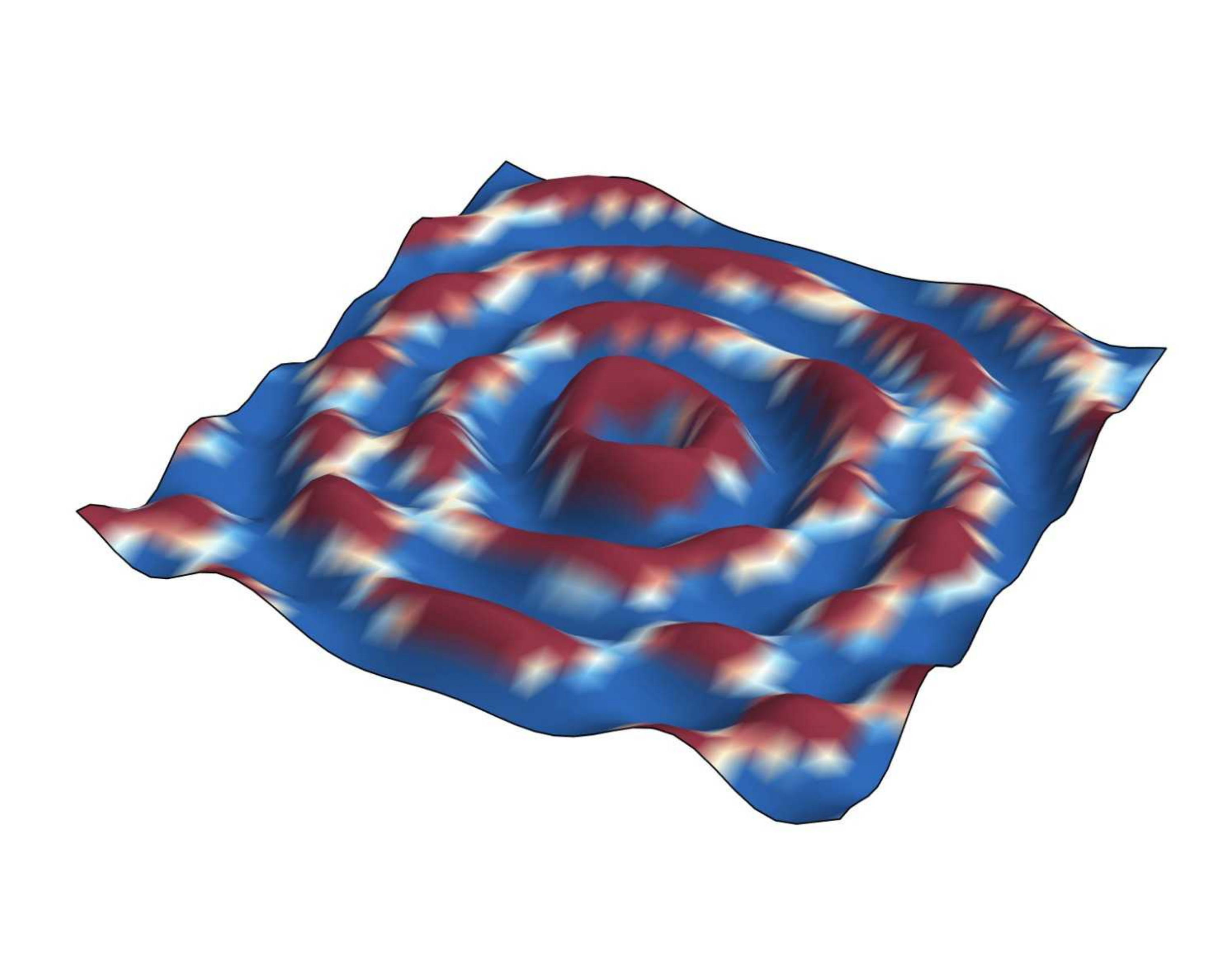}
    \caption{\tiny $t=4.0$,\\ EPGP}
  \end{subfigure}
  \\
  \begin{subfigure}[b]{0.1\textwidth}
    \centering
    \includegraphics[width=\textwidth]{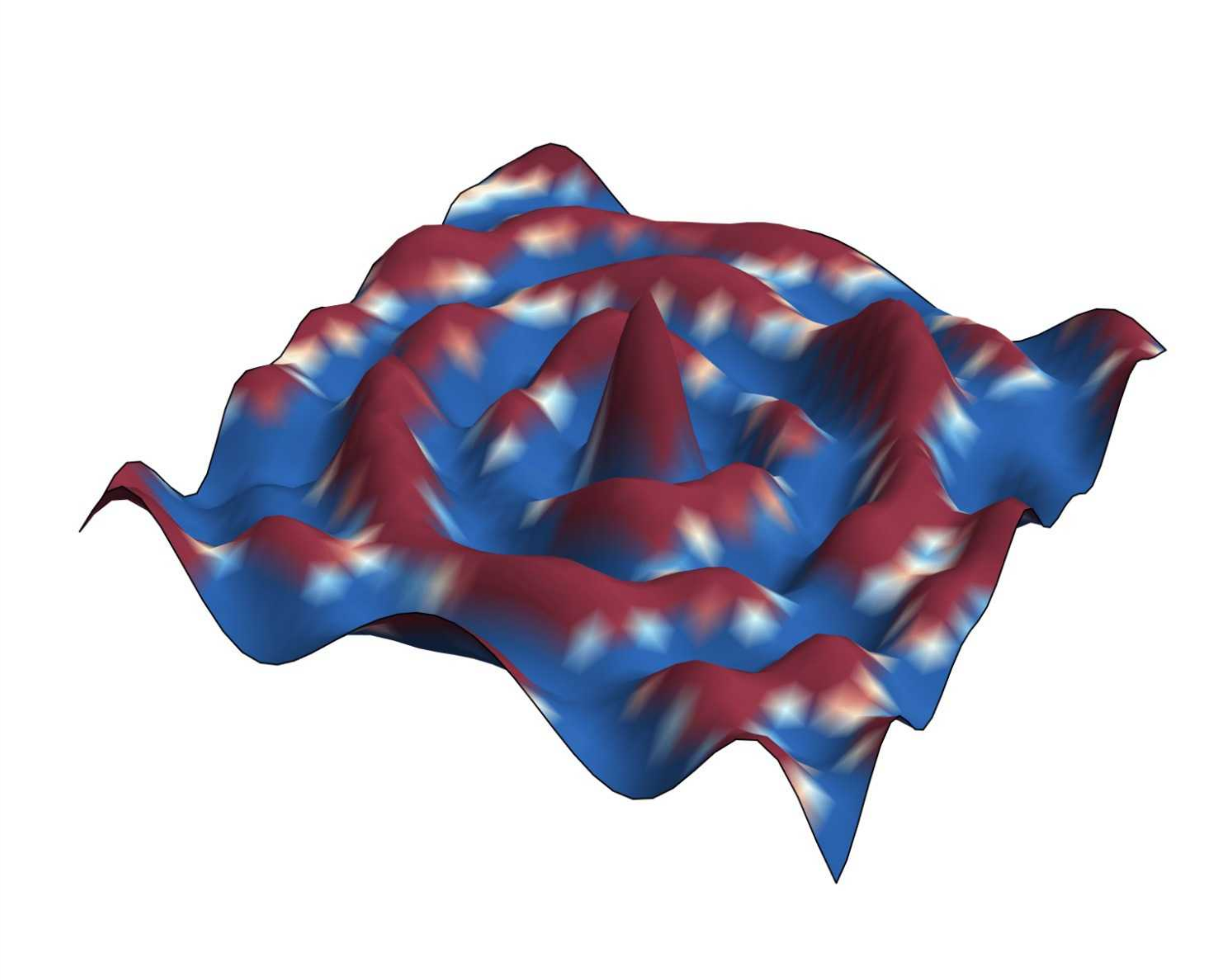}
    \caption{\tiny $t=0.0,$\\ B-EPGP}
  \end{subfigure}
  \hfill
  \begin{subfigure}[b]{0.1\textwidth}
    \centering
    \includegraphics[width=\textwidth]{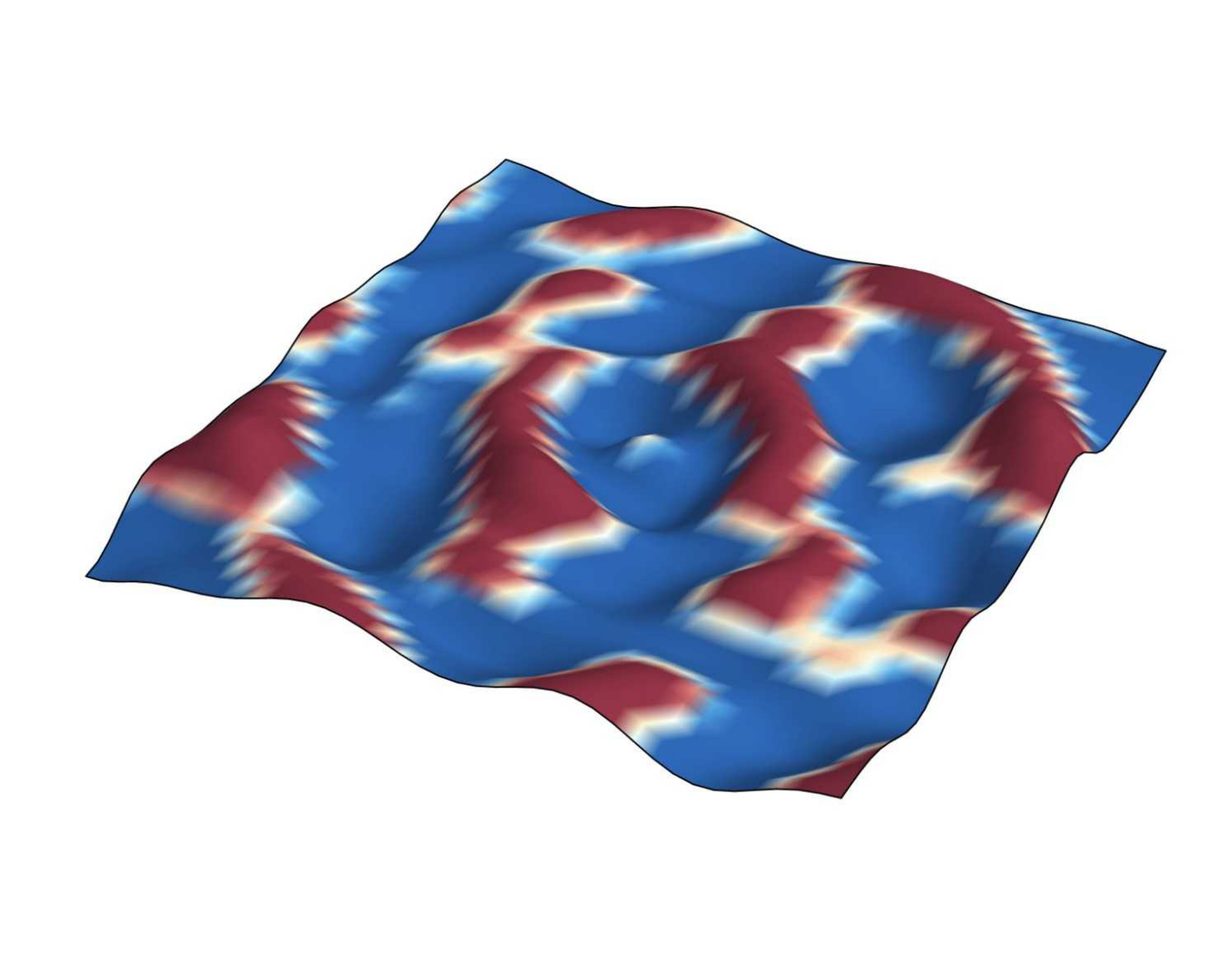}
    \caption{\tiny $t=0.5,$\\ B-EPGP}
  \end{subfigure}
  \hfill
  \begin{subfigure}[b]{0.1\textwidth}
    \centering
    \includegraphics[width=\textwidth]{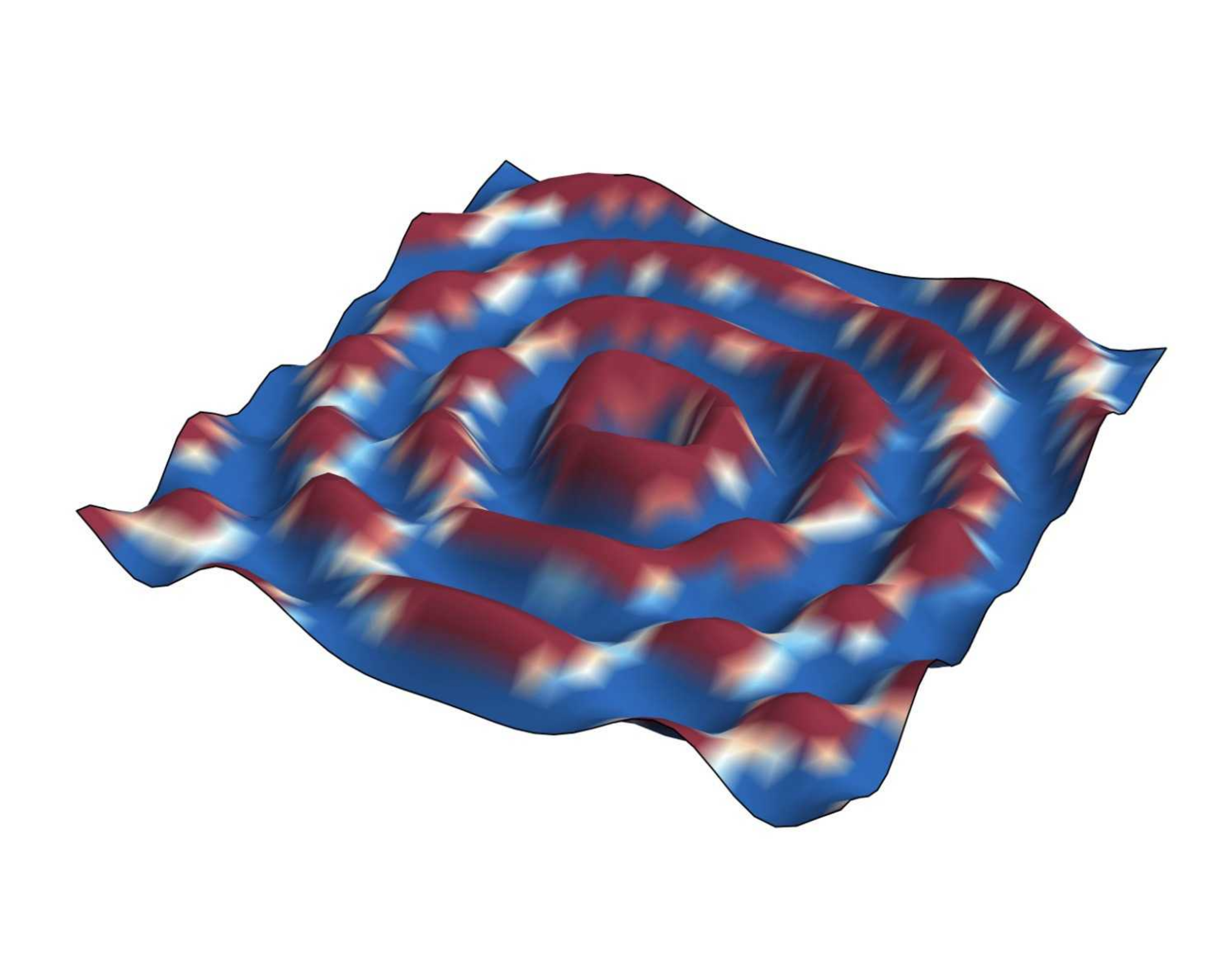}
    \caption{\tiny $t=1.0,$\\ B-EPGP}
  \end{subfigure}
  \hfill
  \begin{subfigure}[b]{0.1\textwidth}
    \centering
    \includegraphics[width=\textwidth]{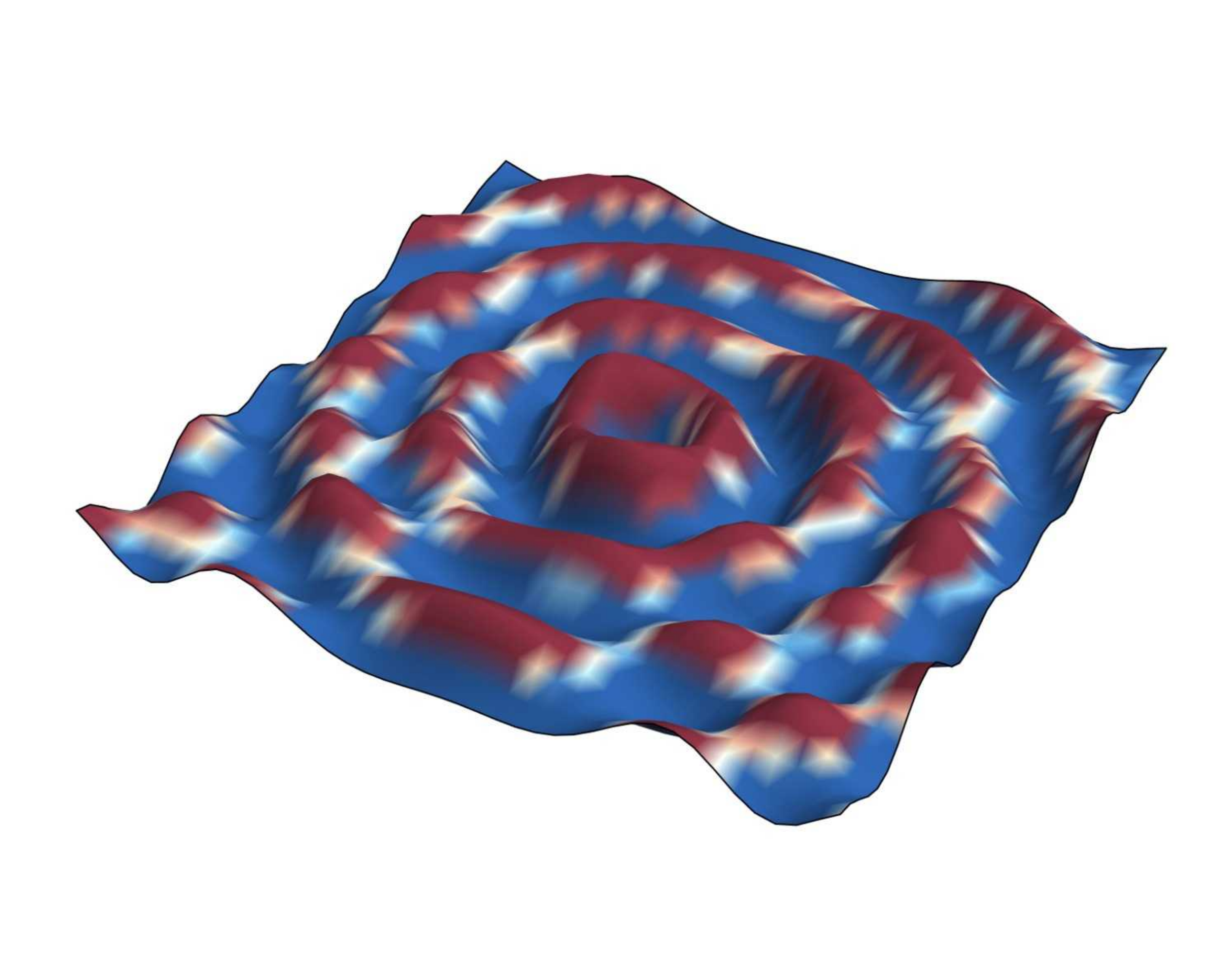}
    \caption{\tiny $t=1.5,$\\ B-EPGP}
  \end{subfigure}
  \hfill
  \begin{subfigure}[b]{0.1\textwidth}
    \centering
    \includegraphics[width=\textwidth]{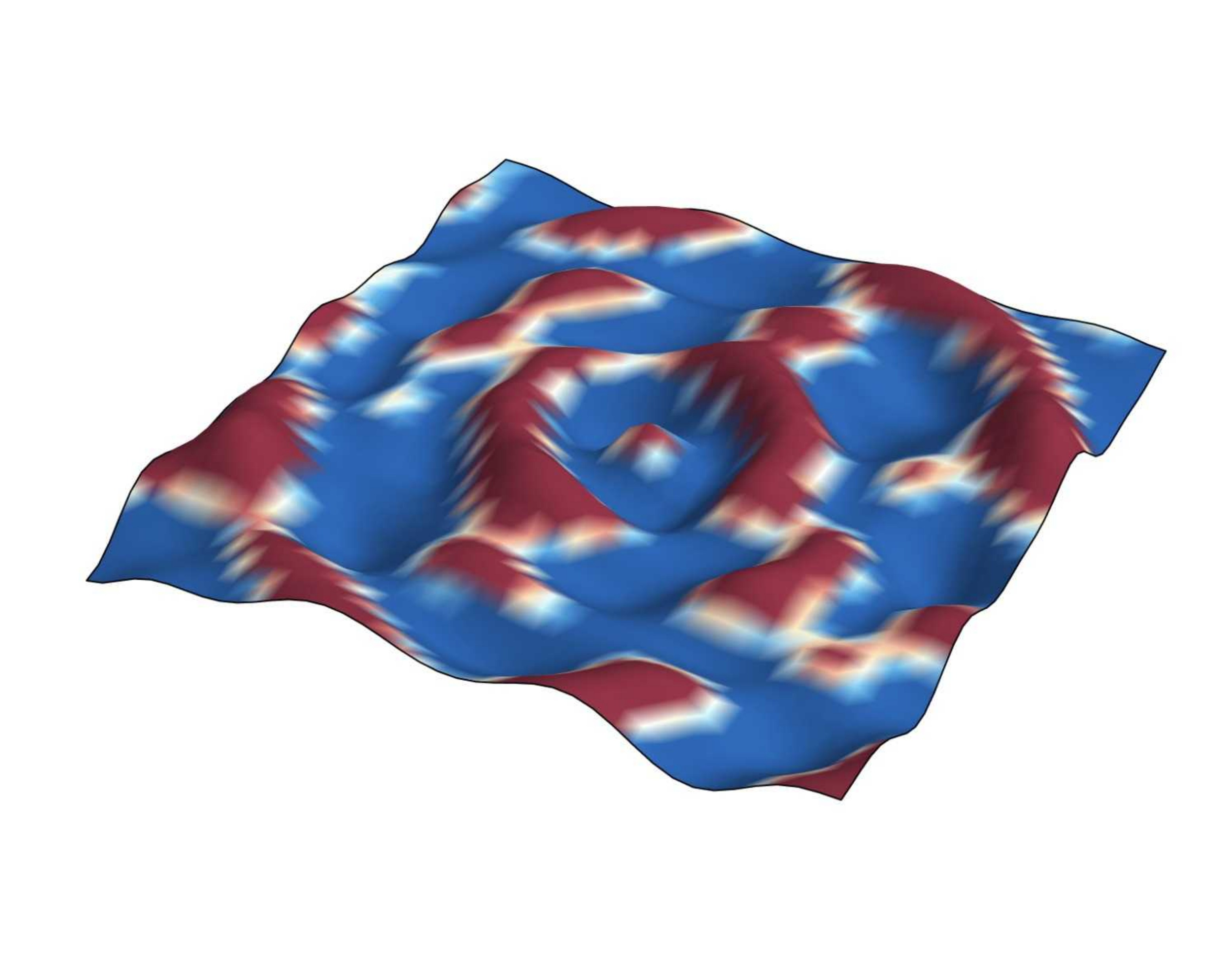}
    \caption{\tiny $t=2.0,$\\ B-EPGP}
  \end{subfigure}
  \hfill
  \begin{subfigure}[b]{0.1\textwidth}
    \centering
    \includegraphics[width=\textwidth]{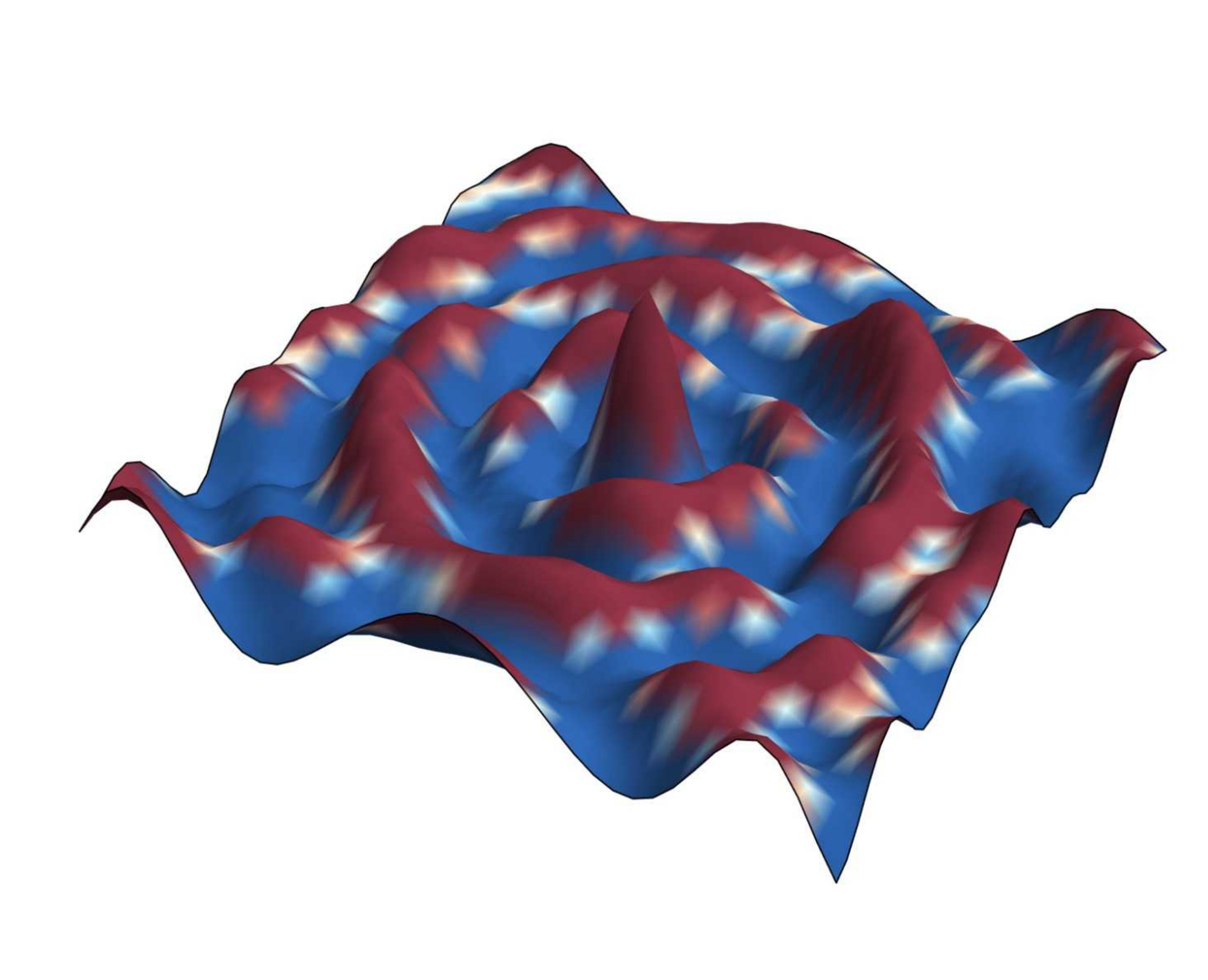}
    \caption{\tiny $t=2.5,$\\ B-EPGP}
  \end{subfigure}
  \hfill
  \begin{subfigure}[b]{0.1\textwidth}
    \centering
    \includegraphics[width=\textwidth]{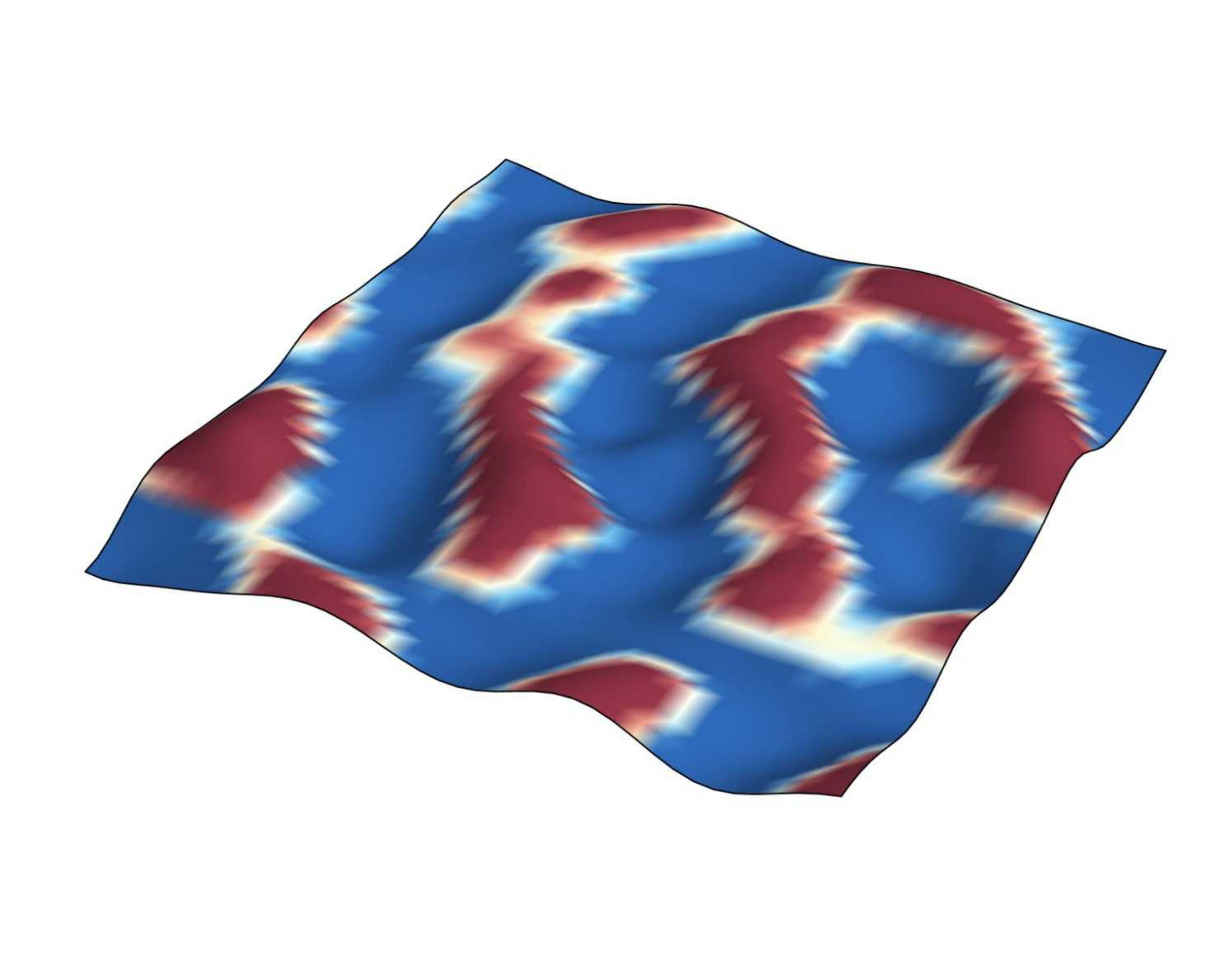}
    \caption{\tiny $t=3.0,$\\ B-EPGP}
  \end{subfigure}
  \hfill
  \begin{subfigure}[b]{0.1\textwidth}
    \centering
    \includegraphics[width=\textwidth]{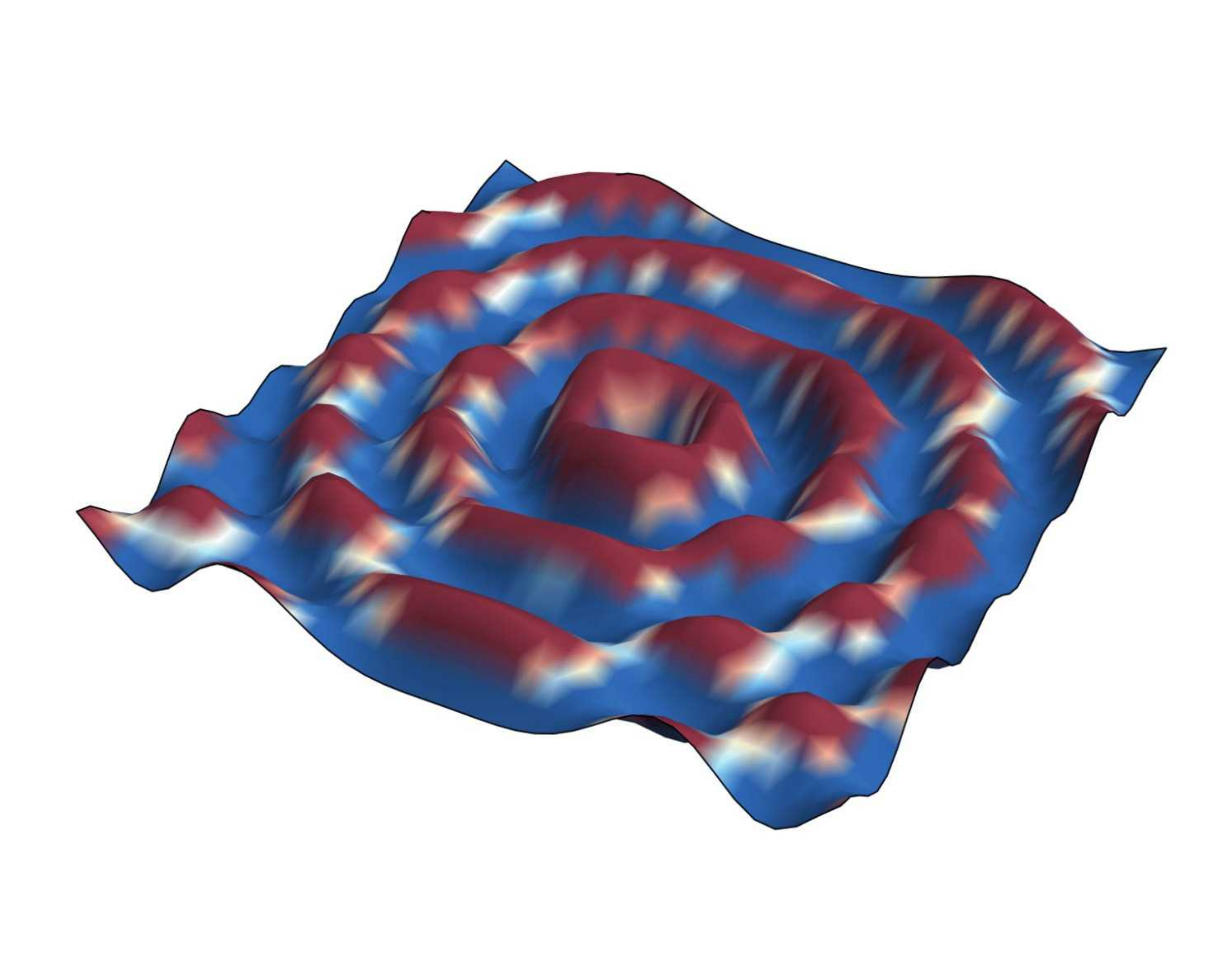}
    \caption{\tiny $t=3.5,$\\ B-EPGP}
  \end{subfigure}
  \hfill
  \begin{subfigure}[b]{0.1\textwidth}
    \centering
    \includegraphics[width=\textwidth]{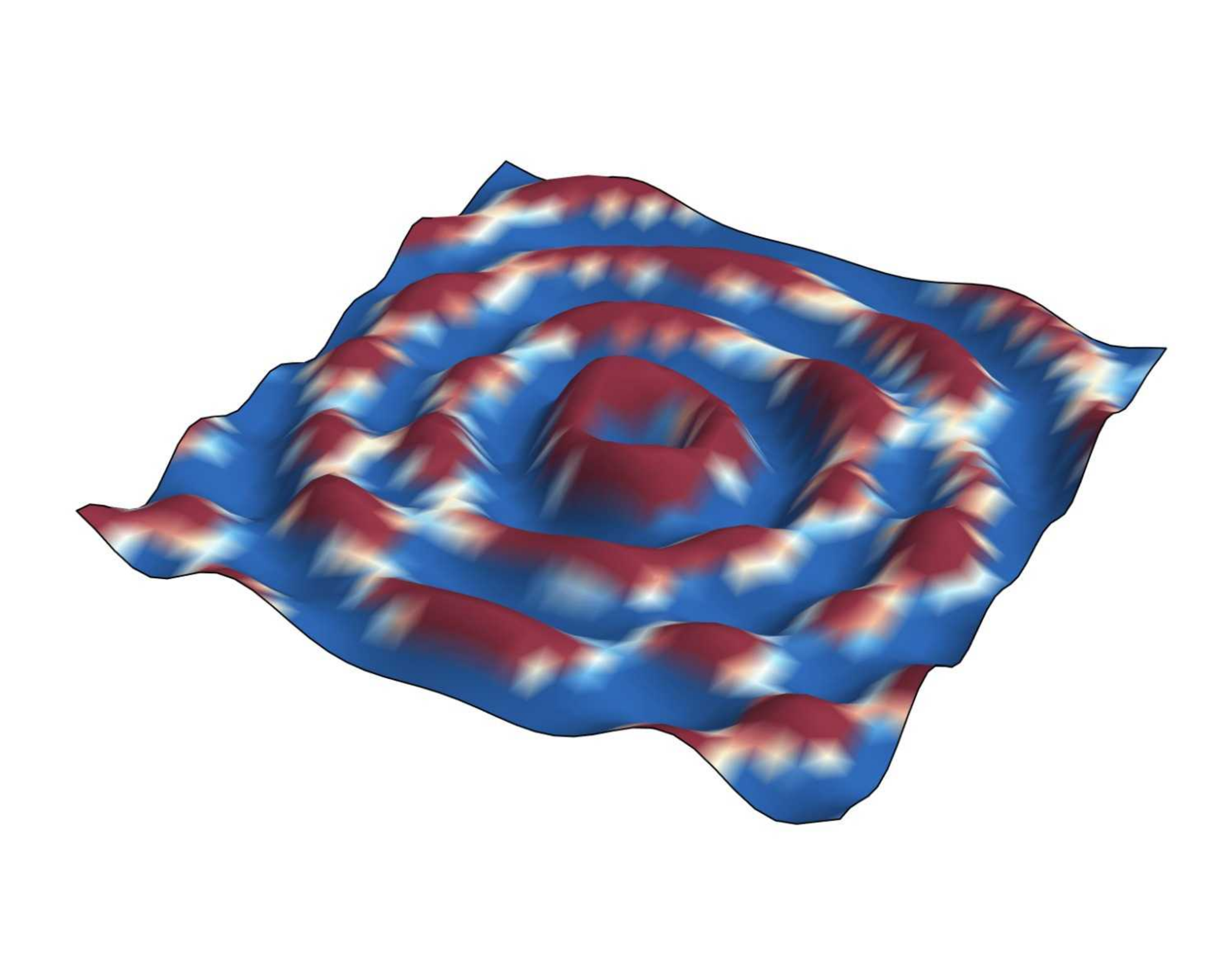}
    \caption{\tiny $t=4.0,$\\ B-EPGP}
  \end{subfigure}
  \caption{Predictions to the 2D wave equation from Section~\ref{sec:comparison_epgp} using EPGP and B-EPGP. We use this experiment as a benchmark as we can compare both predictions with a highly nontrivial \textit{exact} solution. 
  Animations can be found in the supplementary material as \texttt{halfplane\_EPGP.mp4} and \texttt{halfplane\_B-EPGP.mp4}.}
  \label{fig:bessel}
\end{figure*}

\begin{figure*}[b!]
  \vskip 0.2in
  \centering
  \begin{subfigure}[b]{0.48\textwidth}
    \centering
    \includegraphics[width=\textwidth]{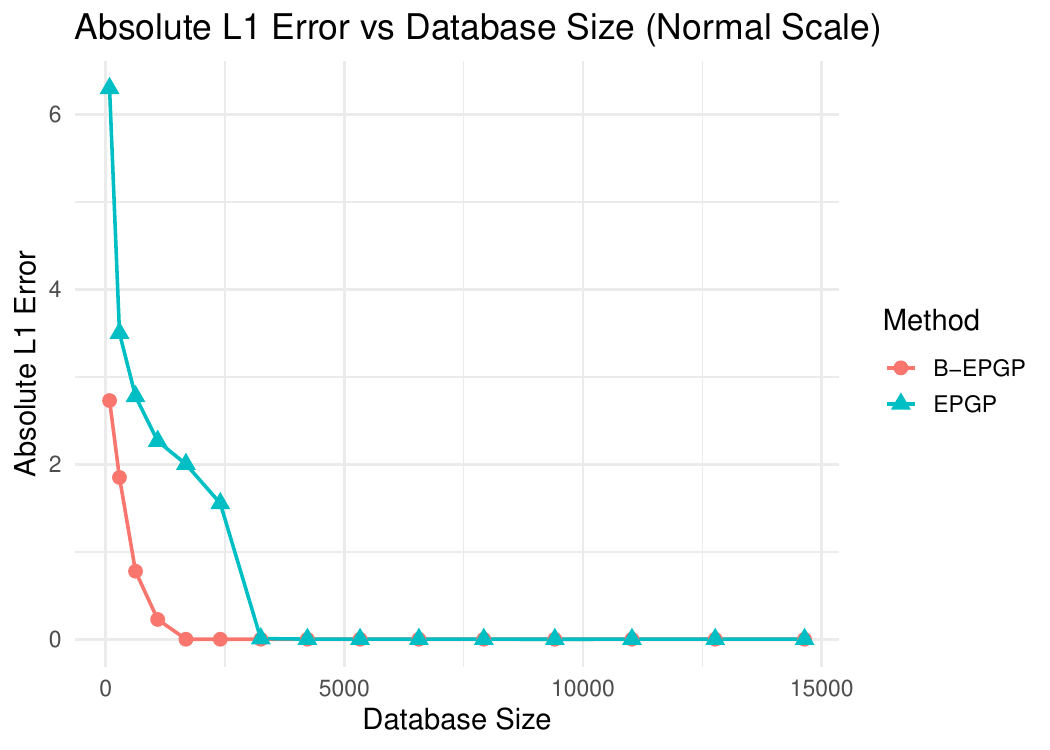}
    \caption{\tiny $t=0.0$}
  \end{subfigure}
  \hfill
  \begin{subfigure}[b]{0.48\textwidth}
    \centering
    \includegraphics[width=\textwidth]{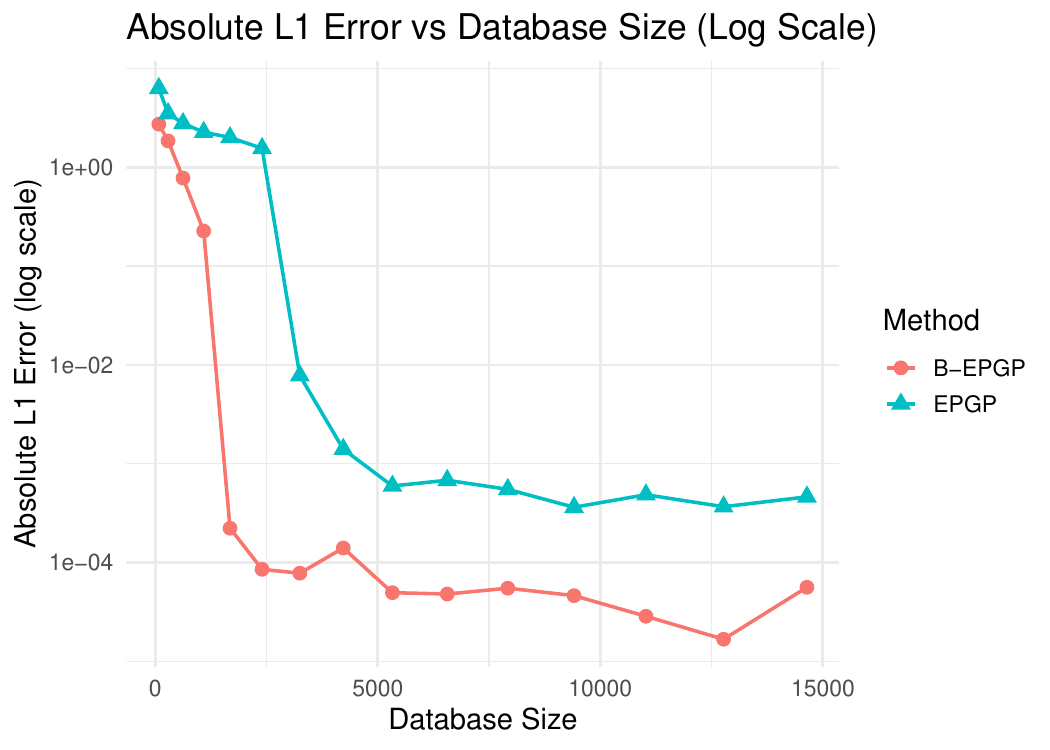}
    \caption{\tiny $t=0.1$}
  \end{subfigure}
  \hfill
  \\
  \begin{subfigure}[b]{0.48\textwidth}
    \centering
    \includegraphics[width=\textwidth]{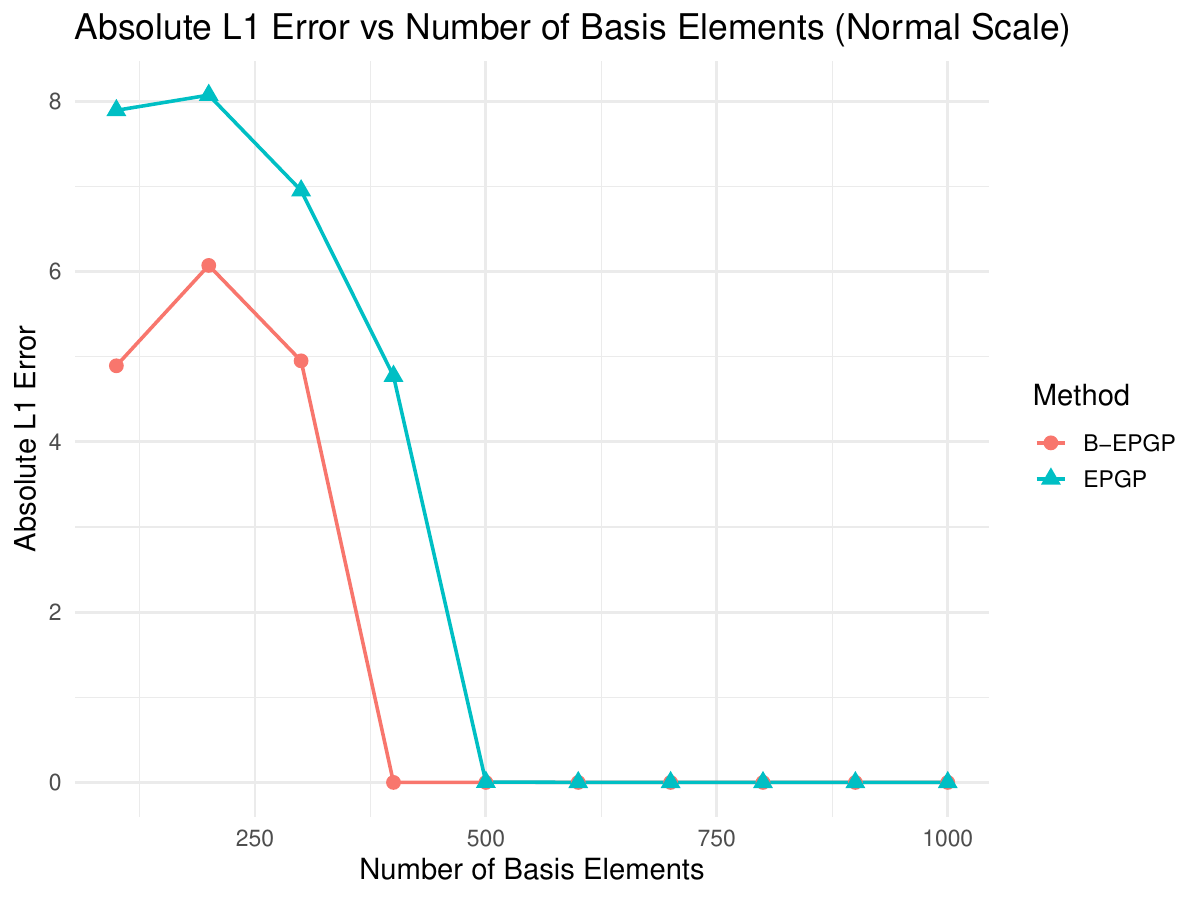}
    \caption{\tiny $t=0.2$}
  \end{subfigure}
  \hfill
  \begin{subfigure}[b]{0.48\textwidth}
    \centering
    \includegraphics[width=\textwidth]{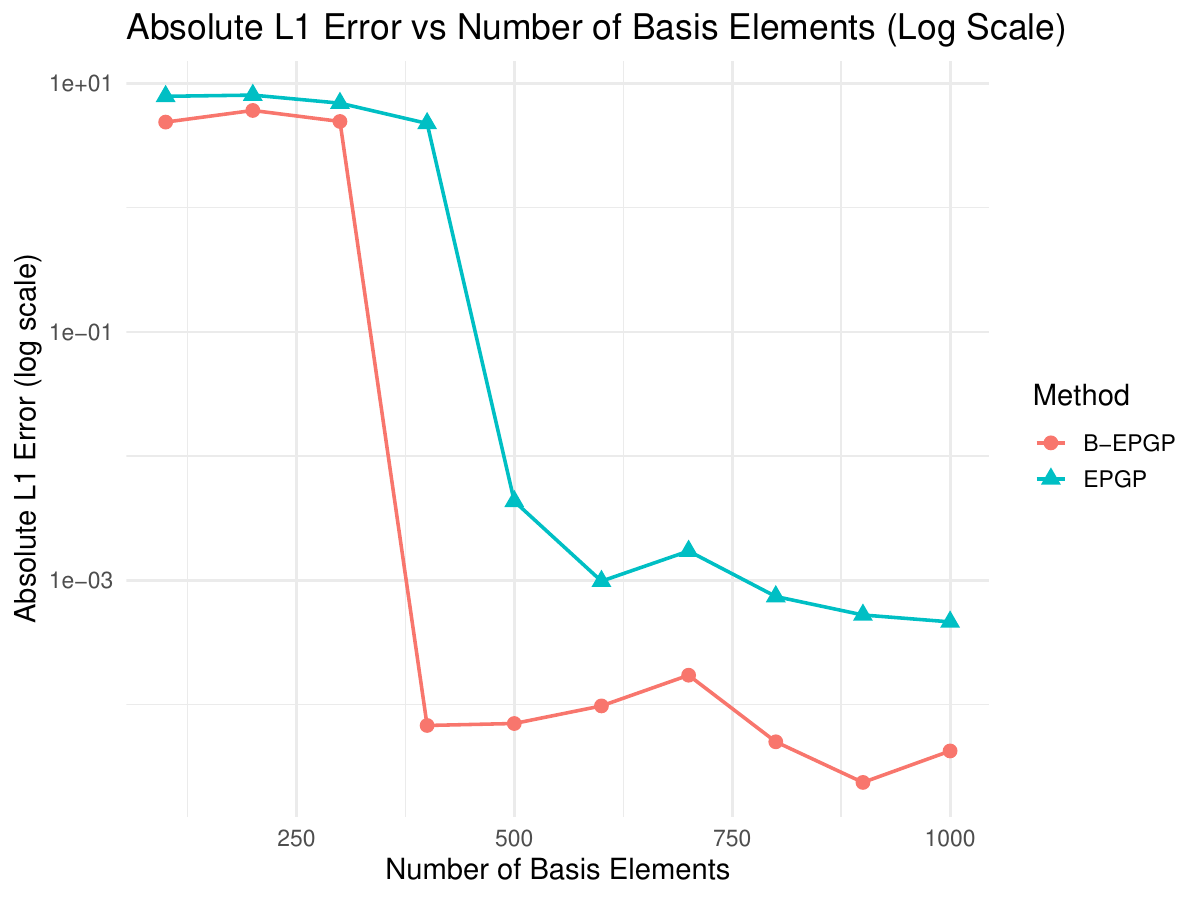}
    \caption{\tiny $t=0.3$}
  \end{subfigure}
  \caption{The relationship between the error and the data size / number of basis elements we use, for both EPGP and B-EPGP, and both in normal scale and log scale}
  \label{datasize}
\end{figure*}
\section{B-EPGP bases for wave equation in intersections of two halfspaces}\label{sec:wave_two_halfspaces}
We will also analyze the next simplest case which is more general than halfspaces (Appendix~\ref{sec:B-EPGP_calc}), namely intersections of two halfspaces. These, have only two relative positions, parallel or not, in which case we will distinguish between the case when the angle is acute or not.
\subsection{Parallel halfspaces: slabs}\label{sec:slab}  
Consider the equation in Example~\ref{exa:slab}, namely 1D wave equation in an interval:
\begin{align}\label{eq:1}
    \begin{cases}
        u_{tt}=u_{xx} &x\in(0,\pi), t\in\RR\\
        u(0,t)=u(\pi,t)=0 &t\in\RR.
    \end{cases}
    \end{align}
We will show how to use our B-EPGP algorithm  to calculate a basis. We consider $H_1=\{x=0\}$, $H_2=\{x=\pi\}$ and let $e^{at+bx}$ be a solution of the wave equation, meaning that $a^2=b^2$, so we will simply write $a=\pm b$. We obtain $e^{b(\pm t+x)}$. Using Section~\ref{sec:B-EPGP_calc}, this basis is extended to a basis which satisfies the $H_1$ condition by 
\begin{align*}
e^{b(\pm t+x)}-e^{b(\pm t-x)}=e^{\pm bt}(e^{bx}-e^{-bx})=2e^{\pm bt}\sinh(bx).
\end{align*}
Here we observe a shortcut: if we set $b\in \sqrt{-1}\mathbb Z$, we obtain a set of functions $\{e^{\pm \sqrt{-1}jt}\sin(jx)\}_{j\in\mathbb Z}$ which satisfy the boundary condition on $H_2$ as well. This set has linear span which is dense in the set of all solutions to \eqref{eq:1} by Theorem~\ref{thm:heat_wave_rectangle}.

A similar calculation gives the basis $\{e^{\pm \sqrt{-1}jt}\cos(jx)\}_{j\in\mathbb Z}$ in the case of Neumann boundary conditions $u_x=0$.

We present our solution to the initial boundary value problem \eqref{eq:1} computed using the B-EPGP basis above in Figure~\ref{fig:slab}.

\begin{figure*}[hbt!]
    \begin{subfigure}[b]{0.48\textwidth}
    \centering
    \includegraphics[width=\textwidth]{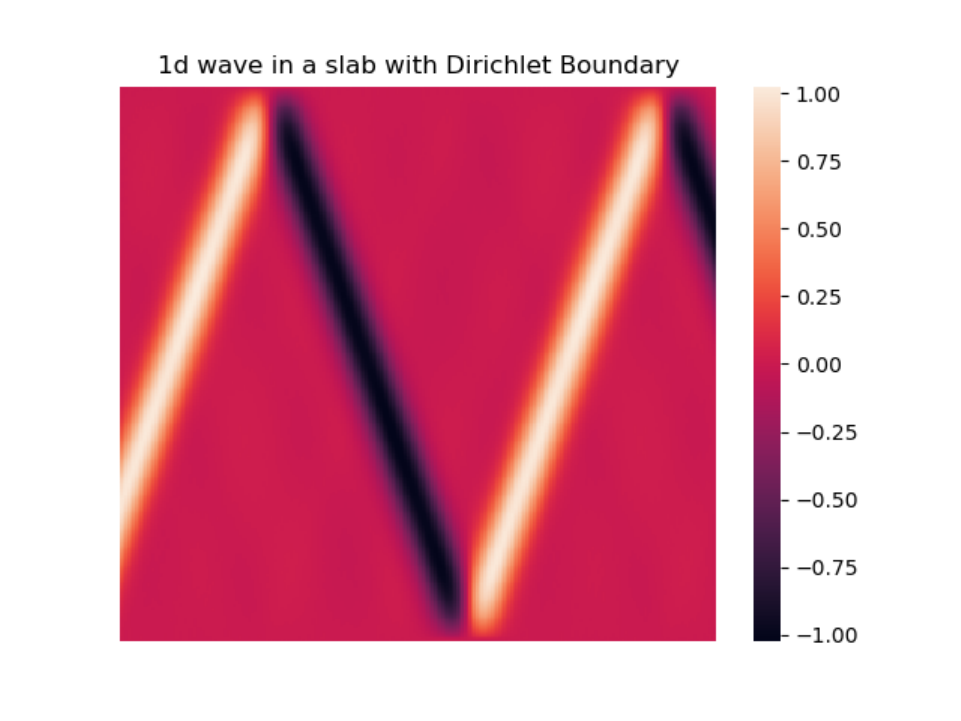}
    \caption{Dirichlet boundary condition}
  \end{subfigure}
  \hfill
  \begin{subfigure}[b]{0.48\textwidth}
    \centering
    \includegraphics[width=\textwidth]{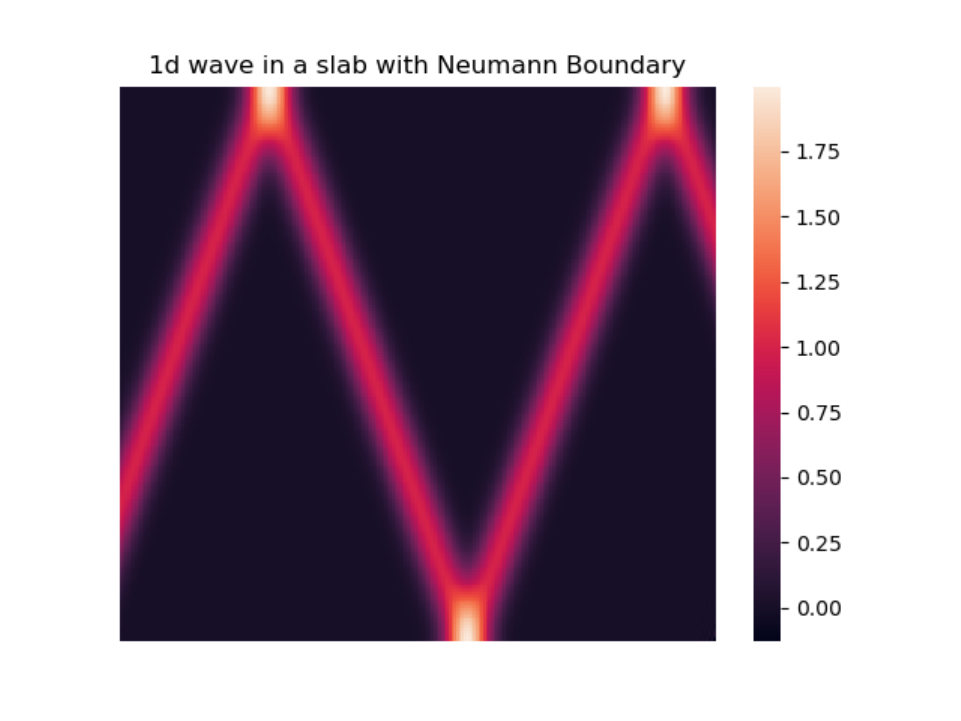}
    \caption{Neumann boundary condition}
  \end{subfigure}
  \caption{B-EPGP solution for 1D wave in an infinite slab with different boundary conditions. The difference color is explained by the different reflections: For Dirichlet, a negative wave is reflected (``hard boundary'') and for Neumann a positive wave is reflected (``soft boundary''). This behavior is already readable from the bases calculated in Appendix~\ref{sec:slab}.}\label{fig:slab}
\end{figure*}

\subsection{Large wedges}\label{sec:wedge_90deg}  
We will consider the case when the two halfspaces make an angle of $90^\circ$ (see Example~\ref{exa:wedge}) and calculate the basis for
$$
  \begin{cases}
        \Box u(x,y,t)=0&(x,y)\in(0,\infty)^2\\
        u(0,y,t)=0& y\in(0,\infty)\\
        u_y(x,0,t)=0 &x\in(0,\infty).
    \end{cases}
$$
We will calculate our \textbf{B-EPGP algorithm} and show the calculations in some detail.

We notice that the boundary hyperplanes are $H_1=\{x=0\}$ and $H_2=\{y=0\}$. 

In Step~(i), we begin with one exponential solution
$$
e^{at+bx+cy}\quad\text{with }a^2=b^2+c^2.
$$
This is extended to a basis that satisfies the $H_1$ condition by
$$
e^{at+bx+cy}-e^{at-bx+cy}.
$$
This follows from the calculations in Section~\ref{sec:B-EPGP_wave_calc}. Similarly, we extend to a basis that satisfies the $H_2$ condition
$$
e^{at+bx+cy}+e^{at+bx-cy}.
$$
This gives us the intermediate ``basis'', at the end of Step~(ii),
$$
\{e^{at+bx+cy}-e^{at-bx+cy},e^{at+bx+cy}+e^{at+bx-cy}\colon a^2+b^2=c^2\,\}.
$$

We proceed with Step~(i') and check the both boundary conditions. We notice that each type of basis element satisfies exactly one of the two boundary conditions, so we must return to Step~(ii).

To extend the term $e^{at+bx+cy}-e^{at-bx+cy}$ (which satisfies the $H_1$ condition) to satisfy the $H_2$ condition, we use the same calculation as above to obtain
$$
e^{at+bx+cy}-e^{at-bx+cy}+(e^{at+bx-cy}-e^{at-bx-cy}).
$$
To extend the term $e^{at+bx+cy}+e^{at+bx-cy}$ (which satisfies the $H_2$ condition) to satisfy the $H_1$ condition, we get
$$
e^{at+bx+cy}+e^{at+bx-cy}-(e^{at-bx+cy}+e^{at-bx-cy}).
$$
Coincidentally, both basis elements constructed above equal
\begin{align}\label{eq:basis_wedge_90deg}
e^{at+bx+cy}-e^{at-bx+cy}+e^{at+bx-cy}-e^{at-bx-cy}\quad\text{for }a^2=b^2+c^2,
\end{align}
which can easily be seen to satisfy both boundary conditions. In particular, we obtain that the algorithm terminates. Thus we obtained the basis claimed in Example~\ref{exa:wedge}.
\subsection{Small wedges}\label{sec:wedge_bad} 
We will also consider and also implement the case of an acute wedge, e.g. \(\Omega=\{(x,y):x>0,y<x\}\) and $t\in(0,8)$. We will look at the 2D wave equation with Neumann boundary conditions
\begin{align*}
    \begin{cases}
        u_{tt}-( u_{xx}+u_{yy})=0&\text{in }\Omega\times(0,8)\\
        u(0,x,y)= \exp (-10((x-3)^2+(y-1)^2))&\text{in }\Omega\\
        u_t(0,x,y) = 0&\text{in }\Omega\\
        u_n(t,x,y) = 0 &\text{on }  \p \Omega\times(0,8).
    \end{cases}
\end{align*}
The B-EPGP basis can be computed along the same lines as the case of the right angle above. However, the calculations are much more ample so we only state the result here:
\begin{align*}
    &\ e^{z_1x+z_2y+\tau t}+e^{-z_1x+z_2y+\tau t}+e^{z_1x-z_2y+\tau t}+e^{-z_1x-z_2y+\tau t}\\
    &+ e^{z_2x+z_1y+\tau t}+e^{-z_2x+z_1y+\tau t}+e^{z_2x-z_1y+\tau t}+e^{-z_2x-z_1y+\tau t}\\
    &+ e^{z_1x+z_2y-\tau t}+e^{-z_1x+z_2y-\tau t}+e^{z_1x-z_2y-\tau t}+e^{-z_1x-z_2y-\tau t}\\
    &+ e^{z_2x+z_1y-\tau t}+e^{-z_2x+z_1y-\tau t}+e^{z_2x-z_1y-\tau t}+e^{-z_2x-z_1y-\tau t}\quad\text{for }\tau^2=z_1^2+z_2^2.
\end{align*}
We present our solution to the initial boundary value problem computed using the B-EPGP basis above in Figure~\ref{fig:good_wedge}.

\begin{figure*}[hbt!]
  \vskip 0.2in
  \centering
  \begin{subfigure}[b]{0.1\textwidth}
    \centering
    \includegraphics[width=\textwidth]{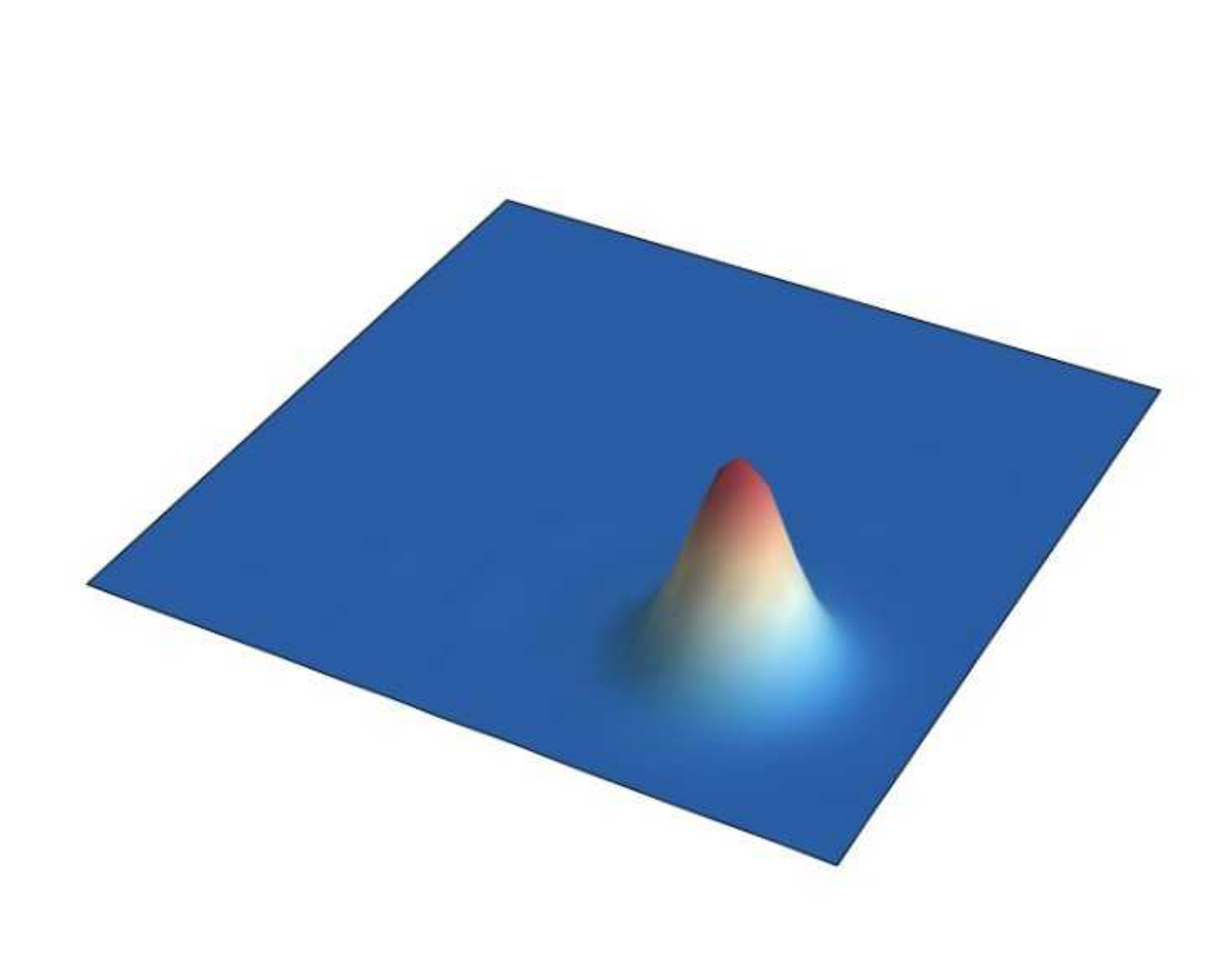}
    \caption{\tiny $t=0.0$}
  \end{subfigure}
  \hfill
  \begin{subfigure}[b]{0.1\textwidth}
    \centering
    \includegraphics[width=\textwidth]{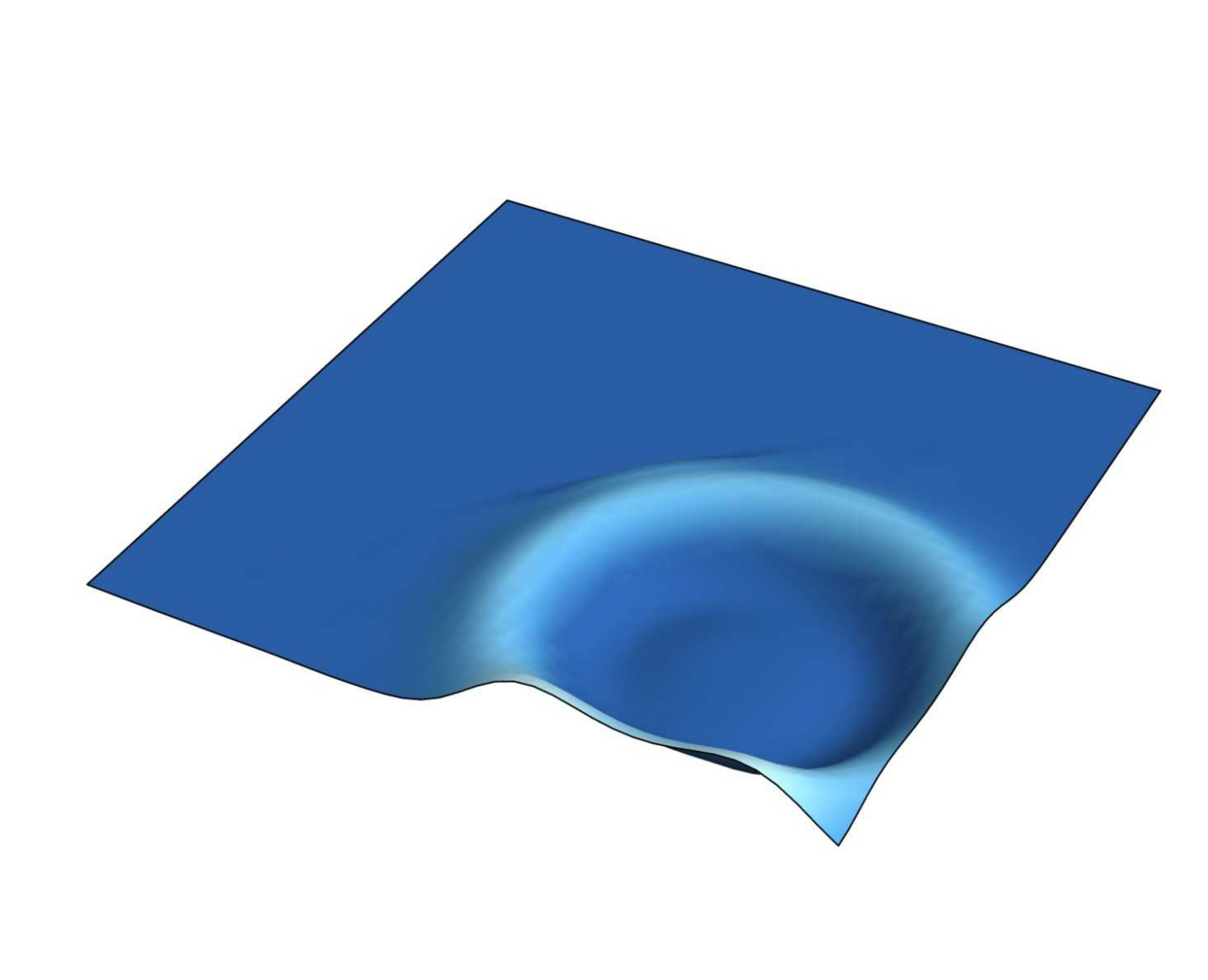}
    \caption{\tiny $t=1.0$}
  \end{subfigure}
  \hfill
  \begin{subfigure}[b]{0.1\textwidth}
    \centering
    \includegraphics[width=\textwidth]{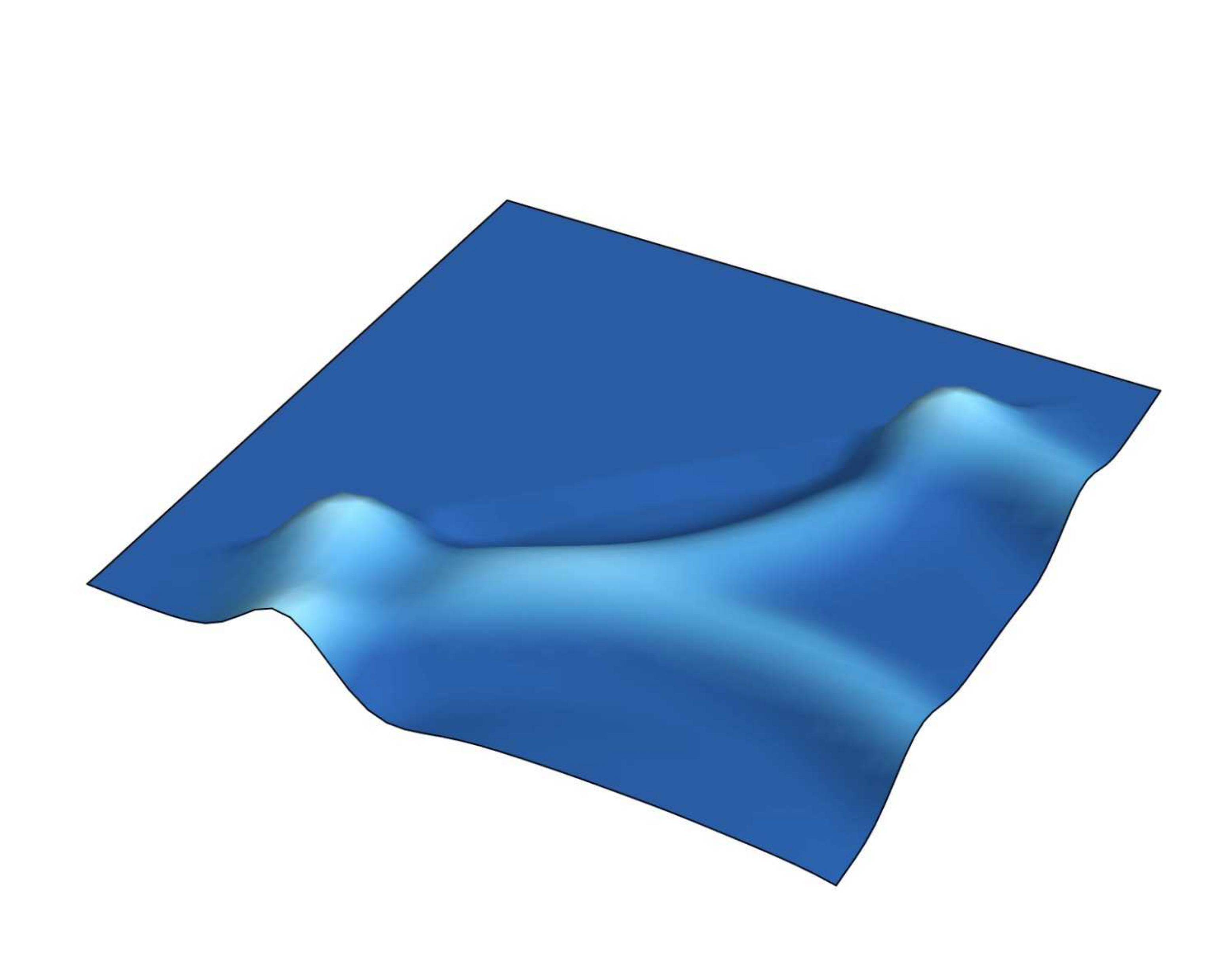}
    \caption{\tiny $t=2.0$}
  \end{subfigure}
  \hfill
  \begin{subfigure}[b]{0.1\textwidth}
    \centering
    \includegraphics[width=\textwidth]{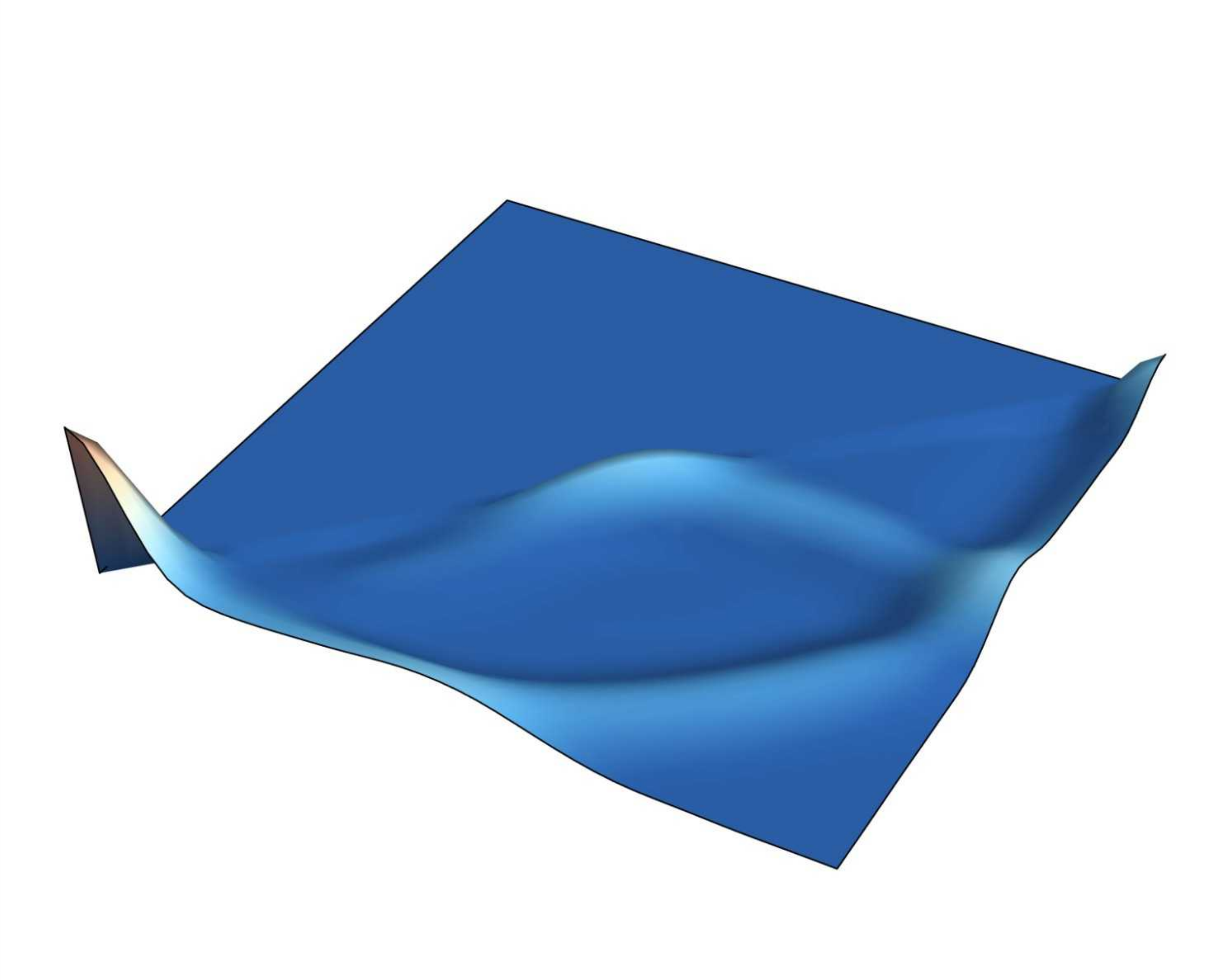}
    \caption{\tiny $t=3.0$}
  \end{subfigure}
  \hfill
  \begin{subfigure}[b]{0.1\textwidth}
    \centering
    \includegraphics[width=\textwidth]{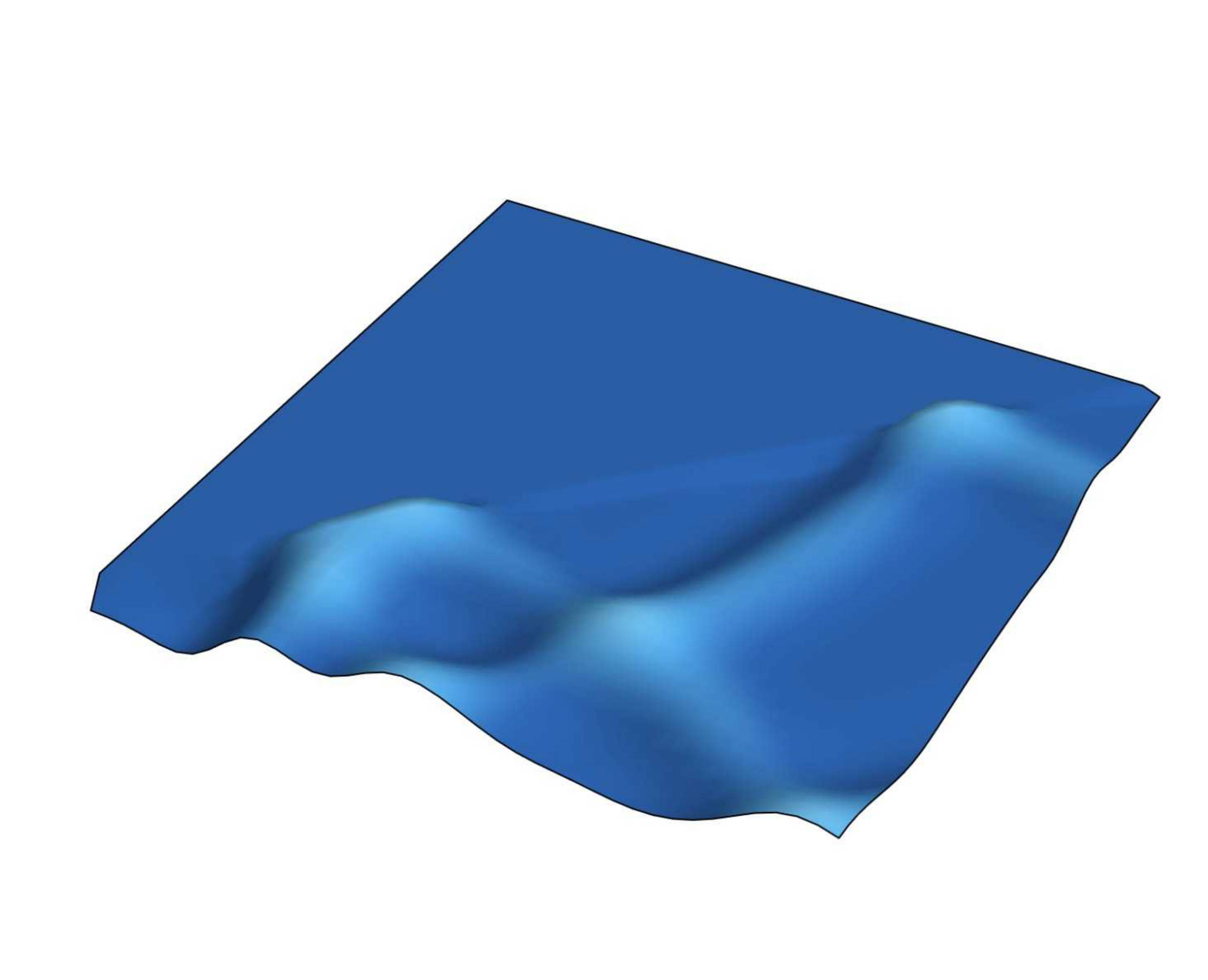}
    \caption{\tiny $t=4.0$}
  \end{subfigure}
  \hfill
  \begin{subfigure}[b]{0.1\textwidth}
    \centering
    \includegraphics[width=\textwidth]{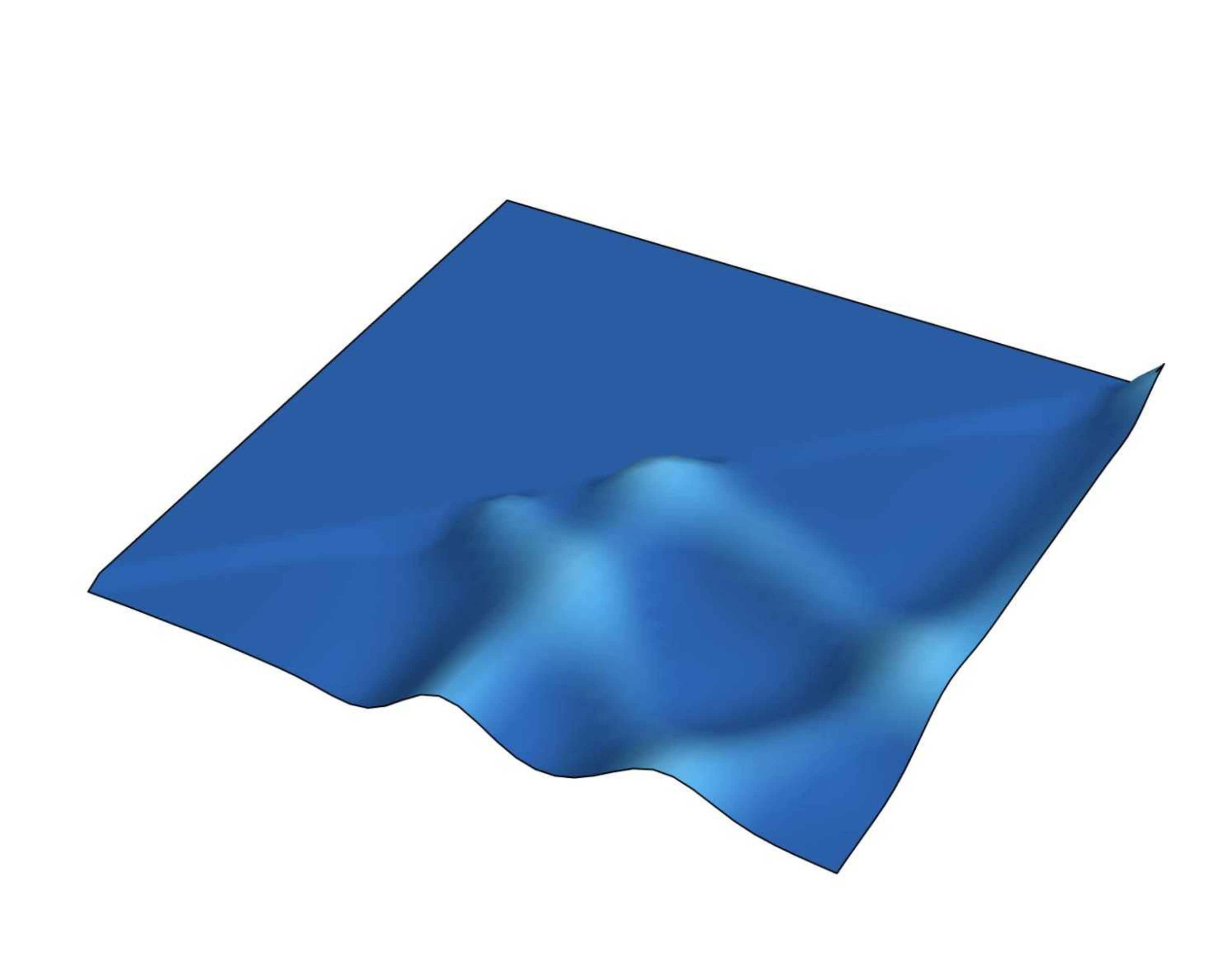}
    \caption{\tiny $t=5.0$}
  \end{subfigure}
  \hfill
  \begin{subfigure}[b]{0.1\textwidth}
    \centering
    \includegraphics[width=\textwidth]{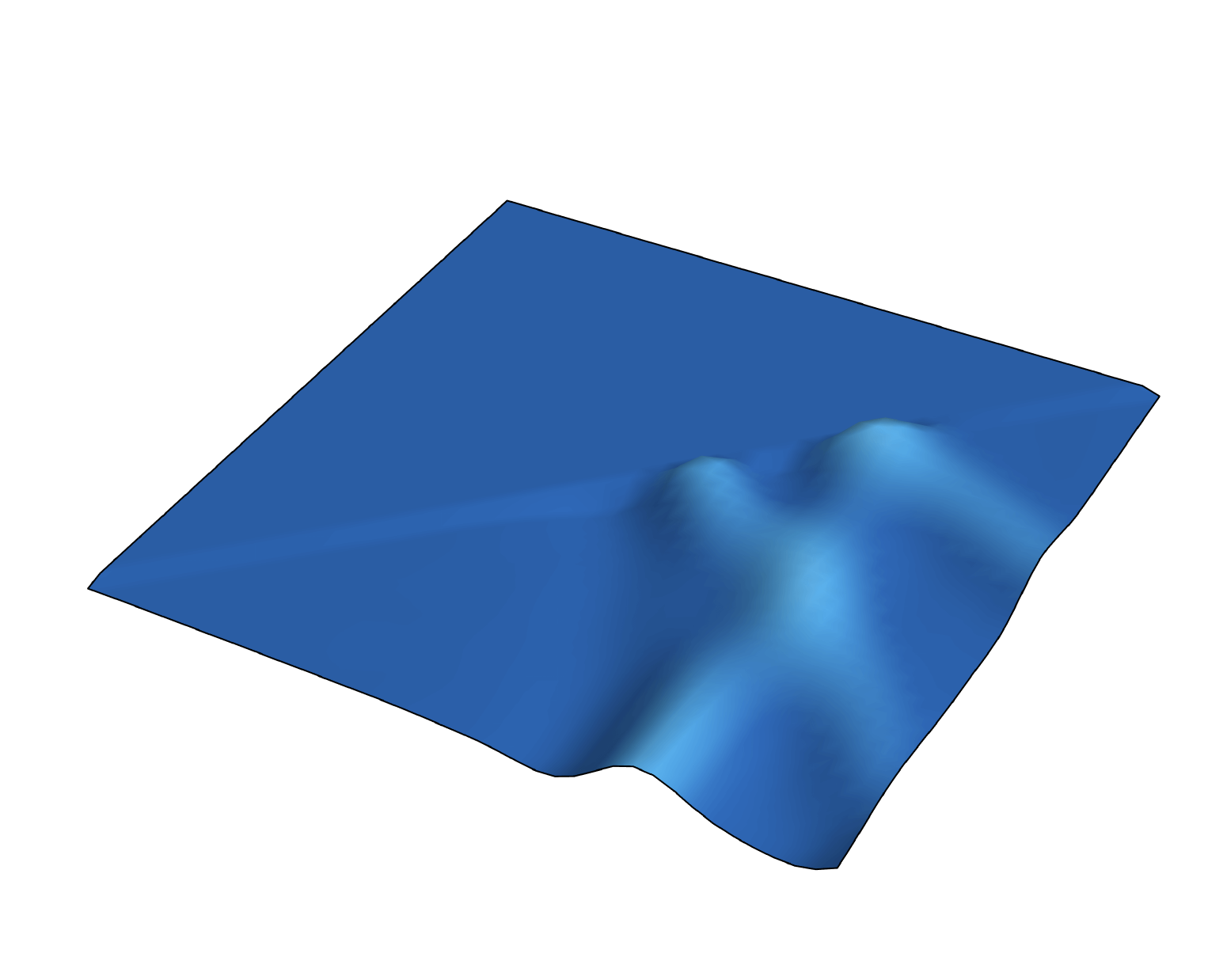}
    \caption{\tiny $t=6.0$}
  \end{subfigure}
  \hfill
  \begin{subfigure}[b]{0.1\textwidth}
    \centering
    \includegraphics[width=\textwidth]{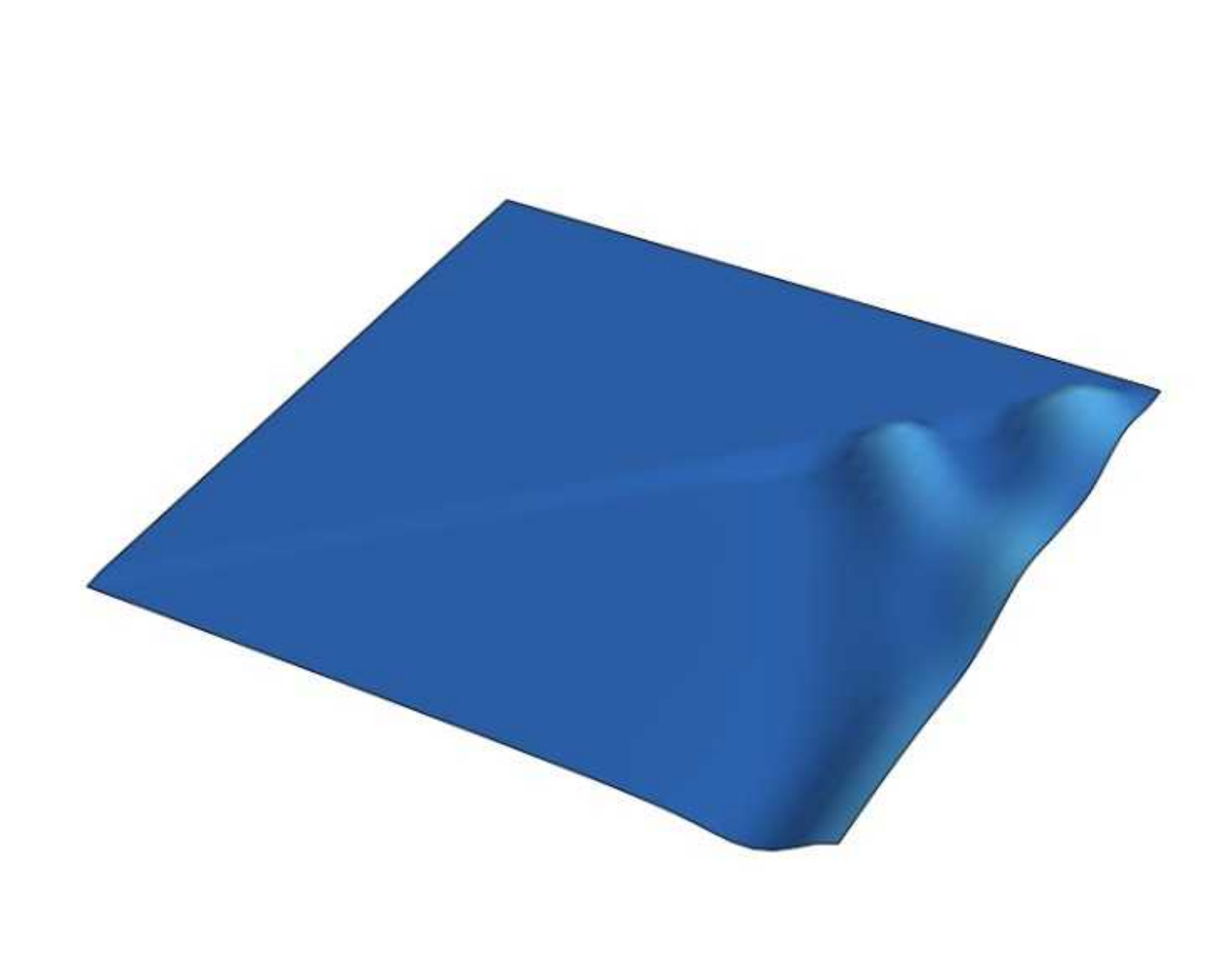}
    \caption{\tiny $t=7.0$}
  \end{subfigure}
  \hfill
  \begin{subfigure}[b]{0.1\textwidth}
    \centering
    \includegraphics[width=\textwidth]{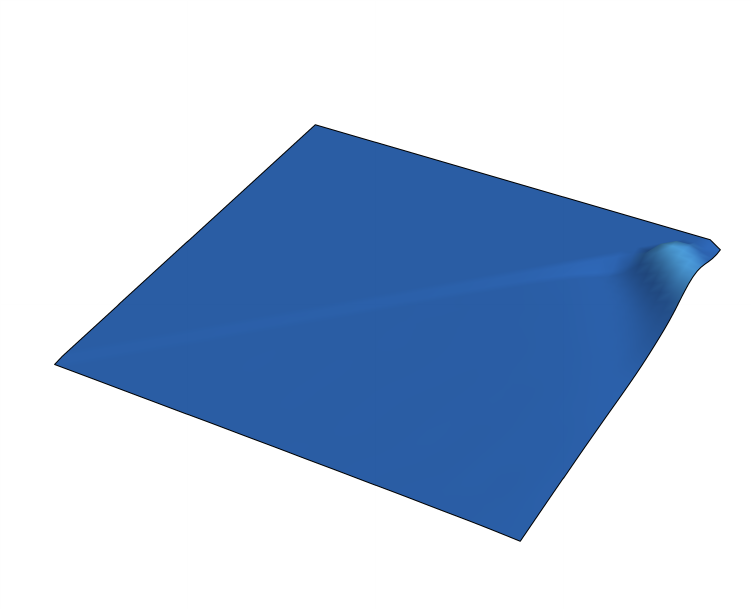}
    \caption{\tiny $t=8.0$}
  \end{subfigure}
  \caption{Solution of 2D wave equation in a $45^\circ$ wedge domain evaluated at 9 timepoints. Since the domain is unbounded, the wave leaves the domain in finite time.
  The animation can be found in the suplementary material as \texttt{wedge\_B-EPGP.mp4}.}\label{fig:good_wedge}
  \vskip -0.2in
\end{figure*}

  \section{Heat equation in 2D}\label{sec:heat}
  \begin{figure*}[b!]
  \vskip 0.2in
  \centering
  \begin{subfigure}[b]{0.15\textwidth}
    \centering
    \includegraphics[width=\textwidth]{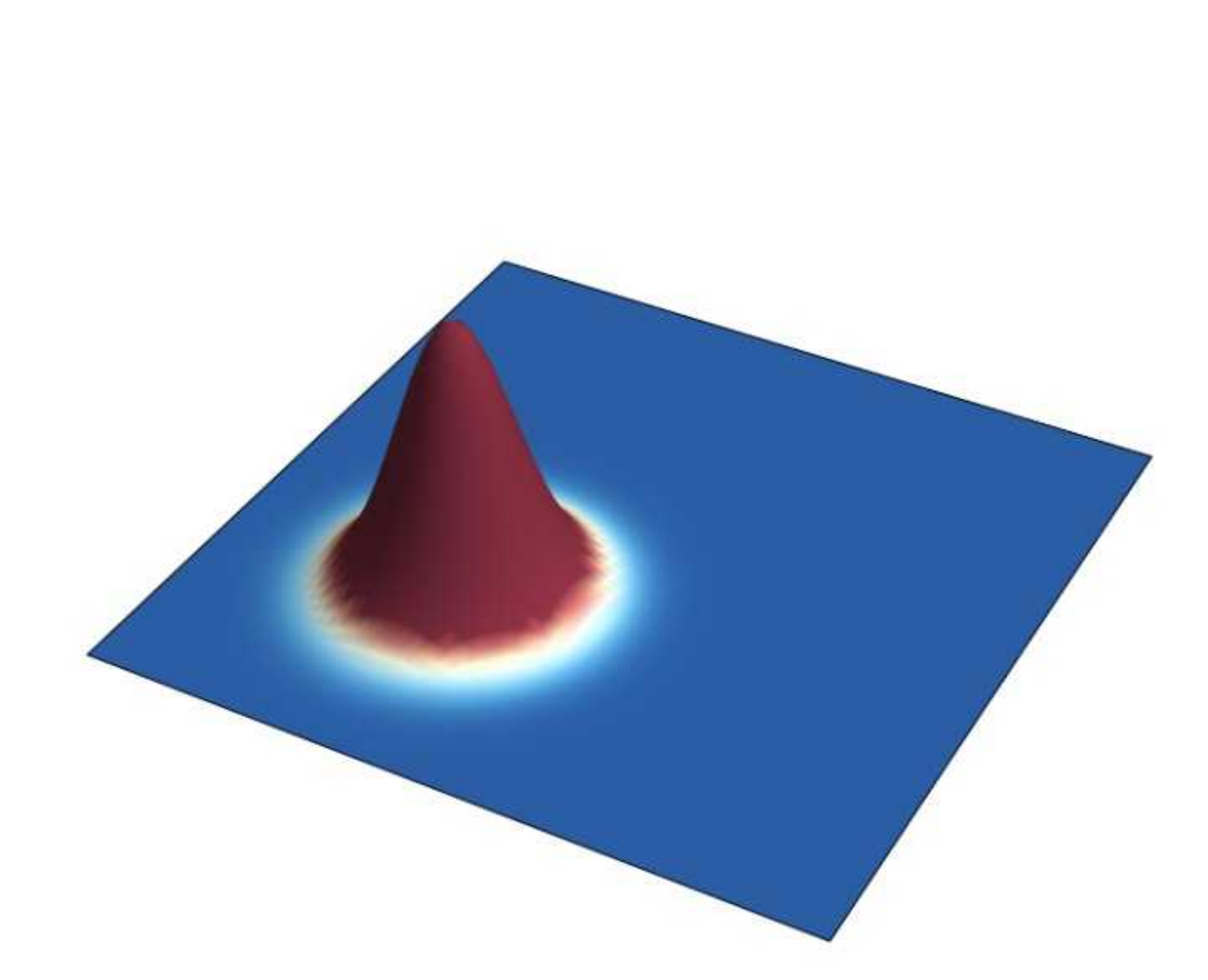}
    \caption{\tiny $t=0.0$,\\ free}
  \end{subfigure}
  \hfill
  \begin{subfigure}[b]{0.15\textwidth}
    \centering
    \includegraphics[width=\textwidth]{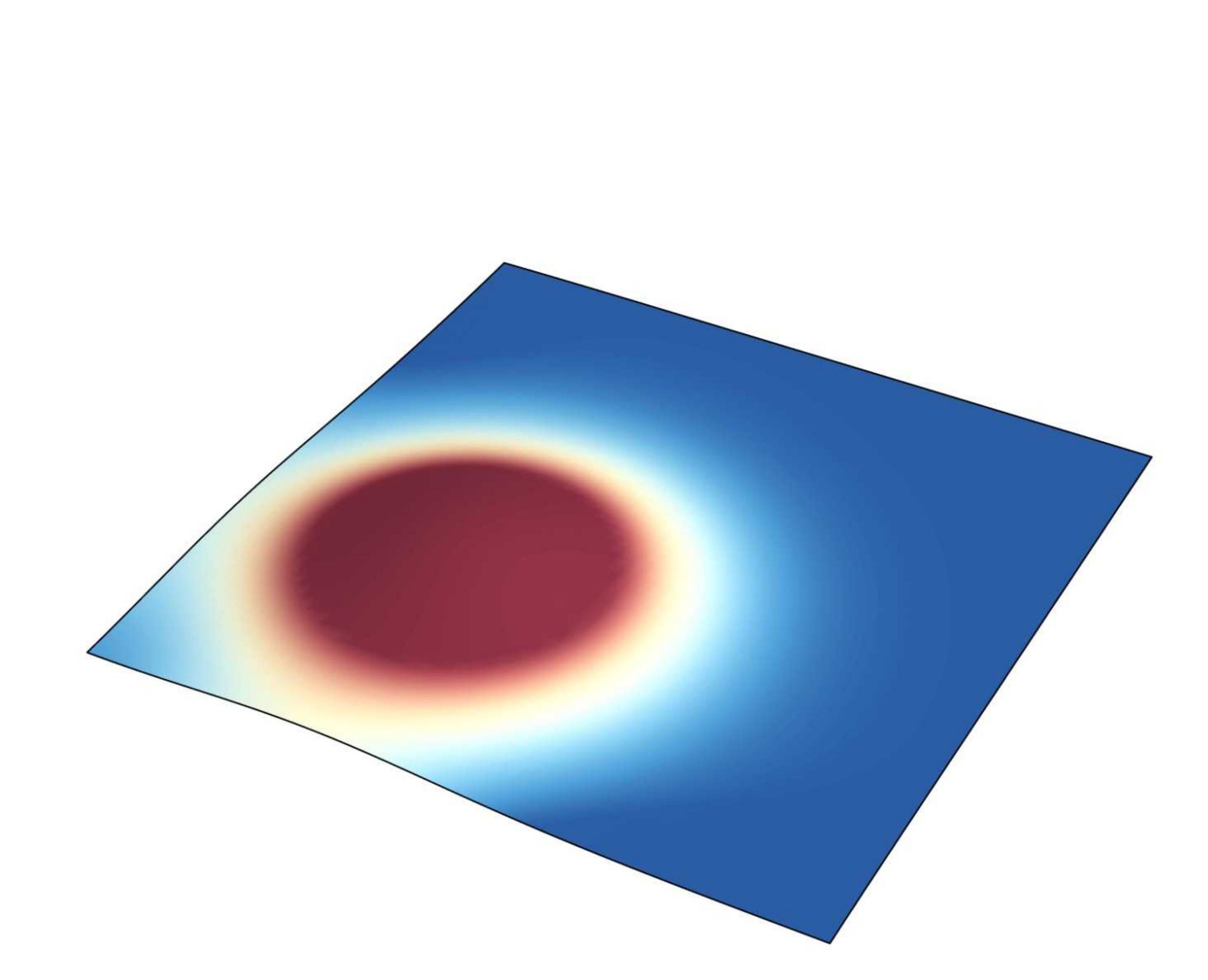}
    \caption{\tiny $t=0.1$,\\ free}
  \end{subfigure}
  \hfill
  \begin{subfigure}[b]{0.15\textwidth}
    \centering
    \includegraphics[width=\textwidth]{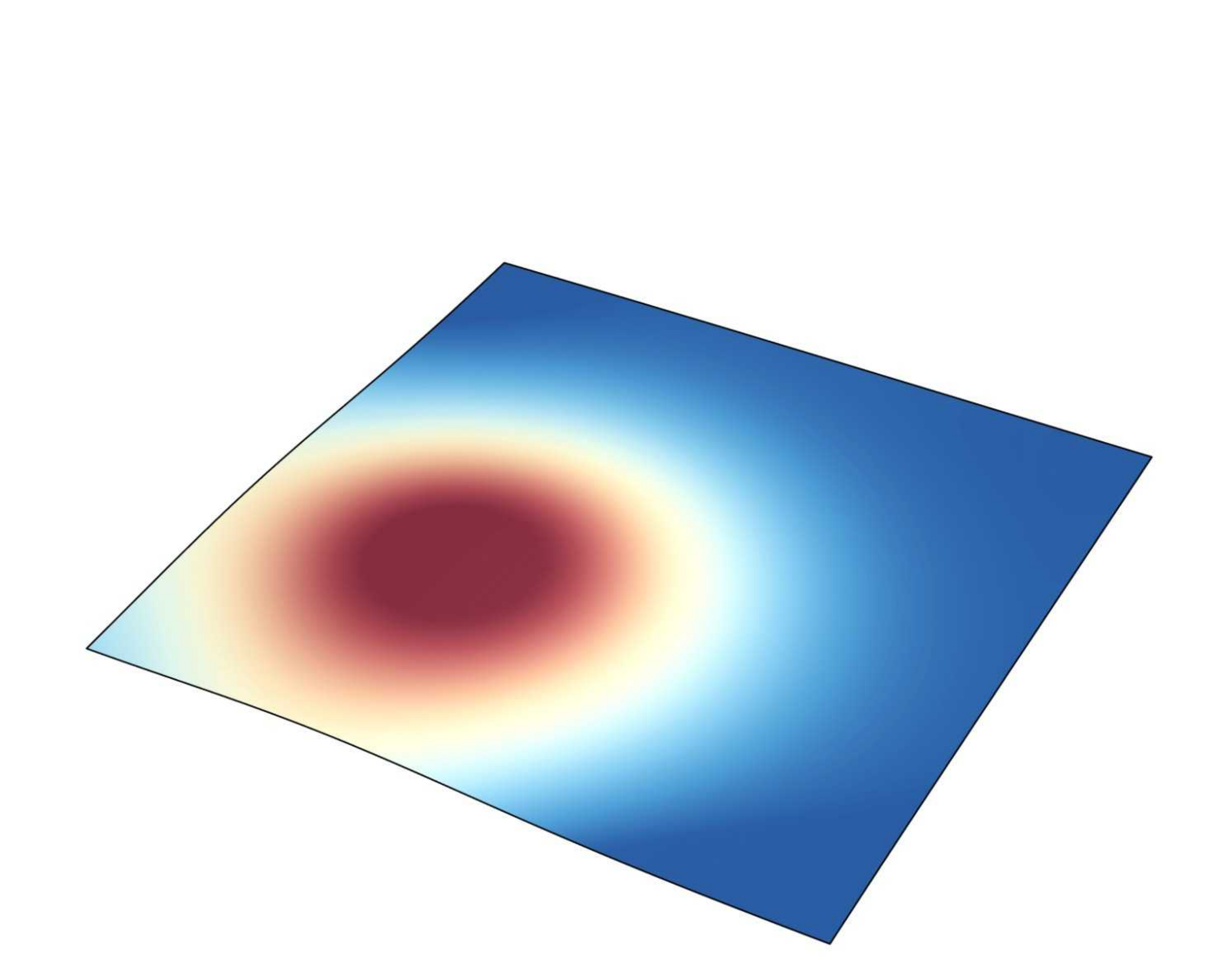}
    \caption{\tiny $t=0.2$,\\ free}
  \end{subfigure}
  \hfill
  \begin{subfigure}[b]{0.15\textwidth}
    \centering
    \includegraphics[width=\textwidth]{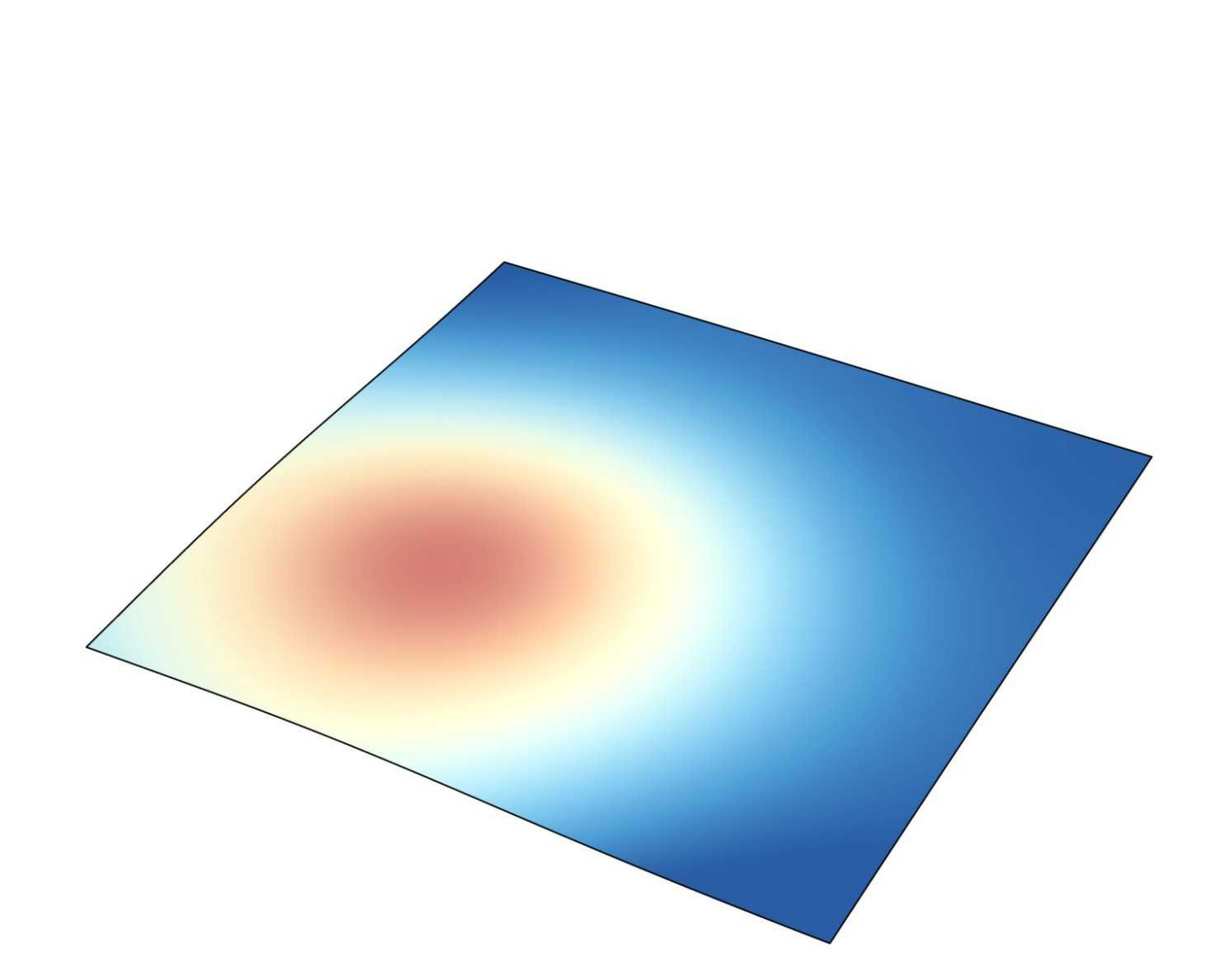}
    \caption{\tiny $t=0.3$,\\ free}
  \end{subfigure}
  \hfill
  \begin{subfigure}[b]{0.15\textwidth}
    \centering
    \includegraphics[width=\textwidth]{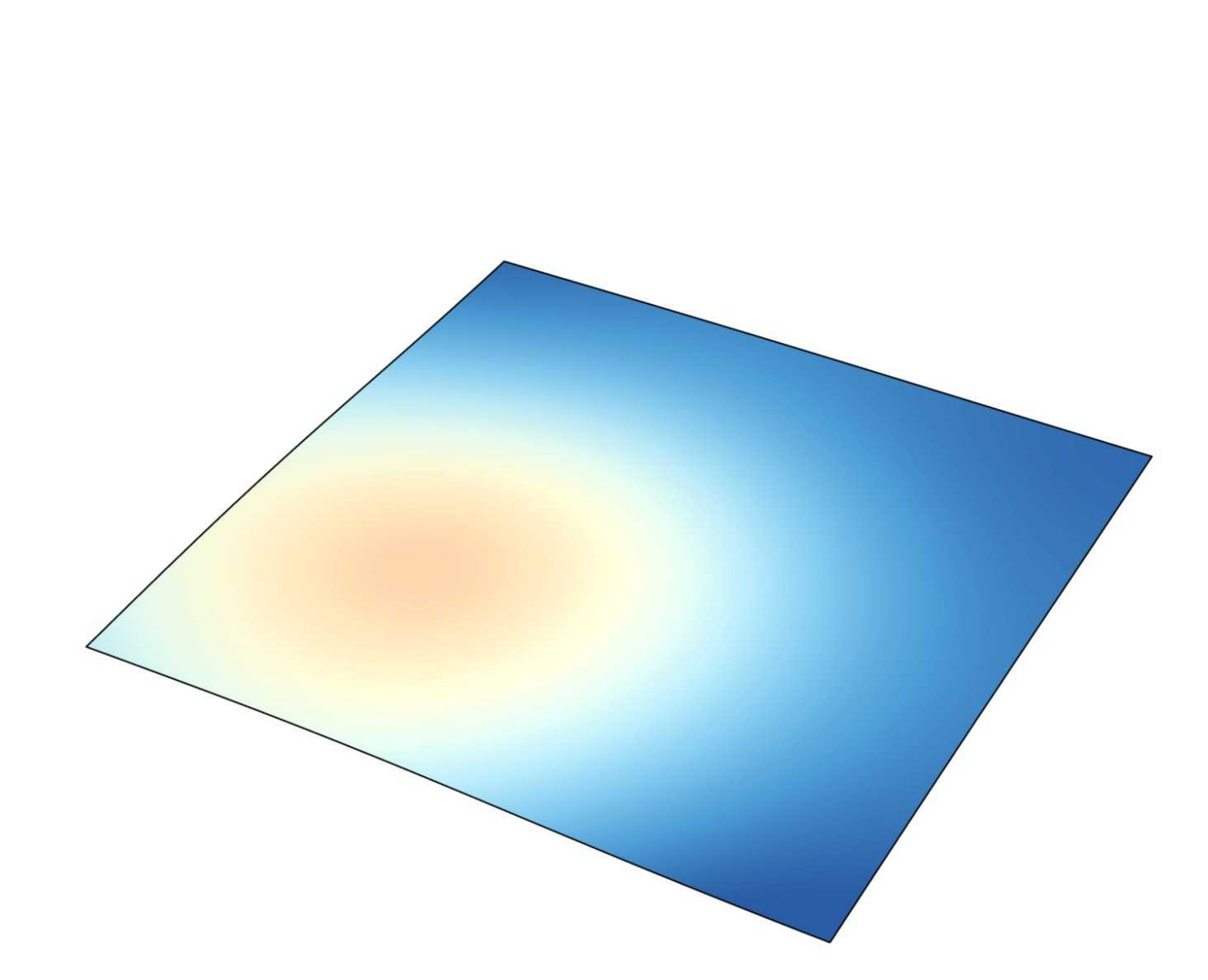}
    \caption{\tiny $t=0.4$,\\ free}
  \end{subfigure}
  \hfill
  \begin{subfigure}[b]{0.15\textwidth}
    \centering
    \includegraphics[width=\textwidth]{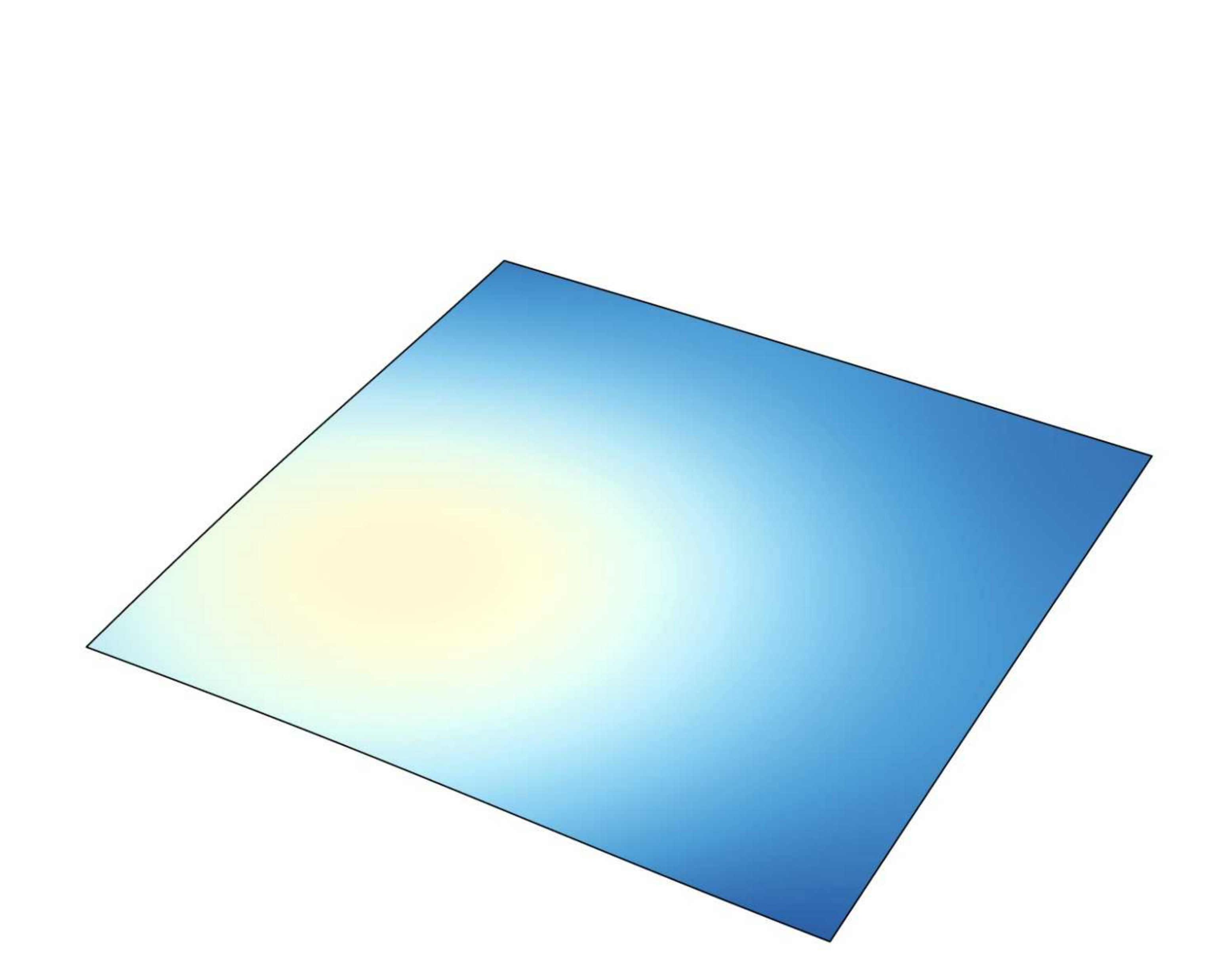}
    \caption{\tiny $t=0.5$,\\ free}
  \end{subfigure}
  \\
  \begin{subfigure}[b]{0.15\textwidth}
    \centering
    \includegraphics[width=\textwidth]{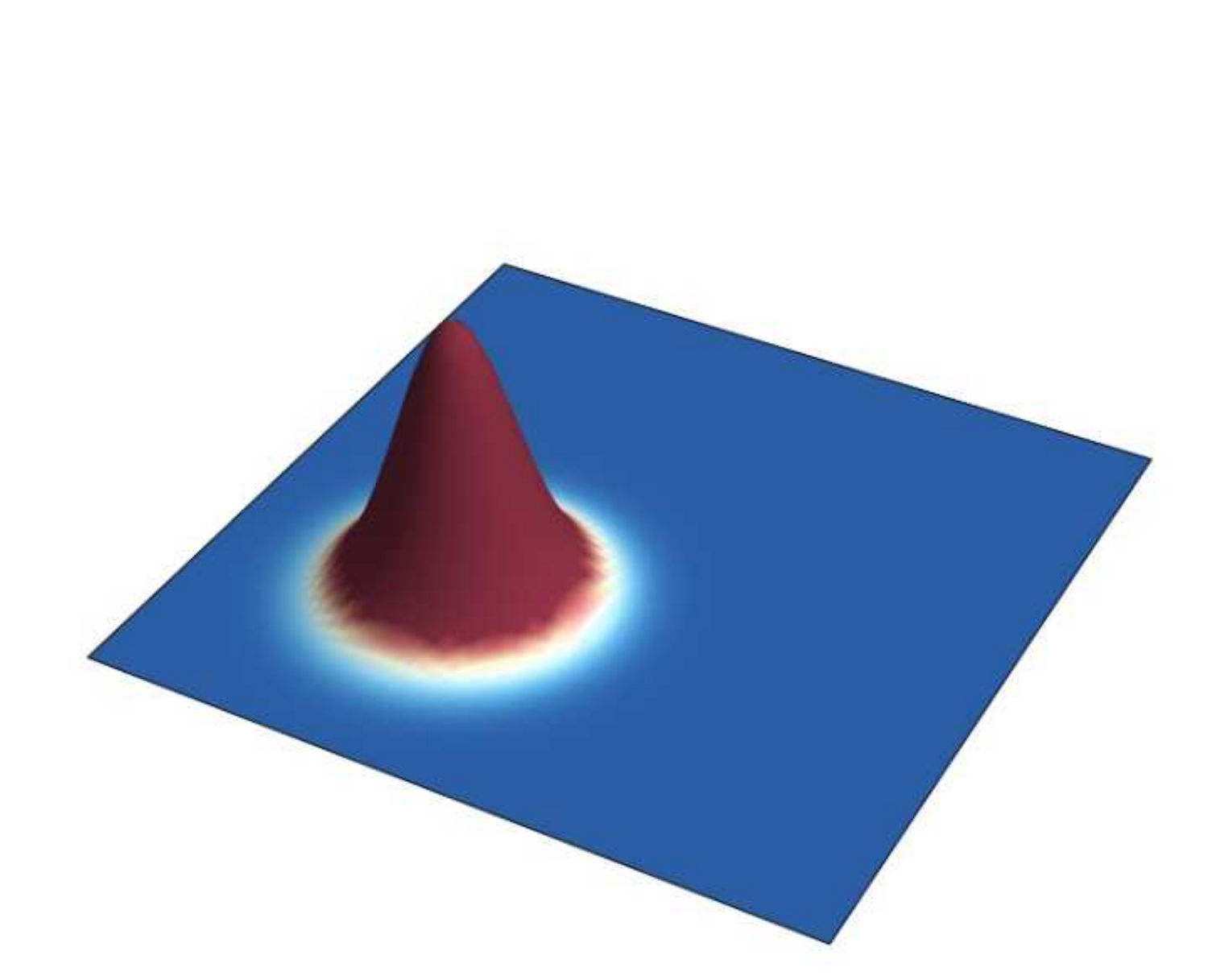}
    \caption{\tiny $t=0.0$,\\ DBC}
  \end{subfigure}
  \hfill
  \begin{subfigure}[b]{0.15\textwidth}
    \centering
    \includegraphics[width=\textwidth]{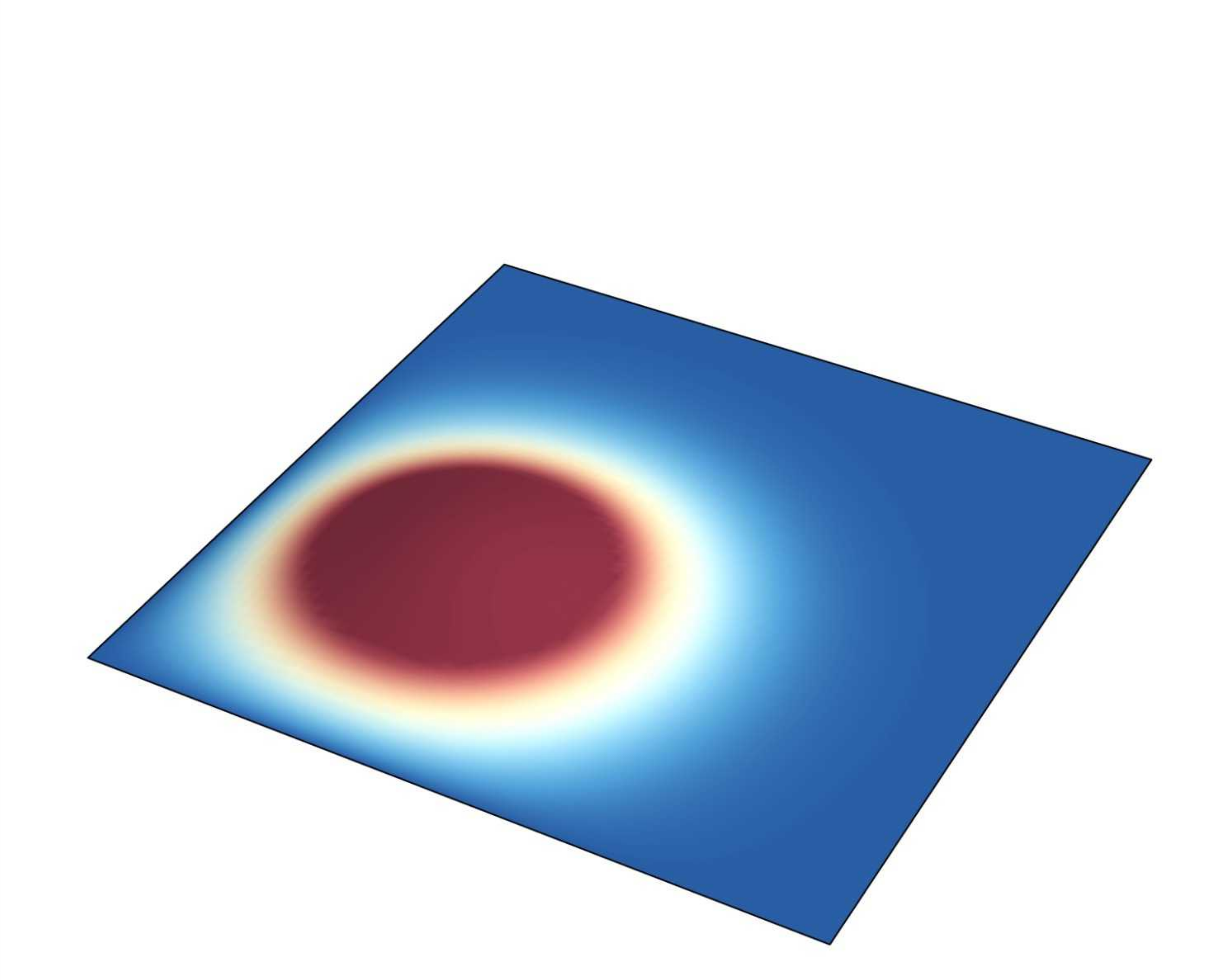}
    \caption{\tiny $t=0.1$,\\ DBC}
  \end{subfigure}
  \hfill
  \begin{subfigure}[b]{0.15\textwidth}
    \centering
    \includegraphics[width=\textwidth]{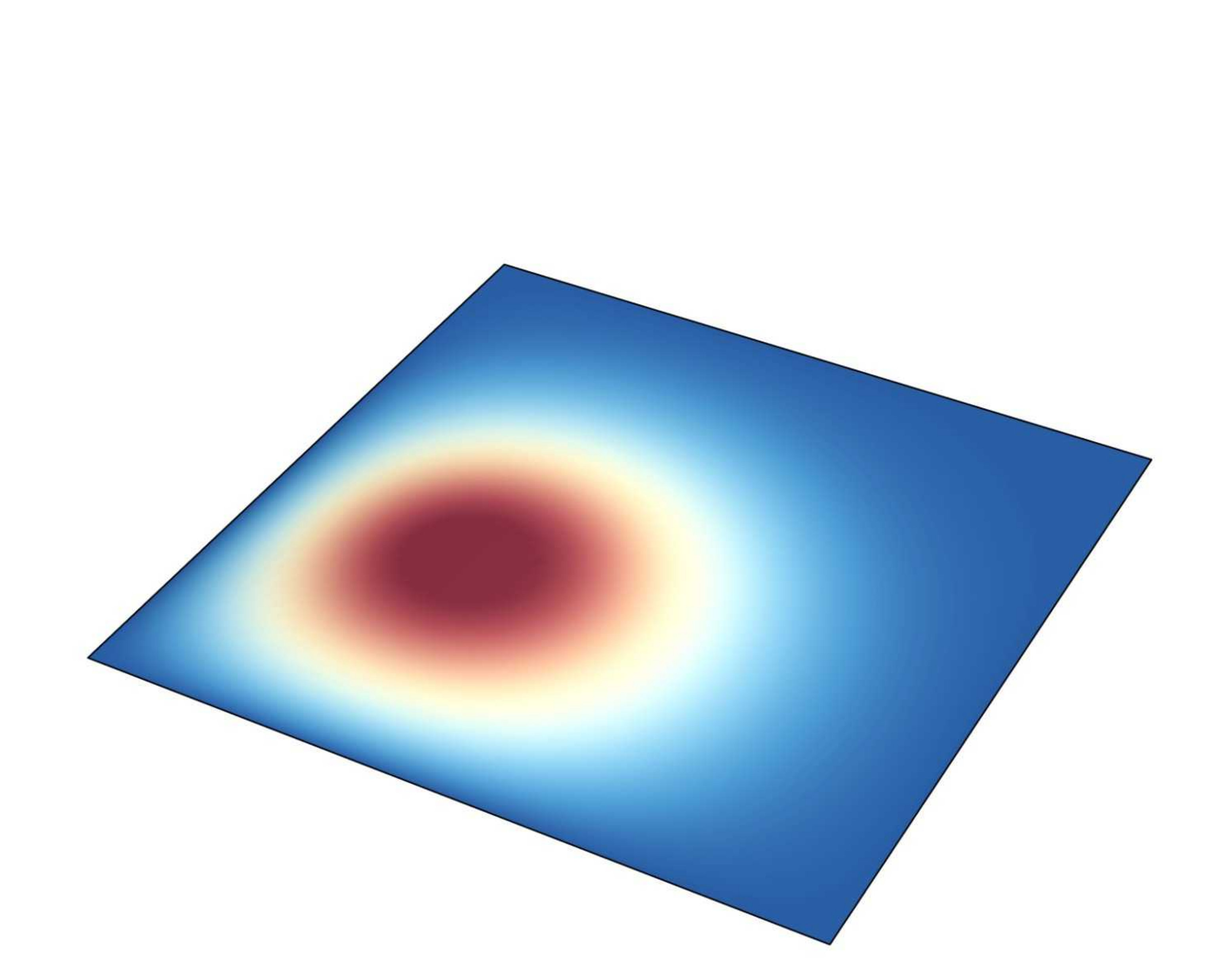}
    \caption{\tiny $t=0.2$,\\ DBC}
  \end{subfigure}
  \hfill
  \begin{subfigure}[b]{0.15\textwidth}
    \centering
    \includegraphics[width=\textwidth]{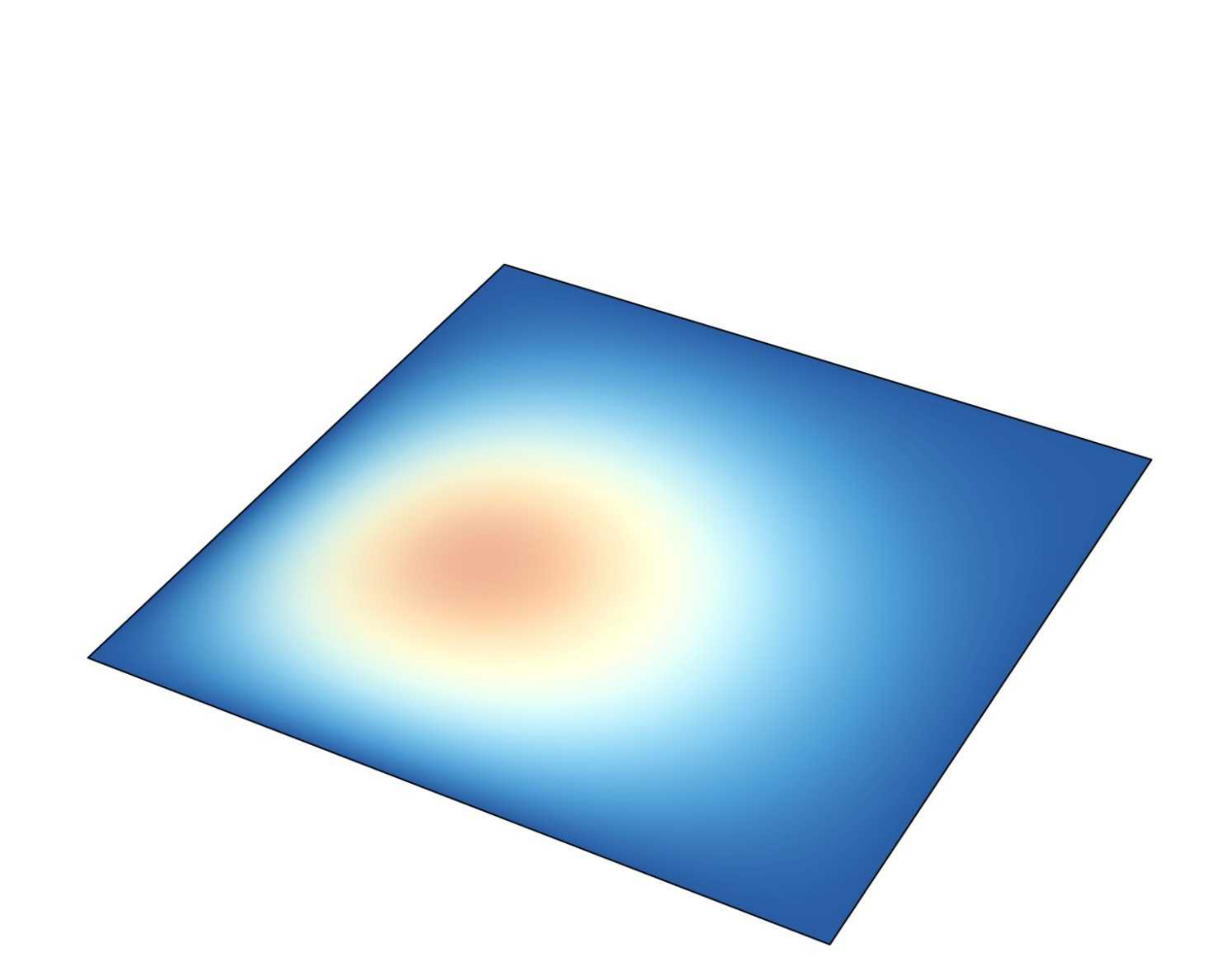}
    \caption{\tiny $t=0.3$,\\ DBC}
  \end{subfigure}
  \hfill
  \begin{subfigure}[b]{0.15\textwidth}
    \centering
    \includegraphics[width=\textwidth]{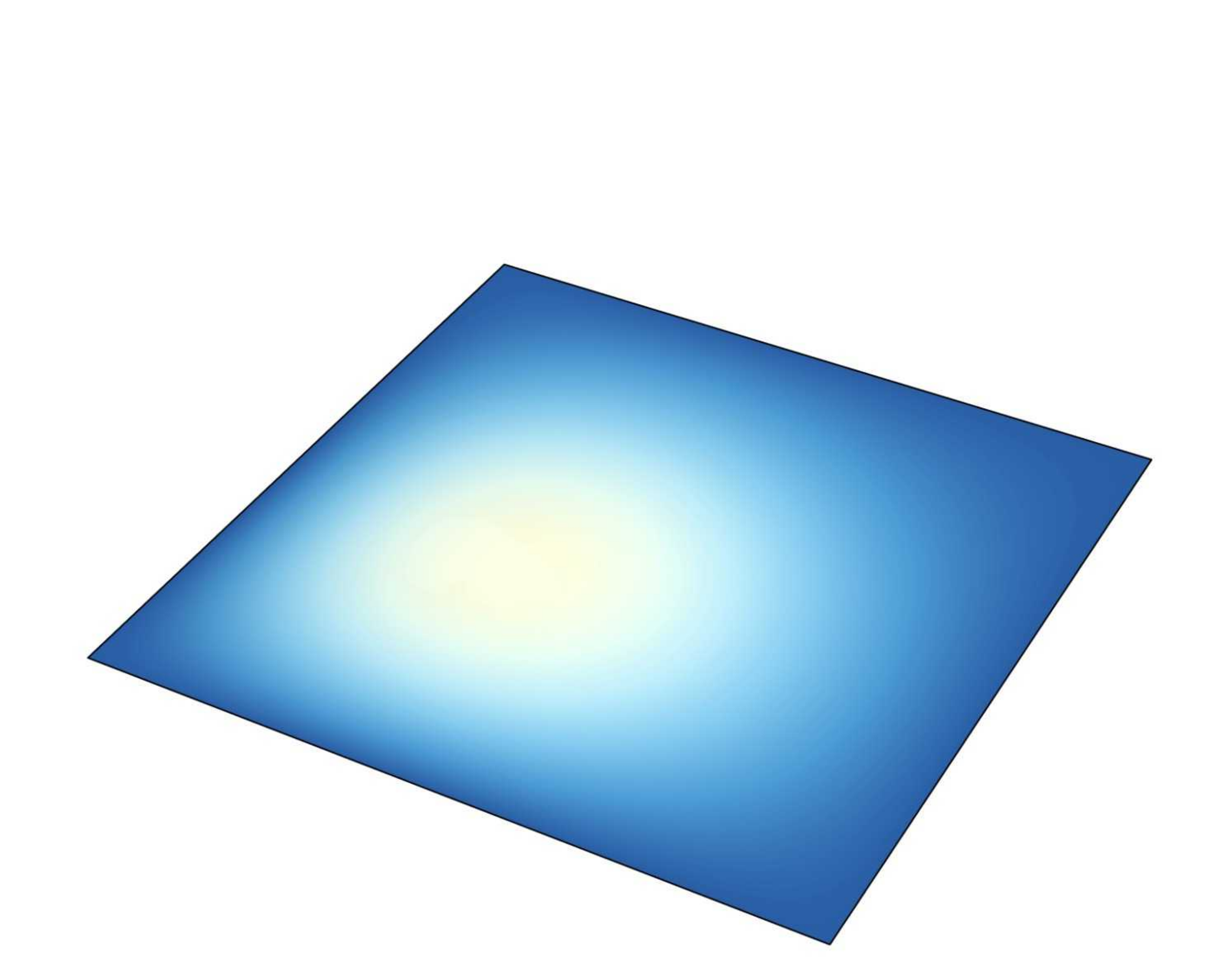}
    \caption{\tiny $t=0.4$,\\ DBC}
  \end{subfigure}
  \hfill
  \begin{subfigure}[b]{0.15\textwidth}
    \centering
    \includegraphics[width=\textwidth]{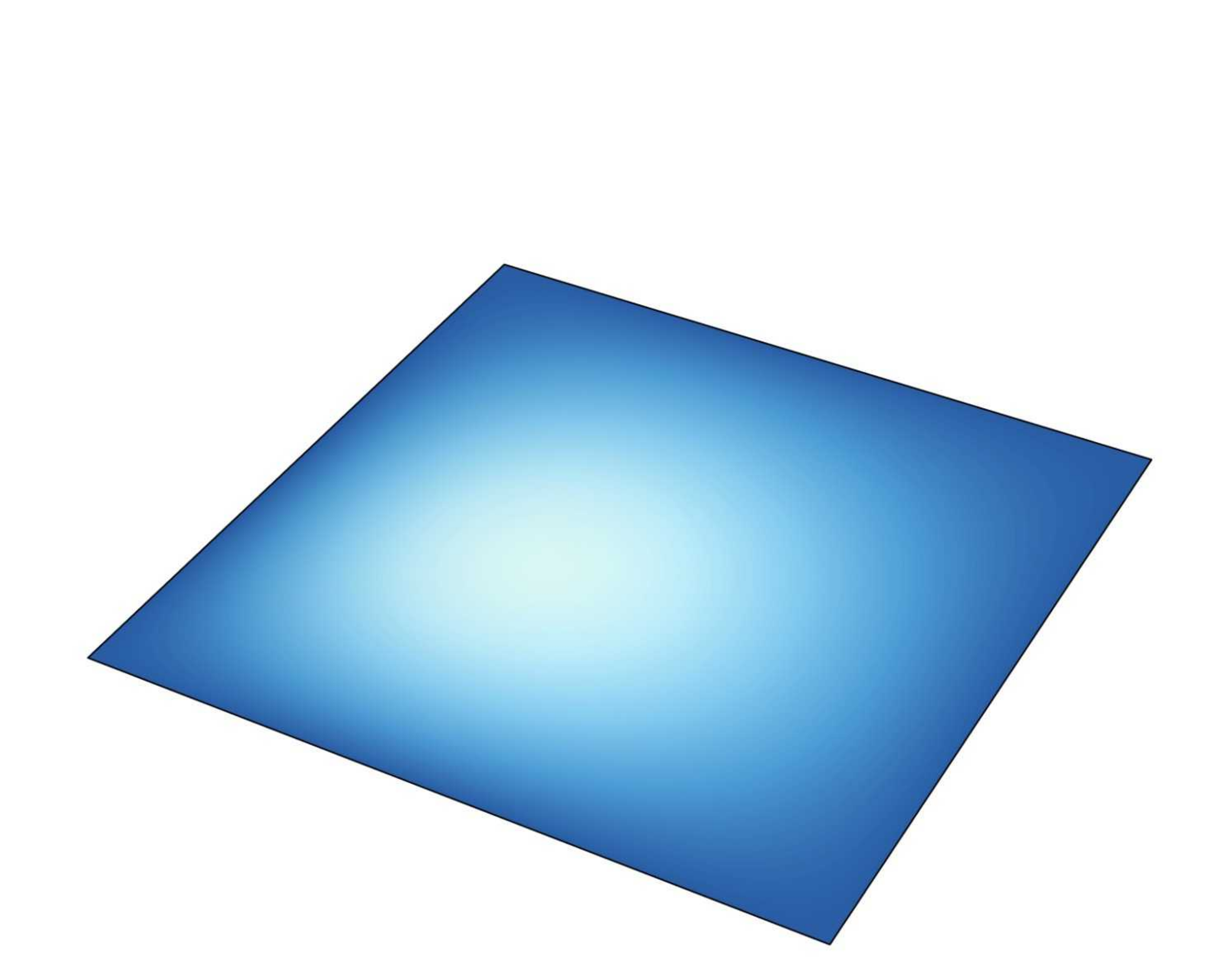}
    \caption{\tiny $t=0.5$,\\ DBC}
  \end{subfigure}
   \\
  \begin{subfigure}[b]{0.15\textwidth}
    \centering
    \includegraphics[width=\textwidth]{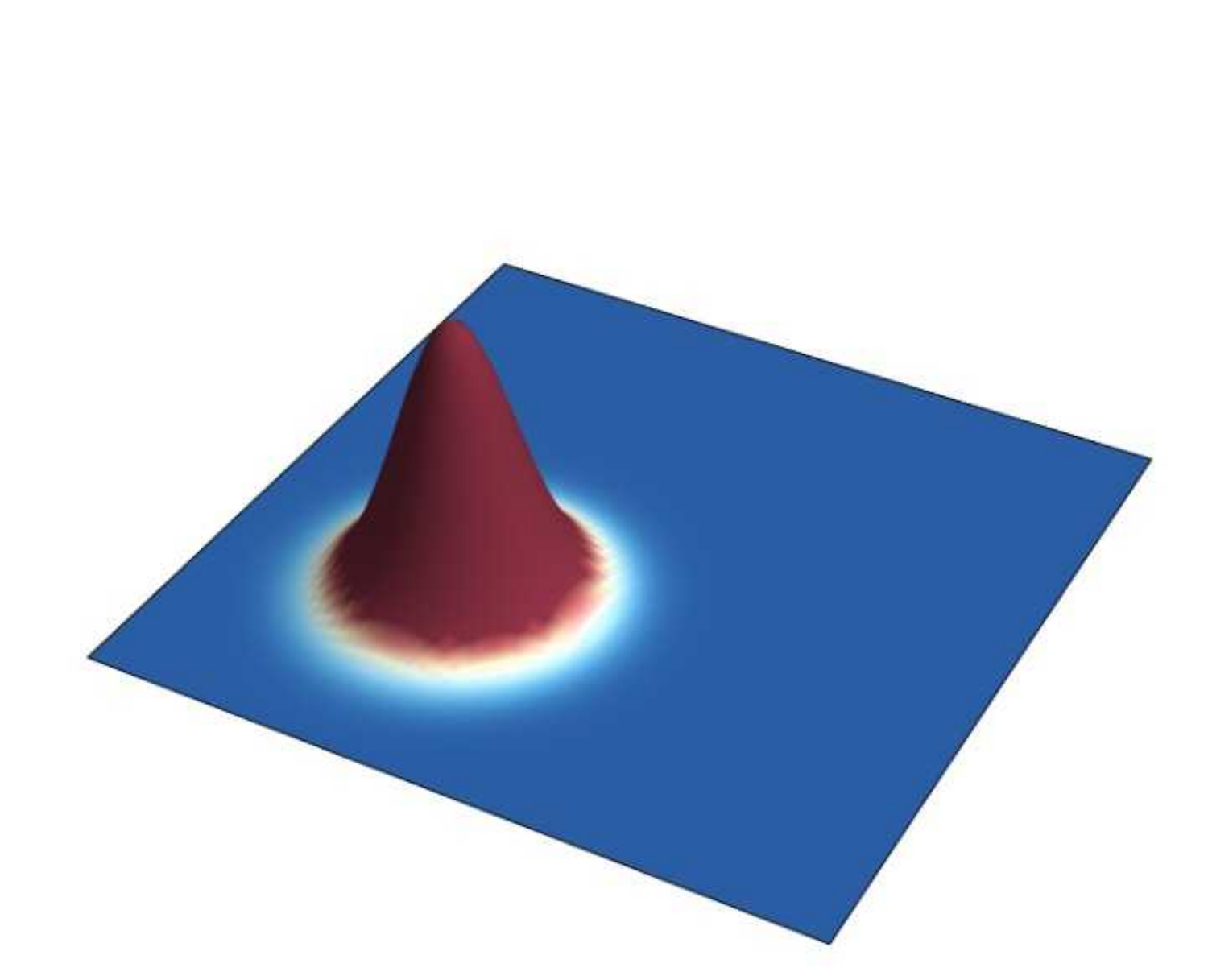}
    \caption{\tiny $t=0.0$,\\ NBC}
  \end{subfigure}
  \hfill
  \begin{subfigure}[b]{0.15\textwidth}
    \centering
    \includegraphics[width=\textwidth]{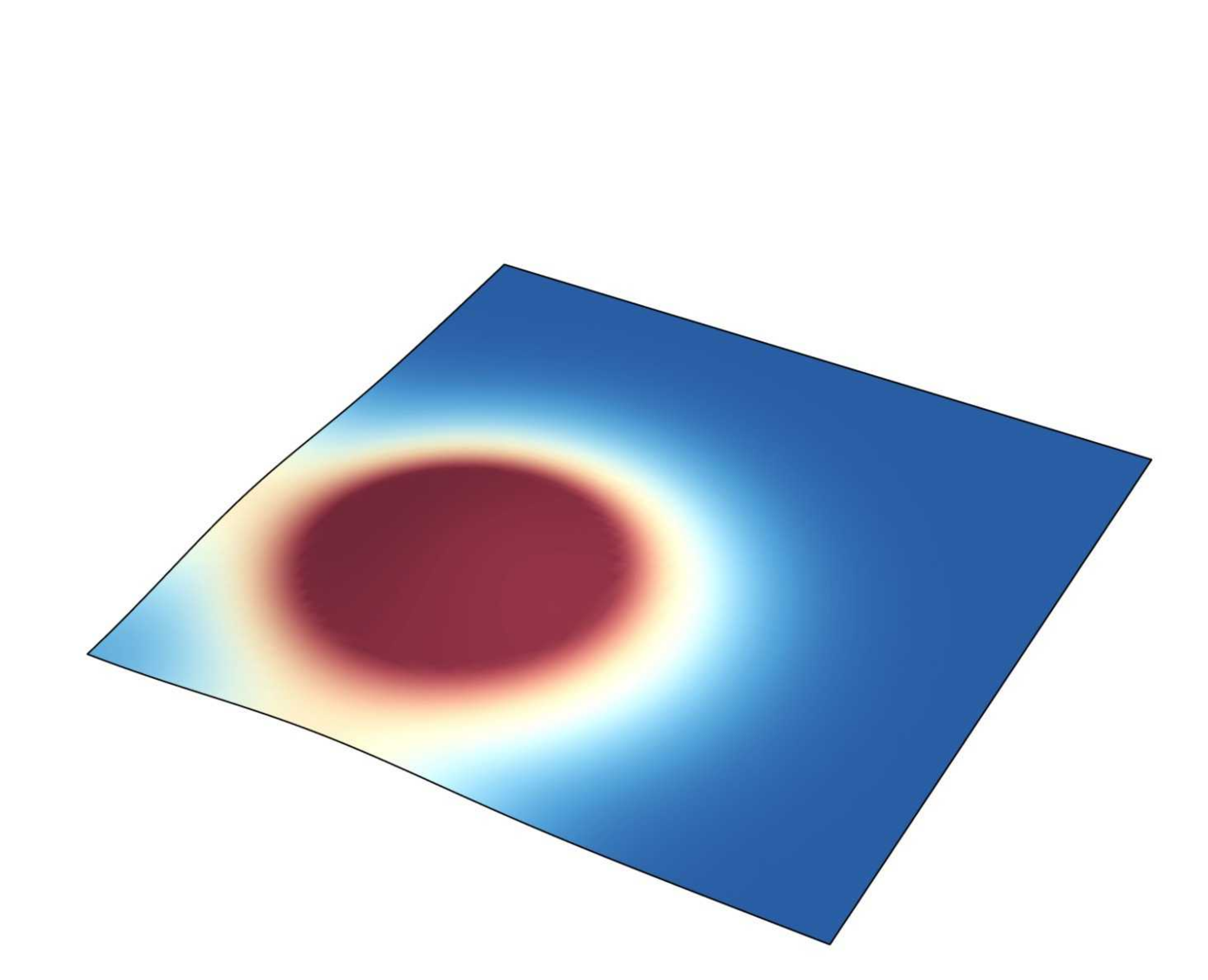}
    \caption{\tiny $t=0.1$,\\ NBC}
  \end{subfigure}
  \hfill
  \begin{subfigure}[b]{0.15\textwidth}
    \centering
    \includegraphics[width=\textwidth]{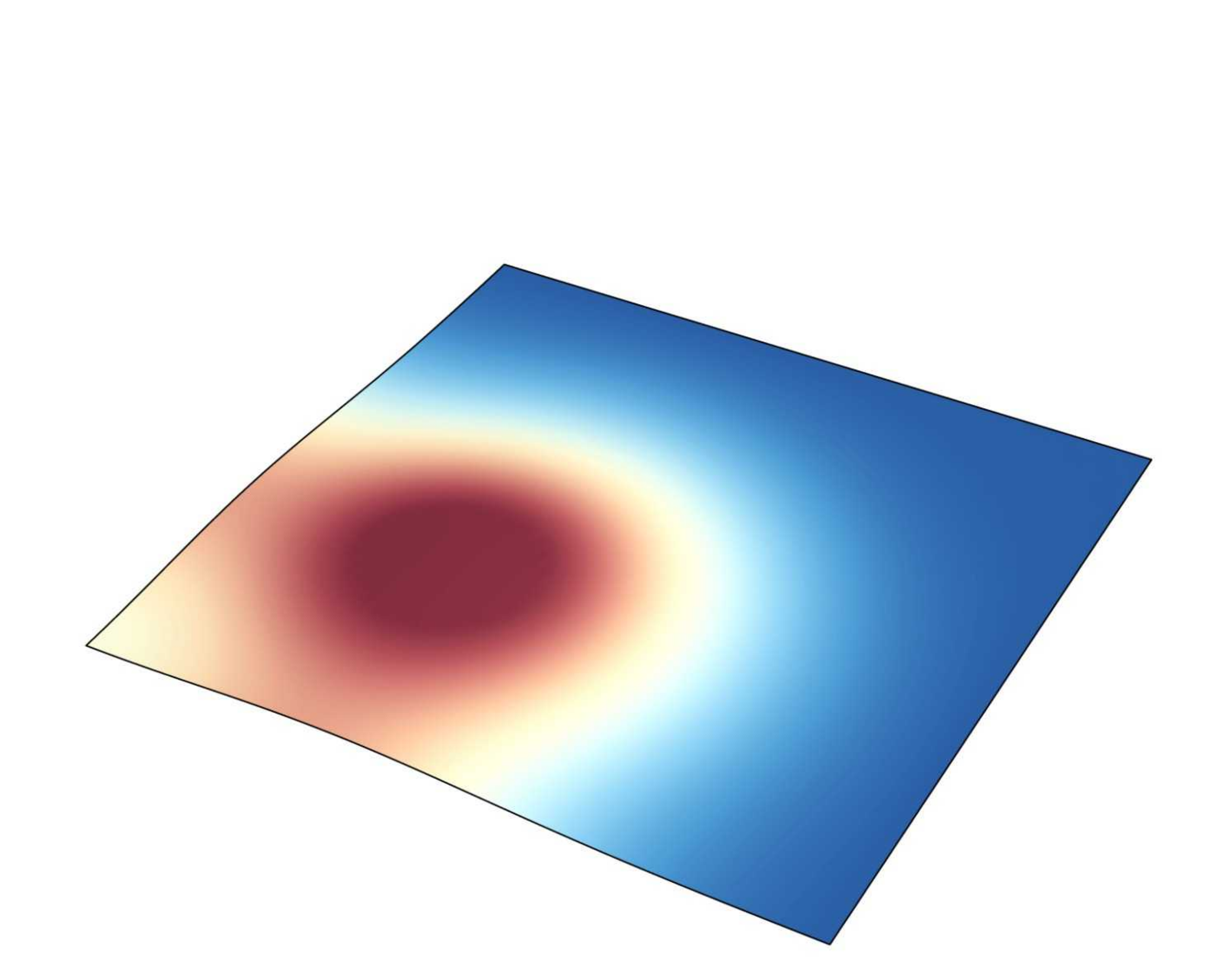}
    \caption{\tiny $t=0.2$,\\ NBC}
  \end{subfigure}
  \hfill
  \begin{subfigure}[b]{0.15\textwidth}
    \centering
    \includegraphics[width=\textwidth]{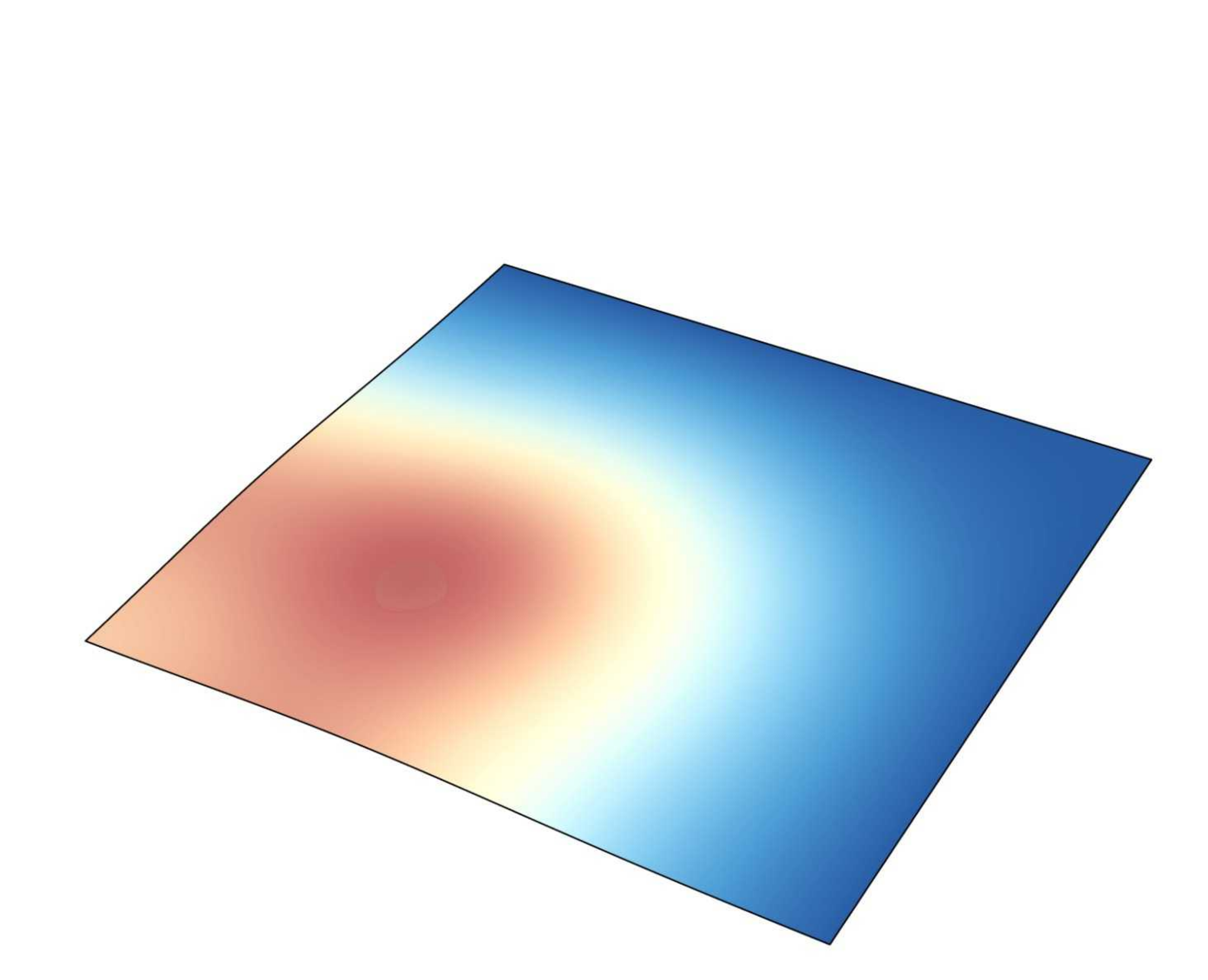}
    \caption{\tiny $t=0.3$,\\ NBC}
  \end{subfigure}
  \hfill
  \begin{subfigure}[b]{0.15\textwidth}
    \centering
    \includegraphics[width=\textwidth]{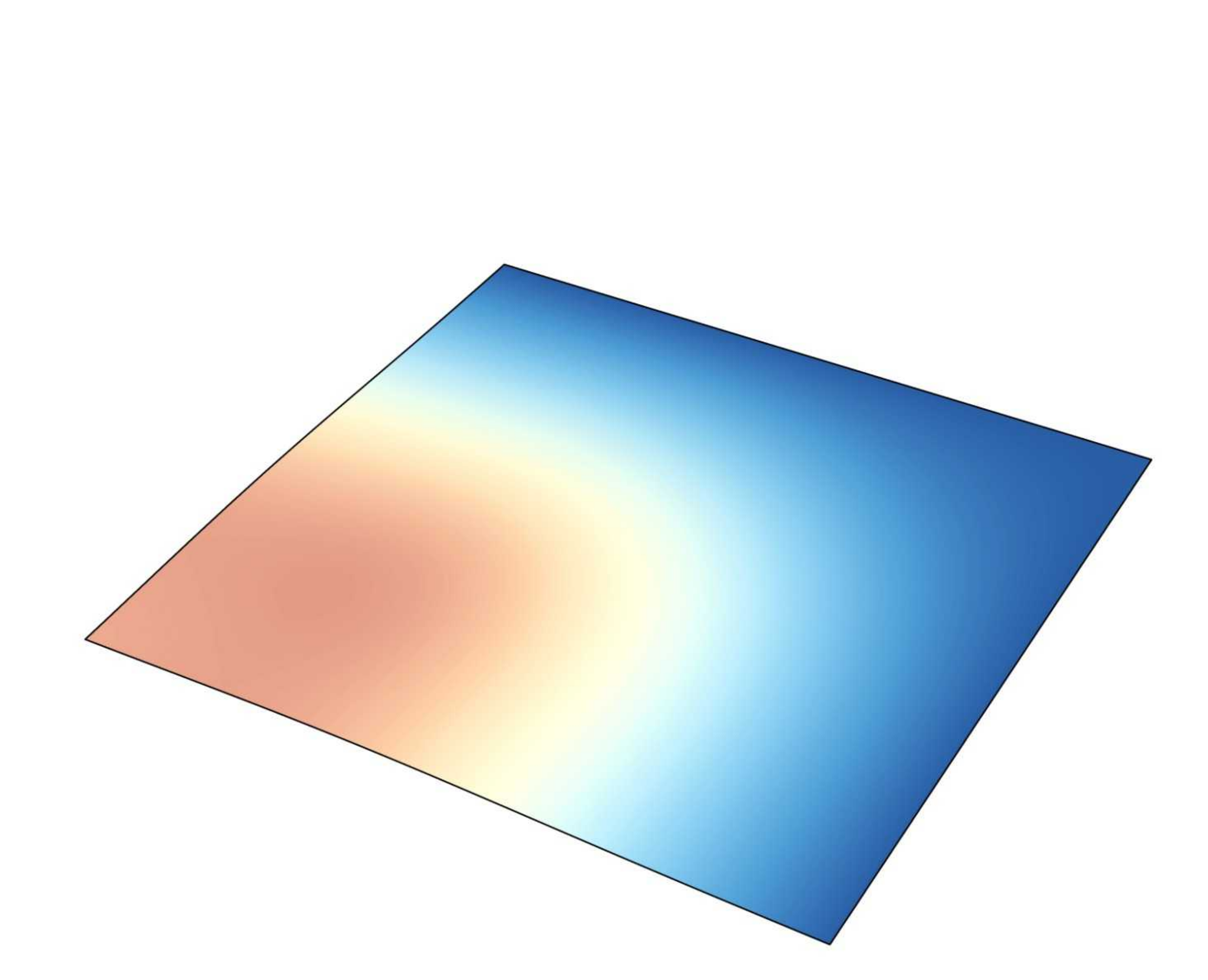}
    \caption{\tiny $t=0.4$,\\ NBC}
  \end{subfigure}
  \hfill
  \begin{subfigure}[b]{0.15\textwidth}
    \centering
    \includegraphics[width=\textwidth]{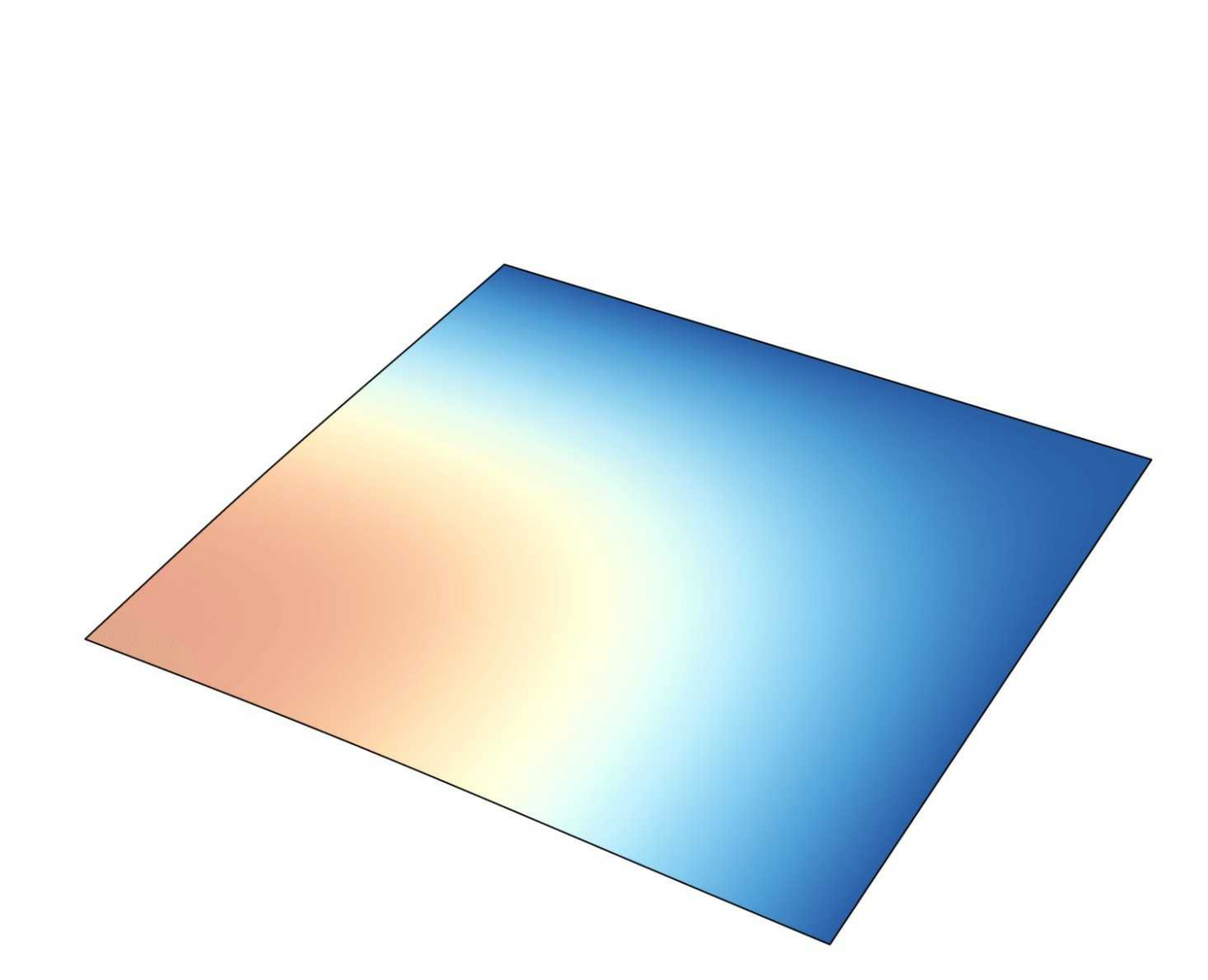}
    \caption{\tiny $t=0.5$,\\ NBC}
  \end{subfigure}
  \\
  \caption{Snapshots of our solution for 2D heat equation with different boundary condition at 6 timepoints. The effects of the boundary conditions are very clearly visible. Row 1, free heat equation (no BC): heat diffuses exponentially in time. Row 2, Dirichlet BC: heat is maintained at 0 on the edges $x=0$ and $y=0$ so heat is diffused faster. Row 3, Neumann BC: the edges $x=0$ and $y=0$ are thermally insulated, so no heat is diffused there; hence heat is diffused slower, in directions of large $x,\,y$.
  Animations can be found in the supplementary material as \texttt{heat\_free.mp4}, \texttt{heat\_DBC.mp4}, and \texttt{heat\_NBC.mp4}.}\label{fig:heat}
  \vskip -0.2in
\end{figure*}
The examples we gave so far focused on wave equations, often in 2 space dimensions as these produce the most visually striking videos and are better represented in the paper as snapshots at various times. Our method extends equally well to equations for heat, which we will give an example of in this section. We will consider a wedge domain and Neumann boundary conditions:
\begin{align*}
    \begin{cases}
        u_{t}-(u_{xx}+u_{yy})=0&\text{in }(0,4)\times(0,\infty)^2\\
        u(0,x,y)= 5\exp (-10((x-1)^2+(y-1)^2))&\text{in }(0,\infty)^2\\
         u_x(t,0,y) = 0& \text{for }t\in(0,4),\,y\in(0,\infty)\\
         u_y(t,x,0) = 0& \text{for }t\in(0,4),\,x\in(0,\infty).
    \end{cases}
\end{align*}
In this case, we can use the calculations in Section~\ref{sec:B-EPGP_heat_calc} and~\ref{sec:wedge_90deg} to obtain the B-EPGP basis
$$
e^{(a^2+b^2)t+ax+by}+e^{(a^2+b^2)t-ax+by}+e^{(a^2+b^2)t+ax-by}+e^{(a^2+b^2)t-ax-by}\quad\text{for }a,b\in\CC.
$$
For comparison, we will also consider the case of Dirichlet boundary conditions
\begin{align*}
    \begin{cases}
        u_{t}-(u_{xx}+u_{yy})=0&\text{in }(0,4)\times(0,\infty)^2\\
        u(0,x,y)= 5\exp (-10((x-1)^2+(y-1)^2))&\text{in }(0,\infty)^2\\
         u(t,0,y) = 0& \text{for }t\in(0,4),\,y\in(0,\infty)\\
         u(t,x,0) = 0& \text{for }t\in(0,4),\,x\in(0,\infty),
    \end{cases}
\end{align*}
for which we obtain the B-EPGP basis
$$
e^{(a^2+b^2)t+ax+by}-e^{(a^2+b^2)t-ax+by}-e^{(a^2+b^2)t+ax-by}+e^{(a^2+b^2)t-ax-by}\quad\text{for }a,b\in\CC.
$$
In Figure~\ref{fig:heat} we will further compare these results visually with the solution of the heat equation in full space
\begin{align*}
    \begin{cases}
        u_{t}-(u_{xx}+u_{yy})=0&\text{in }(0,4)\times\RR^2\\
        u(0,x,y)= 5\exp (-10((x-1)^2+(y-1)^2))&\text{in }\RR^2.
    \end{cases}
\end{align*}

\section{Inhomogeneous systems}\label{sec:inhom}
Our method is easily extended to cover the case of inhomogeneous systems,
\begin{align}\label{eq:BVP_inhom}
      \begin{cases}
        A(\partial)u=f&\text{in }\RR^{n-1}\times[\,0,\infty)\\
    B(\partial)u=g&\text{on }\RR^{n-1}\times\{0\},
   \end{cases}
\end{align}
where $f$ and $g$ are given functions. We first make the observation that the difference between the affine space in \eqref{eq:BVP_inhom} and the linear space in $\eqref{eq:BVP}$ is only that of a particular solution $u_p$ of \eqref{eq:BVP_inhom}:
$$
\{u\colon A(\partial)u=f,B(\partial)u=g\}=u_p+\{u\colon A(\partial)u=0,B(\partial)u=0\}.
$$
This is a very classic observation, almost as old as the study of partial differential equations. In particular, given a particular solution $u_p$, defining probability measures on the space of solutions to the inhomogeneous system \eqref{eq:BVP_inhom} is equivalent to defining probability measures on the space of solutions to the homogeneous system \eqref{eq:BVP}. The latter is the main achievement of B-EPGP.

This of course leaves open the issue of finding particular solutions $u_p$, which is not trivial, but can be dealt with with various methods, including analytical methods and numerical solvers.
We outline a possible method here, which has a spectral flavor. Write $x=(x',x_n)$, where $x'\in\R^{n-1}$ and $x_n\in\R$. For simplicity of notation, we only consider the case of single equations, i.e., $A\in\R[\partial]$, $B\in\R[\partial']$; the case of systems can be dealt with using pseudoinverses instead of division in the algebraic calculation below.  We expand $f$ and $g$ in exponential series,
$$
f(x)\approx\sum_{j=1}^M w_je^{\sqrt{-1}z_j\cdot x},\qquad g(x')\approx\sum_{j=1}^M  \tilde w_j e^{\sqrt{-1}y_j\cdot x'},
$$
with $z_j\in\R^n$ and $y_j\in\R^{n-1}$. This can be done either with deterministic or probabilistic methods for Fourier transforms. 

We first set 
$$
u_1(x)=\sum_{j=1}^M A(\sqrt{-1}z_j)^{-1} w_j e^{\sqrt{-1}z_j\cdot x},
$$
which solves $A(\partial) u_1=f$ with no regard to the boundary conditions. We would next need to solve
\begin{align*}
\begin{cases}
        A(\partial)u=0&\text{in }\RR^{n-1}\times[\,0,\infty)\\
    B(\partial)u=g-B(\partial)u_1&\text{on }\RR^{n-1}\times\{0\}.
   \end{cases}
\end{align*}
We write out explicitly 
$$
g-B(\partial)u_1=\sum_{j=1}^M   \tilde w_j e^{\sqrt{-1}y_j\cdot x'}-B(\sqrt{-1}z_j')A(\sqrt{-1}z_j)^{-1} w_j e^{\sqrt{-1}z_j'\cdot x'}
$$
We then find $\tilde y_j,\tilde z_j\in \mathbb C$ such that $A(\tilde y_j)=0=A(\tilde z_j)$ and $\tilde y_j'=\sqrt{-1} y_j$ and $\tilde z_j'=z_j'$; this is an algebraic computation which can be performed in Macaulay2. Then set
$$
u_2(x)=\sum_{j=1}^M B(\sqrt{-1}\tilde y_j)^{-1}\tilde w_j e^{\sqrt{-1} \tilde y_j\cdot x}-A(\sqrt{-1} z_j)^{-1}w_j e^{\sqrt{-1}\tilde z_j\cdot x}
$$
to obtain a solution to the system above. Therefore $u_p=u_1+u_2$ will solve \eqref{eq:BVP_inhom}.

Notice that the complexity of the calculation described above is the same as the complexity of the inputs $f$ and $g$. The particular solution $u_p$ is directly computed using the precomputed algebraic varieties, as described above and in Appendix~\ref{sec:appendix_groebner} (fibers of the characteristic variety of $A$). Therefore, the cost of performing regression on the affine space \eqref{eq:BVP_inhom} (inhomogeneous system) is the same as performing regression for the linear space \eqref{eq:BVP} (homogeneous system) plus the cost of discretizing the inputs $f$ and $g$; recall here that both systems are underdetermined. In this sense, \textit{B-EPGP does not suffer from the curse of dimensionality}.

We  present an abridged version of this algorithm to use B-EPGP to generate solutions of \eqref{eq:BVP_inhom}. 
\begin{enumerate}
    \item Find a particular solution $u_p$ of \eqref{eq:BVP_inhom} using another method, e.g. the one described above;
    \item Use B-EPGP to generate $v$ such that $A(\partial)v=0$ and $B(\partial)v=0$;
    \item Set $u=v+u_p$.
\end{enumerate}
\subsection{2D Wave Equation with Inhomogeneous Boundary}\label{sec:2Dwave_inhom}
In this section, we demonstrate how to conduct the reduction described above in the case of the 2D wave equation with inhomogeneous boundary condition. Let $\Omega=(0,4)^2$. Here the inhomogeneous system reads
\begin{align}\label{eq:inhom_wave_3}
    \begin{cases}
        u_{tt}-(u_{xx}+u_{yy})=0&\text{in }(0,8)\times\Omega\\
        u(t,x,y)=f(t,x,y)&\text{on }\p\Omega,
    \end{cases}
\end{align}
where we consider two cases. The first case \(f(t,x,y)=f_1(t,x,y)=0.1(x^2+y^2+2t^2)\) and the second case \(f(t,x,y)=f_2(t,x,y)=0.5(x+y)\).

We observe that $u_p=f$ is a particular solution in both cases. We proceed with the homogeneous system
\begin{align}\label{eq:inhom_wave_1}
    \begin{cases}
        v_{tt}-(v_{xx}+v_{yy})=0&\text{in }(0,8)\times\Omega\\
        v=0&\text{on }\p\Omega.
    \end{cases}
\end{align}
We would like to infer this solution using B-EPGP from some data. In this case, we will assume this data is sampled from some initial conditions,
\begin{align}\label{eq:inhom_wave_4}
    \begin{cases}
        u(0,x,y)= 10\exp (-10((x-2)^2+(y-2)^2))&\text{in }\Omega\\
        u_t(0,x,y)= 0&\text{in }\Omega,
    \end{cases}
\end{align}
which then give initial conditions for $v$
\begin{align}\label{eq:inhom_wave_2}
    \begin{cases}
        v(0,x,y)= 10\exp (-10((x-2)^2+(y-2)^2))-f(0,x,y)&\text{in }\Omega\\
        v_t(0,x,y)= -f_t(0,x,y)&\text{in }\Omega.
    \end{cases}
\end{align}
We can now use B-EPGP to fit solutions of \eqref{eq:inhom_wave_1} to data sampled from \eqref{eq:inhom_wave_2} using the method described in Appendix~\ref{sec:free_wave}. If $v$ is our prediction, then $u=u_p+v$ is the prediction for our solution of \eqref{eq:inhom_wave_3} with data sampled from \eqref{eq:inhom_wave_4}.

Snapshots of the solutions in both cases $f=f_1$ and $f=f_2$ are presented in Figure~\ref{fig:inhomo}.
\begin{figure*}[b!]
  \vskip 0.2in
  \centering
  \begin{subfigure}[b]{0.1\textwidth}
    \centering
    \includegraphics[width=\textwidth]{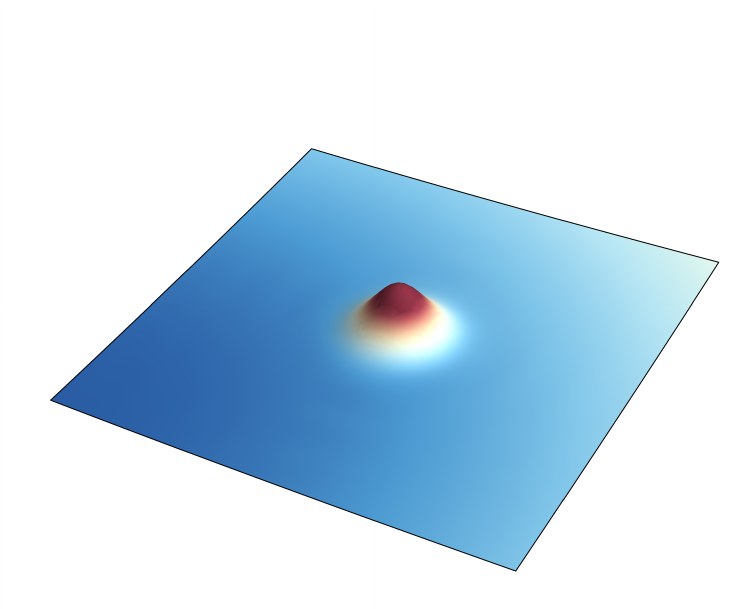}
    \caption{\tiny $t=0$}
  \end{subfigure}
  \hfill
  \begin{subfigure}[b]{0.1\textwidth}
    \centering
    \includegraphics[width=\textwidth]{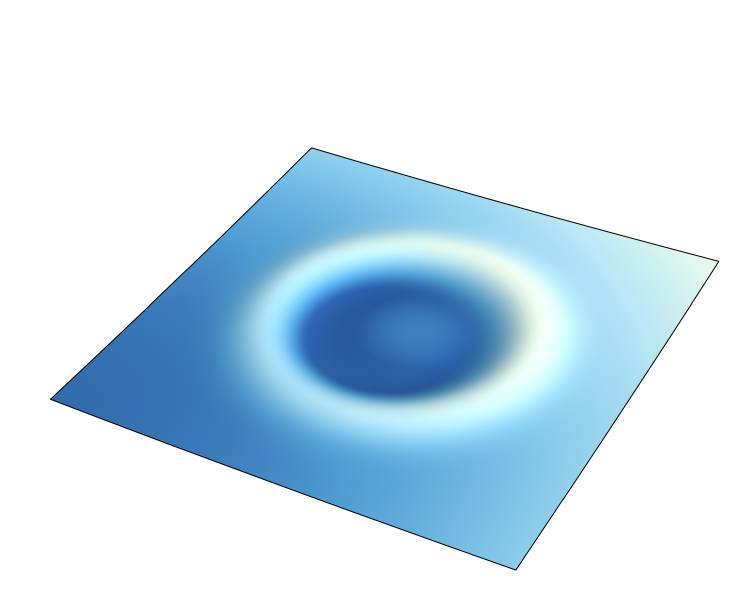}
    \caption{\tiny $t=1$}
  \end{subfigure}
  \hfill
  \begin{subfigure}[b]{0.1\textwidth}
    \centering
    \includegraphics[width=\textwidth]{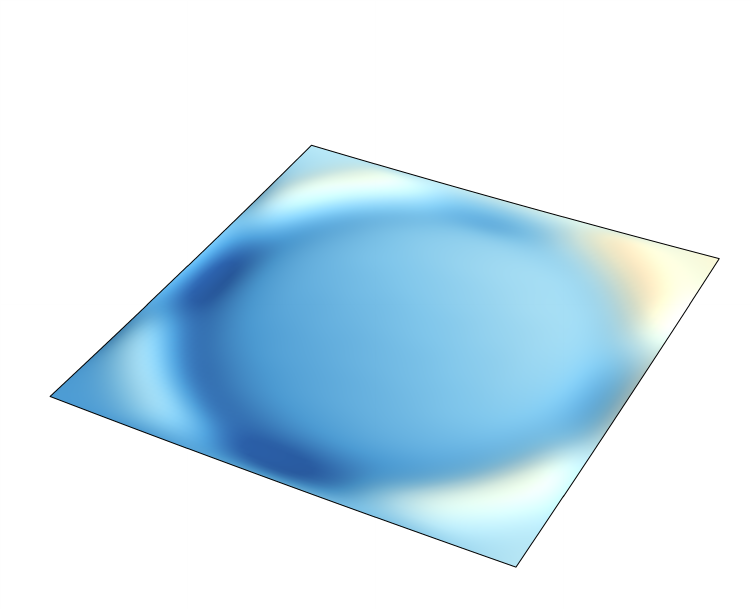}
    \caption{\tiny $t=2$}
  \end{subfigure}
  \hfill
  \begin{subfigure}[b]{0.1\textwidth}
    \centering
    \includegraphics[width=\textwidth]{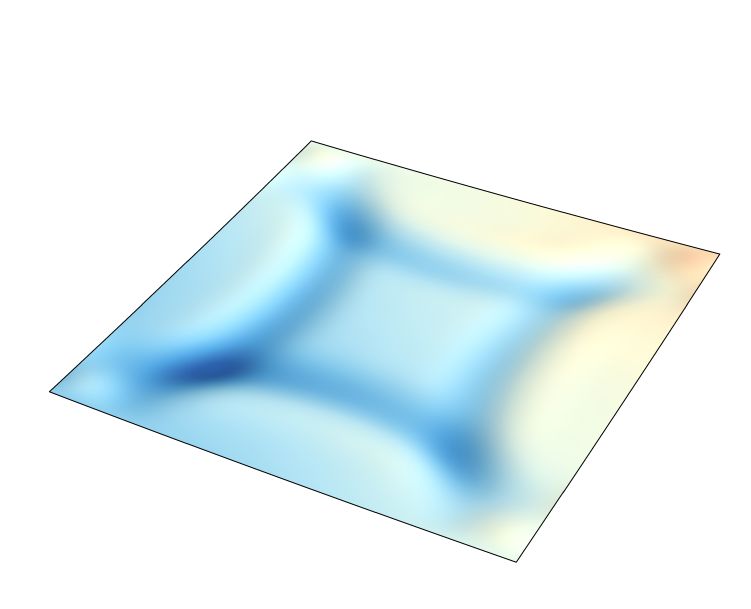}
    \caption{\tiny $t=3$}
  \end{subfigure}
  \hfill
  \begin{subfigure}[b]{0.1\textwidth}
    \centering
    \includegraphics[width=\textwidth]{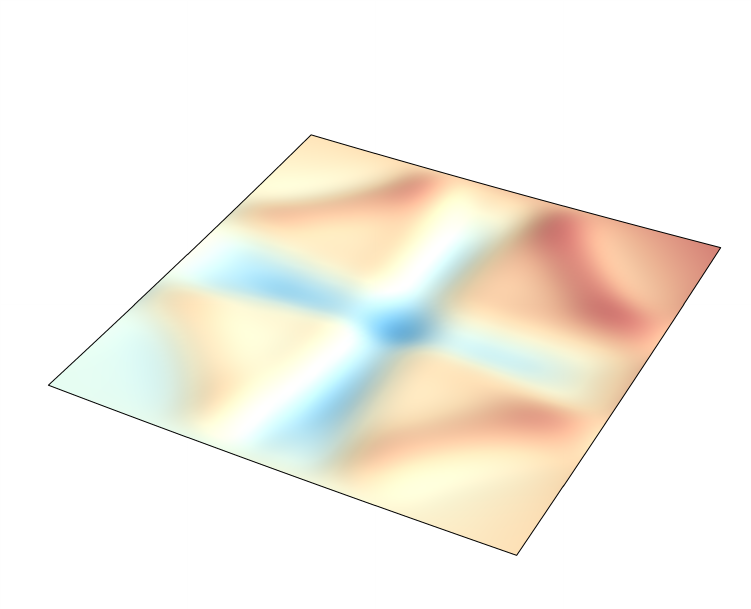}
    \caption{\tiny $t=4$}
  \end{subfigure}
  \hfill
  \begin{subfigure}[b]{0.1\textwidth}
    \centering
    \includegraphics[width=\textwidth]{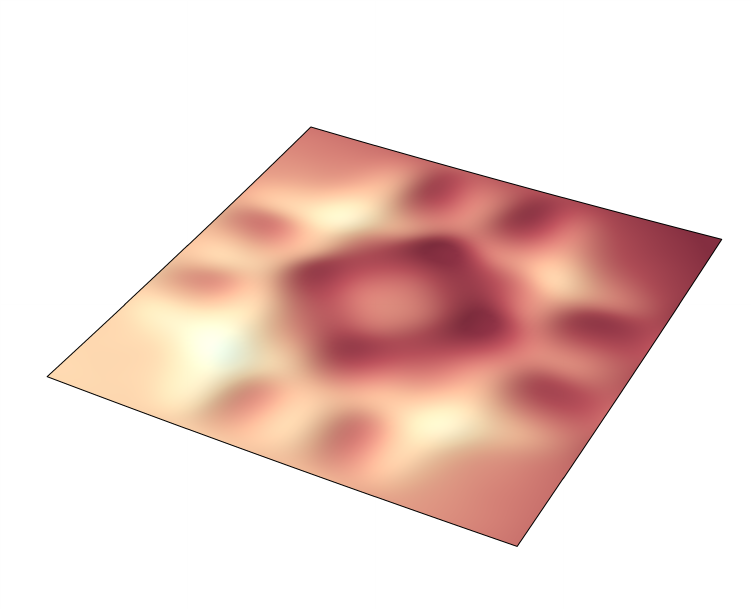}
    \caption{\tiny $t=5$}
  \end{subfigure}
  \hfill
  \begin{subfigure}[b]{0.1\textwidth}
    \centering
    \includegraphics[width=\textwidth]{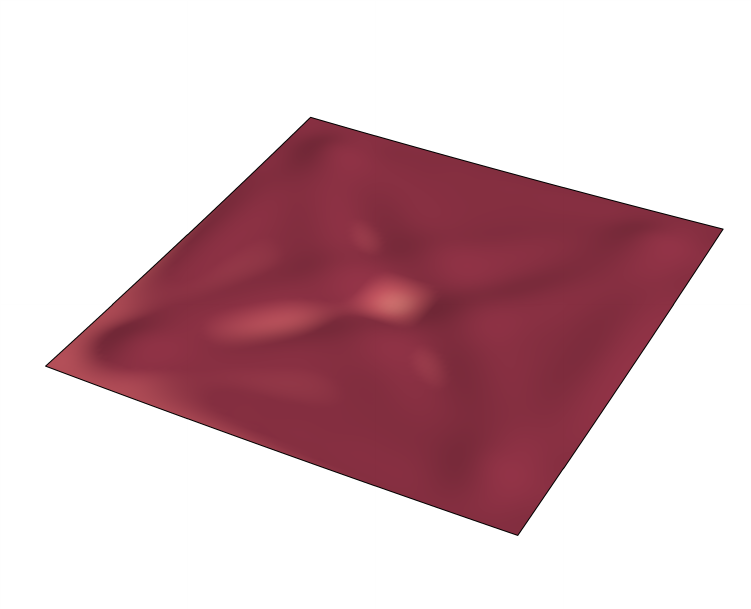}
    \caption{\tiny $t=6$}
  \end{subfigure}
  \hfill
  \begin{subfigure}[b]{0.1\textwidth}
    \centering
    \includegraphics[width=\textwidth]{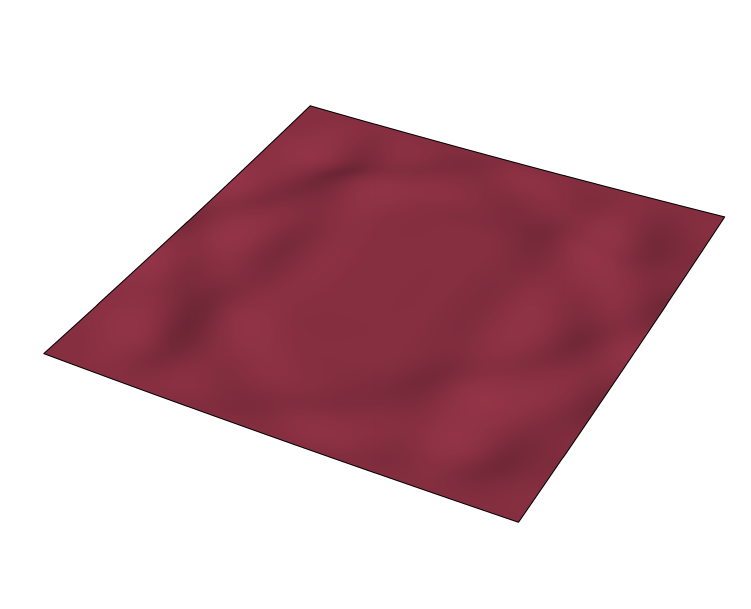}
    \caption{\tiny $t=7$}
  \end{subfigure}
  \hfill
  \begin{subfigure}[b]{0.1\textwidth}
    \centering
    \includegraphics[width=\textwidth]{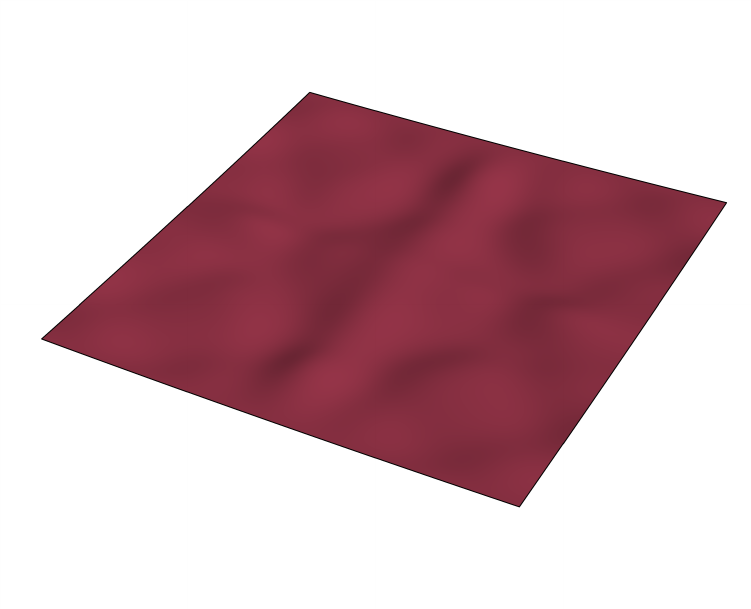}
    \caption{\tiny $t=8$}
  \end{subfigure}
  \\
  \begin{subfigure}[b]{0.1\textwidth}
    \centering
    \includegraphics[width=\textwidth]{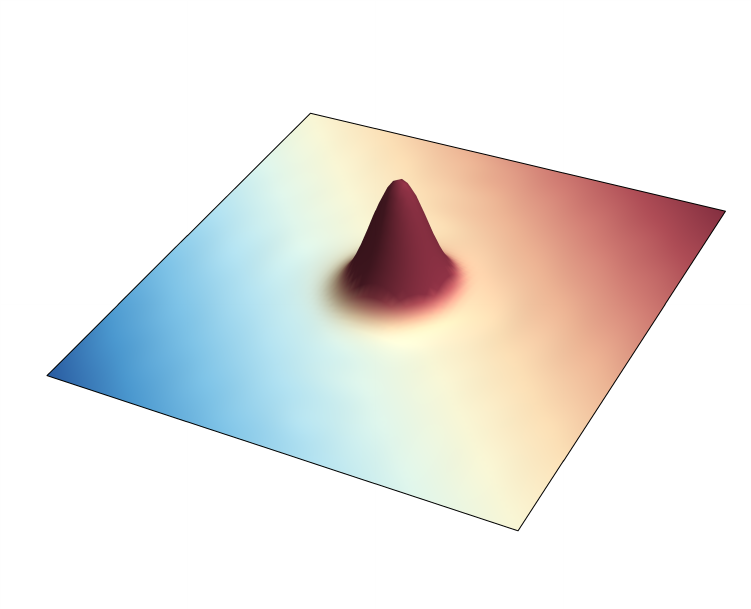}
    \caption{\tiny $t=0$}
  \end{subfigure}
  \hfill
  \begin{subfigure}[b]{0.1\textwidth}
    \centering
    \includegraphics[width=\textwidth]{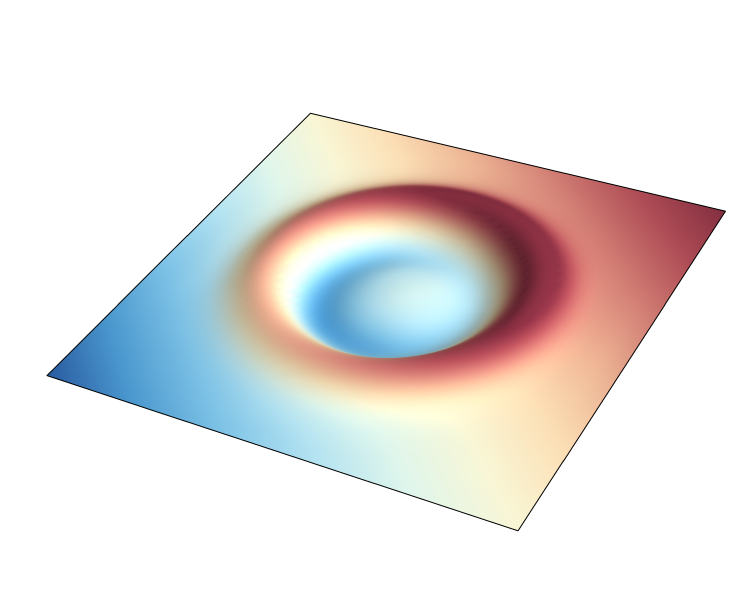}
    \caption{\tiny $t=1$}
  \end{subfigure}
  \hfill
  \begin{subfigure}[b]{0.1\textwidth}
    \centering
    \includegraphics[width=\textwidth]{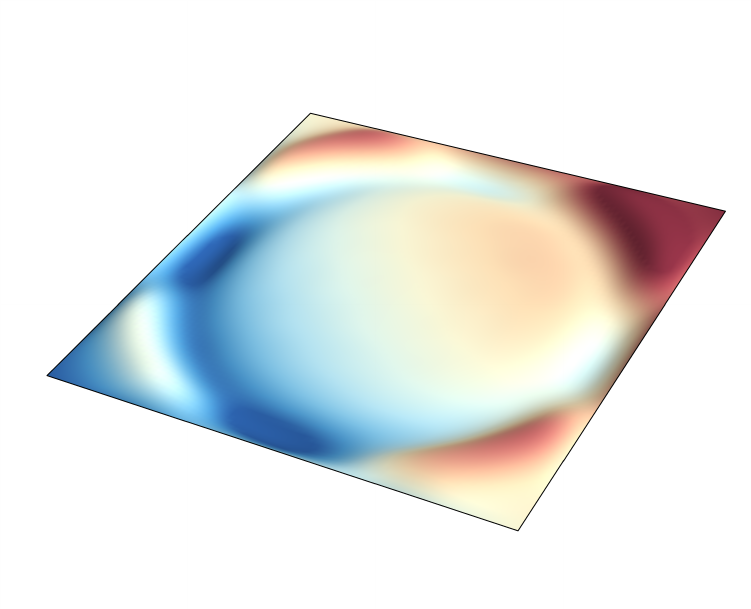}
    \caption{\tiny $t=2$}
  \end{subfigure}
  \hfill
  \begin{subfigure}[b]{0.1\textwidth}
    \centering
    \includegraphics[width=\textwidth]{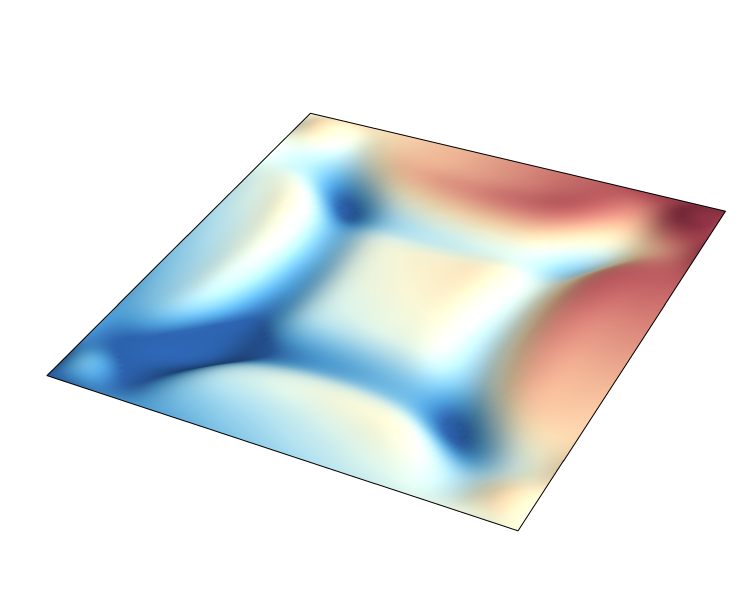}
    \caption{\tiny $t=3$}
  \end{subfigure}
  \hfill
  \begin{subfigure}[b]{0.1\textwidth}
    \centering
    \includegraphics[width=\textwidth]{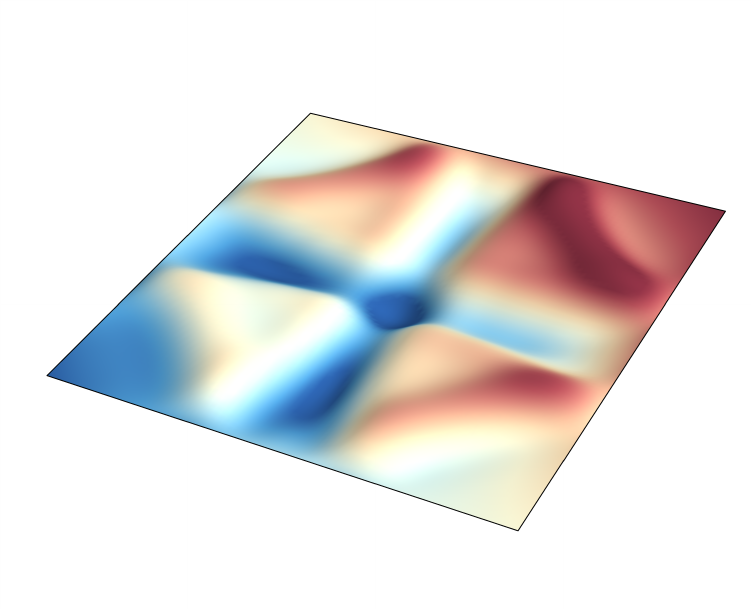}
    \caption{\tiny $t=4$}
  \end{subfigure}
  \hfill
  \begin{subfigure}[b]{0.1\textwidth}
    \centering
    \includegraphics[width=\textwidth]{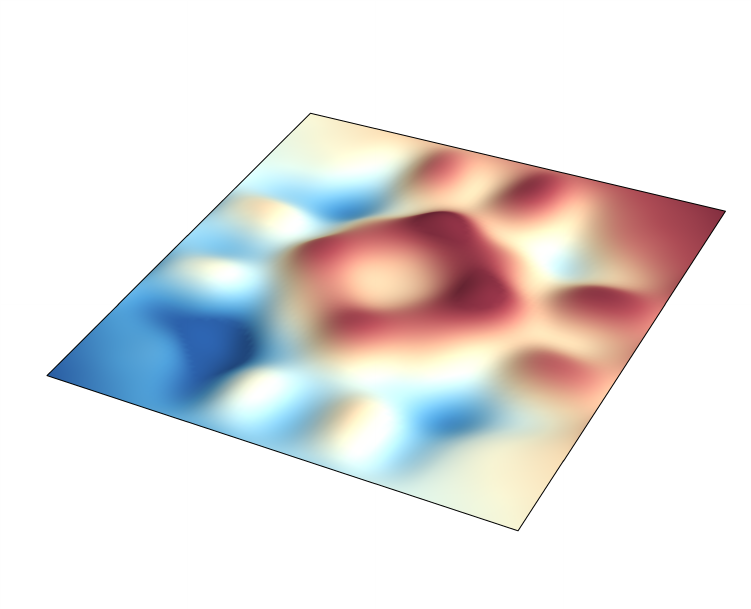}
    \caption{\tiny $t=5$}
  \end{subfigure}
  \hfill
  \begin{subfigure}[b]{0.1\textwidth}
    \centering
    \includegraphics[width=\textwidth]{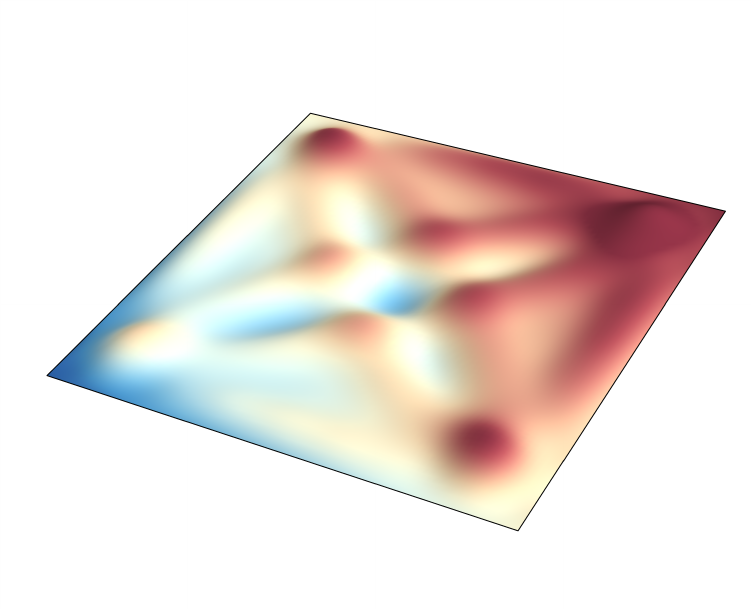}
    \caption{\tiny $t=6$}
  \end{subfigure}
  \hfill
  \begin{subfigure}[b]{0.1\textwidth}
    \centering
    \includegraphics[width=\textwidth]{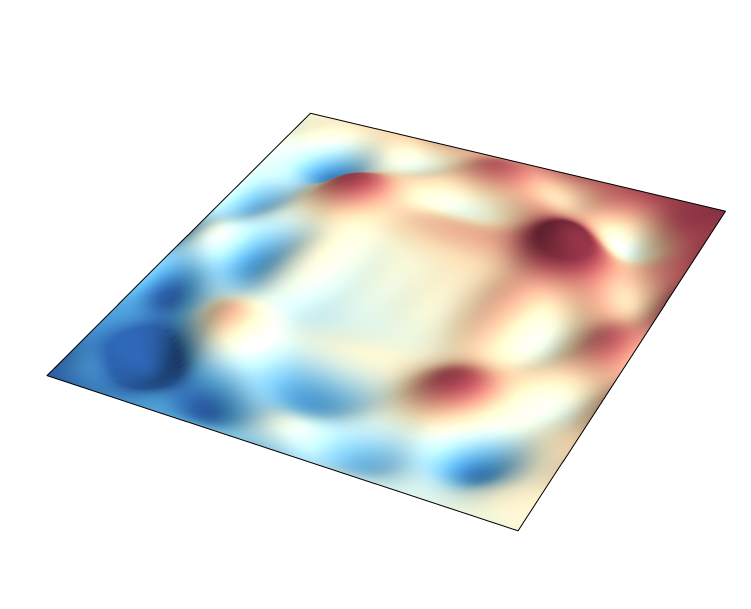}
    \caption{\tiny $t=7$}
  \end{subfigure}
  \hfill
  \begin{subfigure}[b]{0.1\textwidth}
    \centering
    \includegraphics[width=\textwidth]{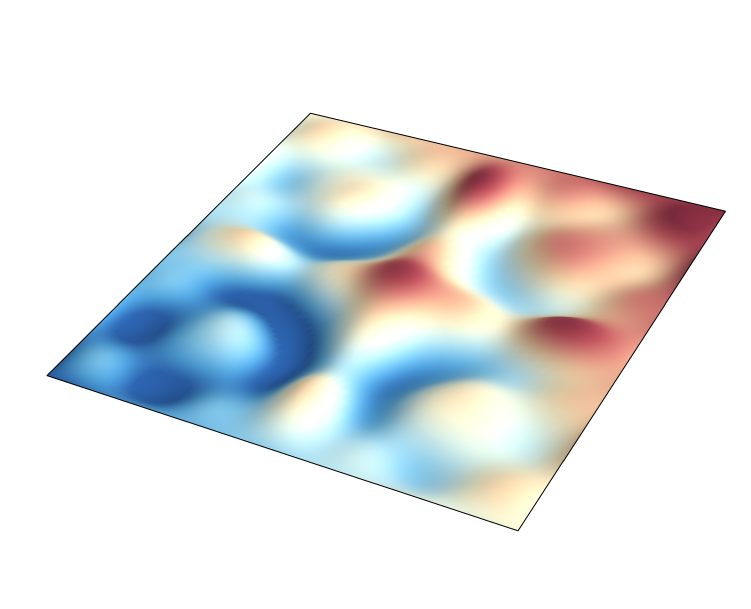}
    \caption{\tiny $t=8$}
  \end{subfigure}
  \caption{Solutions to the initial boundary value problem for the 2D wave equation with inhomogeneous boundary conditions. Notice that the height of the wave around the boundary is increasing over time. The boundary conditions in the first example are nonlinear. Animations can be found in the supplementary material as \texttt{2Dwave\_nonhomo1.mp4} and \texttt{2Dwave\_nonhomo2.mp4}.}
  \label{fig:inhomo}
\end{figure*}
\subsection{Laplace Equation which has a singular point}\label{sec:singularity}

We also give an example with inhomogeneous boundary condition for Laplace's equation. In this case, we use EPGP to fit the boundary condition and also notice that the application of B-EPGP for the homogeneous system can only yield the zero solution.

By its very ansatz, EPGP produces global solutions, since each exponential-polynomial solution is a solution in full space (see also Appendix~\ref{sec:EPGP}). Here we demonstrate that our methods can approximate solutions on bounded domains $\Omega$ which cannot be extended to a global solution on $\R^n$. This somewhat surprising fact shows that B-EPGP approximates solutions \emph{locally} very well.

Let $\Omega\subset \R^2$ be such that $0\notin\bar\Omega$. We consider the problem
\[
\begin{cases}
     u_{xx}+u_{yy} = 0 &\text{in \(\Omega\)}\\
    u = \log{(x^2+y^2)} &\text{for \((x,y)\in\p\Omega\)}.
\end{cases}
\]
For this boundary condition, there is no $v$ satisfying $v_{xx}+v_{yy}=0$ in $\RR^2$ which satisfies the boundary condition. This is due to the uniqueness properties of \emph{harmonic functions}: any such $v$ would have to equal $\log(x^2+y^2)$ for $(x,y)\neq(0,0)$, which is not defined in $(0,0)$.
This means that even though $\log(x^2+y^2)$ satisfies the PDE in $\R^2\setminus\{0\}$ as well as the boundary condition, our approximation must deteriorate as $\Omega$ is chosen closer to $(0,0)$.

In Table~\ref{table:singularity} we report on our results with various domains at various different distances from the singularity $(0,0)$ and the results are very good even when the distance is only $.01$.
Figure~\ref{fig:laplace} shows that the errors concentrate on the points that are nearer to the singularity of $\log(x^2+y^2)$.
\begin{table}
\caption{\small We demonstrate how good is our local approximation of a function that is not a global solution due to a singularity at $(0,0)$. We use 10000 data points and \(n=10\) or \(n=50\) basis elements. We emphasize that the error is really small, despite the distance between the domain and the singularity point being very small.}\label{table:singularity}
\centering
\begin{tabular}{|c||c|c|c|c|}
\hline
\multirow{2}{*}{Domain} &\multicolumn{2}{c|}{Absolute L1 Error} &\multicolumn{2}{c|}{Relative L1 Error}\\
\cline{2-5}
&$n=10$ & $n=50$ & $n=10$ &$n=50$\\
\hline
\([1,10]^2\) &0.00030 &3.6e-6 &0.00036 &3.59e-5\\
\([1,100]^2\) &0.00081 &7.68e-5 &0.00079 &6.93e-5\\
\([0.01,1]^2\) &0.00875 &0.00085 & 0.00642 &0.00063\\
\hline
\end{tabular}

\end{table}
\begin{figure*}[hbt!]
\small
\begin{subfigure}{0.15\linewidth}
    \centering
    \includegraphics[width=0.9\linewidth]{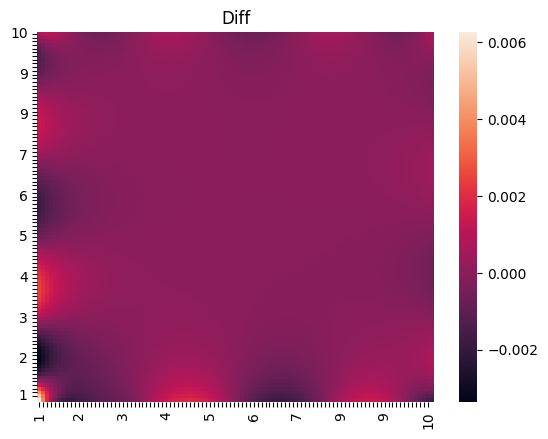}
    \caption{$n=10$\\ on \([1,10]^2\)}
\end{subfigure}
\begin{subfigure}{0.15\linewidth}
    \centering
    \includegraphics[width=0.9\linewidth]{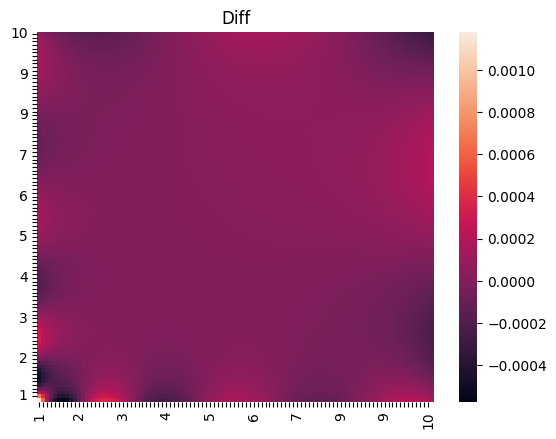}
    \caption{$n=50$\\ on \([1,10]^2\)}
\end{subfigure}
\begin{subfigure}{0.15\linewidth}
    \centering
    \includegraphics[width=0.9\linewidth]{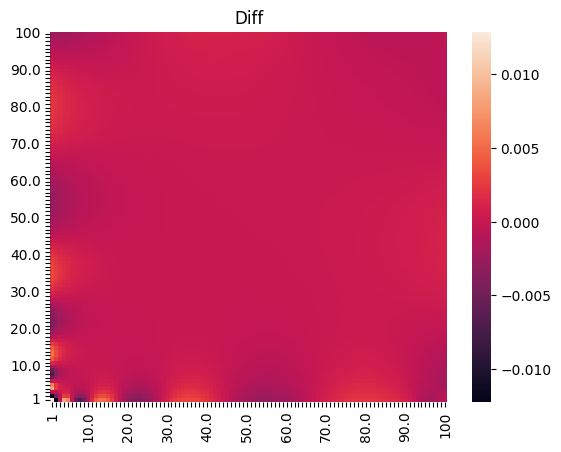}
    \caption{$n=10$\\ on \([1,100]^2\)}
\end{subfigure}
\begin{subfigure}{0.15\linewidth}
    \centering
    \includegraphics[width=0.9\linewidth]{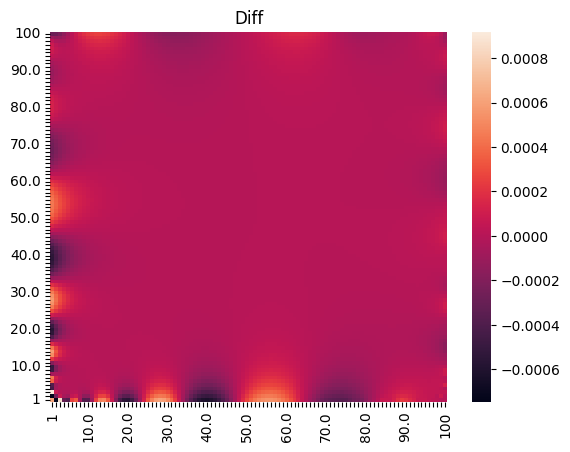}
    \caption{$n=50$\\ on \([1,100]^2\)}
\end{subfigure}
\begin{subfigure}{0.15\linewidth}
    \centering
    \includegraphics[width=0.9\linewidth]{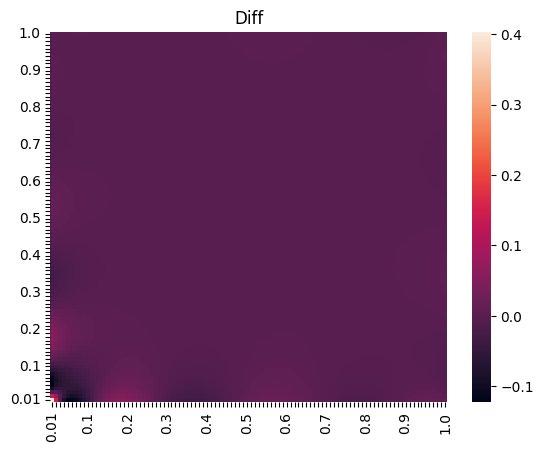}
    \caption{$n=10$\\ on \([0.01,1]^2\)}
\end{subfigure}
\begin{subfigure}{0.15\linewidth}
    \centering
    \includegraphics[width=0.9\linewidth]{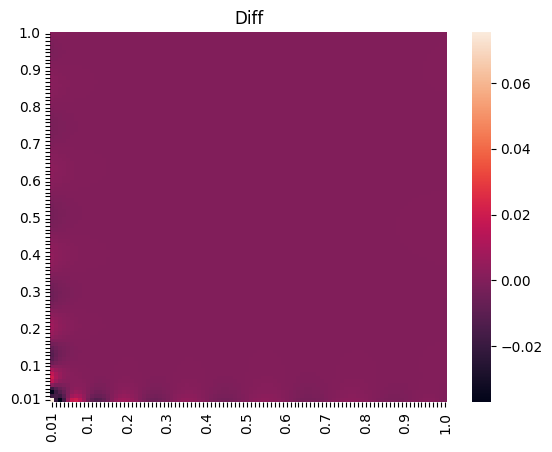}
    \caption{$n=50$\\ on \([0.01,1]^2\)}
\end{subfigure}
\caption{Error in our solution for Laplace's Equation in Section~\ref{sec:singularity}. The error concentrates near the singularity at $(0,0)$ and shows a wave-like behavior, similar to Gibbs phenomenon.}\label{fig:laplace}
\end{figure*}

\section{Experimental Details}\label{sec:experimental_details}
In this appendix we collect all training details pertaining to the implementation of B-EPGP for our experiments, as well as for the algorithms we compare with (FNO, CNO, EPGP).
All experiments are performed on an A100 Nvdia GPU with 80GB RAM.
\subsection{1D wave experiment from Section~\ref{sec:comparison_no}}\label{sec:operator_details}

The neural operators were run on the reference code of the authors of \citep{raonic2024convolutional} and \citep{Li2021} with their default settings. The most important are a learning rate $0.001$, $50$ epochs, stepsize $15$, a channel multiplier of 16, and input and output spatial size of $256$.

For B-EPGP and EPGP, we have $1000$ basis elements, whose prior is distributed as a $\mathcal{N}(0,I)$ distribution, first train $10000$ epochs with learning rate of $0.1$, then $10000$ epochs with learning rate $0.01$, then $1000$ with $0.001$. All training data and evaluation data are sampled with equal distance, while training data lies in the initial plane (and boundary if necessary), evaluation data lies in the whole space.  

\subsection{3D wave experiment from Section~\ref{subsec:3dwave}}\label{sec:3d_operator_details}
We chose the number of basis elements to be \(500\), whose prior is distributed as a $\mathcal{N}(0,I)$ distribution, first trained $10000$ epochs with learning rate of $0.1$, then $10000$ epochs with learning rate $0.01$, then $1000$ with $0.001$.

\subsection{2D wave with Hybrid Boundary experiment from Section~\ref{subsec:3dwave}}\label{sec:2d_hybrid_details}
We chose the number of basis elements to be \(1000\), whose prior is distributed as a $\mathcal{N}(0,I)$ distribution, first trained $10000$ epochs with learning rate of $0.1$, then $10000$ epochs with learning rate $0.01$, then $1000$ with $0.001$.

\subsection{Free wave equation from Appendix~\ref{sec:free_wave}}\label{sec:free_wave_details}
We chose the number of basis elements to be \(2000\), whose prior is distributed as a $\mathcal{N}(0,I)$ distribution, first trained $10000$ epochs with learning rate of $0.1$, then $10000$ epochs with learning rate $0.01$, then $1000$ with $0.001$.

\subsection{2D wave with different domains from  Appendices~\ref{sec:wave_full} and~\ref{sec:wave_two_halfspaces}}\label{sec:wave_full_details}
We chose the number of basis elements to be \(1000\), whose prior is distributed as a $\mathcal{N}(0,I)$ distribution, first trained $10000$ epochs with learning rate of $0.1$, then $10000$ epochs with learning rate $0.01$, then $1000$ with $0.001$.

\subsection{Comparison between B-EPGP and EPGP on 2D wave from  Appendix~\ref{sec:comparison_epgp}}\label{sec:comparison_epgp_details}
For B-EPGP, we chose the number of basis elements to be \(1000\), whose prior is distributed as a $\mathcal{N}(0,I)$ distribution, first trained $10000$ epochs with learning rate of $0.1$, then $10000$ epochs with learning rate $0.01$, then $1000$ with $0.001$.

For EPGP, we chose the number of basis elements to be \(2000\), whose prior is distributed as a $\mathcal{N}(0,I)$ distribution, first trained $10000$ epochs with learning rate of $0.1$, then $10000$ epochs with learning rate $0.01$, then $1000$ with $0.001$.

\subsection{2D heat experiment from Appendix~\ref{sec:heat}}\label{sec:heat_details}
We chose the number of basis elements to be \(1000\), whose prior is distributed as a $\mathcal{N}(0,I)$ distribution, first trained $10000$ epochs with learning rate of $0.1$, then $10000$ epochs with learning rate $0.01$, then $1000$ with $0.001$.

\subsection{Inhomogeneous boundary experiment from Appendix \ref{sec:2Dwave_inhom}}\label{sec:inhom_details}
When training the homogeneous system, we chose the number of basis elements to be \(1000\), whose prior is distributed as a $\mathcal{N}(0,I)$ distribution, first trained $10000$ epochs with learning rate of $0.1$, then $10000$ epochs with learning rate $0.01$, then $1000$ with $0.001$.
\end{document}